\documentclass{article}

\usepackage{xcolor}
\definecolor{color1}{HTML}{105e8a}
\definecolor{color2}{HTML}{e99926}
\definecolor{color3}{HTML}{b82a0c}
\definecolor{color4}{HTML}{3e8a10}
\definecolor{color5}{HTML}{80037e}
\definecolor{color6}{HTML}{070707}

\usepackage{hyperref}
\hypersetup{
    colorlinks,
    linkcolor={color1},
    citecolor={color1},
    urlcolor={color2}
}

\usepackage{graphicx}
\usepackage{amsfonts,amsmath,amssymb,amsthm,mathrsfs}
\usepackage{mathrsfs}
\usepackage{comment}
\usepackage{natbib}
\usepackage{wrapfig}
\usepackage{thmtools}
\usepackage{fontawesome}

\usepackage{graphicx}
\usepackage{tikz}
\usetikzlibrary{decorations.pathreplacing}
\usetikzlibrary{calc}
\usepackage{pgf}
\usepackage{pgfplots}
\pgfplotsset{compat=1.18}

\usepackage{tikz-cd}
\usepackage{booktabs}

\usepackage{algorithm,algorithmic}

\usepackage[title,toc,page,header]{appendix}

\usepackage{minitoc}

\doparttoc

\mtcsetdepth{parttoc}{2}

\usepackage{authblk}

\definecolor{tobiComment}{RGB}{150, 10, 150}
\definecolor{fannyComment}{RGB}{150, 10, 250}
\definecolor{geelonComment}{RGB}{255, 127, 0}
\definecolor{fedeComment}{RGB}{200, 100, 100}

\newcommand{\tw}[1]{}
\newcommand{\fy}[1]{}
\newcommand{\geelon}[1]{}
\newcommand{\fede}[1]{}

\usepackage{pifont}

\usepackage{caption}
\usepackage{subcaption}

\newcommand{\bR}[0]{\mathbf{R}}
\newcommand{\cZ}[0]{\mathcal{Z}}
\newcommand{\I}[0]{\mathbf{I}}
\newcommand{\Rhat}[0]{\widehat{\cR}}

\newcommand{\eps}[0]{\varepsilon}

\DeclareMathOperator*{\EE}{\mathbb{E}}
\DeclareMathOperator*{\argmin}{\arg\min}

\newcommand{\inner}[2]{\left\langle #1 , #2 \right\rangle}
\newcommand{\PP}[0]{\mathbb{P}}

\newcommand{\e}[0]{\mathbf{e}}

\usepackage{color}
\usepackage{graphicx} % Required for inserting images
\usepackage{amsfonts,amsmath,amssymb,amsthm}
\usepackage{microtype}
\usepackage{nicefrac}
\usepackage{hyperref}
\usepackage[capitalize]{cleveref}
\usepackage{enumitem}
\usepackage{natbib}
\newtheoremstyle{thmstyle}
  {6pt} % Space above \topsep
  {2pt} % Space below \topsep
  {\itshape} % Body font
  {} % Indent amount
  {\bfseries} % Theorem head font
  {.} % Punctuation after theorem head
  {.5em} % Space after theorem head
  {} % Theorem head spec (can be left empty, meaning `normal')

\newtheoremstyle{defstyle}
  {6pt} % Space above \topsep
  {2pt} % Space below \topsep
  {} % Body font
  {} % Indent amount
  {\bfseries} % Theorem head font
  {.} % Punctuation after theorem head
  {.5em} % Space after theorem head
  {} % Theorem head spec (can be left empty, meaning `normal')

\theoremstyle{thmstyle}
\newtheorem{theorem}{Theorem}
\newtheorem{lemma}{Lemma}
\newtheorem{proposition}{Proposition}
\newtheorem{corollary}{Corollary}

\theoremstyle{defstyle}
\newtheorem{example}{Example}
\newtheorem{definition}{Definition}
\newtheorem{assumption}{Assumption}

\theoremstyle{remark}
\newtheorem{remark}{Remark}

\newcommand{\nrm}[1]{\left\|#1 \right\|}
\newcommand{\prn}[1]{\left(#1 \right)}
\newcommand{\brk}[1]{\left[#1 \right]}
\newcommand{\crl}[1]{\left\{#1 \right\}}
\newcommand{\abs}[1]{\left|#1 \right|}
\newcommand{\floor}[1]{\left\lfloor #1 \right\rfloor}
\newcommand{\ceil}[1]{\left\lceil #1 \right\rceil}

\newcommand{\RR}[0]{\mathbb{R}}
\newcommand{\NN}[0]{\mathbb{N}}
\newcommand{\ZZ}[0]{\mathbb{Z}}
\newcommand{\SSS}[0]{\mathbb{S}}

\newcommand{\cA}[0]{\mathcal{A}}
\newcommand{\cC}[0]{\mathcal{C}}

\newcommand{\cH}[0]{\mathcal{H}}
\newcommand{\cE}[0]{\mathcal{E}}
\newcommand{\cF}[0]{\mathcal{F}}

\newcommand{\cP}[0]{\mathcal{P}}
\newcommand{\cQ}[0]{\mathcal{Q}}

\newcommand{\cN}[0]{\mathcal{N}}
\newcommand{\cR}[0]{\mathcal{R}}
\newcommand{\cX}[0]{\mathcal{X}}
\newcommand{\cY}[0]{\mathcal{Y}}

\newcommand{\one}{\mathbf 1}

\DeclareMathOperator*{\prob}{\PP}

\DeclareMathOperator{\uniformOp}{Uniform}
\newcommand{\uniform}[1]{\uniformOp\prn{#1}}
\DeclareMathOperator{\conv}{conv}

\DeclareMathOperator*{\Var}{Var}

\newcommand{\starhull}[0]{\mathrm{star}}

\newcommand{\fhat}[0]{\widehat{f}}
\newcommand{\fstar}[0]{f^{\star}}

\DeclareMathOperator{\KL}{KL}

\newcommand{\pareto}[0]{\operatorname{Par}}

\newcommand{\rhohat}[0]{\widehat{\rho}}
\newcommand{\rhostar}[0]{\rho^{\star}}

\newcommand{\rhotilt}[0]{\rhohat_{\operatorname{tilt}}}

\newcommand{\Rtilde}[0]{\widetilde{\cR}}

\newcommand{\nameditem}[2][]{%
  \item[#1]\label{#2}%
  \expandafter\gdef\csname itext@#2\endcsname{#1}%
}
\newcommand{\refitemtext}[1]{\csname itext@#1\endcsname}

\newcommand{\cL}[0]{\mathcal{L}}
\newcommand{\radComp}[0]{\mathscr{R}_n}

\newcommand{\Npar}[0]{N_{\operatorname{Par}}}

\newcommand{\bsigma}[0]{\boldsymbol{\sigma}}
\newcommand{\Ber}[0]{\operatorname{Ber}}
\newcommand{\bsigmahat}[0]{\widehat{\bsigma}}
\newcommand{\sH}[0]{\mathscr{H}}

\newcommand{\fR}[0]{\mathfrak{R}}
\newcommand{\n}[0]{\mathbf{n}}
\newcommand{\x}[0]{\mathbf{x}}
\renewcommand{\a}[0]{\mathbf{a}}

\renewcommand{\r}[0]{\mathbf{r}}
\renewcommand{\v}[0]{\mathbf{v}}
\newcommand{\w}[0]{\mathbf{w}}
\newcommand{\Vol}[0]{\operatorname{Vol}}

\newcommand{\elltilde}[0]{\widetilde{\ell}}

\newcommand{\bH}[0]{\mathbf{H}}

\newcommand{\uppermodulus}[0]{\overline{\omega}_Q}
\newcommand{\lowermodulus}[0]{\underline{\omega}_Q}

\newcommand{\tlower}[0]{\underline{\gamma}_Q}
\newcommand{\Fpruned}[1]{\cF_{#1}}
\newcommand{\tadapt}[0]{\gamma_{n}}

\newcommand{\Paretodistance}[0]{d_{\bR}}

\renewcommand{\d}[0]{\operatorname{d}}

\newcommand{\btau}[0]{\boldsymbol{\tau}}

\newcommand{\sigmahat}[0]{\widehat{\sigma}}

\newcommand{\sign}[0]{\operatorname{sign}}
\newcommand{\bu}[0]{\mathbf{u}}

\newcommand{\rhofast}[0]{\rhohat_{\operatorname{Par}}}

\newcommand{\yhat}[0]{\widehat{y}}

\newcommand{\Paretoerm}[1]{\fhat_{\operatorname{Par(#1)}}}
\newcommand{\Thresholderm}[1]{\fhat_{#1}}
\newcommand{\ParetoStar}[1]{\fhat_{\operatorname{star}(#1)}}
\newcommand{\ParetoEW}[0]{f_{\rhohat}}
\newcommand{\AdaptiveThresholdERM}[0]{\fhat_{\operatorname{adapt}}}

\newcommand{\q}[0]{\mathbf{q}}
\newcommand{\cI}[0]{\mathcal{I}}

\newcommand{\range}[0]{\operatorname{range}}
\newcommand{\bqhat}[0]{\widehat{\q}}
\newcommand{\Fhat}[0]{\widehat{F}}

\newcommand{\excessRisk}[2]{\cE_{Q}(#1 , #2)}

\usepackage{xspace}
\newcommand{\HELM}[0]{\texttt{HELM}\xspace}
\newcommand{\VHELM}[0]{\texttt{VHELM}\xspace}
\newcommand{\HELMLite}[0]{\texttt{HELM Lite}\xspace}

\usepackage[utf8]{inputenc}
\DeclareFontFamily{U}{mathb}{\hyphenchar\font45}
\DeclareFontShape{U}{mathb}{m}{n}{
<-6> mathb5 <6-7> mathb6 <7-8> mathb7
<8-9> mathb8 <9-10> mathb9
<10-12> mathb10 <12-> mathb12
}{}
\DeclareSymbolFont{mathb}{U}{mathb}{m}{n}
\DeclareMathSymbol{\llcurly}{\mathrel}{mathb}{"CE}
\DeclareMathSymbol{\ggcurly}{\mathrel}{mathb}{"CF}

\newcommand{\fG}[0]{\mathfrak{G}}
\newcommand{\LMSA}[0]{\operatorname{LMSA}(\Lambda)}
\newcommand{\fLMSA}[0]{\fhat_{\Lambda}}
\newcommand{\Slambda}[1]{s_{#1}}

\newcommand{\g}[0]{\mathbf{g}}
\renewcommand{\b}[0]{\mathbf{b}}
\renewcommand{\r}[0]{\mathbf{r}}

\newif\ifarxiv
\newcommand{\ifarxivelse}[2]{%
  \ifarxiv
    #1%
  \else
    #2%
  \fi
}
\newcommand{\arxivonly}[1]{\ifarxiv #1\fi}
\newcommand{\cameraonly}[1]{\ifarxiv\else #1\fi}

\arxivtrue      % arXiv version

\usepackage{geometry}
\usepackage{parskip}
\title{Hedging on the Frontier: Learning New Tasks with Few Samples}

\author[1]{Tobias Wegel}%\thanks{Corresponding author: \texttt{twegel@ethz.ch}.}}
\author[1]{Federico Di Gennaro\thanks{Equal contribution.}}
\author[2]{Geelon So$^*$}
\author[1]{Fanny Yang}

\affil[1]{Department of Computer Science, ETH Zurich}
\affil[2]{Department of Computer Science and Engineering, UC San Diego}

\date{\today}

\begin{document}
\faketableofcontents

\maketitle

\begin{abstract}
  
When a learner faces a new task with few samples, it must leverage any available side information. 
In practice, this often comes in the form of model evaluations on related tasks in public benchmarks. 
A key question then is how to model task relatedness such that it is both realistic and the benchmark evaluations lead to provable gains. 
Empirically, we observe that \emph{weak monotonicity} is often approximately satisfied: 
if a model dominates another on many benchmarks, it also tends to outperform on the new task. 
We explore the statistical complexity of learning under (approximate) weak monotonicity, leveraging it within two learning paradigms: transfer learning and model selection aggregation. We show that not only can we prune the model class based on monotonicity, but we can also further adapt to the geometry of the available trade-offs by \emph{hedging on the frontier}.

\end{abstract}

\section{Introduction}
Foundation models are designed to perform well across a vast range of downstream applications \citep{bommasani2021OnTO}. 
But, in general, it is hard to determine 
whether one model is strictly more capable than another, or to know beforehand how suitable it is for a new task.
Benchmarks can provide some signal by evaluating these models
on a fairly narrow set of tasks \citep{liang2023holistic, chiang2024chatbotarena, srivastava2023bench, hardt2025emerging}. 
However, models are often ranked inconsistently across different benchmarks \citep{Zhang2024tradeoffs}, which raises the issue of \emph{transferability}: with the proliferation of models and benchmarks, can they be reliably used by a practitioner to solve a new target task with limited data?

\ifarxivelse{
\begin{wrapfigure}{r}{0.45\textwidth}
    \vspace{-0.6cm}
    \centering
    \includegraphics[width=\linewidth]{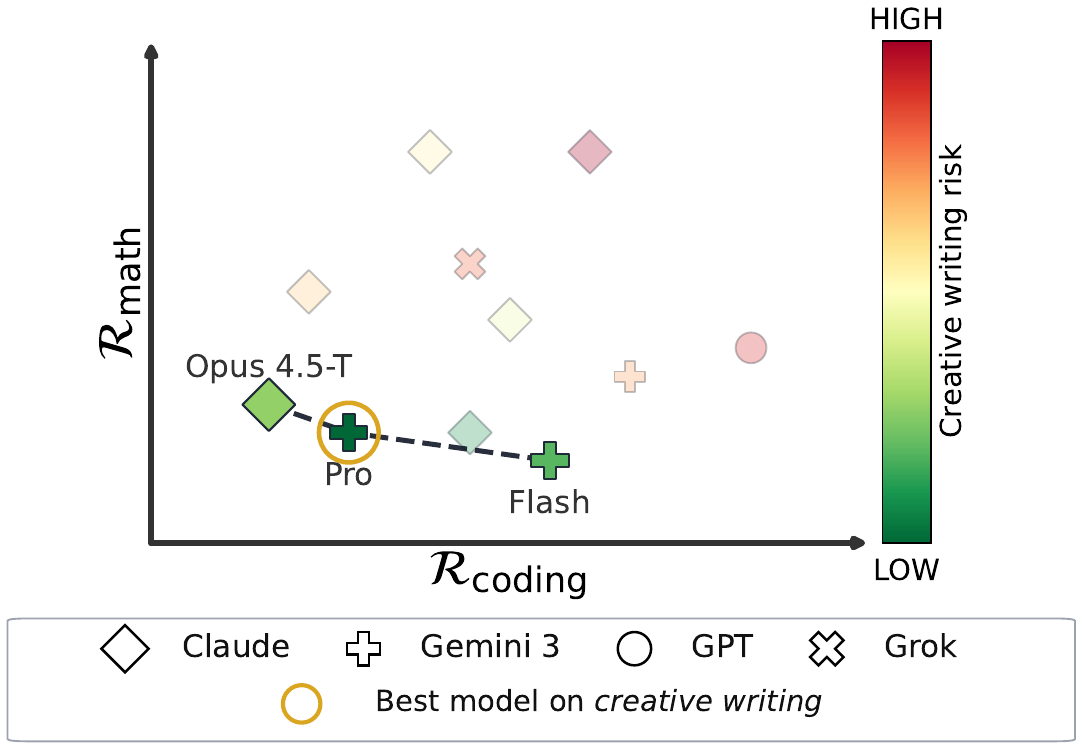}
    \caption{Pareto Frontier (Coding vs Math) with risk scores from a subset of Huggingface \texttt{LMArena} dataset \citep{chiang2024chatbotarena}.\protect\footnotemark
    \vspace{-0.15cm}}
    \label{fig:Pareto-LLMArena}
\end{wrapfigure}
}{
\begin{figure}
    \vspace{-0.2cm}
    \centering
    \includegraphics[width=\linewidth]{figures/Pareto-LM-Arena-shaded.pdf}
    \caption{Pareto Frontier (Coding vs Math) with risk scores from a subset of Huggingface \texttt{LMArena} dataset \citep{chiang2024chatbotarena}.\protect\footnotemark
    \vspace{-0.3cm}}
    \label{fig:Pareto-LLMArena}
\end{figure}
}

Underlying the belief that benchmarks are useful is often the implicit assumption that 
performance transfers: a model that dominates another on a related benchmark task will likely outperform the other also in the target task. Indeed, this phenomenon has already been observed in the context of out-of-distribution (OOD) versus in-distribution generalization \citep{miller2021accuracy}, although it fails to hold in most arguably more realistic scenarios \citep{sanyal2024accuracy,salaudeen2025domain}.
Hence, instead of relating just two tasks, in this work, we formalize the weaker assumption of \emph{weak monotonicity}, which relates a target task to multiple source tasks.\footnotemark\ Weak monotonicity admits comparison across a \emph{suite} of benchmarks: a more capable model that is consistently better on a whole set of benchmarks will also tend to be better on the target. It does not imply that there is a single dominating model; rather, there may be an entire \emph{frontier} of potentially suitable models, and a need to be able to select from it using some limited target data. 
This is depicted in \cref{fig:Pareto-LLMArena} using data from the Huggingface \texttt{LMArena}, where math and coding scores separately do not predict the best model for creative writing, but it is contained in the joint frontier. 
Importantly, a key feature is that the assumption of weak monotonicity only further weakens as the number of benchmark tasks increases.
Moreover, we do not always require it to hold strictly: we quantify the degree to which it holds using the \emph{modulus of monotonicity}, a quantity that appears in different variants in our theoretical results. 

For the theoretical results, we assume throughout that the benchmark risks are known exactly. In transfer learning terminology, this corresponds to the idealized setting of infinite source samples; we leave the finite source sample case to future work.
\ifarxivelse{We are then interested in the following theoretical question:}{We then ask the theoretical question:}
\ifarxivelse{
\begin{quote}
    \centering
    \emph{What is the sample complexity of learning under accurate benchmark evaluations and (approximate) weak monotonicity?}
\end{quote}
}{
\emph{What is the sample complexity of learning under accurate benchmark evaluations and (approximate) weak monotonicity?}

}
To answer this question, we distinguish between two classes of functions we may compare to.
Firstly, we consider the \emph{full} hypothesis class for which evaluations are available.
This then turns into the question of (classical) \emph{transfer learning} (also called supervised domain adaptation) \citep{ben2010theory}, where the aim is to achieve small excess risk with respect to the full hypothesis class. 
And secondly, we may compare only to the dictionary of best models for each benchmark alone.
This falls into the realm of \emph{model selection aggregation} \citep{tsybakov2003optimal} in a setting where the dictionary is not arbitrary in that the models are optimal for individual benchmarks. The goal is then to achieve small excess risk with respect to the best model in the dictionary. We introduce model selection aggregation in detail in \cref{sec: aggregation}.

In the existing transfer learning literature (or domain generalization),
most approaches are based on assumptions on the relationship between source and target domains that typically fall into two categories: \emph{Interpolation}-based approaches exploit proximity across source and target domains such as distributional similarity, shared features, or empirical performance correlations \citep{ben2010theory,long2015learning,mansour2008domain,mansour2021theory}. On the other hand, \emph{extrapolation}-based approaches can transfer beyond observed environments by learning invariant or causal representations \citep{arjovsky2019invariant,krueger2021out,peters2016causal,scholkopf2021toward}.
Importantly, learning under weak monotonicity strictly generalizes previous settings in transfer learning, e.g., when the target distribution is a \emph{mixture} of source distributions \citep{mansour2008domain,mansour2021theory}, cf. \cref{subsec:weak-monotonicity}.
Both types of modeling assumptions are often too restrictive in practice and are not reflective of the actual shift between the tasks. For instance, causal methods often do not outperform simple baselines \citep{nastl2024causal,ahuja2021invariance} and DRO style methods fail, e.g., on the prominent \texttt{WILDS} dataset \citep{koh2021wilds}.
For modern foundation models, transfer is often framed as model selection from a \emph{zoo} of pretrained candidates \citep{zhang2023model,dong2022zood}, for example by ranking candidates via transferability scores \citep{nguyen2020leep,you2021logme} 
typically using access to model representations.
However, for state-of-the-art models, we often only have black-box access
to their evaluations \citep{lee2024vhelm}, so that these methods cannot be used.

We assume weak monotonicity partly to address these shortcomings.
The main insight from our work is that under monotonicity, statistically efficient algorithms should not only \emph{prune} models that are dominated. When target data is limited, they must also be selective over the frontier models, and a form of \emph{hedging} can be facilitated by avoiding improper trade-offs on the benchmarks. We show that the effectiveness of hedging is determined by the geometry on the frontier induced by monotonicity.

In short, our contributions can be summarized as follows: 

\begin{itemize}[leftmargin=*,itemsep=-0.5pt,topsep=-2pt]
    \item We are the first to\arxivonly{ introduce and} formally study the setting of learning a target task under (approximate) \emph{weak monotonicity} (\cref{sec: problem setting}), motivated by empirical findings that approximate monotonicity holds in real benchmarking datasets, such as \HELM and \VHELM (\cref{fig:modulus HELM and VHELM,fig:mean excess risks HELM and VHELM,subsec:transfer-learning-HELM-VHELM}).

    \item We introduce the Pareto covering number (\cref{subsec:geometry-covering}), a complexity measure that adapts naturally to the geometry of the Pareto frontier while inherently encoding the monotonicity assumption. We further derive its limiting distribution; this result may be of independent interest.  

    \item We prove sample complexity guarantees in terms of the Pareto covering number and modulus of monotonicity in the transfer learning setting (\cref{subsec:Pareto-ERM}). \arxivonly{Using its limiting distribution, we show that the covering number implicitly endows the frontier with a special prior distribution, yielding a PAC-Bayesian interpretation of the guarantees.} 
    We also present a parameter-free algorithm that enjoys a fall-back guarantee for \arxivonly{strong }monotonicity violations (\cref{sec: beyond exact monotonicity}).  

    \item Under assumptions on the curvature of the Pareto front, we show that monotonicity enables fast rates in model selection aggregation, without requiring a Bernstein condition or strongly convex loss (\cref{sec: aggregation}).
\end{itemize}

\footnotetext{Risks are mapped to $[0,1]$ using models rankings, where rankings are taken from the \texttt{LMArena} \emph{Overview} leaderboard (snapshot taken from data available on Jan. 10$^{\text{th}}$, 2026).}
\footnotetext{We use \emph{benchmark} and \emph{source task} interchangeably.}

\paragraph{Datasets.} While the contributions of this paper are mainly theoretical, \arxivonly{throughout the sections }we also validate our assumptions and methods on two benchmark suites: \HELM (Holistic Evaluation of Language Models)~\citep{liang2023holistic} and \VHELM (Holistic Evaluation of Vision Language Models)~\citep{lee2024vhelm}. Both report evaluations of LLMs (respectively, VLMs) across a collection of scenarios and metrics.\footnote{Data and code are available at \href{https://github.com/FedericoDiGennaro/Hedging-on-the-Frontier}{\faGithub \ \url{https://github.com/FedericoDiGennaro/Hedging-on-the-Frontier}.}}

\section{Problem Setting and Weak Monotonicity}\label{sec: problem setting}

Let $\cF$ be a model class and let $\cR_Q : \cF \to [0,1]$ measure risk for a task of interest.\footnote{\cref{tab:notation} in the appendix provides a full overview of notation.}
Specifically, we let $Q$ be a distribution on an arbitrary space $\cZ$ and $\cR_Q(f)=\EE_{z\sim Q}\ell(f,z)$ for any function $f\in\cF$, where the loss is bounded, $\ell(f,z)\in[0,1]$. We aim to find a model with minimal target risk. Further, we assume full access to a set of $K$ benchmarks $\bR \equiv (\cR_1,\ldots, \cR_K) : \cF \to [0,1]^K$, each of them minimized in $\cF$ by $f_k$.
A learning algorithm $\cA$ observes $n$ i.i.d.\ samples from $Q$, denoted $\crl{z_i}_{i=1}^n\sim Q^{\otimes n}$, and outputs a model $\fhat\equiv\cA(\crl{z_i}_{i=1}^n;\bR,\cF)$ given knowledge of $\bR$ and $\cF$. 
In this paper, we analyze the excess risk $\excessRisk{\fhat}{\cH}:=\cR_Q(\fhat)-\inf_{h\in \cH}\cR_Q(h)$ 
with respect to the following two function classes:
\begin{equation*} 
     \cH = 
     \begin{cases}
         \cF & \text{transfer learning (\cref{sec:transfer-learning}}), \\
         \crl{f_k}_{k=1}^K & \text{model selection aggregation (\cref{sec: aggregation})}.
     \end{cases}
\end{equation*}
We study the sample complexity of achieving $\eps$-excess risk with high probability uniformly over $Q$ in some class of distributions $\cQ$; that is, the minimal sample size $n$ to achieve, for some $\delta\in(0,1)$,
\begin{equation*}
    \inf_{Q\in\cQ}\;\prob_{\crl{z_i}_{i=1}^n\sim Q^{\otimes n}}\prn{\excessRisk{\fhat}{\cH}\leq \eps}\geq 1-\delta.
\end{equation*}
Without any relationship between $\bR$ and $\cR_Q$, the sample complexity is generally equivalent to a standard learner that (only) has access to data $\crl{z_i}_{i=1}^n$ from $Q$,
and the additional availability of the benchmarks does not provide any advantage. 
In the following, we propose a new assumption to model relationships between benchmarks and the new task.

\subsection{Weak Monotonicity}
\label{subsec:weak-monotonicity}

To formalize our assumption, we use the following notation: for vectors $\v,\w\in \RR^K$ we write $\v\preceq \w$ if $v_k\leq w_k$ for all $k\in[K]$, $\v\prec\w$ if $\v\preceq\w$ and $\v\neq \w$, and $\v\llcurly \w$ if $v_k< w_k$ for all $k\in[K]$.
To model the information that benchmark performance contains about the new task, define the \emph{pre}order $\preceq_\bR$ on the hypothesis space $\cF$ \arxivonly{induced by the benchmarks $\bR$} as
\begin{equation*}
    f \preceq_\bR f' \quad :\iff\quad \bR(f)\preceq\bR(f').
\end{equation*}
This is only a preorder, because there can exist hypotheses $f$ and $f'$ that are \emph{incomparable}, i.e., neither of the two comparisons $f\preceq_\bR f'$ or $f\succeq_\bR f'$ holds, and if $\bR$ is not injective, $\preceq_\bR$ is not antisymmetric.
In an ideal setting, the benchmarks provide a reliable signal on the performance of the new task of interest. We formalize this intuition as $\cR_Q$ preserving the preorder $\preceq_\bR$.
\begin{restatable}[Monotonicity]{definition}{defweakmonotonicity}
    We say that $\cR_Q$ is \emph{(weakly) monotonic} with respect to $\bR$ if for all $f, f' \in \cF$, 
    \begin{equation}
        f \preceq_\bR f' \quad \implies \quad \cR_Q(f) \leq \cR_Q(f'). \label{eqn:weak-monotone} \tag{MON}
    \end{equation}
\end{restatable}
\arxivonly{In words, $\cR_Q$ is monotonic with respect to $\bR$ if any function that dominates another on the benchmarks in terms of the preorder $\preceq_\bR$ also has lower risk on $Q$.}
The sense in which this monotonicity is \emph{weak} is that the stronger conditional $\bR(f)\prec\bR(f')$ does not necessarily imply $\cR_Q(f)<\cR_Q(f')$.
Throughout the work, we use monotonicity and \emph{weak} monotonicity interchangeably.
Monotonicity is \arxivonly{directly }connected to \arxivonly{the well-known concept of }\emph{Pareto optimality}.
\begin{restatable}[Pareto optimality]{definition}{defparetooptimality}
\label{def:Pareto-optimality}
    A model $f\in\cF$ is called Pareto optimal \ifarxivelse{with respect to}{w.r.t.} $\cF$ and $\bR$ if there is no other model $f'\in\cF$ so that $\bR(f')\prec \bR(f)$. We denote the set of all Pareto optimal models in $\cF$ \ifarxivelse{with respect to}{w.r.t.} $\bR$ as $\pareto(\cF, \bR)\subseteq\cF$.
\end{restatable}
\arxivonly{In particular, }Pareto optimality and monotonicity are related through the following lemma.
In what follows, to avoid pathologies, we always implicitly assume that
the set $\bR(\cF)\subseteq [0,1]^K$ is compact, and $\cR_Q$ attains its minimum on $\cF$, i.e., there exists $\fstar\in\cF$ such that $ \cR_Q(\fstar)=\inf_{f\in\cF}\cR_Q(f)$.
\begin{restatable}[Monotonicity and Pareto optimality]{lemma}{monopareto}
\label{lem:monotonicity-pareto}
For every $f\in\cF$ there exists an $f'\in\pareto(\cF,\bR)$ with $f'\preceq_{\bR}f$. \ifarxivelse{And }{Moreover, }if $\cR_Q$ satisfies \eqref{eqn:weak-monotone}, then there exists an $\fstar\in\argmin_{f\in\cF}\cR_Q(f)$ that is Pareto optimal.
\end{restatable}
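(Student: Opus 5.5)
For the first claim, fix $f\in\cF$ and work in the image space. The set $D_f:=\crl{\v\in\bR(\cF): \v\preceq \bR(f)}$ is nonempty, since it contains $\bR(f)$. It is also compact, being the intersection of the compact set $\bR(\cF)$ with the closed orthant $\crl{\v:\v\preceq\bR(f)}$. I would minimize the continuous, strictly monotone functional $\phi(\v):=\sum_{k=1}^K v_k$ over $D_f$. By compactness a minimizer $\v^\star$ exists, and we pick any $f'\in\cF$ with $\bR(f')=\v^\star$. By construction $f'\preceq_\bR f$.

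It remains to check that $f'$ is Pareto optimal. Suppose some $f''\in\cF$ had $\bR(f'')\prec\bR(f')$. Transitivity of $\preceq$ gives $\bR(f'')\preceq\bR(f')\preceq\bR(f)$, so $\bR(f'')\in D_f$. Since $\bR(f'')\preceq\bR(f')$ with at least one strict coordinate, $\phi(\bR(f''))<\phi(\bR(f'))$, which contradicts minimality. Hence $f'\in\pareto(\cF,\bR)$.

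For the second claim, take the minimizer $\fstar$ of $\cR_Q$, which exists by the standing assumption. Apply the first part to obtain $f'\in\pareto(\cF,\bR)$ with $f'\preceq_\bR\fstar$. By \eqref{eqn:weak-monotone} this gives $\cR_Q(f')\le\cR_Q(\fstar)=\inf_{f\in\cF}\cR_Q(f)$. So $f'$ also attains the minimum and is the Pareto optimal minimizer we want.

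The only delicate point is existence of a minimal element in the first part, which is where compactness of $\bR(\cF)$ is used. The rest is routine. Choosing the linear scalarization $\phi$ avoids any Zorn-type argument, because a strict dominance $\prec$ always strictly decreases $\phi$.
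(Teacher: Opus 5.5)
Your proposal is correct and follows the paper's proof essentially line for line. The paper likewise minimizes the coordinate sum over the compact set $\{\mathbf{v}\in\bR(\cF):\mathbf{v}\preceq\bR(f)\}$, argues Pareto optimality by contradiction, and then applies \eqref{eqn:weak-monotone} to a Pareto optimal point dominating a target minimizer.
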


Notice that not all minimizers need to be in the Pareto set, and the fact that the Pareto set contains a minimizer does not directly imply monotonicity.
Beyond \cref{lem:monotonicity-pareto}, we can also show that monotonicity implies that the target risk is not sensitive to changes in $f$ that do not affect the benchmark\arxivonly{ performance}s.
\begin{restatable}[Monotonicity implies sufficiency]{lemma}{monosufficiency} \label{lem:weak-sufficiency}
    Suppose $\cR_Q$ is monotonic with respect to $\bR$. Then, there exists a monotonic map $s : \mathbb{R}^K \to \mathbb{R}$ such that $\cR_Q=s\circ \bR$ on $\cF$.
\end{restatable}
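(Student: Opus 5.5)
We will construct $s$ in two stages: first on the image $\bR(\cF)\subseteq[0,1]^K$, where it is forced, and then on all of $\RR^K$ by a monotone extension. Here ``monotonic'' means order-preserving: $\v\preceq\w$ implies $s(\v)\leq s(\w)$.

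\emph{Well-definedness on the image.} For $\v\in\bR(\cF)$, set $s_0(\v):=\cR_Q(f)$ for any $f$ with $\bR(f)=\v$. We must check that the choice of $f$ does not matter. If $\bR(f)=\bR(f')$, then both $f\preceq_\bR f'$ and $f'\preceq_\bR f$ hold. Applying \eqref{eqn:weak-monotone} in both directions gives $\cR_Q(f)=\cR_Q(f')$. This is exactly the sufficiency content: the target risk depends on $f$ only through $\bR(f)$. By construction $\cR_Q=s_0\circ\bR$ on $\cF$. Moreover, $s_0$ is monotone on $\bR(\cF)$: if $\bR(f)\preceq\bR(f')$, then $f\preceq_\bR f'$, so \eqref{eqn:weak-monotone} gives $s_0(\bR(f))\leq s_0(\bR(f'))$.

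\emph{Extension to $\RR^K$.} We use the standard lower-envelope construction
\begin{equation*}
    s(\w):=\sup\crl{s_0(\v)\,:\,\v\in\bR(\cF),\ \v\preceq\w},
\end{equation*}
with the convention $\sup\emptyset:=0$. Since $\cR_Q$ takes values in $[0,1]$, $s_0\geq 0$, so this default value is harmless, and the supremum over a nonempty set is at most $1$. Hence $s$ is real-valued.

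\emph{Verification.} Monotonicity of $s$ is immediate: if $\w\preceq\w'$, the set over which the supremum is taken for $\w$ is contained in the set for $\w'$. When the set for $\w$ is empty, $s(\w)=0\leq s(\w')$ because all values are nonnegative. For agreement on the image, take $\w\in\bR(\cF)$. The supremum includes $\v=\w$, so $s(\w)\geq s_0(\w)$. Conversely, every admissible $\v\preceq\w$ satisfies $s_0(\v)\leq s_0(\w)$ by the monotonicity of $s_0$, so $s(\w)\leq s_0(\w)$. Thus $s=s_0$ on $\bR(\cF)$, and therefore $\cR_Q=s\circ\bR$ on $\cF$.

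The only genuinely substantive step is the well-definedness argument, which uses the two-sided application of \eqref{eqn:weak-monotone} when $\bR$ is not injective. The extension step just needs care with the empty-set convention; boundedness of the loss makes that convention safe.
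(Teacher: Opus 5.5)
Your proof is correct and follows the same route as the paper: you define $s$ on $\bR(\cF)$ by $s(\bR(f))=\cR_Q(f)$ and establish well-definedness by applying \eqref{eqn:weak-monotone} in both directions when $\bR(f)=\bR(f')$. Beyond the paper, you verify monotonicity on the image and give an explicit monotone extension $s(\w)=\sup\crl{s_0(\v):\v\in\bR(\cF),\ \v\preceq\w}$ to $\RR^K$, whereas the paper only says ``let $s$ be arbitrarily monotonic elsewhere''; this makes the proof more complete without changing the argument.
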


Given \cref{lem:monotonicity-pareto,lem:weak-sufficiency} (proven in \cref{proof:basics-monotonicity}), monotonicity may seem to be a strong assumption. However, the assumption becomes weaker as the number of benchmarks $K$ increases:
in general, the \ifarxivelse{more benchmarks there are}{larger $K$}, the fewer models are comparable in terms of the preorder, and hence the easier it is for $\cR_Q$ to be monotonic with respect to $\bR$. \arxivonly{
Moreover, monotonicity strictly generalizes some existing assumptions and is orthogonal to others, which we discuss next.}

\paragraph{Relation to other transfer models.} Assume that the benchmarks are risks over distributions $P_k$ on $\cZ$, that is, $\cR_k(f) = \EE_{z \sim P_k} \ell_k (f,z)$ for some $\ell_k(f,z)\in[0,1]$. 
It is then easy to show that monotonicity holds if $\ell_k=\ell$ for all $k$ and the target distribution is a mixture of the source distributions $Q\in \conv(P_1,\ldots,P_K)$ 
(see \cref{lem:Q-in-convex-hull} in the appendix).
Monotonicity is therefore at least as general as the assumptions made in classical multi-source domain adaptation with convex mixtures \citep{mansour2008domain,mansour2021theory,hoffman2018algorithms} and group DRO \citep{sagawadistributionally}. However, it is also strictly more general, as one can appreciate from \cref{ex:convex-hull-counterexample} in the appendix. Moreover, as we argue in \cref{subsec:Pareto-ERM,sec:separation-mansour}, our approach can also yield tighter bounds even under the mixture assumption. 

Another paradigm of modeling distribution shifts is based on invariances such as covariate or label shift \citep{shimodaira2000improving}, and, more generally, shared (or invariant) causal structure 
\citep{peters2016causal,rojas2018invariant,arjovsky2019invariant}. 
\ifarxivelse{In general, these methods shine when solving the target task requires some extrapolation, and are to some extent orthogonal to monotonicity; neither implies the other. For example, b}{But, b}oth covariate and label shift are neither necessary nor sufficient for monotonicity (see Lemma \ref{lem:covariate-and-label-shift}). 
And the same holds for the mixtures of conditionals assumption in \citep{zhang2015multi}.
\arxivonly{On a high level, the assumption of causal invariance serves a different purpose: instead of aiming to solve a specific target task, invariant predictors aim to generalize robustly
on a set of possible distributions.}

\subsection{Two Moduli of Monotonicity}
\label{subsec:modulus-of-monotonicity}
In practice, monotonicity may hold only approximately. In this section we introduce two moduli of monotonicity to quantify, for any target distribution, how much it violates monotonicity. 
The moduli are defined using the following notion of distance. 
\begin{restatable}[Pareto distance]{definition}{paretodistance}
\label{def:Pareto-distance}
    The \emph{Pareto distance} on $\cF$ with respect to $\bR$ is the map $\Paretodistance: \cF \times \cF \to \RR$ defined as
    \[\Paretodistance(f,f') = \max_{k \in [K]} \crl{\cR_k(f) - \cR_k(f')}.\]
\end{restatable}
\vspace{-0.2cm}
\ifarxivelse{Notice that, i}{I}n general, this distance is not symmetric and can be negative. 
\ifarxivelse{For a fixed $f_0$, the set $\crl{f\in\cF:\Paretodistance(f,f_0) \leq t}$ includes all functions that are at most $t$ worse than $f_0$ across all objectives $\cR_k$. In contrast, t}{T}he \emph{Pareto ball} $\crl{f\in\cF:\Paretodistance(f_0,f) \leq t}$ includes all functions that are not more than $t$ better than $f_0$ across all objectives, see \cref{fig:Pareto-geometry-tikz}\subref{subfig:Pareto-ball}.
Another way to express the Pareto distance is $\Paretodistance(f,f') = \min \big\{t\in\RR : \bR(f)  - t \one \preceq \bR(f')\big\}$ (see \cref{lem:pareto-distance-axioms}), where $\one=(1,\ldots,1)$ is the all-ones vector.
In this formulation, we can \arxivonly{readily }interpret the Pareto distance as
the smallest \arxivonly{uniform }amount $t\in\RR$ that the objectives $\cR_k(f)$ need to improve
so that \ifarxivelse{the performance}{it} is no worse than $\cR_k(f')$. 
\ifarxivelse{
\begin{figure}
    \centering
    \begin{subfigure}[c]{0.3\linewidth}
        \includegraphics[width=\linewidth]{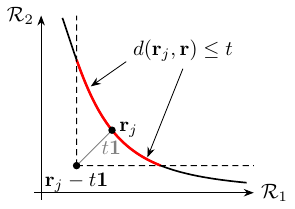}
        \caption{A Pareto ball.}
        \label{subfig:Pareto-ball}
    \end{subfigure}
    \begin{subfigure}[c]{0.3\linewidth}
        \includegraphics[width=\linewidth]{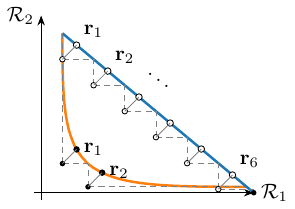}
        \caption{Two Pareto coverings.}
        \label{subfig:Pareto-covering-zoomed-out}
    \end{subfigure}
    \begin{subfigure}[c]{0.3\linewidth}
        \includegraphics[width=\linewidth]{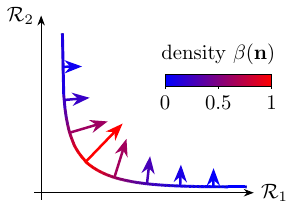}
        \caption{Density from \cref{thm:limiting-distribution}. }
        \label{subfig:Pareto-density}
    \end{subfigure}
    \caption{Using the Pareto quasi-metric from \cref{def:Pareto-cover} to cover the Pareto front. In \cref{subfig:Pareto-ball} the area covered by a single point $\r_j$ (i.e., a Pareto ball). The number of points required adapts to the geometry of the front (\ref{subfig:Pareto-covering-zoomed-out}). And in the limit, as $t\to 0$, the density of the covering $\beta$ depends on the normal vector (\ref{subfig:Pareto-density}), as specified in \cref{thm:limiting-distribution}: If any of the coordinates $n_k$ of the normal vector $\n$ is small, the density is low. \vspace{0cm}}
    \label{fig:Pareto-geometry-tikz}
\end{figure}
}{
\begin{figure*}
    \centering
    \begin{subfigure}[c]{0.3\linewidth}
        \includegraphics[width=\linewidth]{figures/Pareto-covering-ball.pdf}
        \caption{A Pareto ball.}
        \label{subfig:Pareto-ball}
    \end{subfigure}
    \begin{subfigure}[c]{0.3\linewidth}
        \includegraphics[width=\linewidth]{figures/Pareto-covering-zoomed-out.pdf}
        \caption{Two Pareto coverings.}
        \label{subfig:Pareto-covering-zoomed-out}
    \end{subfigure}
    \begin{subfigure}[c]{0.3\linewidth}
        \includegraphics[width=\linewidth]{figures/Pareto-covering-density.pdf}
        \caption{Density from \cref{thm:limiting-distribution}. }
        \label{subfig:Pareto-density}
    \end{subfigure}
    \caption{Using the Pareto quasi-metric from \cref{def:Pareto-cover} to cover the Pareto front. In \cref{subfig:Pareto-ball} the area covered by a single point $\r_j$ (i.e., a Pareto ball). The number of points required adapts to the geometry of the front (\ref{subfig:Pareto-covering-zoomed-out}). And in the limit, as $t\to 0$, the density of the covering $\beta$ depends on the normal vector (\ref{subfig:Pareto-density}), as specified in \cref{thm:limiting-distribution}. \vspace{-0.3cm}}
    \label{fig:Pareto-geometry-tikz}
\end{figure*}
}

With the Pareto distance, we can \arxivonly{now }define two ways \ifarxivelse{in which the assumption of exact weak monotonicity can be relaxed, and that }{to }quantify the \emph{degree} of monotonicity. Our results are expressed in terms of these moduli\arxivonly{, which can be computed for any combination of benchmark and target tasks given enough data}. They are closely related to the modulus of \arxivonly{(single source) }transfer from \citet{hanneke2024adaptive}.
\begin{restatable}{definition}{moduliofmonotonicity}
    \label{def:modulus-of-monotonicity} The \emph{upper modulus of monotonicity} of $\cR_Q$ and $\bR$ is the function $\uppermodulus:[0,1]\to[0,1]$,
    \ifarxivelse{
    \begin{align*}
        \uppermodulus(t):= \sup \big\{\cR_Q(f)&-\cR_Q(f'):\,f,f'\in\cF 
        \text{ such that }\Paretodistance(f,f')\leq t \big\}.
    \end{align*}
    }{
    \begin{align*}
        \uppermodulus(t)= \sup_{f,f'\in\cF
        :\Paretodistance(f,f')\leq t} \{\cR_Q(f)&-\cR_Q(f') \}.
    \end{align*}
    }
    Similarly, the \emph{lower modulus of monotonicity} is defined as the function $\lowermodulus:[0,1]\to[-1,1]$ given by
    \ifarxivelse{
    \begin{align*}
        \lowermodulus(t) = \inf\big\{\cR_Q(f')&-\cR_Q(f):f,f'\in\cF 
        \text{ such that }\Paretodistance(f,f')\leq -t\big\},
    \end{align*}
    }{
    \begin{align*}
        \lowermodulus(t) = \inf_{f,f'\in\cF:\Paretodistance(f,f')\leq -t }\{\cR_Q(f')&-\cR_Q(f)\},
    \end{align*}
    }
    where we set $\lowermodulus(t)=1$ if the set from above is empty.
\end{restatable}
By definition, the upper modulus of monotonicity satisfies for any $t\geq 0$ and $f,f'\in\cF$, that if $\cR_k(f) \leq \cR_k(f')+t$ for all $k$, then $\cR_Q(f) \leq \cR_Q(f')+ \uppermodulus(t)$; it measures how much worse a function can be on the target when it is at most $t$ worse on all source risks. On the other hand, the lower modulus satisfies that if $\cR_k(f)\leq \cR_k(f')-t$ for all $k$, then $\cR_Q(f)\leq \cR_Q(f')-\lowermodulus(t)$; that is, it measures the minimal gain on the target when all source risks are improved by at least $t$.
The lower and upper moduli are both \emph{non-decreasing} functions of $t$, and it \ifarxivelse{follows directly from the definition that}{holds that}
\begin{equation*} 
    \uppermodulus(0)= 0 \iff \lowermodulus(0)\geq 0\iff  \text{\eqref{eqn:weak-monotone}}.
\end{equation*}
\arxivonly{
\begin{wrapfigure}{r}{0.45\textwidth}
    \vspace{-0.7cm}
    \centering
    \includegraphics[width=\linewidth]{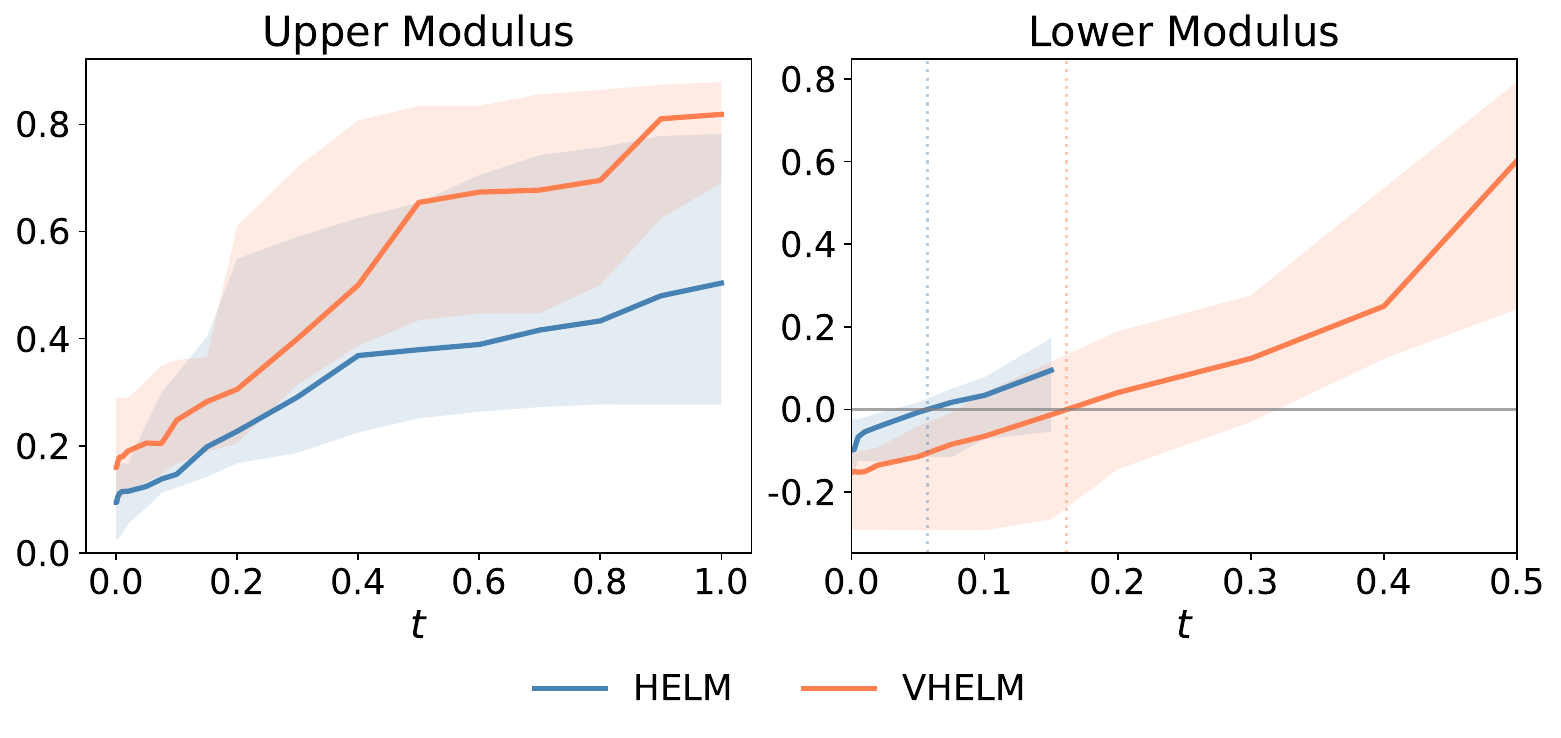}
    \caption{Median and quantiles of moduli (over all combinations of $K=3$ sources and target $Q$) for both \texttt{HELM} and \texttt{VHELM}. We only plot the lower modulus for $\lowermodulus(t)<1$.\protect\footnotemark}
    \label{fig:modulus HELM and VHELM}
    \vspace{-0.2cm}
\end{wrapfigure}
}
The results in \cref{sec:transfer-learning,sec: aggregation} show that a small upper modulus, or respectively a large lower modulus, when $t$ is close to zero, lead to stronger generalization bounds. 
\arxivonly{The values of the moduli at zero, if not exactly zero, give a sense for how close to monotonic the relationship between $\bR$ and $\cR_Q$ is.} 
In particular, $\uppermodulus(0)$ upper bounds the ``approximation error'', in the sense that the best model outside the Pareto set can have risk on $\cR_Q$ that is at most $\uppermodulus(0)$ smaller than the best model inside the Pareto set.
Further, it can be shown that, if $s$ from \cref{lem:weak-sufficiency} is a weighted average, then $\uppermodulus(t)\leq t$ and $\lowermodulus(t)\geq t$ (see \cref{lem:bound-modulus-Lipschitz} in the Appendix\arxivonly{ for a more general bound}).

\cameraonly{
\begin{figure}[H]
    \vspace{-0.3cm}
    \centering
    \includegraphics[width=\linewidth]{figures/combined_upper_and_lower_modulus_linear_cropped.pdf}
    \caption{Median and quantiles of moduli (over all combinations of $K=3$ sources and target $Q$) for both \texttt{HELM} and \texttt{VHELM}.\protect\footnotemark \vspace{-0.3cm}}
    \label{fig:modulus HELM and VHELM}
\end{figure}
}
\paragraph{The moduli of \texttt{HELM} and \texttt{VHELM}.}
We now compute the moduli on \texttt{HELM} and \texttt{VHELM}\arxivonly{ to observe to which degree the monotonicity assumption holds in real-data scenarios}.
\cref{fig:modulus HELM and VHELM} (left) reports, for a fixed $K$, the upper modulus $\uppermodulus$ from Definition~\ref{def:modulus-of-monotonicity}, aggregated by taking the median across all combinations of $K$ sources and a target task $Q$. 
From \cref{fig:modulus HELM and VHELM}, we observe that the moduli are quite close to zero at $t=0$, suggesting that the datasets are \emph{almost} monotone. Further, the target performance decreases significantly with benchmark performance. 
\arxivonly{However, $\uppermodulus(0)>0$ and $\lowermodulus(0)<0$ for both \texttt{HELM} and \texttt{VHELM}; this indicates that for some source and target configurations, weak monotonicity does not exactly hold and implies that the optimal model for the new task may not lie in the Pareto set\arxivonly{ (although it may be close to it)},
suggesting that methods that work well for approximate monotonicity may be optimal on these datasets, cf.\ \cref{sec: beyond exact monotonicity}. }
\footnotetext{For larger $t$, the set defining the lower modulus is empty in \HELM and we cut off the plot there.}

\section{A New Complexity Measure: Pareto Covering Numbers}\label{subsec:geometry-covering}

To capture the statistical complexity, we first introduce 
a notion of geometry on the Pareto front\footnote{We can pull the geometry back onto the Pareto set. To do so, identify $\fR$ with the quotient space $\pareto(\cF, \bR)/\bR$. Formally, $f \sim g$ are equivalent when $\bR(f) = \bR(g)$. The equivalence class containing $f \in \pareto(\cF, \bR)$ is in bijection with $\bR(f) \in \fR$. }
\begin{equation*}
    \fR := \big\{\bR(f)\in[0,1]^K : f \in \pareto(\cF, \bR)\big\}.
\end{equation*}
First, we define the \emph{Pareto distance} $d: \fR \times \fR \to \RR$ on the front $\fR$ as the map  \(d(\r,\r') = \Paretodistance(f,f'),\) for $\r=\bR(f),\r'=\bR(f')$ using the Pareto distance
$\Paretodistance$ 
given in \cref{def:Pareto-distance}. This distance is a quasi-metric \emph{on the Pareto front}: it satisfies all the usual axioms of a metric except symmetry (see \Cref{lem:pareto-distance-axioms} and \cref{fig:Pareto-geometry-tikz}). We use it to define the notion of a $t$-Pareto cover and set. 
\begin{definition}[Pareto cover and set]
\label{def:Pareto-cover}
    Let $A \subset \fR$ be a set in the Pareto front. A subset $S \subset A$ is a \emph{$t$-Pareto cover} of $A$ if for all $\r \in A$, there is an $\r_0 \in S$ so that $d(\r_0,\r) \leq  t$.
    The $t$-\emph{Pareto covering number} $\Npar(t,A)$ is the minimal size of any $t$-Pareto cover of $A$.
    A set $\crl{f_1,\ldots,f_N}\subset\cF$ is a \emph{$t$-Pareto set} with respect to $\bR$ in $\cF$ if $\bR(\crl{f_1,\ldots,f_N})$ is a $t$-Pareto cover of $\fR$. We write $\Npar(t,\cF,\bR) := \Npar(t,\fR)$.
\end{definition}

\cref{subfig:Pareto-ball} visualizes a Pareto ball; the part of $\fR$ covered by a single point, and \cref{subfig:Pareto-covering-zoomed-out} shows how a collection of Pareto balls forms a Pareto cover.
A consequence of the definition of a $t$-Pareto set $\crl{f_1,\ldots,f_N}$ and \cref{lem:monotonicity-pareto} is that for every $f\in\cF$, there is some $f_j$ so that $\bR(f_j) \preceq \bR(f)+t\one$ (or equivalently $d_\bR(f_j,f)\leq t$), as visualized in \cref{fig:Pareto-geometry-tikz}. Also, a crude upper bound on the $t$-Pareto covering number is always $\log \Npar(t,\cF,\bR)\leq (K-1)\log(1+1/t)$, as proved in \cref{lem:linf-bound-covering} in \cref{sec:additional-Pareto-covering}. \arxivonly{However, it can be \emph{much} smaller.}

\subsection{The Limiting Distribution}
The next result quantifies the growth rate of the $t$-Pareto covering number as $t \to 0$. This rate depends on the curvature of the Pareto front. 
From a technical perspective, it is a main contribution of this work.

\begin{assumption}[Nice Pareto front] \label{assumption: nice}
    Let $\fR \subset [0,1]^K$ be a compact and smooth $(K-1)$-dimensional submanifold (with boundary) that admits a smooth normal field $\n: \fR \to \SSS^{K-1}$ in its interior, where the unit normal vector at $\r \in \fR$, denoted by $\n(\r)=(n_1(\r),\ldots,n_K(\r))$, is uniformly bounded away from zero; $n_k(\r) > \eta > 0$ for all $\r \in\fR$ and $k \in [K]$. We say such a Pareto front is \emph{nice}. 
\end{assumption}

\begin{theorem}
\label{thm:limiting-distribution}
    Let $\sH$ denote the $(K-1)$-dimensional Hausdorff measure and \cref{assumption: nice} hold. 
    Then, there is a constant $C_K$ that depends only on $K$ so that, for any $A \subset \fR$ that has a boundary $\partial A$ of $\sH$-measure zero, it holds that
    \[\lim_{t \to 0}\,  \Npar(t, A) \cdot t^{K-1} = \mu(A),\]
    where $\mu$ is the following measure on the Pareto front $\fR$ with Borel measurable sets $B$:
    \ifarxivelse{
    \begin{align*} 
        \mu(B) &= \!\int_B \!\beta\big(\n(\r)\big)\d\sH(\r) \quad \text{ with }\quad \beta(\n) = C_K\frac{ \prod_{k \in [K]} n_k}{(\one \cdot \n)^{K-1}}.
    \end{align*}
    }{
    \begin{align*} 
        \mu(B) &= \!\int_B \!\beta\big(\n(\r)\big)\d\sH(\r),\quad \beta(\n) = C_K\frac{ \prod_{k \in [K]} n_k}{(\one \cdot \n)^{K-1}}.
    \end{align*}
    }
\end{theorem}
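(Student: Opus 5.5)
The plan is a standard ``local-to-global'' covering argument in the style of asymptotic quantization (Zador-type) theorems, adapted to the quasi-metric $d$.

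\textbf{Step 1: flat case.} First compute the covering density for a hyperplane piece of the front. Let $H=\{\r:\n\cdot\r=c\}$ with $\n$ having all coordinates positive. A Pareto ball around $\r_0$ is $\{\r\in H: \r_0-t\one\preceq \r\}$, i.e.\ the orthant $\r_0-t\one+\RR_{\ge0}^K$ intersected with $H$. This intersection is a $(K-1)$-simplex. Its vertices are $\r_0-t\one+s_k e_k$ with $s_k=t(\one\cdot\n)/n_k$, since $\n\cdot(-t\one+s_ke_k)=0$. Its $(K-1)$-volume is therefore
\[
V(\n,t)=\frac{t^{K-1}}{(K-1)!}\,\frac{(\one\cdot\n)^{K-1}}{\prod_k n_k}\,c'_K ,
\]
where $c'_K$ comes from the identity $\mathrm{vol}_{K-1}(\mathrm{conv}(s_ke_k))=\prod_k s_k\cdot\|(1/s_k)_k\|/(K-1)!$. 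Using $\sum_k n_k/(t\,\one\cdot\n)$ and $\|\n\|=1$, this gives exactly the factor $(\one\cdot\n)^{K-1}/\prod_k n_k$. All simplices are translates of each other. Hence covering a large region of $H$ of area $a$ needs $\sim\theta_K\,a/V(\n,t)$ balls, where $\theta_K\ge1$ is the optimal covering density of $\RR^{K-1}$ by translates of a simplex. This yields $\beta(\n)=C_K\prod_k n_k/(\one\cdot\n)^{K-1}$ with $C_K=\theta_K (K-1)!/c'_K$. The key points are that $C_K$ is independent of $\n$ and that the limit exists. Both follow because all such simplices are affine images of a standard simplex, and an affine map $\RR^{K-1}\to\RR^{K-1}$ preserves covering density. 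Existence of the density limit uses the usual subadditivity argument: covering a big cube by copies of smaller cubes.

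\textbf{Step 2: localization.} Partition $A$ into small patches $A_i$ of diameter $\le\rho$ whose boundaries are $\sH$-null; this is possible because $\partial A$ is null. On each patch, Assumption~\ref{assumption: nice} (smoothness, normals bounded by $\eta$) lets me write the front as a graph over the tangent hyperplane $T_i$ with $C^1$ deviation $o(1)$ as $\rho\to0$.

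\emph{Upper bound.} Take a near-optimal flat cover of the projection onto $T_i$ at scale $t(1-\epsilon)$ and lift it to the front. Because the Pareto ball is an orthant section, a slope perturbation of size $o(1)$ changes the covered region only by a factor $1\pm o(1)$ in $t$. The orthant condition is stable, with margin controlled by $\eta$.

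\emph{Lower bound.} Run the same argument in reverse: lifted balls project into slightly enlarged flat simplices, so no cover can beat the flat density.

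Summing over patches, the boundary-layer balls contribute $O(t^{-(K-1)}\cdot\sH(\text{$t$-neighbourhood of }\partial A_i))$, which is $o(t^{-(K-1)})$. Letting $t\to0$ and then $\rho,\epsilon\to0$ gives Riemann sums converging to $\int_A\beta(\n)\,\d\sH$.

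\textbf{Step 3: cover points must lie in $A$.} Definition~\ref{def:Pareto-cover} requires $S\subset A$. I will handle this by shrinking patches slightly and using the same boundary-layer estimate.

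\textbf{Main obstacle.} I expect the hardest part to be the existence of the flat covering density for simplex translates (a Zador-type subadditivity argument) together with the uniformity of the perturbation estimate in Step 2. The curved front is only locally a translate-invariant setting, so I must control how the quasi-metric balls distort under curvature. My first attempt would be to sandwich curved balls between flat simplices at scales $t(1\pm C\rho/\eta)$.
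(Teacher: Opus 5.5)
Your proposal is correct and follows essentially the same route as the paper: Pareto balls on a linear front are the simplices $\Omega_\n$ of $(K-1)$-volume $\frac{t^{K-1}}{(K-1)!}(\one\cdot\n)^{K-1}/\prod_k n_k$ (your $c'_K$ is in fact $1$), and affine invariance of the simplex covering density makes $C_K=\theta_K(K-1)!$ independent of $\n$. The curved front is then handled, exactly as in \cref{lem:piecewise-approximation,prop:covering-to-minkowski,lem:covering-density}, by low-distortion maps onto tangent planes plus boundary-layer estimates. The differences are minor. You obtain the flat density limit by a subadditivity argument rather than the paper's periodic translative covering density. Your boundary-layer bound for balls centred outside a patch also tacitly uses the global Pareto--Euclidean comparability that the paper proves separately in \cref{lem:comparability}, via compactness and mutual non-domination of front points.
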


We provide further comments on the role of the assumptions at the start of the proof in \cref{proof:limiting-distribution}.\arxivonly{\par}
It may initially be surprising that the density $\d\mu/\d\sH$ is not uniform: the measure depends on the normal direction $\n(\r)$ and puts more mass on parts of the Pareto front with larger $\beta$.
\arxivonly{Intuitively, $\beta$ is larger when trade-offs are somewhat symmetric (decreasing one risk increases another by roughly the same amount, and vice versa) and small where at least one $n_k$ is close to zero.}
In fact, a small $\beta$ is closely related to a large gain-to-loss ratio for some pair of benchmarks; this is called an improper trade-off \citep{geoffrion1968proper} and is generally undesirable from a multi-objective perspective. Hence, the limiting distribution avoids putting weight on (nearly) improper solutions, implying that the upcoming algorithms \emph{hedge} using only proper trade-offs. The density depending on the normal vectors is visualized in \cref{subfig:Pareto-density}. 

\arxivonly{
Our theorem gives a precise characterization of the limit as $t\to 0$ for a phenomenon that can already be easily observed for two Pareto fronts in two dimensions for moderately large $t$. In \cref{subfig:Pareto-covering-zoomed-out}, we see that for a Pareto front with solely proper trade-offs (blue; where the normal vectors are large in all coordinates), the Pareto cover is uniformly spread across the front and requires many points. 
When the Pareto front has larger curvature (orange), fewer points are in regions where some $n_k$ is close to zero, and the trade-offs are improper. In total, potentially far fewer points are needed to cover the Pareto front with large curvature. As we shall see in \cref{subsec:Pareto-ERM}, this alleviates statistical complexity.
}

The measure $\mu$ also naturally induces a distribution over the Pareto set:
Without loss of generality (by \cref{lem:weak-sufficiency}), consider the quotient space $\pareto(\cF,\bR)/\bR$.
By pulling back $\mu$ from the Pareto front, we define the \emph{prior distribution} $\pi=\bR^\star \mu /\mu(\fR)$ over the quotient space with density
\begin{equation}
\label{eqn:prior-distribution}
    \frac{\d\pi}{\d (\bR^\star \sH)}(f) \propto \beta\big(\n(\bR(f))\big), \quad f \in \pareto(\cF,\bR)/\bR.
\end{equation}
As we will see, this enables a PAC-Bayesian perspective of the learning problem, with the prior naturally encoding where a learner should expect to find the solution of $\cR_Q$.

\arxivonly{
\begin{wrapfigure}{r}{0.25\textwidth}
    \centering
    \vspace{-0.6cm}
    \input{figures/simplex_view.tex}
    \vspace{-0.5cm}
    \caption{The Pareto ball in the tangent space of a Pareto front is a simplex.\vspace{-0.4cm}}
    \label{fig:tangent}
    \vspace{-0.5cm}
\end{wrapfigure}
\paragraph{Proof outline of \cref{thm:limiting-distribution}.}
The proof of \cref{thm:limiting-distribution}, provided in \cref{proof:limiting-distribution}, is based on two main steps. In the first step, we introduce technical machinery that helps reduce the problem from \emph{smooth} Pareto fronts to \emph{linear} ones. The second step computes covering numbers for linear fronts.

The core lemma in the first step is \cref{lem:piecewise-approximation}. It constructs a piecewise-linear approximation of the Pareto front by cutting up the Pareto front into many small pieces, and approximating each piece by a corresponding (linear) tangent space, as in \cref{fig:tangent}. As we eventually compute covering numbers on each linear front, we need to show that this linearization procedure approximately preserves the quasi-metric structure. Indeed, arbitrarily little local distortion can be incurred by cutting the Pareto front into sufficiently fine pieces.

For the second step, we compute the limiting distribution for linear Pareto fronts in \cref{prop:covering-to-minkowski}. Given a $(K-1)$-dimensional linear front and scale~$t$, there is a simplex (a convex combination of $K$ points in that hyperplane) so that $t$-Pareto balls correspond to translations of that simplex. And so, for linear fronts, the problem of constructing a Pareto cover is equivalent to a classic problem from geometry of covering a space by translates of a fixed simplex: the \emph{translative covering} \citep{naszodi2018flavors} of the front.
The periodic translative covering density of simplices turns out to be unique, which then implies that the covering density reduces to the reciprocal of the volume of the simplex, yielding the final formula of $\beta$ in \cref{thm:limiting-distribution}.
}

\ifarxivelse{
\subsection{Discussion}
\label{subsec:covering-discussion}

\paragraph{Computation of a Pareto covering.}
In general, \emph{exactly} computing a minimal $t$-Pareto cover is computationally hard \citep{zitzler2008quality,papadimitriou2000approximability,chvatal1979greedy}. 
For finite hypothesis classes, the problem becomes much simpler to analyze: Let $M$ be the number of models, $K$ the number of source benchmarks, and $P \leq M$ the size of the Pareto set. 
For $t=0$ (used in our experiments), we only need to compute the exact Pareto set. This can be done by checking pairwise dominance across all models, which requires at most $O(KM^2)$ comparisons. In practice, this may already yield a good candidate set.
For $t>0$, one can first restrict attention to the Pareto set and then build a $t$-Pareto cover on top using standard greedy set-cover heuristics to obtain a small approximate cover with a logarithmic size overhead and computational cost $O(KP^2)$ \citep{chvatal1979greedy}. Notice that a logarithmic size overhead is essentially irrelevant in the upcoming bounds.
}{
\subsection{Comparison with Other Coverings}
\label{subsec:covering-discussion}
}

\arxivonly{\paragraph{Comparison with other coverings.}}
Instead of the non-uniform Pareto covering, one may also construct uniform coverings using norm balls, such as $\ell_\infty$ or $\ell_2$-balls \citep{zhang2024gliding}.
Alternatively, one could also cover the simplex in $\ell_1$-norm and then minimize a weighted sum of the risks \citep{mansour2021theory}. We compare the coverings in more detail in \cref{subsec:covering-comparison}.
A discussion of other Pareto front approximations can be found in \citep{zitzler2008quality,vassilvitskii2005efficiently,papadimitriou2000approximability,compton2026ratio}. \arxivonly{They usually consider ratio covers, where for some $\alpha>1$ and for every $\r\in\fR$ there is a $\r_0$ in the covering with $\r_0\preceq \alpha \r$.}

We now explicitly calculate the density in \cref{eqn:prior-distribution} for a family of example Pareto fronts, and then numerically validate \cref{thm:limiting-distribution} on this example\cameraonly{ (cf.\ \cref{fig:Pareto-covering-small})}. \ifarxivelse{We further compare with the other coverings discussed above.}{We compare this distribution with the other coverings in \cref{subsec:covering-comparison}.}
\cameraonly{
\begin{figure}
    \centering
    \includegraphics[width=\linewidth]{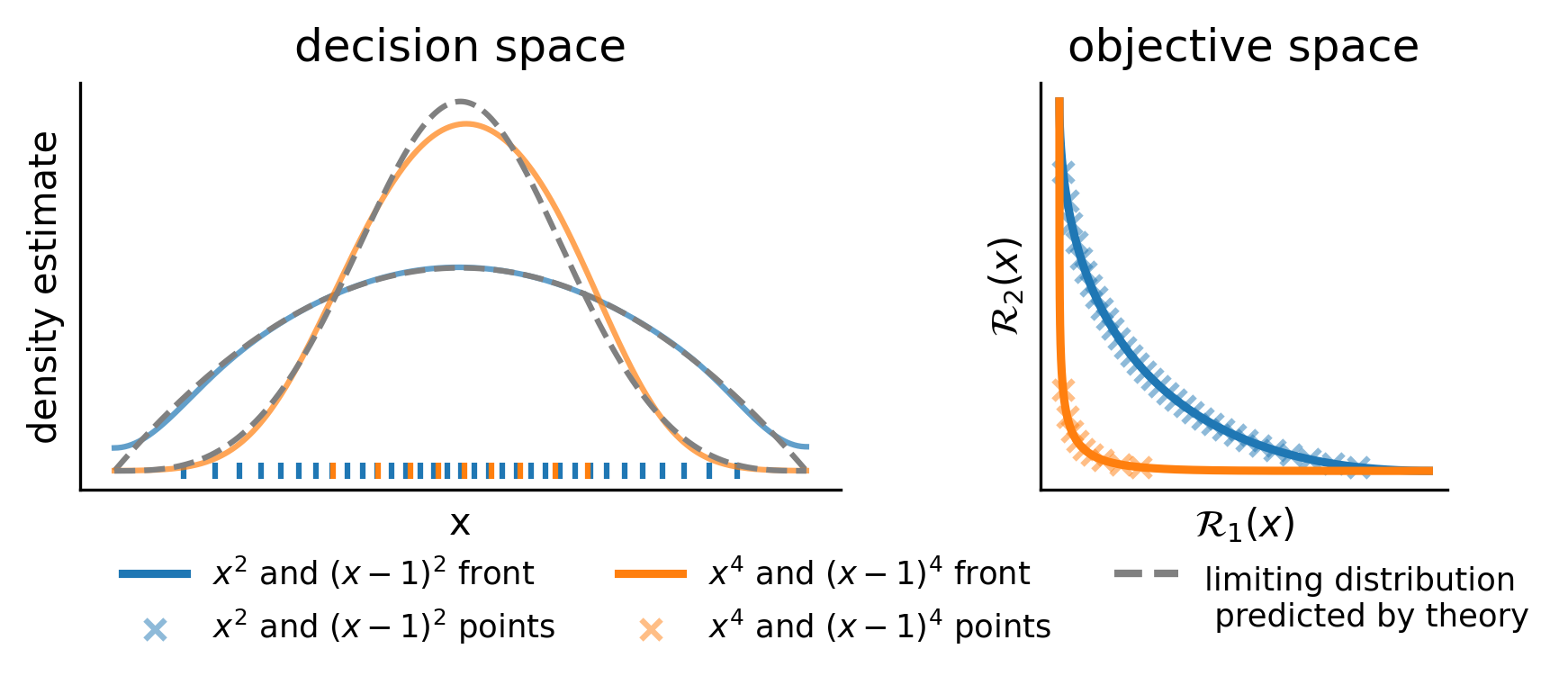}
    \caption{Limiting distribution from \cref{ex:limiting-distribution}, cf.\ \cref{fig:Pareto-covering}.}
    \vspace{-0.2cm}
    \label{fig:Pareto-covering-small}
\end{figure}
}
\begin{example}
\label{ex:limiting-distribution}
    Let $\cF=[\eps,1-\eps]$ and $K=2$. Then, take the risks $\cR_1(x)=\abs{x}^p$ and $\cR_2(x)=\abs{x-1}^p$ with $p\in\NN$. The density of $\pi$ with respect to Lebesgue measure $\lambda$ is proportional to the harmonic mean of the derivatives
    $$
    \frac{\d\pi}{\d\lambda}(x) \propto \frac{px^{p-1}(1-x)^{p-1}}{x^{p-1}+(1-x)^{p-1}}.
    $$
\end{example}
\arxivonly{
In \cref{fig:Pareto-covering}, we visualize the setting of \cref{ex:limiting-distribution} for the values $p=2$ and $p=4$. \cref{subfig:eps-pareto-set-fronts} shows the Pareto fronts with a numerically computed minimal Pareto covering as well as the two different coverings from above. \cref{subfig:eps-pareto-set-distributions} shows that each covering method induces a different distribution over the decision space, and depends differently on the Pareto front geometry.
Clearly, the Pareto covering number puts more mass on parts corresponding to the ``elbow'' of the Pareto fronts. \cref{subfig:eps-pareto-set-scale} shows that, unsurprisingly, all coverings scale as $t^{-(K-1)}=t^{-1}$, but there is a constant factor gap between them, with the Pareto covering being much smaller at the same scale. Therefore, the other two coverings can be ``unnecessarily large''. Finally, \cref{subfig:eps-pareto-set-curve,subfig:eps-pareto-set-distributions} together show that only the Pareto covering adapts to the Pareto front geometry as one may expect: the $\ell_2$-norm covering number \emph{increases} as more favorable trade-offs become available ($p=4$), and even puts more mass on improper regions. At the same time, the simplex $\ell_1$-covering number is completely agnostic to the geometry of the front, and while the induced distributions are more similar to those of the Pareto covering, they still do not avoid improper regions in the case $p=4$.
\begin{figure}[t]
    \begin{minipage}[c]{0.72\textwidth}
        \begin{subfigure}{\linewidth}
            \includegraphics[width=\linewidth, trim={0cm 0cm 0cm 0cm}, clip]{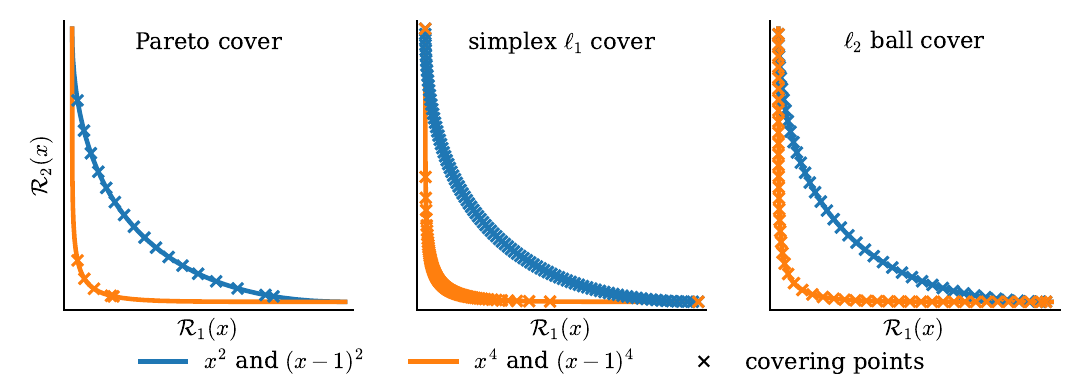}
            \caption{Three different covers of two Pareto fronts at the same scale $t$.}
            \label{subfig:eps-pareto-set-fronts}
        \end{subfigure}
        \begin{subfigure}{\linewidth}
            \includegraphics[width=\linewidth, trim={0cm 0.2cm 0cm 0cm}, clip]{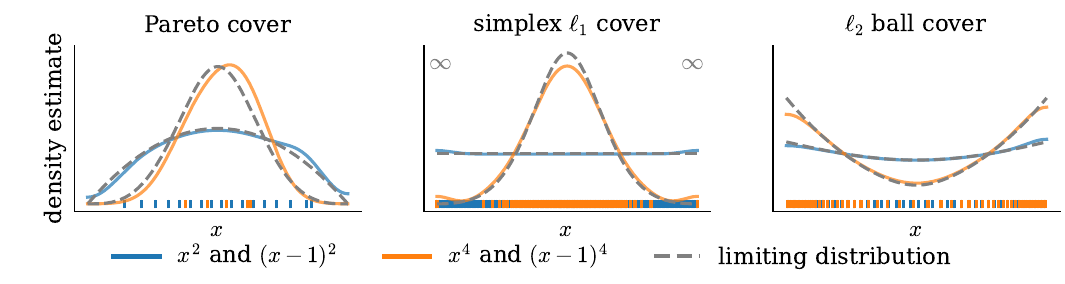}
            \caption{The density estimates and limiting distributions in decision space.}
            \label{subfig:eps-pareto-set-distributions}
        \end{subfigure}
    \end{minipage}
    \hfill
    \begin{minipage}[c]{0.275\textwidth}
    \begin{subfigure}{\linewidth}
        \includegraphics[width=\linewidth, trim={0cm 0.4cm 0cm 0cm}, clip]{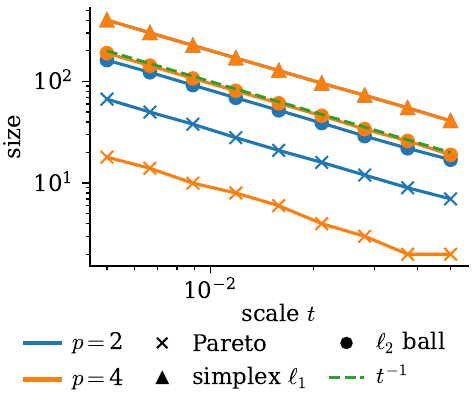}
        \caption{Cover size dependence on $t$.}
        \label{subfig:eps-pareto-set-scale}
    \end{subfigure}
    \begin{subfigure}{\linewidth}
        \includegraphics[width=\linewidth, trim={0cm 0.5cm 0cm 0cm}, clip]{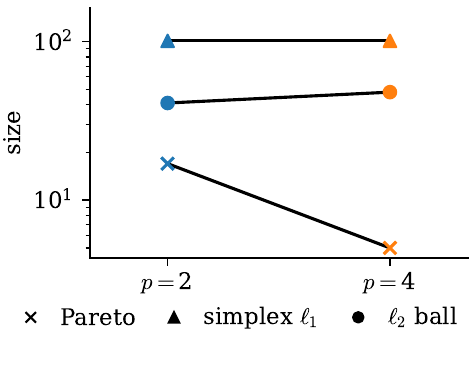}
        \caption{Cover size dependence on $p$.}
        \label{subfig:eps-pareto-set-curve}
    \end{subfigure}
    \end{minipage}
    \caption{(\subref{subfig:eps-pareto-set-fronts}) For a fixed scale $t$ and two different Pareto fronts from \cref{ex:limiting-distribution}, we plot a Pareto covering and two different covers from \cref{subsec:covering-discussion,subsec:covering-comparison}. 
    (\subref{subfig:eps-pareto-set-distributions}) We plot a kernel density estimate and the limiting distribution from \cref{thm:limiting-distribution} (for the other two limiting distributions, see \cref{subsec:covering-comparison}). 
    (\subref{subfig:eps-pareto-set-scale}) As a function of the scale $t$, the covering numbers are all of order $t^{-(K-1)}=t^{-1}$, but the constant factor depends on the geometry and the covering method.
    (\subref{subfig:eps-pareto-set-curve}) The Pareto covering number decreases with more favorable trade-offs, whereas the simplex $\ell_1$ cover is indifferent, and the $\ell_2$-norm cover even increases.}
    \label{fig:Pareto-covering}
\end{figure}
}

\section{Transfer Learning under Monotonicity}\label{sec:transfer-learning}

In this section, we bound the statistical complexity of learning under (approximate) monotonicity.

\subsection{The Statistical Complexity of Transfer Learning}
\label{subsec:Pareto-ERM}
In this section, we discuss two estimators based on the \emph{empirical risk}, defined as the average loss on the i.i.d.\ sample $\crl{z_i}_{i=1}^n \sim Q^{\otimes n}$, that is, $\Rhat_Q(f):= \frac{1}{n}\sum_{i=1}^n \ell(f,z_i)$. 

\paragraph{Pareto ERM.} The first algorithm
is a simple two-step procedure: for some fixed $t>0$, 
\begin{enumerate}[leftmargin=*,itemsep=-0.1pt,topsep=-2pt,label=(\roman*)]
    \item build a minimal $t$-Pareto set $\crl{f_1,\ldots,f_N}$, and
    \item compute \arxivonly{the ERM} $\smash{\Paretoerm{t}\in\argmin_{f\in\crl{f_1\ldots,f_N}}\Rhat_Q(f)}$.
\end{enumerate}
We call $\Paretoerm{t}$ the \emph{Pareto Empirical Risk Minimizer}\arxivonly{ (Pareto ERM)}.

\paragraph{Pareto EW.}
For the second algorithm, we consider again the quotient space $\pareto(\cF,\bR)/\bR$, and let \cref{assumption: nice} hold.
Then, we can let \(\pi\) be the prior distribution supported on \(\pareto(\cF,\bR)/\bR\) from \cref{eqn:prior-distribution}, induced by the limiting distribution in \cref{thm:limiting-distribution}. The Exponential Weights (EW) posterior $\rhohat\ll \pi$ with temperature \(\lambda>0\) is defined by its Radon-Nikodym derivative
\begin{equation}
\label{eq:ew-posterior}
    \frac{\d\rhohat}{\d\pi}(f)
    \propto \exp\!\big(-\lambda\,\Rhat_Q(f)\big),
    \quad f\in\pareto(\cF,\bR)/\bR.
\end{equation}
We call the aggregated predictor $f_{\rhohat}(\cdot):=\EE_{f\sim \rhohat}[f(\cdot)]$ the \emph{Pareto Exponential Weights} (Pareto EW).

Note that the $t$-Pareto set concentrates in different regions of the Pareto set (\cref{fig:Pareto-geometry-tikz}), which ``biases'' the ERM according to the geometry.
For the Pareto EW, this is even more explicit, as it directly endows the Pareto set with a prior.
In both cases, the better trade-offs there are, the more both estimators \emph{hedge} by preferring these trade-offs over other Pareto optimal models.\footnote{The bounds do not differ significantly between the two estimators; we include the Pareto EW to highlight the mechanism of hedging depending on the Pareto front through \cref{thm:limiting-distribution}.\arxivonly{ In particular, \cref{thm:Geometry-complexity-upper-bound} relies on the prior putting sufficiently much mass on near-optimal models (\cref{lem:mass-model-ball}).}}
We formalize the benefits of hedging for transfer in the following upper bounds on the excess risk $\excessRisk{\fhat}{\cF}=\cR_Q(\fhat)-\inf_{f\in\cF}\cR_Q(f)$. Recall that we assume that the loss is bounded $\ell\in [0,1]$, $\uppermodulus(t)$ is the upper modulus defined in  \cref{def:modulus-of-monotonicity}.
\begin{theorem} 
\label{thm:Geometry-complexity-upper-bound}
    With probability at least $1-\delta$, the Pareto ERM $\Paretoerm{t}$ achieves \cameraonly{\vspace{-0.2cm}}
    \begin{equation*}
        \excessRisk{\Paretoerm{t}}{\cF}\leq \uppermodulus(t) + \sqrt{\frac{2\log (2\Npar(t,\cF,\bR)/\delta)}{n}}.
    \end{equation*}
    Moreover, in the setting of \cref{thm:limiting-distribution}, assume that for some $\fstar\in\argmin_{f\in\cF}\cR_Q(f)$ its benchmark values $\bR(\fstar)$ are in the relative interior point of $\fR$ and that the loss is convex. Then there exists a $t_0>0$ such that for all $t\in(0,t_0]$ 
    and for $\lambda=\sqrt{8n\log((4\Npar(t,\cF,\bR))/\delta)}$, the EW posterior \(\rhohat\) achieves with probability at least \(1-\delta\)
\begin{align*}
    \excessRisk{\ParetoEW}{\cF}\leq \uppermodulus(t) +\sqrt{\frac{2\log\prn{4\Npar(t,\cF,\bR)/\delta}}{n}}.
\end{align*}
\end{theorem}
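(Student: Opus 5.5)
For the Pareto ERM bound, I would split the excess risk into an approximation term controlled by the upper modulus and an estimation term controlled by a union bound over the finite $t$-Pareto set.
Let $\fstar$ be a minimizer of $\cR_Q$ over $\cF$ (assumed to exist) and let $\crl{f_1,\ldots,f_N}$ be the minimal $t$-Pareto set, so $N=\Npar(t,\cF,\bR)$. By the consequence of \cref{def:Pareto-cover} and \cref{lem:monotonicity-pareto} noted after the definition, some $f_j$ satisfies $\bR(f_j)\preceq\bR(\fstar)+t\one$, i.e.\ $\Paretodistance(f_j,\fstar)\le t$. \cref{def:modulus-of-monotonicity} then gives $\cR_Q(f_j)\le \cR_Q(\fstar)+\uppermodulus(t)$. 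This step needs no monotonicity assumption, so all approximation error sits in $\uppermodulus(t)$. For the estimation part, Hoeffding's inequality for the $[0,1]$-bounded losses, with a union bound over the $N$ functions and both tails, gives with probability $1-\delta$ that $|\Rhat_Q(f_i)-\cR_Q(f_i)|\le \sqrt{\log(2N/\delta)/(2n)}$ for all $i$. The standard ERM chain
\[
\cR_Q(\Paretoerm{t})\le \Rhat_Q(\Paretoerm{t})+\epsilon\le \Rhat_Q(f_j)+\epsilon\le \cR_Q(f_j)+2\epsilon ,
\]
where $\epsilon$ is the Hoeffding radius, then yields $2\sqrt{\log(2N/\delta)/(2n)}=\sqrt{2\log(2N/\delta)/n}$. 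Adding the approximation term gives the first claim.

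For Pareto EW I would use the standard PAC-Bayes exponential-weights argument. Convexity of the loss and Jensen give $\cR_Q(\ParetoEW)\le \EE_{f\sim\rhohat}\cR_Q(f)$. The Donsker--Varadhan (Catoni) bound with bounded losses and Hoeffding's lemma gives, with probability $1-\delta/2$, simultaneously for all posteriors $\rho$,
\[
\EE_\rho \cR_Q\le \EE_\rho\Rhat_Q+\frac{\lambda}{8n}+\frac{\KL(\rho\|\pi)+\log(2/\delta)}{\lambda}.
\]
Since $\rhohat$ is the Gibbs posterior, it minimizes the right-hand side over $\rho$. It can therefore be compared against any $\rho$, in particular $\rho=\pi|_B$ with $B$ a small set of near-optimal models. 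Take $B$ to be (the preimage of) the Pareto ball around $\bR(\fstar)$, i.e.\ points $\r$ with $d(\r,\bR(\fstar))$ small. On $B$, $\cR_Q\le \cR_Q(\fstar)+\uppermodulus(t)$, and $\KL(\pi|_B\|\pi)=\log(1/\pi(B))$. A second Hoeffding step, with probability $1-\delta/2$, converts $\EE_{\pi|_B}\Rhat_Q$ back to $\EE_{\pi|_B}\cR_Q$ up to $\sqrt{\log(2/\delta)/(2n)}$, or alternatively I would bound both directions inside the same PAC-Bayes inequality.

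The crux, and the step I expect to be hardest, is the mass lemma: $\pi(B)\gtrsim 1/\Npar(t,\cF,\bR)$ for small $t$ (this is the role of \cref{lem:mass-model-ball}). I would prove it from \cref{thm:limiting-distribution}. Because $\bR(\fstar)$ is in the relative interior and the front is nice, for small $t$ the set $B$ is, after linearization, a simplex of $\sH$-measure $\asymp t^{K-1}/\beta(\n)$ with $\beta$ continuous. Hence $\mu(B)\approx t^{K-1}\cdot(\text{const})$, while $\Npar(t,\fR)\,t^{K-1}\to\mu(\fR)$. This gives $\pi(B)=\mu(B)/\mu(\fR)\ge c/\Npar(t)$, and the constants must be matched carefully, which is why the statement uses $4N$ rather than $2N$ and requires $t\le t_0$. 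Plugging $\log(1/\pi(B))\le\log(2N)$ into the bound and choosing $\lambda=\sqrt{8n\log(4N/\delta)}$ to balance $\lambda/(8n)$ against $\log(4N/\delta)/\lambda$ gives $\sqrt{\log(4N/\delta)/(2n)}$ twice, i.e.\ the claimed $\sqrt{2\log(4N/\delta)/n}$ on top of $\uppermodulus(t)$.
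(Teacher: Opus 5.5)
Your proposal is correct and follows the paper's proof. The Pareto ERM part is identical (Hoeffding with a union bound, then the ERM chain through the cover element $f_j$ with $\bR(f_j)\preceq\bR(\fstar)+t\one$). For Pareto EW, the paper likewise compares the Gibbs posterior to $\pi$ conditioned on $\cN_t=\{f:\bR(f)\preceq\bR(\fstar)+t\one\}$, uses $\KL=\log(1/\pi(\cN_t))$, Jensen, and \cref{lem:mass-model-ball}. The only difference is that it cites Alquier's two-sided oracle inequality, with terms $\lambda/(4n)+2(\KL+\log(2/\delta))/\lambda$, where you use a one-sided PAC-Bayes bound plus one Hoeffding bound on the fixed comparator; this gives the same, in fact slightly smaller, constant.

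The constant you leave open in the mass step is settled in the paper exactly along your sketch. The density $\beta$ cancels the simplex volume, so $\mu(B_t)\ge(1-3\gamma)\,\theta_K t^{K-1}$, and the covering density satisfies $\theta_K\ge 1$. Together these give $\pi(\cN_t)\ge 1/(2\Npar(t,\cF,\bR))$ for all sufficiently small $t$.
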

The proof is in \cref{proof:Geometry-complexity-upper-bound}.
First, observe that both estimators are consistent whenever there exists a sequence $t_n\to 0$ so that $\uppermodulus(t_n)\to 0$ as $n\to \infty$ (i.e., weak monotonicity holds). At the same time, note that the bounds hold even when monotonicity is violated: if $\uppermodulus(0)>0$, we simply incur a potentially 
irreducible approximation error of the Pareto set, as discussed in \cref{subsec:modulus-of-monotonicity}.

For example, when a decreasing upper bound on $\uppermodulus$ is known, we can choose $t$ to balance the two terms appearing in \cref{thm:Geometry-complexity-upper-bound}, exemplified in the following corollary for a linearly decaying modulus as $t\to 0$.
\begin{corollary}
\label{cor:tstar}
    Suppose that $\uppermodulus(t)\leq Lt$ with $L$ known to the learner. Define
    \begin{equation*}
        t_n = \inf\crl{t>0: t^2\geq \frac{2\log\prn{2\Npar(t,\cF,\bR)}}{L^2 n}}.
    \end{equation*}
    Then for any $t>t_n$, \arxivonly{$\Paretoerm{t}$ achieves }with probability at least $1-\delta$, 
    \begin{equation*}
        \excessRisk{\Paretoerm{t}}{\cF}\leq  2Lt+\sqrt{\frac{2\log (1/\delta)}{n}}.
    \end{equation*}
    If $n>\frac{2\log 2}{L^2}$ and $K>1$, then
    it holds that $t_n^2\leq 1\wedge \tfrac{K-1}{L^2n}\log(1+\tfrac{16L^2n}{K-1})$. An analogous bound holds under the additional assumptions of \cref{thm:Geometry-complexity-upper-bound} for the Pareto EW.
\end{corollary}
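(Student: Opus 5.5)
The corollary has two parts, and I would prove them separately.

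\textbf{First bound.} Fix $t>t_n$. The plan is to show that the definition of $t_n$ gives
\[
\sqrt{2\log(2\Npar(t,\cF,\bR))/n}\leq Lt .
\]
Then I would plug this into \cref{thm:Geometry-complexity-upper-bound}. Using $\sqrt{a+b}\leq\sqrt a+\sqrt b$ and $\log(2N/\delta)=\log(2N)+\log(1/\delta)$, the theorem yields
\[
\excessRisk{\Paretoerm{t}}{\cF}\leq Lt+\sqrt{2\log(2\Npar(t,\cF,\bR))/n}+\sqrt{2\log(1/\delta)/n}\leq 2Lt+\sqrt{2\log(1/\delta)/n}.
\]

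The one subtlety is that $t>t_n$ does not by itself place $t$ in the set defining $t_n$. Monotonicity rescues this. The map $t\mapsto \Npar(t,\cF,\bR)$ is non-increasing, because a $t$-Pareto cover is also a $t'$-cover for every $t'\geq t$. Hence the right-hand side $2\log(2\Npar(t))/(L^2n)$ is non-increasing in $t$, while $t^2$ is increasing. So the set $\{t>0: t^2\geq 2\log(2\Npar(t))/(L^2n)\}$ is upward closed, and every $t>t_n$ belongs to it.

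\textbf{Bound on $t_n$.} I would use the crude covering bound from \cref{lem:linf-bound-covering}, namely $\log\Npar(t)\leq (K-1)\log(1+1/t)$. It then suffices to exhibit one $t^\ast$ with
\[
(t^\ast)^2\geq \frac{2\log 2+2(K-1)\log(1+1/t^\ast)}{L^2n},
\]
since then $t_n\leq t^\ast$.

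I would try $(t^\ast)^2=\tfrac{K-1}{L^2n}\log\bigl(1+\tfrac{16L^2n}{K-1}\bigr)$, writing $x=L^2n/(K-1)$. The case where this exceeds $1$ is handled by $t^\ast=1$. Indeed, $\Npar(1)=1$, because any single front point $\r_0$ satisfies $d(\r_0,\r)\leq 1$ on $[0,1]^K$. The condition at $t=1$ then reads $1\geq 2\log2/(L^2n)$, which is exactly the hypothesis $n>2\log 2/L^2$. This gives the $1\wedge$ part.

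In the remaining case, I need
\[
\log(1+16x)\geq \frac{2\log 2}{(K-1)L^2n}\cdot L^2n+2\log\bigl(1+1/t^\ast\bigr).
\]
Here I would bound $1/t^\ast\leq \sqrt{x}/\sqrt{\log(1+16x)}$ and use that $\log(1+16x)$ is bounded below when $x\geq 2\log 2/(K-1)$. The target is an inequality of the form
\[
2\log\bigl(1+\sqrt{x}/c\bigr)+c'\leq \log(1+16x),
\]
using $2\log(1+\sqrt{y})\leq \log(1+y)+\log 2$ or similar. The factor $16$ should leave enough slack to absorb the constants $2\log2/(K-1)\leq 2\log 2$ and $\log 4$.

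\textbf{Main obstacle.} The hard part is this last scalar inequality, and in particular getting the constants to close for all $K\geq2$ and small $x$. If the direct route fails, I would split into $x$ small, where the $1\wedge$ branch should apply, and $x$ large, where the logarithmic terms dominate.

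\textbf{Pareto EW.} The analogous statement follows identically from the second bound in \cref{thm:Geometry-complexity-upper-bound}, with $2\Npar$ replaced by $4\Npar$ and $t\leq t_0$.
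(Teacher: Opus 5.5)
Your first part is the paper's argument. The map $t\mapsto 2\log(2\Npar(t,\cF,\bR))/(L^2n)$ is non-increasing, so the defining set is upward closed and every $t>t_n$ lies in it; splitting $\log(2N/\delta)$ then finishes.

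For the bound on $t_n$, you share the paper's skeleton. The branch $\theta:=\tfrac{K-1}{L^2n}\log(1+\tfrac{16L^2n}{K-1})\ge 1$ is handled at $t=1$ via $\Npar(1,\cF,\bR)=1$, and \cref{lem:linf-bound-covering} is used otherwise. Where you differ is the choice of witness.

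\emph{The paper's witness.} It relaxes $1+1/t\le 2/t$, which is valid for $t\le 1$. It then sets $a=2(K-1)/(L^2n)$ and $b=2^{K/(K-1)}$ and takes $t^2=\tfrac a2 W(2b^2/a)$, the exact solution of $t^2=a\log(b/t)$. Finally it bounds this using $W(y)\le\log(1+y)$ and $b^2\le 16$.

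\emph{Your witness.} You plug $t^\ast=\sqrt\theta$ in directly. This route is more elementary and does close, but you left the final scalar inequality unverified, and the slack does not come from where you suggest. The lower bound on $u:=\log(1+16x)$ must come from the branch $\theta<1$, not from $n>2\log 2/L^2$. The latter only gives $x>2\log 2/(K-1)$, which degenerates as $K$ grows.

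\emph{Closing your inequality.} In the branch $\theta<1$ you have $u<x$, hence $e^u=1+16x>1+16u$, which forces $u>4$. Use $\tfrac{2\log 2}{K-1}\le 2\log 2$ and $2\log(1+\sqrt y)\le\log 2+\log(1+y)$. Then it suffices that $1+16x\ge 8+8x/u$. This holds because $8x/u<2x$ and $x>u>4$, so no further case split is needed.

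\emph{Trade-off.} The paper's route yields the slightly sharper intermediate bound $t_n^2\le\tfrac a2W(2b^2/a)$. It also explains the constant as $16=\max_{K\ge 2}2^{2K/(K-1)}$. Your route avoids the Lambert function entirely.
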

The proof is a direct consequence of \cref{thm:Geometry-complexity-upper-bound} and can be found in \cref{proof:tstar}. 
The second part of the corollary uses a worst-case upper bound on the Pareto covering number (\cref{lem:linf-bound-covering}) to obtain a uniform bound on $t_n$. It implies that the complexity of the transfer problem is at most of order $\sqrt{K/n}$ (up to log factors).
In \cref{prop:geometry-fast-rates} of \cref{subsec:aggregation-for-transfer}, we demonstrate that for strongly convex losses, by using a modified version of Pareto ERM, we can achieve fast rates 
in the worst case of order $K/n$.

\citet{mansour2021theory} also prove a $\sqrt{K/n}$ rate for target tasks that are mixtures of the sources, i.e., which are expectations of the same loss over distributions $Q\in\conv(P_1,\dots,P_K)$. Under their mixture assumption, 
we have $\overline\omega(t)\le t$ (\cref{lem:Q-in-convex-hull,lem:bound-modulus-Lipschitz}) and the corollary applies with the choice of $t_n \asymp \sqrt{K/n}$. Thus our result recovers their worst-case rate, while yielding potentially smaller bounds (by a constant factor characterized by $\mu(\fR)$ with $\mu$ from \cref{thm:limiting-distribution}) in benign regimes where $\log \Npar(t,\cF,\bR)\ll K\log(1/t)$ (e.g., when the loss function exhibits strong curvature). Indeed, as a result of using an $\ell_1$-norm discretization of the simplex, their bound is agnostic to the geometry of the Pareto front, whereas ours is adaptive (cf.\ \cref{fig:Pareto-covering}). In \cref{sec:separation-mansour}, we prove that the separation between the algorithm proposed by \citet{mansour2021theory} and Pareto ERM is not only in the upper bound; it is indeed suboptimal even when the target is a mixture.

\paragraph{A matching lower bound.}
In \citet[Theorem 7]{mansour2021theory}, a matching lower bound of order $\sqrt{K/n}$ is also established.
However, it is only under the very worst-case geometry that this lower bound applies.
We now show a refined lower bound for any \emph{fixed} Pareto covering number, and that \cref{cor:tstar} is tight in the following sense: If we fix $t$, the Pareto covering number $\Npar$ at scale $t$, and the modulus $\uppermodulus(t)=t$, we can find a problem instance where the rate obtained by \cref{cor:tstar} is tight up to constants.
\begin{theorem}
\label{thm:Geometry-complexity-lower-bound}
     Let $K\geq 4$. Then, for every $5\leq  N\leq \exp\prn{K-1}$, $n \geq 64 \log (N/4)$ and $t\leq \sqrt{\log(N/4)/64 n}$,
     there exists a $(\cF,\bR,\crl{\cR_Q:Q\in\cQ})$ so that (i) $\uppermodulus(t)=t$ for all $Q\in\cQ$, (ii) $\Npar(t,\cF,\bR)=N$ and (iii), the minimax excess risk is lower bounded by
    \begin{equation*}
        \inf_{\fhat}\sup_{Q\in\cQ} \EE\brk{\excessRisk{\fhat}{\cF}}\geq t,
    \end{equation*}
    where the infimum is taken over all \arxivonly{(potentially improper) }estimators.
\end{theorem}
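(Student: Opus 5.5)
We construct a hard instance in which the Pareto front is a finite set of $N$ mutually incomparable points, arranged so that their $t$-Pareto balls are pairwise disjoint. The target risks are then a Fano/Assouad-type family of perturbations indexed by those points.

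\textbf{Construction.} Let $m=\lceil\log_2 (N/4)\rceil$ or similar, with $m\le K-1$; this is possible because $N\le e^{K-1}$. Take $N$ points $\r_1,\dots,\r_N$ on the hyperplane $\{\one\cdot\r = c\}$ inside $[0,1]^K$. Placing them in a common hyperplane makes them mutually incomparable, so they are all Pareto optimal. Spread them so that every pair differs by more than $t$ in some coordinate in each direction. For instance, use codewords of a binary code on the first $K-1$ coordinates, scaled by a gap $\Delta\ge 2t$, with the last coordinate compensating to keep the sum equal to $c$. Let $\cF=\{f_1,\dots,f_N\}$ with $\bR(f_j)=\r_j$. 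Then $d(\r_i,\r_j)>t$ for $i\neq j$, so no ball covers two points and $\Npar(t,\cF,\bR)=N$, which gives (ii).

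For each $j$, define the target $Q_j$ on $\cZ=\{0,1\}^N$, or more simply through Bernoulli losses $\ell(f_i,z)=z_i$ with $z_i\sim\Ber(\tfrac12+t)$ for $i\neq j$ and $z_j\sim\Ber(\tfrac12 - t)$. Then $\cR_{Q_j}(f_i)-\cR_{Q_j}(f_{i'})\in\{0,\pm 2t\}$.

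\textbf{Checking (i).} The pairs with $d_\bR(f,f')\le t$ must be handled carefully. By construction, the only such pairs have $f=f'$, or the distance is at most $t$ yet still permits a target gap of $2t$. To force $\uppermodulus(t)=t$ exactly, I would calibrate the perturbation to size $t$: set the $\Ber$ parameters to $\tfrac12\pm t/2$, so that target gaps are exactly $t$. I would also include a pair of models at Pareto distance exactly $t$, for example $f_j$ and a point whose benchmarks are worse by $t$, so that the supremum equals $t$. Otherwise the supremum is $0\le t$, and the instance can be padded to achieve equality. In every instance the minimizer $f_j$ beats all others by $t$, so any $\fhat$ with $\fhat\neq f_j$ incurs excess risk at least $t$. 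For improper estimators that output mixtures, the loss is linear in the selection weights, so the excess risk is at least $t(1-w_j)$.

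\textbf{Fano step (main obstacle).} Compute $\KL(Q_i^{\otimes n}\|Q_j^{\otimes n})=O(nt^2)$, since the two distributions differ in two Bernoulli coordinates. With $t\le\sqrt{\log(N/4)/(64n)}$, this is at most $\tfrac{1}{8}\log(N/4)$. Fano's inequality then gives error probability at least $1/2$ for identifying $j$. Turning this into an expected excess risk of at least $t$, rather than $t/2$, is where the constants must be arranged carefully. I would set the gap to $2t$ when making the expectation statement and recheck the modulus, or use Assouad/Fano with the relaxed constants that $n\ge 64\log(N/4)$ and $N\ge5$ are chosen to supply. This constant bookkeeping, together with simultaneously pinning $\uppermodulus(t)=t$ and $\Npar=N$ exactly while keeping all points Pareto optimal within $[0,1]^K$ and $K\ge4$, is the step I expect to be most delicate.
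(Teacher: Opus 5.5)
Your architecture is the paper's: an antichain of $N$ front points with pairwise Pareto distance $>t$ in both directions, coordinatewise Bernoulli losses with one privileged index, and Fano. The gap is that the construction you specify does not reach the stated parameter range, and this is exactly where $N\le e^{K-1}$, $K\ge 4$ and $n\ge 64\log(N/4)$ are used.

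\textbf{The packing.} A binary code on the first $K-1$ coordinates has at most $2^{K-1}$ codewords, but $N$ may be as large as $e^{K-1}$. For $K=4$, already $N\in\{9,\dots,20\}$ is out of reach. Your remark that $N\le e^{K-1}$ forces $\lceil\log_2(N/4)\rceil\le K-1$ fails at $K=6$, and $N$ codewords need $\lceil\log_2 N\rceil$ bits in any case. There is a second problem: the compensating last coordinate ranges over an interval of length $\Delta$ times the number of coded coordinates, roughly $2t\log_2 N$. The hypotheses only give $t\le 1/64$ while $\log N$ is unbounded, so your points can leave $[0,1]^K$.

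The missing idea is a larger alphabet combined with a level-set antichain. Set $v_m=\tfrac12+8t(m-\tfrac92)$ for $m\in[8]$ and $\r(\a)=(v_{a_1},\dots,v_{a_{K-1}},\tfrac12)$, keeping only $\a$ in a single level set $\{\sum_i a_i=s\}$. Then:
\begin{itemize}
\item distinct members of a level set differ by at least $8t$ in some coordinate in each direction;
\item all coordinates lie in $[\tfrac12-28t,\tfrac12+28t]\subset[0,1]$ because $t\le 1/64$, which is the sole role of $n\ge 64\log(N/4)$;
\item by pigeonhole, some level set has at least $8^{K-1}/(7(K-1)+1)\ge e^{K-1}\ge N$ elements when $K\ge 4$.
\end{itemize}

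\textbf{The modulus versus the factor $2$.} This part rests on a misreading. Distinct front points are at Pareto distance $>t$, so they never enter the supremum defining $\uppermodulus(t)$. Shrinking the front gap to $t$ therefore buys nothing, and it is exactly what costs you the factor $2$.

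Keep the gap at $2t$, e.g.\ $\Ber(\tfrac12-2t)$ at the true index and $\Ber(\tfrac12)$ elsewhere. Then $\KL\le 32nt^2\le\tfrac12\log(N/4)$, Fano gives error probability at least $1/2$, and the expected excess risk is at least $2t\cdot\tfrac12=t$.

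Pin $\uppermodulus(t)=t$ separately with two dominated models far from the front. For example, take $\bR(g_0)=(\tfrac12+29t)\one$ and $\bR(g_1)=(\tfrac12+30t)\one$, with $Q$-independent target risks $\tfrac12+29t$ and $\tfrac12+30t$. Then $(g_1,g_0)$ is the only pair at Pareto distance $\le t$ with positive target gap, for every $Q$, and $\Npar$ is unchanged.

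Your alternative of attaching the padding point $h$ to a front model $f_j$ can be made to work, but it needs two extra checks. First, $\cR_Q(h)=\cR_Q(f_j)+t$ must hold simultaneously for all $Q\in\cQ$. Second, the cross pairs $(f_i,h)$ sit at Pareto distance $\Paretodistance(f_i,f_j)-t$, which equals $t$ when $\Delta=2t$, so their target gaps must be verified.

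\textbf{Improper estimators.} For mixtures $w$, view $w$ as a randomized test $\hat j\sim w$; Fano still applies and gives $\sup_j\EE[2t(1-w_j)]\ge t$. Alternatively, set the loss to $1$ outside $\cF$, as the paper does.
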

The proof is in \cref{proof:Geometry-complexity-lower-bound}. For $\log N = K-1$, we again obtain \emph{in the worst case} a rate of $\sqrt{K/n}$, matching \citet[Theorem 7]{mansour2021theory}.
\arxivonly{A caveat of our bound is that it constructs both a Pareto front and a set of risks. One may also want to prove that for \emph{any} given Pareto front, the covering number tightly characterizes statistical hardness in a minimax sense. But it turns out that this is a much more difficult problem, not least because packing and covering numbers need not scale the same for quasi metrics, and resolving it would likely require \emph{localization} arguments in both upper and lower bounds.
We leave it as interesting future work to resolve the exact hardness \emph{per Pareto front}.}

\subsection{A Parameter-free \arxivonly{Consistent }Estimator for Finite Classes}
\label{sec: beyond exact monotonicity}
In the previous section, we derived upper bounds that are consistent whenever $\uppermodulus(t)\to 0$ as $t\to 0$, i.e., weak monotonicity holds. 
However, \arxivonly{a primary motivation of the setting proposed in this paper is foundation model benchmarking, where $\cF$ is a finite \emph{model zoo} and the sources are benchmark tasks\textemdash and} we saw in \cref{fig:modulus HELM and VHELM} that for instance both \HELM and \VHELM only approximately satisfy monotonicity. 
\arxivonly{Even though the upper bound in \Cref{thm:Geometry-complexity-upper-bound} holds when monotonicity is violated and $\uppermodulus(0) > 0$, the estimators introduced so far are not consistent and incur a bias.}
In this section, we discuss estimators that can always achieve consistency, but converge faster the closer to exact monotonicity the setting is. 

We begin by introducing the following threshold\arxivonly{ that depends on the target distribution $Q$}:
\[
\tlower := \inf\crl{t\in[0,1]:\ \lowermodulus(t)\ge 0}.
\]
\ifarxivelse{Intuitively, i}{I}t captures the minimum amount that all risks need to be improved so that the target risk is never worse. 
This quantity is closely related to the pivot considered by \citet[Definition~15]{hanneke2024adaptive} in the single-source setting. 
\cref{fig:modulus HELM and VHELM} illustrates this threshold \arxivonly{empirically }for \HELM and \VHELM.

\paragraph{Margin ERM.}
Define the $\gamma$-margin pruned class for $\gamma=0$ as $\Fpruned{0}:= \pareto(\cF,\bR)$ and for $\gamma> 0$ as
\ifarxivelse{
\begin{equation}
\label{eqn:Fpruned}
    \begin{aligned}
    \Fpruned{\gamma} &= \crl{f\in\cF: \text{there exists no } f'\in\cF \text{ such that } \bR(f')\llcurly \bR(f)-\gamma\one} \\
    &=\crl{f\in \cF: \Gamma(f) \leq \gamma} \quad \text{where} \quad  \Gamma(f) := -\inf_{f'\in\mathcal F}\Paretodistance(f',f).
\end{aligned}
\end{equation}
}{
\begin{equation}
\label{eqn:Fpruned}
    \begin{aligned}
    \Fpruned{\gamma} &=\crl{f\in \cF: \Gamma(f) \leq \gamma},
\end{aligned}
\end{equation}
where $\Gamma(f) := -\inf_{f'\in\mathcal F}\Paretodistance(f',f)$.
}
Intuitively, $\Gamma(f)\leq \gamma$ means that no other $f'\in\cF$ can strictly beat $f$ by more than $\gamma$ in all coordinates at the same time, and $f$ is within a ``margin'' of $\gamma$ to the Pareto set.
Define the $\gamma$-margin ERM as $\smash{\Thresholderm{\gamma}\in \argmin_{f\in\Fpruned{\gamma}} \Rhat_Q(f)}$.
When $\smash{\gamma> \tlower}$, the definition of $\smash{\tlower}$ guarantees that at least one optimal model must lie within the margin $\gamma$, and hence the margin ERM
can be consistent\ifarxivelse{. A quite general error bound for this strategy, more discussion, and an example are given in \cref{app_subsec: monotonicity above a threshold}}{ (\cref{app_subsec: monotonicity above a threshold})}.
However, \ifarxivelse{these benefits can only be achieved with}{this requires} oracle knowledge of a valid $\smash{\gamma > \tlower}$. 
Instead, for \emph{finite} hypothesis classes, we now propose a\cameraonly{n} \arxivonly{hyperparameter-free alternative that uses an }adaptive choice of the parameter $\gamma$ based on \ifarxivelse{intuition reminiscent of }{a version of }structural risk minimization. \arxivonly{In particular, the more data we have, the less we enforce the inductive bias\textemdash in this case, focusing on the Pareto set\textemdash and we include more of the entire function class by increasing $\gamma$. This yields the \emph{adaptive margin ERM}.}

\paragraph{Adaptive Margin ERM.}
The adaptive method is summarized by the following simple algorithm. 
\begin{enumerate}[leftmargin=*,itemsep=-0.1pt,topsep=-2pt,label=(\roman*)]
    \item For all $f\in\cF$ compute the margin $\Gamma(f)$, and sort increasingly to obtain $\gamma_{(1)}\leq \dots \leq \gamma_{(\abs{\cF})}$.\footnote{Here $\gamma_{(j)}$ denotes the $j$-th element of the increasingly sorted list of margins $\crl{\Gamma(f):f\in\cF}$.}
    \item Let $m_n := \min\{|\mathcal F|,\max\{n,|\pareto(\cF,\bR)|\}\}$ and choose $\tadapt:=\gamma_{(m_n)}$.
    \item Return $\AdaptiveThresholdERM\in \arg\min_{f\in\Fpruned{\tadapt}}\Rhat_Q(f)$ where $\Fpruned{\tadapt}$ is defined in \cref{eqn:Fpruned}.
\end{enumerate}
The adaptive estimator is an instance of the margin ERM with $\gamma = \gamma_n$.
Roughly speaking, as $n$ increases, this method runs ERM on an expanding subset of models ordered by their margin to the Pareto set and of size approximately $m_n$\arxivonly{; ties may cause this subset to contain more than $m_n$ models}. For large sample sizes $n$, this strategy recovers regular ERM on all of $\cF$. For this strategy, we can prove an adaptive excess risk bound (proof in \cref{proof:monotonicity-above-threshold-adaptive}).
\begin{restatable}{proposition}{boundadaptiveMAT}
\label{prop:monotonicity-above-threshold-adaptive}
    With probability at least $1-\delta$ it holds that $\excessRisk{\AdaptiveThresholdERM}{\cF} \leq \eps_n + 40\sqrt{\log(8|\Fpruned{\tadapt}|/\delta)/n}$,
    for a sample size-dependent error $\eps_n$ specified in the proof.
    Moreover, for all $n\geq n_0:= \min_{\gamma>\tlower}\abs{\Fpruned{\gamma}}$,
    \arxivonly{the approximation error vanishes, i.e.,} it holds $\eps_n=0$.
\end{restatable}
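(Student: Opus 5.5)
The plan is to split the excess risk into an approximation term and an estimation term, relative to the data-dependent but sample-independent class $\Fpruned{\tadapt}$. The key observation is that $\tadapt=\gamma_{(m_n)}$ depends only on $n$, $\bR$ and $\cF$, not on the sample. So $\Fpruned{\tadapt}$ is a fixed finite class for each $n$, and no union bound over $\gamma$ is needed. Let $f^\star_n\in\argmin_{f\in\Fpruned{\tadapt}}\cR_Q(f)$ and define
\[
\eps_n := \cR_Q(f^\star_n)-\inf_{f\in\cF}\cR_Q(f)\;\ge 0 .
\]
Then $\excessRisk{\AdaptiveThresholdERM}{\cF}=\eps_n+\big(\cR_Q(\AdaptiveThresholdERM)-\cR_Q(f^\star_n)\big)$.

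For the estimation term, I apply Hoeffding's inequality with a union bound over the finite class $\Fpruned{\tadapt}$. With probability $1-\delta$, $|\Rhat_Q(f)-\cR_Q(f)|\le\sqrt{\log(2|\Fpruned{\tadapt}|/\delta)/(2n)}$ for all $f\in\Fpruned{\tadapt}$. The standard ERM argument, $\cR_Q(\fhat)\le\Rhat_Q(\fhat)+\cdot\le\Rhat_Q(f^\star_n)+\cdot\le\cR_Q(f^\star_n)+2\cdot$, then gives an estimation error of at most $2\sqrt{\log(2|\Fpruned{\tadapt}|/\delta)/(2n)}$. This is comfortably below the stated $40\sqrt{\log(8|\Fpruned{\tadapt}|/\delta)/n}$; the generous constant presumably leaves room for a slightly different argument in the paper, and I would simply note that it suffices.

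For the second claim, the step is to show $\eps_n=0$ once $n\ge n_0$. First, $\cF$ is finite, so $\gamma\mapsto|\Fpruned{\gamma}|$ is a nondecreasing step function. Since $\Fpruned{\gamma}=\{f:\Gamma(f)\le\gamma\}$, we have $\Fpruned{\gamma_{(m)}}\supseteq\{f:\Gamma(f)\le\gamma_{(j)},\,j\le m\}$, so $|\Fpruned{\gamma_{(m)}}|\ge m$. Now take $\gamma^\dagger>\tlower$ attaining $n_0=|\Fpruned{\gamma^\dagger}|$, and without loss of generality shrink $\gamma^\dagger$ to the largest margin value $\gamma_{(j)}\le\gamma^\dagger$, which does not change the set.

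I then split into two cases. If $\gamma^\dagger\le\tadapt$, monotonicity of the sets gives $\Fpruned{\gamma^\dagger}\subseteq\Fpruned{\tadapt}$. This case holds because $n\ge n_0$ and $m_n\ge\min(|\cF|,n)$, together with the fact that $\gamma_{(m)}$ is the smallest margin level whose sublevel set has at least $m$ elements; hence $\tadapt\ge\gamma^\dagger$. If instead $m_n=|\cF|$, then $\Fpruned{\tadapt}=\cF$ and $\eps_n=0$ trivially.

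It remains to show that $\Fpruned{\gamma}$ contains a minimizer of $\cR_Q$ over $\cF$ for any $\gamma>\tlower$. I argue by contradiction. Let $f^\star$ be a minimizer with the smallest margin $\Gamma(f^\star)$, and suppose $\Gamma(f^\star)>\gamma$. Since $\cF$ is finite, the infimum defining $\Gamma$ is attained, so there is an $f'$ with $\Paretodistance(f',f^\star)=-\Gamma(f^\star)<-\gamma$. Because $\gamma>\tlower$ and $\lowermodulus$ is nondecreasing, $\lowermodulus(\Gamma(f^\star))\ge0$, which gives $\cR_Q(f')\le\cR_Q(f^\star)$. So $f'$ is also a minimizer. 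Moreover $\Gamma(f')<\Gamma(f^\star)$, because anything dominating $f'$ by $s$ uniformly dominates $f^\star$ by $s+\Gamma(f^\star)$. This contradicts the minimality of $\Gamma(f^\star)$.

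I expect the main obstacle to be the bookkeeping around ties and the precise definition of $\tlower$ as an infimum. The condition $\lowermodulus(t)\ge 0$ must be used at $t=\Gamma(f^\star)>\gamma>\tlower$, which requires monotonicity of $\lowermodulus$; this holds as stated in the paper. The inequality $n\ge n_0\Rightarrow\tadapt\ge\gamma^\dagger$ also needs care, and I would verify it using the sorted-list characterization of $\gamma_{(m)}$.
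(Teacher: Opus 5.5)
Up to one edge case that the paper's own proof shares (see below), your argument is correct, and its skeleton is the paper's. You split off $\eps_n=\inf_{f\in\Fpruned{\tadapt}}\cR_Q(f)-\cR_Q(\fstar)$, control the estimation error on the sample-independent class $\Fpruned{\tadapt}$, and show $\Fpruned{\gamma^\star}\subseteq\Fpruned{\tadapt}$ for $n\ge n_0$, where $\Fpruned{\gamma^\star}$ contains a minimizer. Two sub-steps differ from the paper.

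\textbf{Estimation term.} The paper reuses its general margin-ERM bound $4\radComp(\ell\circ\Fpruned{\tadapt})+2\sqrt{2\log(4/\delta)/n}$ and then applies Massart's lemma, which is where the loose constant $40$ comes from. Your Hoeffding-plus-union-bound gives the sharper $\sqrt{2\log(2|\Fpruned{\tadapt}|/\delta)/n}$. The Rademacher formulation is what carries over to infinite classes.

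\textbf{Minimizer containment.} The paper iterates: a minimizer outside $\Fpruned{\gamma}$ is beaten by some $f$ with $\bR(f)\llcurly\bR(\fstar)-\gamma\one$, which is again a minimizer. The chain terminates because $\bR$ is bounded below and $\gamma>0$, so no attainment of the infimum in $\Gamma$ is needed. Your extremal argument takes the minimizer of least margin and then its best dominator $f'$, which in fact has $\Gamma(f')=0$. It uses finiteness but collapses the chain into one step.

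Your shrinking of $\gamma^\dagger$ to a realized margin is also more careful than the paper. The paper infers $\gamma_{(m_n)}\ge\gamma^\star$ from $|\Fpruned{\gamma_{(m_n)}}|\ge|\Fpruned{\gamma^\star}|$, which is only valid at the level of the sets.

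\textbf{The caveat.} You implicitly take $\Fpruned{\gamma}=\crl{f:\Gamma(f)\le\gamma}$ for all $\gamma\ge0$. The paper instead defines $\Fpruned{0}:=\pareto(\cF,\bR)$, which is strictly smaller than $\crl{\Gamma\le0}$ whenever some model is weakly but not strictly Pareto optimal. When $\tadapt=\gamma_{(m_n)}=0$, two of your claims then fail: $|\Fpruned{\gamma_{(m_n)}}|\ge m_n$, and ``$m_n=|\cF|$ implies $\Fpruned{\tadapt}=\cF$''. The second claim of the proposition fails too.

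For a counterexample, take $\cF=\crl{f_1,f_2}$ with $\bR(f_1)=(0,0)$, $\bR(f_2)=(0,1)$, $\cR_Q(f_1)=1$ and $\cR_Q(f_2)=0$. Then $\Gamma\equiv0$, $\lowermodulus(0)=-1$ and $\lowermodulus(t)=1$ for $t>0$, so $\tlower=0$ and $n_0=2$. But for every $n\ge2$ we get $m_n=2$ and $\tadapt=0$. The algorithm runs ERM on $\pareto(\cF,\bR)=\crl{f_1}$, so $\eps_n=1$.

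The paper's proof has exactly the same hole, since it also asserts $|\Fpruned{\gamma_{(m_n)}}|\ge m_n$. So this is a defect of the statement rather than something you missed relative to the paper. Your proof becomes complete once you state explicitly that the adaptive method uses the weak Pareto set $\crl{\Gamma\le0}$ at level $0$, or assume no such ties.
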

\ifarxivelse{Since $\Fpruned{0}=\pareto(\cF,\bR)$, we have that in the case of exact monotonicity, \cref{prop:monotonicity-above-threshold-adaptive} essentially recovers the bound in \cref{thm:Geometry-complexity-upper-bound} for $t=0$ (as then $\eps_n=0$). More generally, i}{I}f monotonicity holds approximately with a small threshold $\smash{\tlower}$, 
the error term $\eps_n$ vanishes already for small $n$ and the estimator benefits from its inductive bias. On the other hand, it is also guaranteed to eventually match the performance of the ERM for large sample sizes and achieve consistency.
This approach can be extended to infinite hypothesis spaces via standard structural risk minimization arguments\arxivonly{, for example through uniform convergence}.

\subsection{Experimental Results on \HELM and \VHELM}
\label{subsec:transfer-learning-HELM-VHELM}
We empirically evaluate the adaptive estimator $\AdaptiveThresholdERM$
(together with natural baselines) on \HELM and \VHELM\arxivonly{, for which we can see in \Cref{fig:modulus HELM and VHELM} that $\tlower$ is small}.
Let $\cF$ denote the finite \emph{zoo} of models evaluated on the benchmark scenarios\footnote{Although \HELM and \VHELM use the term \emph{scenarios}, we use it interchangeably with \emph{tasks} throughout this section.} $\mathcal S$.
Our experimental setup is as follows: We form \emph{transfer combinations} by selecting $K$ source tasks $S=\{S_1,\dots,S_K\}\subset\mathcal S$ and a distinct target task $Q\in\mathcal S\setminus S$. For each $(S,Q)$, we 
sample (without replacement) $n$ test instances from $Q$, compute the empirical target risks $\Rhat_Q(f)$, and apply each selection rule to obtain estimators $\fhat$. For each $(S,Q)$ and each $n$ on a predefined grid, we repeat the target subsampling $500$ times (for all transfer combinations) and report the average excess risk where the target risks $\cR_Q(\fhat)$ are evaluated on the full target pool. 
More details (with also an ablation on $K$) are in \cref{app_sec: additional experiments}.

\ifarxivelse{
\begin{wrapfigure}{r}{0.45\textwidth}
    \vspace{-0.2cm}
    \centering
    \includegraphics[width=\linewidth]{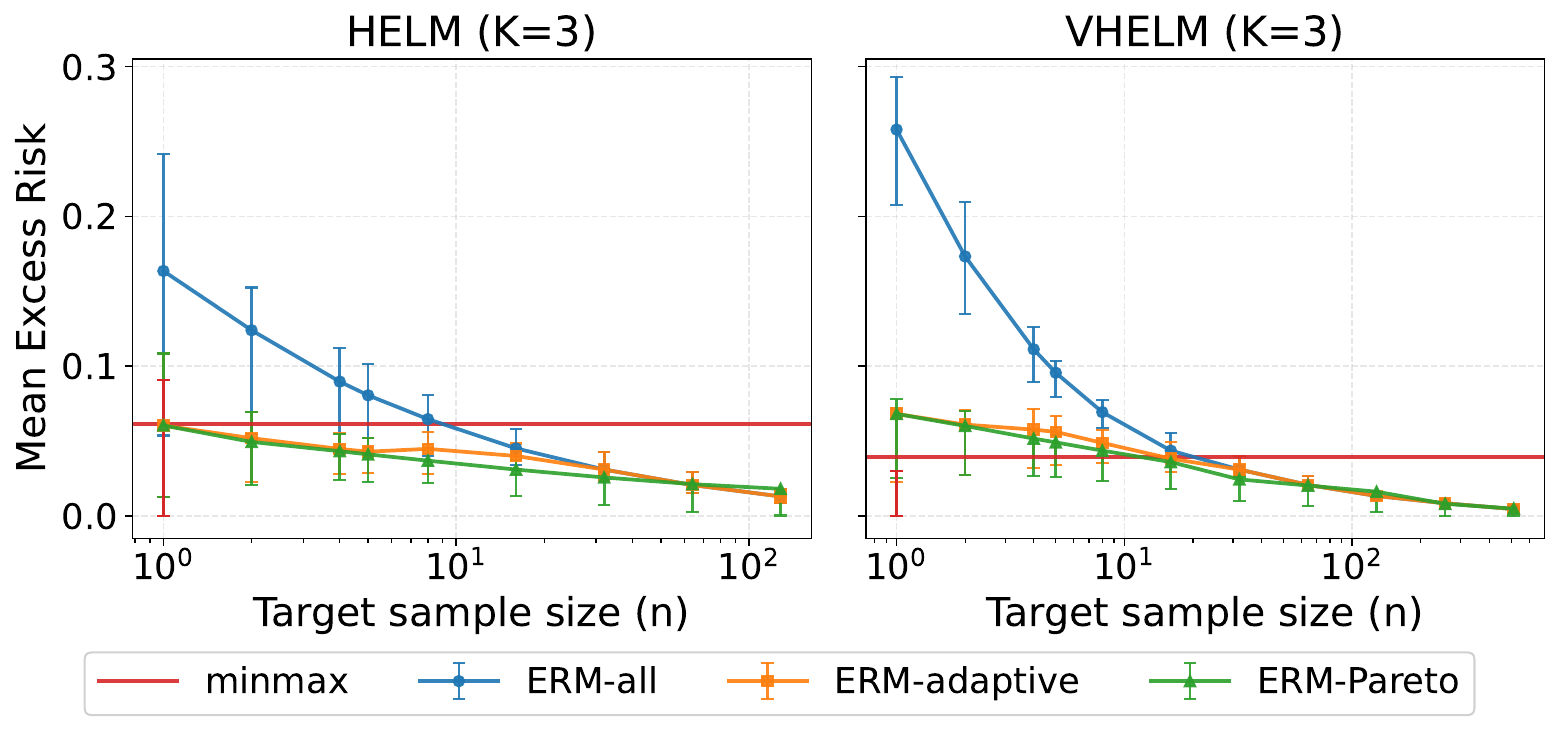}
    \caption{Mean (over all possible $(S,Q)$) excess risks.\vspace{-0.5cm}}
    \label{fig:mean excess risks HELM and VHELM}
    \vspace{-0.4cm}
\end{wrapfigure}
}{
\begin{figure}[ht!]
    \centering
    \includegraphics[width=\linewidth]{figures/helm_vhelm_k3_comparison.pdf}
    \caption{Mean (over all possible $(S,Q)$) excess risks.}
    \vspace{-0.2cm}
    \label{fig:mean excess risks HELM and VHELM}
\end{figure}
}
\textbf{Methods.} We compare the following selection rules:
\begin{itemize}[leftmargin=*,itemsep=-1pt,topsep=-2pt]
    \item ERM-all: $\smash{\widehat f_{\mathrm{all}}\in\argmin_{f\in \cF}\Rhat_Q(f)}$;
    \item ERM-Pareto: $\smash{\Paretoerm{0}\in\argmin_{f\in \mathrm{Par}(\cF, \bR)}\Rhat_Q(f)}$;
    \item ERM-adaptive: $\smash{\AdaptiveThresholdERM\in\argmin_{f\in \Fpruned{\tadapt}}\Rhat_Q(f)}$; 
    \item Min-Max: $\smash{\widehat{f}_\mathrm{rob}=\argmin_{f\in\cF} \max_{k\in[K]}\cR_k(f)}$.
\end{itemize}
The Min-Max does not use target samples and is a popular procedure for robust (worst-case) generalization over mixtures \citep{sagawadistributionally,mohri2019agnostic}. \arxivonly{Since we assume black-box access to models through benchmark evaluations, we do not compare with representation-based selection methods \citep{dong2022zood,zhang2023model}.}

\paragraph{Results and Discussion.} The results \arxivonly{of the above experiment }are reported in \cref{fig:mean excess risks HELM and VHELM}. Across both \HELM and \VHELM, restricting ERM to the frontier yields large gains at small sample size $n$: $\fhat_{\mathrm{all}}$ suffers from high variance\arxivonly{ when optimizing over all models}, whereas restricting to the frontier reduces this variance without introducing too large of a bias\arxivonly{; this is due to the (approximate) \emph{monotonicity} of the datasets}. By design, $\AdaptiveThresholdERM$ is no worse than $\fhat_\mathrm{all}$ eventually, but introduces large gains for small sample sizes. 
\arxivonly{Unsurprisingly, for large sample sizes, in \HELM we can observe an inversion in the trend, with $\widehat{f}_\mathrm{all}$ becoming better than $\Paretoerm{0}$ (although this is more visible for other choices of $K$, see \cref{app_subsec: ablation on number of sources K}), consistent with the fact that the target oracle need not lie on the source-induced Pareto frontier for every transfer combination.} 
These results support the general message of our paper: when source risks induce a meaningful preorder, biasing towards the Pareto frontier can substantially reduce the statistical complexity of transfer, especially at small target sample sizes.

\section{Model Selection Aggregation under Monotonicity}\label{sec: aggregation}

In \cref{sec:transfer-learning} we saw that, under monotonicity, we can achieve a worst-case error of order $\sqrt{K/n}$ for bounded losses, and $K/n$ for strongly convex losses. This can be prohibitive for a large number of benchmarks $K$, and we may be forced to aim for a weaker guarantee: maybe a model that performs well on \emph{one} of the benchmarks already performs well on the new task. More formally, if we denote the minimizer of benchmark $\cR_k$ on $\cF$ as $f_k$, how can we find a model $\fhat$ that performs as well as $\argmin_{k\in[K]}\cR_Q(f_k)$?

This is a particular instantiation of classical model selection aggregation \citep{tsybakov2003optimal}\ifarxivelse{, which studies the following problem: for a dictionary $\crl{f_1,\ldots,f_K}$ of functions $\cX\to [-1,1]$ and $n$ i.i.d.\ samples from a distribution $Q$ on $\cX\times [-1,1]$, find a predictor $\fhat$ that performs not much worse than the best dictionary element with high probability}{, in which the comparator class is usually an arbitrary finite dictionary $\crl{f_1,\ldots,f_K}$}. 
Without any curvature assumptions on the loss, the minimax rate of this problem is known to be $\sqrt{\log (K) /n}$ \citep{lecue2014optimal}, and for strongly convex losses it is $\log (K) / n$ \citep{tsybakov2003optimal}. 
Let $\rho\in \cP([K])\cong \triangle^{K-1}$ be a distribution on the dictionary indices, and denote $f_\rho := \sum_{k\in[K]}\rho_kf_k$. 
Classical aggregators achieve fast rates under strongly convex losses by \emph{hedging}: they return a $f_{\rhohat}$ in the convex hull of the dictionary. Hedging is necessary because any proper method that selects one of the dictionary elements is minimax suboptimal \citep{juditsky2008learning}.
However, these methods are completely oblivious to the origin of the dictionary. This raises the question: Can we leverage the benchmarks together with monotonicity for model selection aggregation?

Let us assume throughout this section that $\cF$ contains $\conv(f_1,\ldots,f_K)$. Then, roughly speaking, both the convex hull and the Pareto set in $\cF$ can be identified with the simplex $\triangle^{K-1}$ through the following maps: $\triangle^{K-1}\ni\rho\mapsto f_\rho$ and $\triangle^{K-1}\ni\rho\mapsto \psi_\rho$ defined as
\begin{equation}
\label{eqn:aggregation-map-Pareto-set}
    \begin{aligned}
    \psi_\rho &=\argmin_{f\in\cF} \max_{k\in[K]}\crl{\cR_k(f)-\cR_k(f_\rho)}, \\
    \Psi(\rho)&= \min_{k\in[K]}\crl{\cR_k(f_\rho)-\cR_k(\psi_\rho)} \geq 0.
\end{aligned}
\end{equation}
Notice that $\bR(\psi_\rho)\preceq \bR(f_\rho)-\Psi(\rho)\one$, and so each point on the convex hull can be dominated by a point on the Pareto set (see \cref{fig:fast-rate-condition}). 
Under the monotonicity assumption, it is therefore natural to return a function on the Pareto set instead of the convex hull, and to hedge there instead. 
We show that, similar to \cref{sec:transfer-learning}, the geometry of the Pareto front determines how much such an alternative strategy improves the excess risk.
In \cref{subsubsec:tilted-exponential-weights}, we discuss a simple method that only requires the loss $\ell(f,z)$ to be bounded in $[0,1]$ and convex in $f$ to benefit from monotonicity. Here, we focus on a potentially more intriguing benefit.

\subsection{Fast Rates from Strongly Concave Gap}
\label{subsec:fast-rates-aggregation}

We now show that an assumption on the separation of the Pareto set from the convex hull is enough to get \emph{fast rates} even when the loss is \emph{not strongly convex}. As mentioned, this is in contrast to the classical aggregation setting\arxivonly{, where only slow rates are achievable when the loss exhibits no curvature}.

\ifarxivelse{
\begin{minipage}{0.525\textwidth}
\begin{assumption}[Strongly concave Pareto gap]
\label{asm:fast-rate-aggregation}
There exists a function $\Phi:\cP([K])\to [0,\infty)$ such that 
(i) $\Phi(\delta_k)=0$ for all $k$, where $\delta_k$ denotes the Dirac delta,  (ii) $\Phi$ is $\eta$-strongly concave: for all $\alpha\in[0,1]$ and $\rho,\gamma\in\cP([K])$, it holds that
\begin{align}
    &\Phi(\alpha \rho+(1-\alpha)\gamma)  \label{eqn:strong-concavity-gap} \\
    &\geq \alpha \Phi(\rho)+(1-\alpha)\Phi(\gamma)+\tfrac{\eta}{2}\alpha(1-\alpha)\nrm{f_\rho-f_\gamma}_{L^2(Q_X)}^2, \nonumber
\end{align}
and (iii) the following equation has a solution $\phi_\rho\in\cF$:
\begin{equation*}
    \bR(\phi_\rho)=\bR(f_\rho)-\Phi(\rho)\one.
\end{equation*}
\end{assumption}

See \cref{fig:fast-rate-condition} for a visualization of \cref{asm:fast-rate-aggregation}, where we show that the risks $\bR(\phi_\rho)$ lie between the image of the convex hull and the Pareto set, $\Phi\leq \Psi$. 
\end{minipage}
\hfill
\begin{minipage}{0.45\textwidth}
    \centering
    \captionsetup{hypcap=false}
    \resizebox{\linewidth}{!}{
        \begin{tikzpicture}[baseline=(img.south)]
    \clip (-0.2,0.45) rectangle (8.25,3.7);
  \node[anchor=south west, inner sep=0, outer sep=0] (img) at (0,0)
    {\includegraphics[width=8.25cm]{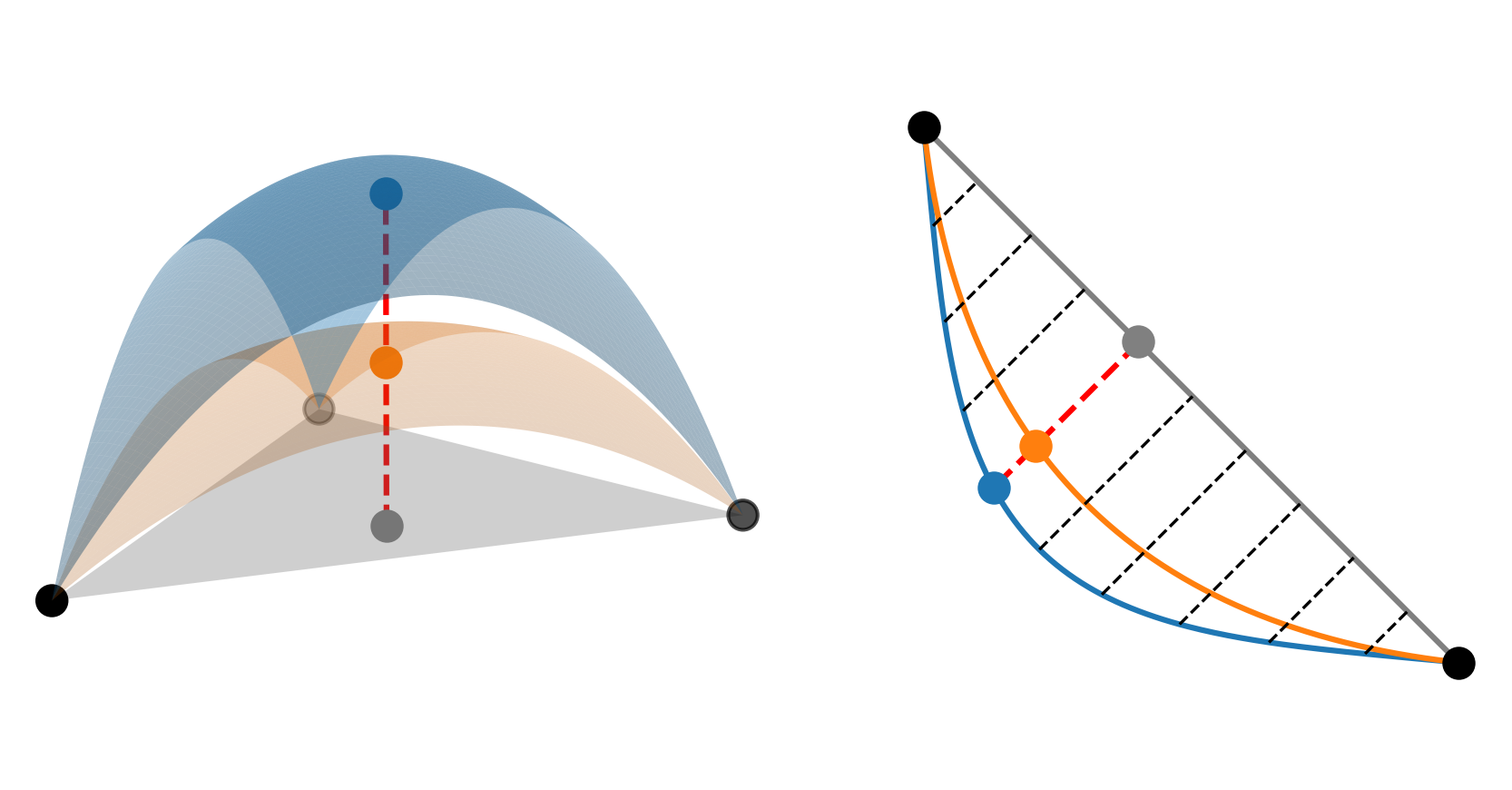}};
    % \node[black] at (5.1,1.3) {$\bR(\phi_\rho)$};
    \node[black] at (6.8,2.5) {$\bR(f_\rho)$};
    %\node[black] at (2.9,2.1) {$\bR(f_1)$};
    \node[black] (Phi) at (7.5,1.9) {$\Phi(\rho)\one$};
    \node[black] (Psi) at (6,3.4) {$\Psi(\rho)\one$};
    \node[] (front) at (5.7,0.6){Pareto front $\fR$};
    % \node (A) at (3.9,1.03)[circle,fill,inner sep=1.5pt]{};
    % \node (B) at (4.2,1.315)[circle,fill,inner sep=1.5pt]{};
    % \draw[red, -, thick] (A) -- (B);
    \draw[-] (Phi) -- (6.06,2.04);
    \draw[-] (Psi) -- (5.7,2.2);
    \draw[->] (front) --(6.0,1.0);
    % \node[black, scale=1.2] at (6.7,0.1) {$f_2$};
    % \node[black, scale=0.8] at (5.0,3.9) {$f_3$};
    % \node[black] at (3,0.5) {$\conv(f_1,f_2,f_3)$};
    % \node[black] at (1.5,1.5) {$\pareto(\cF,\bR)$};
    %\node[black] at (2.9,3.3) {$\Delta(\rho)$};
    \draw[decorate,
        decoration={brace,mirror}, thick]
        (5.75,1.85) -- (6.25,2.35);

    \draw[decorate,
        decoration={brace,mirror}, thick]
        (6.13,2.47) -- (5.4,1.75);

    % \draw[->] (3.5,0.7) -- (4.5,1.5);
    % \draw[->] (1.5,1.8) -- (2,3.5);

    \node[black, scale=0.9] at (2.4,2.3) {$\phi_\rho$};
    \node[black, scale=0.9] at (2.4,3.3) {$\psi_\rho$};
    \node[black, scale=0.9] at (2.4,1.4) {$f_\rho$};
    \node[black] at (0,1.0) {$f_1$};
    \node[black, scale=0.9] at (4.2,1.3) {$f_2$};
    \node[black, scale=0.6] at (1.5,2.0) {$f_3$};
    \node[black] (hull) at (1.5,0.7) {$\conv(f_1,f_2,f_3)$};
    \node[black] (pareto) at (0.7,3.5) {$\pareto(\cF,\bR)$};
    %\node[black] at (2.9,3.3) {$\Delta(\rho)$};

    \draw[->] (hull) -- (1.5,1.4);
    \draw[->] (pareto) -- (1.2,2.8);
\end{tikzpicture}

    }
    \captionof{figure}{
    \emph{Left:} the Pareto set is curved away from the convex hull. The functions $\psi_\rho,\phi_\rho$ are guaranteed to dominate $f_\rho$.
    \emph{Right:} visualization of \cref{asm:fast-rate-aggregation}: if we can fit a strongly concave gap $\Phi$ between the Pareto front and the image of the convex hull, we obtain fast rates by \cref{thm:fast-rate-MSA}. }
    \label{fig:fast-rate-condition}
\vspace{0.07cm}
\end{minipage}
}{
\begin{assumption}[Strongly concave Pareto gap]
\label{asm:fast-rate-aggregation}
There exists a function $\Phi:\cP([K])\to [0,\infty)$ such that 
(i) $\Phi(\delta_k)=0$ for all $k$, where $\delta_k$ denotes the Dirac delta,  (ii) $\Phi$ is $\eta$-strongly concave: for all $\alpha\in[0,1]$ and $\rho,\gamma\in\cP([K])$, it holds that
\begin{align}
    &\Phi(\alpha \rho+(1-\alpha)\gamma)  \label{eqn:strong-concavity-gap} \\
    &\geq \alpha \Phi(\rho)+(1-\alpha)\Phi(\gamma)+\tfrac{\eta}{2}\alpha(1-\alpha)\nrm{f_\rho-f_\gamma}_{L^2(Q_X)}^2, \nonumber
\end{align}
and (iii) $\phi_\rho\in\cF$ satisfies $\bR(\phi_\rho)=\bR(f_\rho)-\Phi(\rho)\one$.
\end{assumption}
\begin{figure}
    \centering
    \resizebox{\linewidth}{!}{
        \input{figures/rate_conditions.tikz}
    }
    \caption{
    \emph{Left:} the Pareto set is curved away from the convex hull. The functions $\psi_\rho,\phi_\rho$ are guaranteed to dominate $f_\rho$.
    \emph{Right:} visualization of \cref{asm:fast-rate-aggregation}: if we can fit a strongly concave gap $\Phi$ between the Pareto front and the image of the convex hull, we obtain fast rates by \cref{thm:fast-rate-MSA}. }
    \vspace{-0.3cm}
    \label{fig:fast-rate-condition}
\end{figure}
}
By definition, we have $\Phi\leq \Psi$ and the risks $\bR(\phi_\rho)$ lie between the image of the convex hull and the Pareto set with strongly concave gap $\Phi$, as illustrated in \cref{fig:fast-rate-condition}. In fact, \Cref{asm:fast-rate-aggregation} is weaker than directly assuming strong concavity of $\Psi$ defined in \eqref{eqn:aggregation-map-Pareto-set}; such a function $\Phi$ can exist even when $\Psi$ is not strongly concave, e.g., in \cref{exm:lin-reg-aggregation}. 
In \cref{subsec:examples-aggregation}, we show how the two \cref{ex:fast-rate,exm:lin-reg-aggregation} satisfy \Cref{asm:fast-rate-aggregation}: in the former, $\Psi$ itself is strongly concave, whereas the latter requires the ``intermediate'' $\Phi$. \arxivonly{They also help to clarify the (admittedly cryptic) \cref{asm:fast-rate-aggregation}.}

We now consider the estimator $\phi_{\rhofast}$ that is inspired by the $Q$-aggregation estimator \cite{dai2012deviation,lecue2014optimal,mourtada2023local}, defined via
\ifarxivelse{
\begin{equation*}
     \rhofast \in \argmin_{\rho\in\cP([K])} \crl{\frac{1}{2}\Rhat_Q(\phi_\rho) + \frac{1}{2} \EE_{k\sim \rho}\Rhat_Q(f_k)}.
\end{equation*}
}{
\begin{equation*}
     \rhofast \in \argmin_{\rho\in\cP([K])} \crl{\frac{1}{2}\Rhat_Q(\phi_\rho) + \frac{1}{2} \EE_{k\sim \rho}\Rhat_Q(f_k)}.
\end{equation*}
\vspace{-0.2cm}
}
\begin{theorem}[Fast rates from strongly concave gap]
\label{thm:fast-rate-MSA}
    Let \cref{asm:fast-rate-aggregation} hold with $\Phi,\phi$ and $\eta$ known to the algorithm. Further assume that $\lowermodulus(t)\geq t$ and $\uppermodulus(t)\leq t$ for all $t\in[0,1]$ (implying \ref{eqn:weak-monotone}), that all functions $f_\rho,\phi_\rho$ and $Y$ are in $[0,1]$, and that the loss $\ell(f,(x,y))=\ell(f(x),y)$ is $\kappa$-strongly convex in $f(x)$ (with $\kappa = 0$ allowed), as well as 
    \begin{alignat*}{2}
        \forall \yhat,\yhat'\in[0,1]:&&\ \abs{\ell(\yhat,y)-\ell(\yhat',y)}&\leq L_{\ell} \abs{\yhat-\yhat'}, \\ 
        \forall x\in \cX:&&\ \abs{\phi_\rho(x)-\phi_{\gamma}(x)} &\leq L_{\phi}\abs{f_{\rho}(x)-f_{\gamma}(x)}.
    \end{alignat*}
    Choose $\lambda= \frac{1}{11L_\ell}  \min\crl{\frac{1}{L_\phi},\frac{\eta+\kappa }{ L_\ell + \eta+\kappa }}$.  Then, for all $\delta\in(0,1)$, with probability at least $1-\delta$,
    \begin{equation*}
        \excessRisk{\phi_{\rhofast}}{\crl{f_k}_{k=1}^K} \leq \frac{2}{\lambda}\frac{\log (K/\delta) }{n}. 
    \end{equation*}
\end{theorem}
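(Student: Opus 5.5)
We follow the standard $Q$-aggregation analysis (as in \citet{lecue2014optimal,mourtada2023local}), with the curvature of the loss replaced by the strongly concave gap $\Phi$ transported to the target through monotonicity.

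\textbf{Step 1: transfer the gap to the target.} By \cref{asm:fast-rate-aggregation}(iii), $\bR(\phi_\rho)=\bR(f_\rho)-\Phi(\rho)\one$, so $\Paretodistance(\phi_\rho,f_\rho)=-\Phi(\rho)$. The lower modulus assumption $\lowermodulus(t)\ge t$ then gives
\[
\cR_Q(\phi_\rho)\le \cR_Q(f_\rho)-\Phi(\rho).
\]
Combining this with $\kappa$-strong convexity of $\ell$ in $f(x)$, which gives $\cR_Q(f_\rho)\le \EE_{k\sim\rho}\cR_Q(f_k)-\tfrac{\kappa}{2}\sum_k\rho_k\nrm{f_k-f_\rho}^2_{L^2(Q_X)}$, we obtain
\[
\cR_Q(\phi_\rho)\le \EE_{k\sim\rho}\cR_Q(f_k)-\Phi(\rho)-\tfrac{\kappa}{2}\textstyle\sum_k\rho_k\nrm{f_k-f_\rho}^2 .
\]
Since $\Phi(\delta_k)=0$, strong concavity (iterated over the vertices, i.e.\ Jensen with the quadratic remainder) gives $\Phi(\rho)\ge \tfrac{\eta}{2}\sum_k\rho_k\nrm{f_k-f_\rho}^2$. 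This is the key ``variance-absorbing'' quantity, with total curvature $\eta+\kappa$. In the same way, strong concavity of $\Phi$ along segments $[\rho,\gamma]$, combined with the upper modulus $\uppermodulus(t)\le t$ (used to control $\cR_Q(\phi_{\rho'})$ for intermediate $\rho'$), gives a strong-convexity-type inequality for the population objective $G(\rho)=\tfrac12\cR_Q(\phi_\rho)+\tfrac12\EE_{k\sim\rho}\cR_Q(f_k)$ relative to $\nrm{f_\rho-f_\gamma}^2$. We will verify this by writing $\cR_Q(\phi_\rho)=s(\bR(f_\rho)-\Phi(\rho)\one)$ via \cref{lem:weak-sufficiency}, where $s$ is monotone and, by the moduli, shifts exactly by $t$ along $\one$.

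\textbf{Step 2: basic inequality.} Let $k^\star$ be the best dictionary element. Optimality of $\rhofast$ gives $\widehat G(\rhofast)\le\widehat G(\delta_{k^\star})$, and since $\phi_{\delta_k}=f_k$ (as $\Phi(\delta_k)=0$), we have $\widehat G(\delta_{k^\star})=\Rhat_Q(f_{k^\star})$. It then suffices to bound the deviation $(\widehat G-G)(\rhofast)-(\widehat G-G)(\delta_{k^\star})$ by a multiple of the curvature terms plus $\log(K/\delta)/(\lambda n)$. Expanding the centered empirical process at $\phi_\rho$ around the vertices yields terms $\tfrac12\sum_k\rho_k[(\Rhat_Q-\cR_Q)(\phi_\rho)-(\Rhat_Q-\cR_Q)(f_{k^\star})]$ and $\tfrac12\sum_k\rho_k[(\Rhat_Q-\cR_Q)(f_k)-(\Rhat_Q-\cR_Q)(f_{k^\star})]$. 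The increments have variance bounded by $L_\ell^2\nrm{\cdot}^2$, and we use the Lipschitz bound $|\phi_\rho-\phi_\gamma|\le L_\phi|f_\rho-f_\gamma|$ to pass from $\phi_\rho$ to $f_\rho$ (and from $f_\rho$ to $f_k$ by convexity/triangle inequalities).

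\textbf{Step 3: concentration.} Apply Bernstein's inequality via an exponential-moment (PAC-Bayes with uniform prior on $[K]$) bound at scale $\lambda$, union bounded over the $K$ vertices. This gives, with probability $1-\delta$, a deviation of at most $\lambda L_\ell^2 C\,\textstyle\sum_k\rho_k\nrm{f_k-f_{k^\star}}^2+\log(K/\delta)/(\lambda n)$. The choice $\lambda\le \frac{1}{11L_\ell}\min\{1/L_\phi,(\eta+\kappa)/(L_\ell+\eta+\kappa)\}$ is exactly what is needed for this quadratic term to be dominated by the curvature terms from Step 1, leaving $\cR_Q(\phi_{\rhofast})-\cR_Q(f_{k^\star})\le \frac{2}{\lambda}\frac{\log(K/\delta)}{n}$.

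\textbf{Main obstacle.} The hard part is Step 1's second half: proving that the population objective is genuinely strongly convex along segments. $\phi_\rho$ is not linear in $\rho$, so the convexity of $\rho\mapsto\cR_Q(\phi_\rho)$ must come from the equality $\bR(\phi_\rho)=\bR(f_\rho)-\Phi(\rho)\one$ together with the two-sided modulus, i.e.\ an exact translation of benchmark shifts along $\one$ into target shifts. A second, more bookkeeping-heavy difficulty is arranging the constants (the factor $11$) so that the variance terms involving $\phi_{\rhofast}$ can be absorbed.
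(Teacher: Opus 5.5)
\textbf{Gap: the comparator in Steps~2--3.} Your Step~1 is correct and matches the paper. The two moduli sandwich $\Phi(\rho)\le\lowermodulus(\Phi(\rho))\le\cR_Q(f_\rho)-\cR_Q(\phi_\rho)\le\uppermodulus(\Phi(\rho))\le\Phi(\rho)$. Hence $\cR_Q(\phi_\rho)=\cR_Q(f_\rho)-\Phi(\rho)$ is $(\eta+\kappa)$-strongly convex in $\rho$, and $\Phi\ge\frac{\eta}{2}V$ with $V(\rho)=\sum_k\rho_k\nrm{f_k-f_\rho}^2_{L^2(Q_X)}$. So that part is immediate and is not the real obstacle.

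Write $\widehat\rho:=\rhofast$ and compare with the vertex $\delta_{k^\star}$, as you do. The increments of the empirical process then have variance proxy $\sum_k\widehat\rho_k\nrm{f_k-f_{k^\star}}^2=V(\widehat\rho)+\nrm{f_{\widehat\rho}-f_{k^\star}}^2$, and $L_\ell^2L_\phi^2\nrm{f_{\widehat\rho}-f_{k^\star}}^2$ for the $\phi$-part. So the offset you get at inverse temperature $\lambda n$, with $\lambda$ a constant, contains $c\lambda\nrm{f_{\widehat\rho}-f_{k^\star}}^2$.

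The curvature from Step~1, $G(\rho)\ge\cR_Q(\phi_\rho)+\frac{\eta+\kappa}{4}V(\rho)$, absorbs only the $V$-part. Strong convexity of $G$ cannot supply the missing $\nrm{f_{\widehat\rho}-f_{k^\star}}^2$, because $\delta_{k^\star}$ is not a minimizer of $G$. The minimizer is typically interior (this is exactly the hedging gain), so $G(\widehat\rho)-G(\delta_{k^\star})$ can be negative, and there is no first-order condition at $\delta_{k^\star}$ to produce a positive quadratic term.

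Concretely, take \cref{ex:fast-rate} with $\q=(\frac12+\epsilon,\frac12-\epsilon)$. At $\widehat\rho=\delta_2$ your inequality reads $2\epsilon\le c\lambda(1+L_\phi^2)+\log(K/\delta)/(\lambda n)$. This is consistent with an excess risk $2\epsilon$ of constant order, so the argument cannot give a $1/n$ rate. You also cannot switch to strong convexity of the empirical objective at $\widehat\rho$ (the fixed-design $Q$-aggregation route). The assumptions do not give $\Rhat_Q(\phi_\rho)=\Rhat_Q(f_\rho)-\Phi(\rho)$, so $\widehat G$ need not be convex.

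\textbf{The missing idea.} Following Lecu\'e--Rigollet, the paper compares with the regularized population minimizer $\rho^\star\in\argmin_\rho\crl{G(\rho)+\mu V(\rho)}$, with $\mu=(\eta+\kappa)/20$. Since $G+\mu V$ is $(\frac{\eta+\kappa}{2}-2\mu)$-strongly convex, optimality of $\rho^\star$ gives $G(\rho)-G(\rho^\star)\ge-\mu(V(\rho)-V(\rho^\star))+(\frac{\eta+\kappa}{4}-\mu)\nrm{f_\rho-f_{\rho^\star}}^2$. This absorbs two offsets:
\begin{itemize}
\item the offset $\mu\sum_k\widehat\rho_k\nrm{f_k-f_{\rho^\star}}^2$ of the $\phi$-part, which is handled by symmetrization and contraction with constant $L_\ell L_\phi$ to linearize it in $\rho$ (a union bound over the $K$ vertices alone does not control a process indexed by the whole simplex);
\item the offset $\mu\,\widehat\rho^\top\bH\rho^\star$ of the vertex part.
\end{itemize}
Only at the very end does one pass to vertices, via $G(\rho^\star)+\mu V(\rho^\star)\le\min_k\crl{G(\delta_k)+\mu V(\delta_k)}=\min_k\cR_Q(f_k)$.

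\textbf{A smaller point.} $\Phi(\delta_k)=0$ gives $\bR(\phi_{\delta_k})=\bR(f_k)$, hence $\cR_Q(\phi_{\delta_k})=\cR_Q(f_k)$, but not $\phi_{\delta_k}=f_k$. So your identity $\widehat G(\delta_{k^\star})=\Rhat_Q(f_{k^\star})$ is unjustified. This is harmless once the comparison is done at the population level, as above.
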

The proof is in \cref{proof:fast-rate-MSA} and is very similar to that of \citet{lecue2014optimal} for the regular $Q$-aggregation estimator, despite the fact that our estimator maps to a completely different space. 
In order to compute $\phi_{\rhofast}$, the learner requires access to $\Phi$ from \cref{asm:fast-rate-aggregation} with a certificate of the $\eta$-strong concavity with respect to $L^2(Q_X)$. Luckily, this can be a weak requirement, depending on the problem instance.
Importantly, \emph{without} assuming any Bernstein-type assumption, nor strong convexity of the loss, the estimator $\phi_{\rhofast}$ achieves fast rates whenever $\eta>0$.

\subsection{Some Intuition}
Classical aggregation procedures output a model in the convex hull of the dictionary: to achieve the fast minimax rate $\log(K)/n$, this is required because by hedging (that is, assigning non-zero weight to more than one model), the estimators can exploit the Jensen gap between $\cR_Q(f_\rho)$ and $\EE_{k\sim \rho} \cR_Q(f_k)$
\citep{alquier2024user,mourtada2023local,audibert2004theorie,audibert2007progressive,audibert2009fast,lecue2009aggregation}, similar to the offset term discussed in the theory of offset Rademacher complexity \citep{liang2015learning,vijaykumar2021localization,kanade2024exponential}.
Now, by definition of the modulus (\cref{def:modulus-of-monotonicity}) and \cref{asm:fast-rate-aggregation}, we have
\begin{equation}
\label{eqn:gap-convex-Pareto-Phi}
    \cR_Q(\phi_\rho) \leq \cR_Q(f_\rho) -\lowermodulus(\Phi(\rho)).
\end{equation}
The intuitive reason we can achieve fast rates is then that combining \cref{eqn:gap-convex-Pareto-Phi,eqn:strong-concavity-gap} with $\lowermodulus(t)\geq t$ yields an ``artificial Jensen's gap'' or ``modulus of convexity'' for all $\rho$
\begin{equation}
\label{eqn:doubly-variance-reduced}
    \cR_Q(\phi_\rho)-\EE_{k\sim \rho}\cR_Q(f_k) \leq  -\frac{\eta+\kappa}{2} \EE_{X\sim Q_X}\Var_{k\sim \rho}f_k(X),
\end{equation}
which is negative as long as $\eta>0$ and $\rho$ is in the interior of the simplex (neglecting degenerate cases). The estimator can exploit this gap to achieve fast rates.
\ifarxivelse{\par Finally, we verify that \cref{thm:fast-rate-MSA} can genuinely yield fast rates where standard aggregation methods (in the convex hull) would fail to achieve fast rates.
We prove \cref{thm:lower-bound-convex-hull} in \cref{proof:lower-bound-convex-hull}.
\begin{theorem}
\label{thm:lower-bound-convex-hull}
    Assume $K\geq 4$ and $\log(K)/n \leq 1$.
    There exists a problem setting $(\cF,\bR,\crl{\cR_Q:Q\in\cQ})$ where $\cQ$ is a family of distributions such that (i) any estimator $f_{\rhohat}$ in the convex hull has worst-case risk over $Q\in \cQ$ lower bounded by
    \begin{equation*}
        \sup_{Q\in \cQ} \EE\brk{\excessRisk{f_{\rhohat}}{\crl{f_k}_{k=1}^K}} \geq \frac{1}{38}\sqrt{\frac{\log K}{n}}, 
    \end{equation*}
    and (ii) the assumptions of \cref{thm:fast-rate-MSA} are satisfied with $(\phi,\Phi)$ chosen as $(\psi,\Psi)$ from \cref{eqn:aggregation-map-Pareto-set} and the constants $L_\ell=1,L_\phi=2,\eta=2,\kappa=0$, and $\lambda = 1/22$ for all $Q\in \cQ$. Hence, our estimator achieves
    \begin{equation*}
        \sup_{Q\in \cQ} \EE\brk{\excessRisk{\phi_{\rhofast}}{\crl{f_k}_{k=1}^K}}  \leq 22 \frac{\log K}{n}. 
    \end{equation*}
\end{theorem}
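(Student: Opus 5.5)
We will construct one explicit family of problems that serves both parts. The plan is to take a standard hard instance for model selection aggregation with an absolute (non-strongly convex) loss, and then choose $\cF$ and the benchmarks so that the Pareto front curves strictly away from the convex hull.

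\textbf{Construction.} Let $\cX=[K]$ with $Q_X$ uniform, and let the dictionary be indicator-type functions, e.g.\ $f_k(x)=\one\{x=k\}$. Take the loss $\ell(\yhat,y)=|\yhat-y|$, which is $1$-Lipschitz and convex, so $L_\ell=1$ and $\kappa=0$. The family $\cQ$ is indexed by $j\in[K]$ (plus possibly a null hypothesis). Under $Q_j$, $Y\mid X=x\sim\Ber(1/2+\epsilon\,\sigma_j(x))$ with $\epsilon\asymp\sqrt{\log K/n}$, chosen so that $f_j$ is best by margin $\asymp\epsilon$. We let $\cF=\conv(f_1,\dots,f_K)\cup\{\psi_\rho\}$. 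The benchmarks are chosen as $\cR_k(f)=\|f-f_k\|_{L^2(Q_X)}^2$-type quadratic objectives, which make $f_k$ the minimiser of $\cR_k$. We then verify that $\Psi(\rho)$ from \cref{eqn:aggregation-map-Pareto-set} has a closed form. For quadratic benchmarks, $\bR(f_\rho)_k=\|f_\rho-f_k\|^2$, while points on the true Pareto set, namely the minimisers of weighted sums, lie strictly below. Computing explicitly, we expect $\Psi(\rho)\propto \EE\Var_{k\sim\rho}f_k(X)$, i.e.\ $1-\sum_k\rho_k^2$ up to scaling. This is $2$-strongly concave in the $L^2(Q_X)$ metric of $f_\rho$ and vanishes at vertices, giving $\eta=2$. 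A pointwise Lipschitz estimate then gives $L_\phi=2$.

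\textbf{Monotonicity.} We must ensure $\uppermodulus(t)\le t$ and $\lowermodulus(t)\ge t$ for every $Q_j$. By \cref{lem:bound-modulus-Lipschitz} it suffices that $\cR_{Q_j}=s\circ\bR$ on $\cF$ with $s$ a weighted average. This is the main obstacle, because $\cR_{Q_j}$ under absolute loss is linear in $f$, whereas the benchmarks are quadratic. I would try first to fix this by choosing the benchmark values on the finite-dimensional set $\cF$ directly (the theorem only needs an abstract $\bR$), for instance $\cR_k(f)=\tfrac12\cR_{Q_0}(f)+\tfrac12\|f-f_k\|^2$ adjusted with $Q$-dependent terms. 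If exact averaging fails, the fallback is to restrict $\cF$ to the convex hull plus the exact Pareto points and define $\cR_Q$ on the latter by the monotone extension. With these choices, \cref{thm:fast-rate-MSA} applies with $\lambda=\tfrac1{11}\min\{\tfrac12,\tfrac23\}=\tfrac1{22}$. Integrating its high-probability bound over $\delta$ then yields the expectation bound $22\log K/n$, after absorbing the $\log(1/\delta)$ tail with a constant.

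\textbf{Lower bound (i).} Any $f_{\rhohat}$ in the hull has excess risk under $Q_j$ at least $\epsilon(1-\rhohat_j)\cdot c$, by linearity of the risk in $\rho$ for absolute loss on $[0,1]$-valued predictions. We then apply Fano's inequality (or Tsybakov's multiple-hypotheses lemma) over the $K$ hypotheses, using $\KL(Q_j^{\otimes n}\|Q_{j'}^{\otimes n})\lesssim n\epsilon^2/K$. Choosing $\epsilon=c\sqrt{\log K/n}$, which is valid since $\log K/n\le1$ and $K\ge4$, makes the average of $1-\rhohat_j$ at least $1/2$. Tracking constants should give the $1/38$.
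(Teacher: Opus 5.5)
Your construction never produces a gap between the hull and the front, so part (ii) fails at its core. With benchmarks $\cR_k(f)=\nrm{f-f_k}_{L^2(Q_X)}^2$, every function $g$ satisfies the bias--variance identity
\[
\textstyle\sum_k\rho_k\cR_k(g)=\nrm{g-f_\rho}_{L^2(Q_X)}^2+\sum_k\rho_k\cR_k(f_\rho).
\]
Hence no $g$ can satisfy $\bR(g)\preceq\bR(f_\rho)-t\one$ with $t>0$. So every hull point is Pareto optimal, and $\Psi\equiv 0$ for every $\cF\supseteq\conv(f_1,\dots,f_K)$. No $\phi_\rho$ with $\bR(\phi_\rho)=\bR(f_\rho)-\Phi(\rho)\one$ and $\Phi(\rho)>0$ exists, so $\eta>0$ is impossible. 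The quantity $\EE\Var_{k\sim\rho}f_k(X)=\frac1K(1-\nrm{\rho}_2^2)$ you expected is exactly $\sum_k\rho_k\cR_k(f_\rho)$. That is the value of the weighted sum at its own minimiser $f_\rho$, not a hull-to-front distance: strongly convex benchmarks make the hull itself the front.

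Your monotonicity step is not repairable as proposed either. On the hull $\cR_Q$ is affine in $\rho$, while $\bR(f_\rho)_k=\frac1K(\nrm{\rho}_2^2+1-2\rho_k)$, so no weighted average $s$ works. The repairs are also not admissible. $\bR$ must be a single benchmark vector shared by all $Q\in\cQ$, so $Q$-dependent adjustments are not allowed. $\cR_Q=\EE_Q\ell$ is fixed by $Q$ and the loss, not chosen as a monotone extension; by \cref{lem:upper-and-lower-modulus-equality}, the required moduli even force $\cR_Q(f_\rho)-\cR_Q(\phi_\rho)=\Phi(\rho)$ exactly. Finally, $\cF=\conv(f_1,\dots,f_K)\cup\crl{\psi_\rho}$ is circular, since $\psi_\rho$ is defined as an argmin over $\cF$.

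Part (i) fails quantitatively as well. With $Q_X$ uniform on $[K]$ and $f_k=\one\crl{x=k}$, two dictionary elements differ only on mass $2/K$, and
\[
\cR_Q(f_k)-\cR_Q(f_{k'})=\tfrac2K\prn{\PP(Y=1\mid X=k')-\PP(Y=1\mid X=k)}.
\]
So every hull element has excess risk at most $2/K$ under every $Q$ in your family. Your constant $c$ in $c\,\eps(1-\rhohat_j)$ is therefore of order $1/K$, and Fano gives at best $\sqrt{\log K/(Kn)}$. For instance, with $n=\lceil\log K\rceil$ the claimed bound stays of constant order while $2/K\to 0$.

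The dictionary needs elements at constant $L^2(Q_X)$-distance. The paper gets this by stacking $d=\floor{\log_2 K}$ copies of \cref{ex:fast-rate}: $\cX=[d]\times\crl{\pm1}$, $f_{\btau}(i,j)=\one\crl{\tau_i=+1}$, and Assouad's lemma over targets $Q_{\bsigma}$ that tilt the $X$-marginal by $\pm\eps$ per coordinate. The same design settles (ii):
\begin{itemize}
\item The benchmarks $\cR_{\btau}$ are absolute-loss risks under distributions $P_{\btau}$, hence linear in $f$.
\item Each $Q_{\bsigma}$ is a mixture of the $P_{\btau}$, so \cref{lem:Q-in-convex-hull,lem:bound-modulus-Lipschitz} give both modulus conditions.
\item The curvature comes not from the benchmarks but from the added functions $g_{\bu}(i,-1)=u_i^2$, $g_{\bu}(i,+1)=2u_i-u_i^2$. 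These exploit that each benchmark sees only one of $(i,\pm1)$, whereas hull elements are constant in $j$.
\end{itemize}
This yields $\Psi(\rho)=\frac1d\sum_i f_\rho(i)(1-f_\rho(i))$, $\eta=2$ and $L_\phi=2$.

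Lastly, the constant $22$ comes from the in-expectation bound \cref{eqn:fast-rate-expectation}, which uses $\EE Z_n\le 0$. Integrating the high-probability bound over $\delta$ only gives a bound of the form $C(\log K+1)/n$ with a larger $C$.
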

}{
In \cref{subsubsec:lower-bound-convex-hull} we verify that \cref{thm:fast-rate-MSA} can genuinely yield fast rates where standard aggregation methods (in the convex hull) would fail to achieve fast rates.
}

\section{Conclusion}

\ifarxivelse{
This work studies the sample complexity of transfer learning and model selection aggregation under (approximate) monotonicity, motivated by recent benchmarking trends where multiple benchmarks are implicitly assumed to be informative for downstream tasks. We show that the geometry of the frontier determines the sample complexity in both settings, and empirically find that biasing towards the frontier indeed helps in small sample sizes.
Throughout, we assume finite sample access to the target distribution, but full access to the benchmarks. Future work may aim to generalize our work to finite source samples, similar to \citet{hanneke2024adaptive}, and connect to the line of work aiming to learn Pareto fronts \citep{sukenik2024generalization, wegel2025sample}.
Moreover, it may be interesting to exactly characterize the sample complexity for fixed fronts (as described beneath \cref{thm:Geometry-complexity-lower-bound}), and to see if our \cref{thm:limiting-distribution} has applications beyond the learning setting described here.
}{
Motivated by the surge in benchmarking, we study transfer learning and model selection aggregation under \emph{monotonicity}. We show that the sample complexity is governed by the Pareto front geometry, and empirically on language benchmarks that biasing toward the frontier helps in the low-data regime. Our bounds assume finite target samples but full benchmark access; extending the results to finite source samples \citep{hanneke2024adaptive} and connecting to multi-objective learning \citep{sukenik2024generalization,wegel2025sample} are natural directions for future work.
}

\section*{Acknowledgements}

TW was supported by SNSF Grant 204439 and FDG by SNSF Grant 218343. GS was partially supported by the NSF award CCF-2112665 (TILOS). 
\arxivonly{The authors acknowledge the use of LLMs to improve exposition, explore proof ideas, and generate code. The authors take full responsibility for the content of the paper.}

%\newpage

\bibliographystyle{abbrvnat} % Use plainnat for natbib compatibility
{
\small
\bibliography{bibliography}
}

\appendix 

\crefalias{section}{appendix}
\crefalias{subsection}{appendix}
\crefalias{subsubsection}{appendix}
\crefname{appendix}{Appendix}{Appendices}
\Crefname{appendix}{Appendix}{Appendices}

\newpage
\part{Appendix}
\parttoc

\begin{table}[t]
\centering
\caption{Notation.}
\label{tab:notation}
\begin{tabular}{@{}cl@{}}
\toprule
\textbf{Symbol} & \textbf{Meaning} \\
\midrule
$\v\preceq \w$ & partial ordering of vectors $\v,\w\in\RR^K$: $v_k\leq w_k$ for all $k$ \\
$\v\prec \w$ & strict partial ordering of vectors $\v,\w\in\RR^K$: $v_k\leq w_k$ for all $k$ \emph{and} $\v\neq \w$ \\
$\v\llcurly \w$ & strong partial ordering of vectors $\v,\w\in\RR^K$: $v_k< w_k$ for all $k$ \\
$\one$ & all-ones vector $\one=(1,\ldots,1)^\top$ \\
$\one\crl{\cdot}$ & the indicator function \\
$\cF$ & underlying hypothesis space \\
$\bR=(\cR_k)_{k=1}^K$ & vector of benchmarks / source risks \\
$\preceq_\bR$ & Partial order induced by $\bR$ \\
$\cR_Q$ & target risk on distribution $Q$ \\
$\excessRisk{\fhat}{\cF}$ & excess risk $\excessRisk{\fhat}{\cF}=\cR_Q(\fhat)-\inf_{f\in\cF}\cR_Q(f)$\\
$\pareto(\cF,\bR)$ & Pareto set in $\cF$ with respect to $\bR$ (\cref{def:Pareto-optimality}) \\
$\Paretodistance$ & Pareto distance (\cref{def:Pareto-distance}) \\
$\uppermodulus(t),\lowermodulus(t)$ & upper and lower modulus of monotonicity at scale $t$ (\cref{def:modulus-of-monotonicity}) \\
$\fR$ & Pareto front, that is, $\bR(\pareto(\cF,\bR))$ \\
$\Npar(t,\cF,\bR)$ & Pareto covering number at scale $t$ (\cref{def:Pareto-cover})\\
$\cP([K])$ & the set of distributions on $[K]=\crl{1,\ldots,K}$\\
$\triangle^{K-1}$ & the $(K-1)$-dimensional simplex $\{\v\in \RR^K:v_k\geq 0 , \sum_{k=1}^K v_k =1\}$\\
$\Paretoerm{t}$ & Pareto ERM at scale $t$ (\cref{subsec:Pareto-ERM}) \\
$f_{\rhohat}$ & Pareto EW (\cref{subsec:Pareto-ERM}) \\
$\ParetoStar{t}$ & Pareto Star Estimator at scale $t$ (\cref{subsec:aggregation-for-transfer}) \\
$\Thresholderm{\gamma}$ & Margin ERM at margin $\gamma$ (\cref{sec: beyond exact monotonicity}) \\
$\AdaptiveThresholdERM$ & Adaptive Margin ERM (\cref{sec: beyond exact monotonicity}) \\
$\phi_{\rhofast}$ & Fast rate aggregator (\cref{subsec:fast-rates-aggregation}) \\
\bottomrule
\end{tabular}
\end{table}

\section{Additional Experimental Results}\label{app_sec: additional experiments}

In what follows, we describe the experimental protocol in detail and report additional ablation studies.

\subsection{Datasets and Methods}
\label{app_subsec: dataset}

We run our experiments on two real-world benchmark suites: \HELM (Holistic Evaluation of Language Models)~\cite{liang2023holistic} and \VHELM (Holistic Evaluation of Vision Language Models)~\cite{lee2024vhelm}.

\paragraph{\HELM.}
In its complete release, \HELM measures (whenever applicable) seven top-level desiderata---\textit{accuracy, calibration, robustness, fairness, bias, toxicity,} and \textit{efficiency}---over a suite of $16$ \emph{core} scenarios, often reporting multiple sub-metrics per desideratum. Whenever a metric admits multiple variants, we use the \emph{primary} metric reported on the \HELM leaderboard. Since \HELM is a \emph{living} benchmark that evolves over time, we fix a single public snapshot and run all experiments on release \texttt{v1.0.0} to avoid version drift and minimize missing entries. 

Because our goal is not an extensive empirical validation but rather to illustrate that our \emph{monotonicity} assumption is reasonably consistent with practice, we work with the \HELMLite setup. In \texttt{v1.0.0}, \HELMLite contains evaluations of $31$ LLMs on $7$ scenarios:
\textsc{OpenBookQA}, \textsc{GSM8K}, \textsc{LegalBench}, \textsc{MATH}, \textsc{MedQA}, \textsc{MMLU}, and \textsc{NarrativeQA}.
For each scenario, we extract a \emph{single} scalar performance measure given by its primary leaderboard metric (e.g., exact match for multiple-choice QA, F1 for \textsc{NarrativeQA}, and equivalence-based accuracy for \textsc{MATH}). We treat each scenario as providing one ``accuracy-style'' objective, and convert metrics to risks via a fixed monotone transformation (e.g., $1-\textit{score}$ when larger is better). Finally, to increase variability in the Pareto set, we remove the \texttt{GPT-4} models (which are substantially stronger than the remaining models), yielding a final model set of size $29$. With $7$ scenarios, the total number of source/target combinations for a fixed $K\in\{1,\dots,6\}$ equals $7\cdot \binom{6}{K}$.

\paragraph{\VHELM.}
We use the \texttt{v2.0.0} release of \VHELM, which evaluates $26$ vision-language models across multiple metrics; we focus on the accuracy-style metric and restrict attention to $9$ scenarios for which sufficient data are available for all models:
\textsc{a\_okvqa}, \textsc{blink}, \textsc{fair\_face}, \textsc{hateful\_memes}, \textsc{math\_vista}, \textsc{mm\_star}, \textsc{mmmu}, \textsc{seed\_bench}, and \textsc{unicorn}.
We discard scenarios with insufficient coverage across models.
With $9$ scenarios, the total number of source/target combinations for a fixed $K\in\{1,\dots,8\}$ equals $9\cdot \binom{8}{K}$.

\paragraph{Baselines.} For completeness, we restate here the model-selection methods reported in the experiments.
\begin{itemize}
    \item ERM-all: $\widehat{f}_{\mathrm{all}}\in\arg\min_{f\in \cF}\Rhat_Q(f)$;
    
    \item ERM-Pareto:  $\Paretoerm{0}\in\arg\min_{f\in \mathrm{Par}(\cF, \bR)}\Rhat_Q^{(n)}(f)$ as defined in \cref{subsec:Pareto-ERM} (note here that by setting the Parameter $t=0$ we effectively use the entire Pareto frontier);
    
    \item ERM-adaptive: we compute the parameter $\tadapt$ and the corresponding pruned hypothesis space $\Fpruned{\tadapt}$; yielding the adaptive margin ERM $\AdaptiveThresholdERM\in\argmin_{f\in \Fpruned{\tadapt}}\Rhat_Q(f)$;
    
    \item Min-Max: $\widehat{f}_\mathrm{rob}=\min_{f\in\cF} \max_{k\in[K]}\cR_k(f)$, the robust estimator that does not use any target data.
\end{itemize}
Since we assume black-box access to models through benchmark evaluations, we do not compare with representation-based selection methods \citep{dong2022zood,zhang2023model}.

\paragraph{Experimental protocol.} Let $\cF$ denote the finite set of models evaluated on the benchmark scenarios $\mathcal S$ (\HELM or \VHELM). We form \emph{transfer combinations} by selecting $K$ source scenarios $S=\{S_1,\dots,S_K\}\subset\mathcal S$ and a distinct target scenario $Q\in\mathcal S\setminus S$. For each $(S,Q)$, we compute the Pareto set $\mathrm{Par}(\cF,\bR)$ using all available source data. To emulate limited labeled target data, we then sample (without replacement) $n$ target instances from $Q$, compute the empirical target risks $\widehat{\cR}_Q(f)$, and apply each selection rule to obtain $\fhat$. We evaluate its target risk $\cR_Q(\fhat)$ on the full target pool. For each $(S,Q)$ and each $n$, we repeat the target subsampling $B=500$ times and report the average excess risk
\[
\widehat\cE_B(\fhat) \;=\; \frac{1}{B}\sum_{i=1}^B \Big(\cR_Q\big(\hat f^{(i)}\big)\;-\;\min_{f\in\cF}\cR_Q(f)\Big),
\]
where $\hat f^{(i)}$ is the model selected on trial $i$ of the subsampling. Finally, for each $n$ in a predefined grid, we aggregate results by averaging $\widehat\cE_B(\fhat)$ over all transfer combinations.

\subsection{Ablation on the Number of Benchmarks}
\label{app_subsec: ablation on number of sources K}
\begin{figure}[ht]
    \centering
    \includegraphics[width=0.75\linewidth]{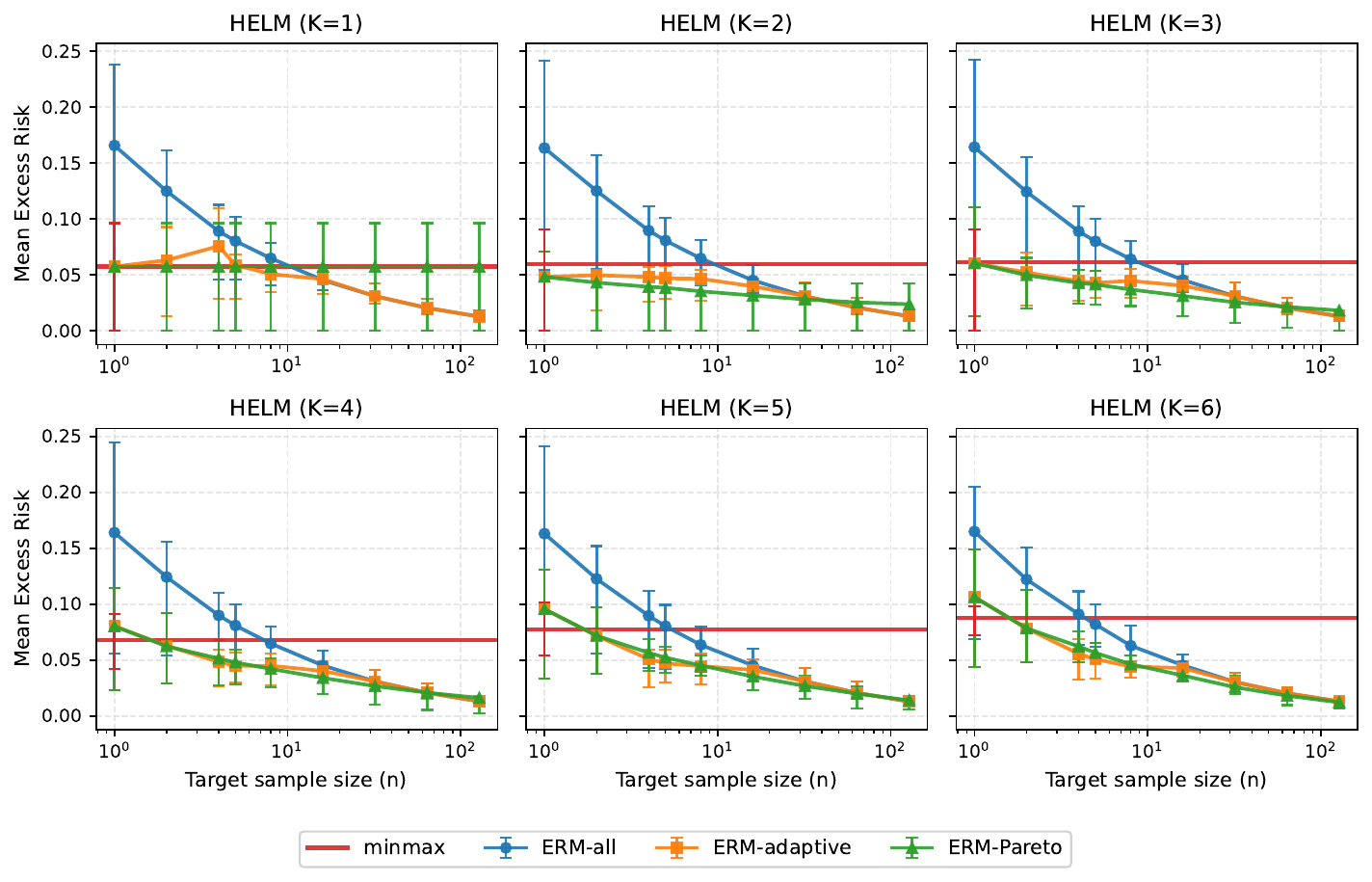}
    \caption{Mean (over all possible combinations of $K$ sources and a target $Q$) excess risk of different model selection methods. Vertical bars are (25-75 quartiles).}
    \label{fig:mean excess risks HELM}
\end{figure}
\begin{figure}
    \centering
    \includegraphics[width=\linewidth]{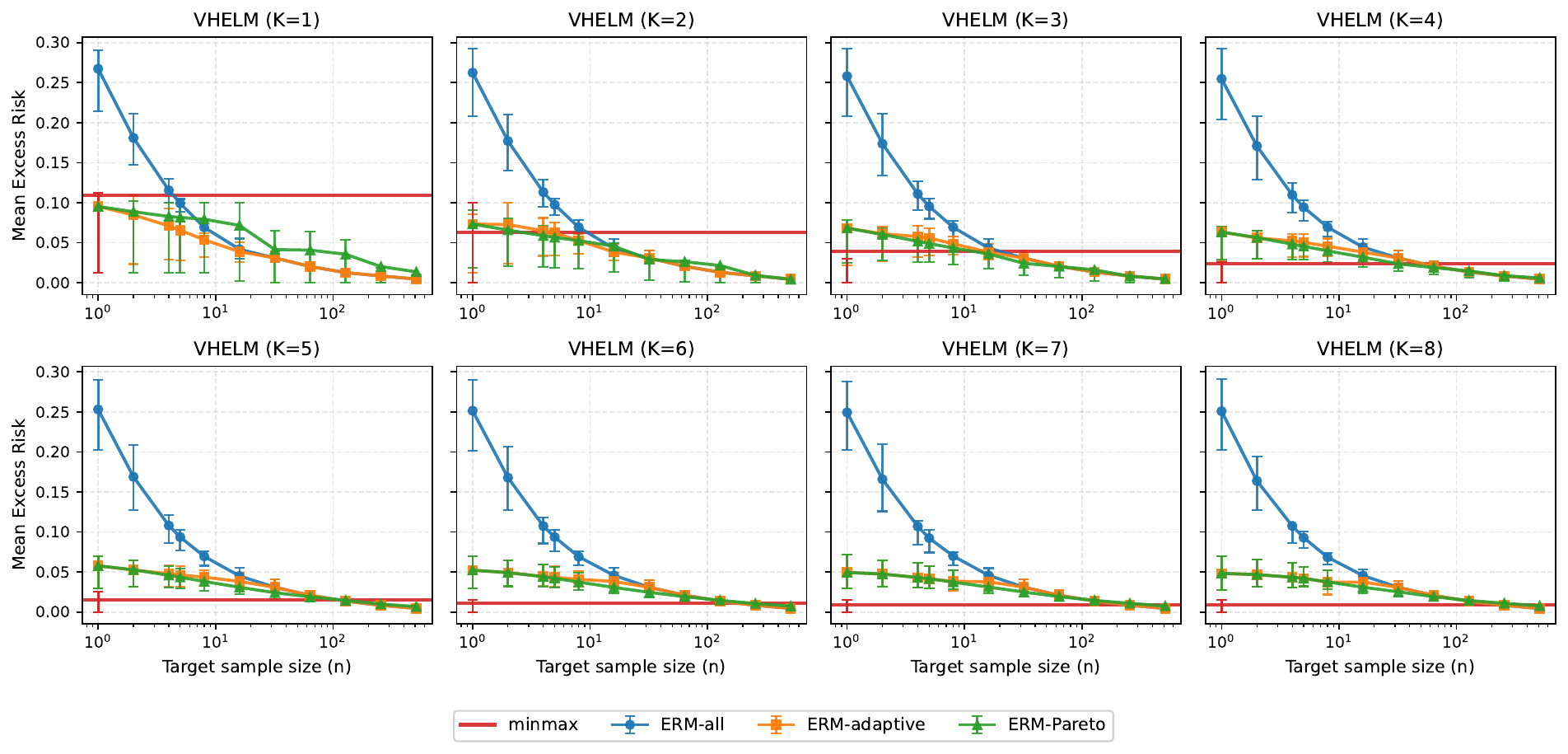}
    \caption{Mean (over all possible combinations of $K$ sources and a target $Q$) excess risk of different model selection methods. Vertical bars are (25-75 quartiles).}
    \label{fig:mean excess risks VHELM}
\end{figure}
The results of \cref{fig:mean excess risks HELM and VHELM} are presented for a fixed value $K$ of source tasks. Following, we presented the same experiment by varying $K$. In particular, \HELM has $7$ scenarios and thus $K\in\{1,...,6\}$; similarly, \VHELM has $9$ scenarios and $K\in\{1,...,8\}$. We highlight some interesting observations from the plots in \cref{fig:mean excess risks HELM,fig:mean excess risks VHELM}. First of all, the main qualitative pattern is consistent across both benchmark suites: \emph{frontier-biased} selection methods (Pareto ERM and adaptive ERM) provide a smaller excess risk compared to ERM on the full hypothesis class in the low-data regime, where ERM over the full class suffers from substantial variance. More precisely, for small target sample sizes $n$, both ERM-Pareto and ERM-adaptive significantly improve upon ERM-all for every $K$.  As $n$ grows, the gap narrows and all methods converge, reflecting the fact that estimation noise on the target decreases with $n$.
On the other hand, for bigger values of $n$, ERM on the Pareto set can be worse; this can be appreciated for example in the case of $K=1$ for both \HELM and \VHELM and this is due to the fact that the oracle model $f^\star\in \argmin_{f\in\cF}\cR_Q(f)$ is not alwys in the Pareto frontier induced by the source tasks. In the \HELM experiments with $K=1$, we observe an extreme version of this phenomenon: for essentially all combinations the Pareto set is a singleton, and the corresponding model is not the target oracle. Consequently, ERM-Pareto becomes identical to a purely source-based selection rule (it does not depend on $n$) and its excess risk matches the constant $\min\max$ baseline. 

Finally, the adaptive procedure $\AdaptiveThresholdERM$ of \cref{sec: beyond exact monotonicity} mitigates this potential bias by enlarging the candidate set with $n$: by construction $\Fpruned{\tadapt}$ contains the Pareto set and grows with $n$, interpolating between strict frontier restriction at very small $n$ and ERM over $\cF$ for larger $n$. Empirically, this yields performance close to ERM-Pareto in the low-data regime (where variance dominates) while reducing the risk of a plateau when $n$ is large (where approximation bias becomes the limiting factor).

\section{Additional Results on Monotonicity}

In this section, we provide some more discussion and results about our monotonicity assumption \eqref{eqn:weak-monotone}. We restate here for completeness a couple of previously introduced definitions that will be needed for the discussion. 
Recall the definitions of monotonicity and Pareto optimality from \cref{subsec:weak-monotonicity}.

We begin by mentioning that the sufficiency from \cref{lem:weak-sufficiency} can be exploited more explicitly.
Recall that by \cref{lem:weak-sufficiency}, the representation $\bR(f)$ gives sufficient information to compute $\cR_Q$, as long as we can learn the mapping $s$. Prior work has studied algorithms that explicitly learn this map from a fixed class of scalarizations \citep{mansour2021theory}. However, this may prove suboptimal, either when the chosen class of scalarizations is unnecessarily complex (see \cref{sec:transfer-learning}), or on the contrary, not expressive enough. This can happen, for instance, when the Pareto front is \emph{non-convex} (see \cref{ex:convex-hull-counterexample} below). In contrast, our approach completely avoids learning the scalarization $s$.

\subsection{Results for the Pareto Distance and Moduli}
\label{app_subsec:modulus-of-monotonicity}
We now provide two results about the Pareto distance and the moduli (\cref{def:Pareto-distance,def:modulus-of-monotonicity}).
Recall that the Pareto distance measures the smallest amount $t$ needed to improve all objectives $\cR_k(f)$ in order to achieve a performance that is no worse than $\cR_k(f')$. 
Note that, for arbitrary functions $f$ and $g$, $\Paretodistance(f,g)$ can be negative and asymmetric. In particular, this is not a metric even when restricted to the Pareto set: symmetry can fail, and $\Paretodistance(f,g)=0$ does not imply $f=g$. On the Pareto front, we can prove the following.
\begin{lemma} \label{lem:pareto-distance-axioms}
    Let $f, g \in \cF$. The Pareto distance (\cref{def:Pareto-distance}) satisfies $\Paretodistance(f,g) = \min \{t\in\RR : \bR(f) - t\one \preceq \bR(g) \}$. Moreover, if $f, g, h \in \pareto(\cF,\bR)$ are restricted to the Pareto set, it satisfies the following properties:
    \begin{enumerate}
        \item[(i)] Non-negativity: $\Paretodistance(f,g) \geq 0$ and $\Paretodistance(f,f) = 0$.
        \item[(ii)] Triangle inequality: $\Paretodistance(f,h) \leq \Paretodistance(f,g) + \Paretodistance(g,h)$.
    \end{enumerate}
\end{lemma}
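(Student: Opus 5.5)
The plan is to prove the min-formula directly from the definition and then derive (i) and (ii) from it, using Pareto optimality only for non-negativity.

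\emph{The formula.} Set $t^\star := \max_{k\in[K]}\crl{\cR_k(f)-\cR_k(g)}$. For any $t\in\RR$,
\[
\bR(f)-t\one\preceq\bR(g) \iff \cR_k(f)-\cR_k(g)\leq t \text{ for all } k\in[K] \iff t\geq t^\star .
\]
Hence the set $\crl{t\in\RR:\bR(f)-t\one\preceq\bR(g)}$ is exactly the closed ray $[t^\star,\infty)$. Its minimum exists and equals $t^\star=\Paretodistance(f,g)$. This holds for arbitrary $f,g\in\cF$.

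\emph{Non-negativity (i).} Clearly $\Paretodistance(f,f)=\max_k 0=0$. Now let $f,g\in\pareto(\cF,\bR)$ and suppose, for contradiction, that $\Paretodistance(f,g)<0$. Then $\cR_k(f)<\cR_k(g)$ for every $k$, so $\bR(f)\llcurly\bR(g)$. In particular $\bR(f)\preceq\bR(g)$ and $\bR(f)\neq\bR(g)$, i.e.\ $\bR(f)\prec\bR(g)$. This contradicts Pareto optimality of $g$ (\cref{def:Pareto-optimality}). This is the only step that uses the restriction to the Pareto set, and it needs only that $g$ is Pareto optimal.

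\emph{Triangle inequality (ii).} Let $a=\Paretodistance(f,g)$ and $b=\Paretodistance(g,h)$. By the formula,
\[
\bR(f)-a\one\preceq\bR(g) \quad\text{and}\quad \bR(g)-b\one\preceq\bR(h).
\]
Since $\preceq$ is preserved under subtracting the same constant vector and is transitive,
\[
\bR(f)-(a+b)\one\preceq\bR(g)-b\one\preceq\bR(h).
\]
Applying the formula once more gives $\Paretodistance(f,h)\leq a+b$. Alternatively, add coordinatewise: $\cR_k(f)-\cR_k(h)=(\cR_k(f)-\cR_k(g))+(\cR_k(g)-\cR_k(h))\leq a+b$ for each $k$, then take the maximum over $k$.

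The triangle inequality in fact holds on all of $\cF$, so the Pareto restriction matters only for (i). The argument involves no real obstacle; the only subtle point is noting that $\llcurly$ implies $\prec$ in the proof of (i).
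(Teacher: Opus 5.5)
Your proposal is correct and follows essentially the same route as the paper. Both derive the min-formula from the coordinatewise equivalence, obtain non-negativity from the fact that $f$ cannot strictly dominate the Pareto-optimal $g$, and prove the triangle inequality by chaining $\bR(f)-a\one\preceq\bR(g)$ with $\bR(g)-b\one\preceq\bR(h)$. Your side remarks are also correct: the triangle inequality holds on all of $\cF$, and (i) needs only that $g$ is Pareto optimal.
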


\begin{proof}
    If $t$ is the smallest value satisfying $\bR(f) - \bR(g) \preceq t\one$, then it must be larger than $\max_{k \in [K]} \cR_k(f) - \cR_k(g)$, thus
    \(\Paretodistance(f,g) \leq \min \big\{t : \bR(f) - t\one \preceq \bR(g)\big\}.\) 
    On the other hand, $\bR(f) - \Paretodistance(f,g) \one \preceq \bR(g)$, so that
    \(\Paretodistance(f,g) \geq \min \big\{t : \bR(f) - t\one \preceq \bR(g) \big\}.\)
    And so, equality holds. When $f,g,h\in\pareto(\cF,\bR)$, we also have:
    \begin{enumerate}
        \item Since $f, g \in \pareto(\cF, \bR)$, neither dominates the other. That is, there exists some $k \in [K]$ such that 
        \(\cR_k(f) \geq \cR_k(g).\)
        The maximum over $k \in [K]$ is there non-negative. That $\Paretodistance(f,f) = 0$ is immediate by the definition.
        \item By definition, we have
        \(\bR(f) - \Paretodistance(f,g)\one \preceq \bR(g)\) and \(\bR(g) - \Paretodistance(g,h)\one \preceq \bR(h).\)
        Combining these, we obtain that
        \[\bR(f) - \big(\Paretodistance(f,g) + \Paretodistance(g,h)\big) \one \preceq \bR(h),\]
        which implies that $\Paretodistance(f,h) \leq \Paretodistance(f,g) + \Paretodistance(g,h)$.
    \end{enumerate}
    That concludes the proof.
\end{proof}

Now, recall the definition of the moduli (\cref{def:modulus-of-monotonicity}), which are the quantities that directly follow from the distance $\Paretodistance(\cdot,\cdot)$.
We provide the following bounds for scalarizations from \cref{lem:weak-sufficiency} that are in a coordinate-separable form.
\begin{lemma}
\label{lem:bound-modulus-Lipschitz}
    Assume \eqref{eqn:weak-monotone} and denote $s$ the scalarization from \cref{lem:weak-sufficiency}. 
    If $s$ is of the form $s(\r) = \sum_{k=1}^{K} q_kg(r_k)$ for some non-negative weights $\q=(q_1,\ldots,q_K)\in[0,\infty)^K$ and non-decreasing $g:[0,\infty)\to \RR$, then, for
    \begin{align*}
        \overline{\Delta}_g(t) = \sup_{u\in[0,1]} g(u+t)-g(u), \quad \text{and} \quad 
        \underline{\Delta}_g(t) = \inf_{u\in[t,1]} g(u)-g(u-t),
    \end{align*}
    it holds $\uppermodulus(t)\leq \nrm{\q}_1 \overline{\Delta}_g(t)$ and $\lowermodulus(t)\geq \nrm{\q}_1 \underline{\Delta}_g(t)$.
    In particular, if $s$ is a weighted average (a.k.a.\ linear scalarization), then $\uppermodulus(t)\leq t$ and $\lowermodulus(t)\geq t$.
\end{lemma}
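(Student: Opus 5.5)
By \cref{lem:weak-sufficiency}, monotonicity gives $\cR_Q=s\circ\bR$ on $\cF$. So both moduli become statements about increments of $s$ between benchmark vectors $\bR(f)$ and $\bR(f')$ related by the Pareto distance. The plan is to bound these increments coordinatewise, using the separable form of $s$ and the monotonicity of $g$.

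For the upper modulus, fix $f,f'\in\cF$ with $\Paretodistance(f,f')\le t$. Write $\r=\bR(f)$ and $\r'=\bR(f')$. By \cref{lem:pareto-distance-axioms}, $\r\preceq\r'+t\one$, so $r_k\le r'_k+t$ for every $k$. Since $g$ is non-decreasing and $q_k\ge 0$,
\[
\cR_Q(f)-\cR_Q(f')=\sum_{k}q_k\bigl(g(r_k)-g(r'_k)\bigr)\le\sum_k q_k\bigl(g(r'_k+t)-g(r'_k)\bigr).
\]
Each $r'_k$ lies in $[0,1]$, so every summand is at most $q_k\overline{\Delta}_g(t)$. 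This gives the bound $\nrm{\q}_1\overline{\Delta}_g(t)$, and taking the supremum over admissible pairs yields $\uppermodulus(t)\le\nrm{\q}_1\overline{\Delta}_g(t)$.

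For the lower modulus, take $f,f'$ with $\Paretodistance(f,f')\le -t$, which means $r_k\le r'_k-t$ for all $k$. (If no such pair exists, then $\lowermodulus(t)=1$ by convention, and we only need the bound to be at most $1$, checked below.) Again by monotonicity of $g$,
\[
\cR_Q(f')-\cR_Q(f)=\sum_k q_k\bigl(g(r'_k)-g(r_k)\bigr)\ge\sum_k q_k\bigl(g(r'_k)-g(r'_k-t)\bigr).
\]
Here $r'_k\ge r_k+t\ge t$ and $r'_k\le 1$, so $r'_k\in[t,1]$ and each term is at least $q_k\underline{\Delta}_g(t)$. Taking the infimum gives $\lowermodulus(t)\ge\nrm{\q}_1\underline{\Delta}_g(t)$.

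For the linear case, take $g(u)=u$ and $\nrm{\q}_1=1$ (a weighted average). Then $\overline{\Delta}_g(t)=\underline{\Delta}_g(t)=t$, which gives $\uppermodulus(t)\le t$ and $\lowermodulus(t)\ge t$.

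The main obstacle is the empty-set convention in the lower modulus. When no admissible pair exists, $\lowermodulus(t)=1$, so we need $\nrm{\q}_1\underline{\Delta}_g(t)\le 1$.
\begin{itemize}
\item In the linear case this holds immediately, since $t\le 1$.
\item In general it requires a normalization. The natural one comes from $\cR_Q\in[0,1]$, but that does not directly bound $\nrm{\q}_1\underline{\Delta}_g(t)$. I would either add the normalization $\nrm{\q}_1\bigl(g(1)-g(0)\bigr)\le 1$, which suffices because $\underline{\Delta}_g(t)\le g(1)-g(1-t)\le g(1)-g(0)$, or restrict the claim to the nonempty case.
\end{itemize}
Everything else is a direct coordinatewise estimate.
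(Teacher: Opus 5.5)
Your proof is correct and is essentially the paper's argument. It uses the same coordinatewise estimate: $\cR_Q=s\circ\bR$, $q_k\ge 0$, monotonicity of $g$, and $r'_k\in[0,1]$ (resp.\ $r'_k\in[t,1]$), followed by taking the supremum/infimum.

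Your caveat about the empty-set convention is also well founded. The paper's proof tacitly assumes that the set defining $\lowermodulus(t)$ is nonempty, and for general $g$ the stated bound can genuinely fail otherwise (e.g.\ a single model with a steep $g$). So either your normalization $\nrm{\q}_1\bigl(g(1)-g(0)\bigr)\le 1$ or a restriction to the nonempty case is needed. The weighted-average case, which is the one the paper uses later, is unaffected.
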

While these bounds can be loose, they are essentially tight for weighted averages (except for some edge cases). This lemma also demonstrates what happens under \emph{monotone transformations} of the source risks. For example, if $g(t)=\sqrt{t}$, then the upper modulus is bounded on order of $\sqrt{t}$.
\begin{proof}
    For any $t\geq0$, let $f,f'\in\cF$ be such that $\bR(f)\leq \bR(f')+t\one$. Then, using $\cR_Q=s\circ \bR$ from \cref{lem:weak-sufficiency}, the assumption that $s$ is of the form $s(\r)=\sum_{k=1}^K q_k g(r_k)$, and the definition of $\overline{\Delta}_g(t)$, we get that
    \begin{align*}
        \cR_Q(f)-\cR_Q(f') &=s(\bR(f))-s(\bR(f')) \tag{\cref{lem:weak-sufficiency}}\\
        &= \sum_{k=1}^K q_k (g(\cR_k(f))-g(\cR_k(f'))) \tag{$s(\r)=\sum_{k=1}^K q_k g(r_k)$} \\
        &\leq \sum_{k=1}^K q_k (g(\cR_k(f')+t)-g(\cR_k(f')))  \tag{$g$ is non-decreasing and $\cR_k(f)\leq \cR_k(f')+t$}\\
        &\leq \sum_{k=1}^K q_k \prn{\sup_{u\in[0,1]} g(u+t)-g(u)} \tag{$\cR_k(f')\in [0,1]$} \\
        &= \nrm{\q}_1 \overline{\Delta}_g(t). \tag{Definition of $\overline{\Delta}_g(t)$}
    \end{align*}
    By taking the supremum over all such $f,f'\in\cF$ and \cref{def:modulus-of-monotonicity} of the upper modulus, it holds that $\uppermodulus(t)\leq \nrm{\q}_1 \overline{\Delta}_g(t)$. 

    Similarly, for any $t\geq0$, let $f,f'\in\cF$ be such that $\bR(f)\leq \bR(f')-t\one$. Then, using $\cR_Q=s\circ \bR$ from \cref{lem:weak-sufficiency} and the assumption that $s$ is of the form $s(\r)=\sum_{k=1}^K q_k g(r_k)$, and the definition of $\underline{\Delta}_g(t)$, we get that
    \begin{align*}
        \cR_Q(f')-\cR_Q(f) &= \sum_{k=1}^K q_k (g(\cR_k(f'))-g(\cR_k(f))) \\
        &\geq \sum_{k=1}^K q_k \prn{g(\cR_k(f'))-g(\cR_k(f')-t)} \tag{$g$ is non-decreasing and $\cR_k(f)\leq \cR_k(f')-t$} \\
        &\geq \sum_{k=1}^K q_k \prn{\inf_{u\in [t,1]} g(u)-g(u-t)} \tag{$\cR_k(f)\leq \cR_k(f')-t$ implies $\cR_k(f')\geq t$} \\
        &\geq \nrm{\q}_1 \underline{\Delta}_g(t). 
    \end{align*}
    By taking the infimum over all such $f,f'\in\cF$ and \cref{def:modulus-of-monotonicity} of the lower modulus, it holds that $\lowermodulus(t)\geq \nrm{\q}_1 \underline{\Delta}_g(t)$. 

    Finally, note that for a weighted average we can take $g(x)=x$ (which clearly satisfies $\overline{\Delta}_g(t)=\underline{\Delta}_g(t)=t$) and $\nrm{\q}_1=1$.
\end{proof}

\subsection{Monotonicity in the Statistical Learning Setting}

Let $\ell_k(f,z)\in[0,1]$ be a loss function; in the statistical learning setting, we assume that the source risks are also risks over distributions $P_k$ on $\cZ$, that is, $\cR_k(f) = \EE_{z \sim P_k} \ell_k (f,z)$ for all $k = 1,\ldots, K$.

\subsubsection{Mixture target distributions and monotonicity.} 
We first state a basic sufficient condition for monotonicity in the statistical learning setting when the target distribution is a convex mixture.
\begin{lemma}
\label{lem:Q-in-convex-hull}
    If $Q\in \conv(P_1,\ldots,P_K)$ and $\ell_k=\ell$ for all $k$, then $\cR_Q$ is monotonic with respect to $\bR$, and $s$ from \cref{lem:weak-sufficiency} is a weighted average.
\end{lemma}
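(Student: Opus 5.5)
Write $Q=\sum_{k=1}^K q_k P_k$ with $\q=(q_1,\ldots,q_K)\in\triangle^{K-1}$, which is possible since $Q\in\conv(P_1,\ldots,P_K)$. The plan is to use linearity of expectation in the mixing measure to express the target risk as a fixed weighted average of the benchmark risks. Monotonicity, and the form of $s$, then both follow from this single identity.

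First I establish the identity. For any $f\in\cF$, using $\ell_k=\ell$,
\begin{equation*}
    \cR_Q(f)=\EE_{z\sim Q}\ell(f,z)=\sum_{k=1}^K q_k\,\EE_{z\sim P_k}\ell(f,z)=\sum_{k=1}^K q_k\,\cR_k(f).
\end{equation*}
The middle equality holds because integration against a finite convex combination of probability measures splits accordingly. The loss is bounded in $[0,1]$, so all integrals are finite and no measurability issue arises beyond $\ell(f,\cdot)$ being measurable, which is implicit in the definition of the risks.

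Next I define $s(\r):=\sum_{k=1}^K q_k r_k=\q\cdot\r$ on $\RR^K$. By the identity, $\cR_Q=s\circ\bR$ on $\cF$. Since $\q\succeq 0$ and $\nrm{\q}_1=1$, $s$ is a weighted average, and it is monotone: $\v\preceq\w$ implies $\q\cdot\v\le\q\cdot\w$.

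Finally I check \eqref{eqn:weak-monotone}. If $f\preceq_\bR f'$, then $\bR(f)\preceq\bR(f')$, so $\cR_Q(f)=s(\bR(f))\le s(\bR(f'))=\cR_Q(f')$.

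Two remarks on the statement. First, the map $s$ in \cref{lem:weak-sufficiency} is not unique off $\bR(\cF)$, so "$s$ is a weighted average" should be read as "$s$ can be taken to be a weighted average." Under that reading, the linear $s$ constructed above is a valid choice. Second, no step is a real obstacle. The only point needing care is the mixture-splitting equality, which is immediate from the definition of a convex combination of measures.
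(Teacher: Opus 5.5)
Your proof is correct and takes the same approach as the paper: write $Q=\sum_k q_kP_k$, use linearity of the integral to get $\cR_Q=\sum_k q_k\cR_k$, and read off both monotonicity and the weighted-average form of $s$. Your remark that $s$ is only pinned down on $\bR(\cF)$, so the claim should be read as ``$s$ can be chosen to be a weighted average,'' is a valid clarification that the paper leaves implicit.
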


\begin{proof}
    Note that for some weights $\q =(q_1,\ldots,q_K)\in\triangle^{K-1}$ we can write 
    \begin{equation*}
        \cR_Q(f)=\int \ell(f,z)\d Q(z) = \int \ell(f,z)\d \prn{\sum_{k=1}^K q_kP_k}= \sum_{k=1}^K q_k \int \ell(f,z)\d P_k = \sum_{k=1}^K q_k \cR_k(f).
    \end{equation*}
    Clearly, this implies that $\cR_Q$ is monotonic in $\bR$.
\end{proof}

We now prove a partial converse of \cref{lem:Q-in-convex-hull} which demonstrates that under fairly strong assumptions (roughly speaking that the span of $\crl{\ell(f,\cdot):f\in\cF}$ is expressive enough) the target risk must be a convex combination of the source risks. 

\begin{lemma}[Mixture from monotonicity] \label{prop:monotone-to-convex}
    Let $\ell_k=\ell$ for all $k$.
    Define the loss class $\cL = \{\ell_f : f \in \cF\}$, where $\ell_f \equiv \ell(f,\cdot)$. Let $\cH = \mathrm{span}(\cL)$ form a Hilbert space such that:
    \begin{enumerate}
        \item Every function $\phi \in \cH$ can be expressed in the form $\phi = \lambda (\ell_f - \ell_{f'})$, where $\lambda \in \RR$ and $\ell_f, \ell_{f'} \in \cL$,
        \item The constant function $\mathbf{1}$ is contained in $\cH$. Moreover, let $\cR_1,\ldots, \cR_K$ be linearly independent in $\cH^*$.
    \end{enumerate}
    If $\cR_Q$ is weakly monotonic with respect to $(\cR_1,\ldots, \cR_K)$, then 
    \[\cR_Q \in \mathrm{conv}_{\cH^*}(\cR_1,\ldots, \cR_K).\]
\end{lemma}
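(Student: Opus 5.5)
The plan is to view the risks as linear functionals on $\cH$, turn weak monotonicity into a cone inclusion, and then apply a Farkas-type duality argument.

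Extend each risk linearly from $\cL$ to $\cH$ by $\cR(\phi)=\int \phi\,\d P$. This is well defined because $\cR_Q$ and the $\cR_k$ are integrals against distributions. Since $\ell_{f'}-\ell_f=-(\ell_f-\ell_{f'})$ also has the required form, assumption 1 lets every $\phi\in\cH$ be written as $\phi=\lambda(\ell_f-\ell_{f'})$ with $\lambda\geq 0$.

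\textbf{Step 1 (cone inclusion).} Fix $\phi\in\cH$ with $\cR_k(\phi)\leq 0$ for all $k$, and write it as above with $\lambda\geq 0$. If $\lambda=0$ there is nothing to show. Otherwise $\cR_k(\ell_f)\leq \cR_k(\ell_{f'})$ for every $k$, that is, $f\preceq_\bR f'$. Weak monotonicity \eqref{eqn:weak-monotone} then gives $\cR_Q(\ell_f)\leq\cR_Q(\ell_{f'})$, hence $\cR_Q(\phi)\leq 0$. This proves
\[
\crl{\phi\in\cH:\cR_k(\phi)\leq 0\ \forall k}\subseteq\crl{\phi:\cR_Q(\phi)\leq 0}.
\]

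\textbf{Step 2 (Farkas).} Because only finitely many functionals are involved, it suffices to work in finite dimensions. Consider the linear map $T:\cH\to\RR^{K+1}$, $\phi\mapsto(\cR_1(\phi),\ldots,\cR_K(\phi),\cR_Q(\phi))$. The pulled-back statement is the finite-dimensional Farkas lemma applied on the image of $T$, giving $\cR_Q=\sum_k q_k\cR_k$ as functionals on $\cH$ with $q_k\geq 0$. This is the point I expect to need care, so I would argue it directly instead of citing the lemma. Suppose $\cR_Q\notin C:=\mathrm{cone}(\cR_1,\ldots,\cR_K)$. Inside $V=\mathrm{span}(\cR_1,\ldots,\cR_K,\cR_Q)\subseteq\cH^*$, which is finite-dimensional, $C$ is a closed polyhedral cone. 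Hyperplane separation then gives a linear functional on $V$, realized by evaluation at some $\phi\in\cH$ (finite-dimensional duality for the restriction map $\cH\to V^*$), such that $\cR_k(\phi)\leq 0$ for all $k$ and $\cR_Q(\phi)>0$. This contradicts Step 1. The realization by evaluation works because the map $\cH\to V^*$ is surjective, as $V$ consists of linearly independent-or-dependent functionals on $\cH$ spanning a finite-dimensional space.

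\textbf{Step 3 (normalization).} Apply the representation from Step 2 to the constant function $\one\in\cH$ (assumption 2). Every risk evaluates to $1$ on $\one$, since it is the integral of $1$ against a probability measure. Hence $1=\cR_Q(\one)=\sum_k q_k\cR_k(\one)=\sum_k q_k$, so $\q\in\triangle^{K-1}$ and $\cR_Q\in\conv_{\cH^*}(\cR_1,\ldots,\cR_K)$.

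Linear independence of the $\cR_k$ is not needed for existence; it only makes the weights $\q$ unique. The main obstacle is making the separation step rigorous, in particular ensuring the separating functional is realized by an actual element of $\cH$. Reducing to the finite-dimensional span $V$ handles this, with no topology on $\cH$ required beyond linearity.
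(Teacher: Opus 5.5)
Your proof is correct, but it is organized differently from the paper's.

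The paper splits the argument into three pieces. First, it shows the kernel inclusion $\bigcap_k\ker(\cR_k)\subseteq\ker(\cR_Q)$ via the sufficiency lemma, and turns it into $\cR_Q\in\mathrm{span}(\cR_1,\ldots,\cR_K)$ with a Hilbert-space orthogonal-decomposition lemma. Second, it gets $\sum_k\lambda_k=1$ by evaluating at $\one$. Third, it proves $\lambda_j\geq 0$ one coordinate at a time: linear independence gives some $\phi\in\bigcap_{k\neq j}\ker(\cR_k)\setminus\ker(\cR_j)$, i.e.\ a pair $f,f'$ that differ only on benchmark $j$, and monotonicity is applied to that pair.

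You instead prove the one-sided inclusion $\{\phi:\cR_k(\phi)\leq 0\ \forall k\}\subseteq\{\phi:\cR_Q(\phi)\leq 0\}$. This contains the kernel inclusion (apply it to $\pm\phi$), and a single Farkas step then produces nonnegative weights directly. The normalization at $\one$ is the same in both.

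Your route needs neither the linear independence hypothesis nor any Hilbert-space structure, so it proves a slightly stronger statement. As you note, linear independence only gives uniqueness of the weights. For example, if $\cR_1=\cR_2$, the span representation can have a negative coefficient and the paper's witness construction is unavailable, yet Farkas still selects a nonnegative representation. The paper's route avoids separation theorems and shows explicitly which pairs of models certify the sign of each weight.

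One small repair: your stated reason for surjectivity of $\cH\to V^*$ is garbled. The correct one-liner is that the annihilator in $V$ of its image consists of functionals vanishing on all of $\cH$, hence is $\{0\}$. Alternatively, your own map $T$ avoids the issue entirely. Apply homogeneous Farkas on the finite-dimensional subspace $T(\cH)\subseteq\RR^{K+1}$ to the restricted coordinate functionals $w\mapsto w_k$. The resulting identity $w_{K+1}=\sum_k q_k w_k$ for all $w\in T(\cH)$ is exactly $\cR_Q(\phi)=\sum_k q_k\cR_k(\phi)$ for all $\phi\in\cH$.
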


To explain these assumptions in words, we may view the functionals $\cR_1,\ldots, \cR_K$ and $\cR_Q$ as elements in the dual space $\cH^*$. The first condition of \Cref{prop:monotone-to-convex} states that the model class $\cF$ is rich enough so that we can probe the risk functional $\cR_Q$ using only pairwise comparisons of models in $\cF$, where for $\phi = \lambda( \ell_f - \ell_{f'})$,
\[\hspace{2em}\langle \cR_Q, \phi\rangle_{\cH} = \lambda\big[\cR_Q(f) - \cR_Q(f')\big].\]
The second condition enables a more interpretable and aesthetically pleasing result (though it is not `morally' necessary), although it rules out cases such as $\cR_1 = - \cR_2$. 

\Cref{prop:monotone-to-convex} shows that if we require monotonicity to hold for a sufficiently expressive set of models $\cF$, then this imposes a lot of constraints on how the new task $\cR_Q$ relates to the old ones $\bR$.
However, there exist numerous meaningful examples where the conclusion of \cref{prop:monotone-to-convex} can fail, especially when the loss function varies or the number $K$ of source risks is large. 
We exemplify this in the following simple example.
\begin{example}[Monotonicity is strictly more general than convex hull]
\label{ex:convex-hull-counterexample}
    Consider the instances $\cZ=\crl{z_1,z_2,z_3}$ and models $\cF=\crl{f_1,f_2,f_3, f_4}$, incurring the following losses:
\begin{center}
\begin{tabular}{@{}c@{\hspace{1em}}c@{}}
\begin{tabular}{c|cccc}
$\ell$ & $f_1$ & $f_2$ & $f_3$ & $f_4$ \\ \hline
$z_1$ & $0.5$ & $1.0$ & $0.8$ & $1.0$ \\
$z_2$ & $0.5$ & $0$   & $0.3$ & $0.5$ \\
$z_3$ & $0.5$ & $0.5$ & $0.3$  & $0.0$
\end{tabular}
&\hspace{1cm}
\raisebox{-0.5\height}{\includegraphics[width=0.25\linewidth]{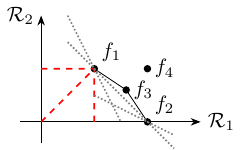}}
\end{tabular}
\end{center}
    Let $P_1=\delta_{z_1},P_2=\delta_{z_2}$. Then for any $\widetilde Q\in \conv(P_1,P_2)$ we have that $\EE_{z\sim \widetilde Q} \ell(f,z)$ is minimized by either $f_1$ or $f_2$, as visualized by the gray dotted lines. But if $Q=\uniform{\cZ}$, then $f_3$ is the target minimizer. Importantly, $\cR_Q$ remains monotonic in the sources.
\end{example}
This and many other examples (e.g., with varying losses) show that, even in the statistical learning setting where source risks can be written as expected losses, we generally cannot conclude that $\cR_Q$ or $Q$ must be in the convex hull of the source risks $\cR_k$ or source distributions $P_k$, so our setting is strictly more general.

Finally, before proving \cref{prop:monotone-to-convex}, the following lemma is needed:
\begin{lemma} \label{lem:linear-span}
    Let $\cH$ be a Hilbert space containing vectors $u, v_1,\ldots, v_N$ and $\ker(v):= \ker(w\mapsto \inner{w}{v}_\cH)=v^\perp$. Then:
    \[u \in \mathrm{span}(v_1,\ldots, v_N) \quad \Longleftrightarrow \quad \bigcap_{n \in [N]} \ker(v_n) \subset \ker(u).\]
\end{lemma}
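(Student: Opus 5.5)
We prove both directions separately. Write $V := \mathrm{span}(v_1,\ldots,v_N)$, which is finite-dimensional and therefore a closed subspace of $\cH$. The key identity is
\[
\bigcap_{n\in[N]} \ker(v_n) = \{v_1,\ldots,v_N\}^\perp = V^\perp .
\]
It holds because $w$ is orthogonal to every $v_n$ exactly when it is orthogonal to every linear combination of them.

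\emph{Forward direction.} Suppose $u = \sum_n c_n v_n$ and take $w \in \bigcap_n \ker(v_n)$. Linearity of the inner product gives $\inner{w}{u}_\cH = \sum_n \bar c_n \inner{w}{v_n}_\cH = 0$, so $w \in \ker(u)$.

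\emph{Reverse direction.} Assume $V^\perp \subset \ker(u) = u^\perp$. Since $V$ is closed, the projection theorem gives the orthogonal decomposition $\cH = V \oplus V^\perp$. Write $u = p + q$ with $p \in V$ and $q \in V^\perp$. By assumption $q \in u^\perp$, so $\inner{q}{u}_\cH = 0$. On the other hand,
\[
\inner{q}{u}_\cH = \inner{q}{p}_\cH + \inner{q}{q}_\cH = 0 + \nrm{q}^2 ,
\]
since $p \in V$ and $q \in V^\perp$. Hence $\nrm{q}^2 = 0$, so $q = 0$ and $u = p \in V$.

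The only step needing care is the closedness of $V$, which the projection theorem requires. This is immediate here because every finite-dimensional subspace of a normed space is closed.

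As an alternative that avoids Hilbert-space structure, one can use the linear-algebra fact that a functional vanishing on $\bigcap_n \ker(\varphi_n)$ is a linear combination of the $\varphi_n$. One factors $u$ through the map $w \mapsto (\inner{w}{v_n}_\cH)_n \in \mathbb{C}^N$ (or $\RR^N$), extends linearly, and then uses the Riesz correspondence to conclude. The projection argument above is shorter.
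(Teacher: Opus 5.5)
Your proof is correct and follows the paper's argument. The forward direction is the same linearity computation, and the reverse direction uses the same orthogonal decomposition $u = u_V + u_{V^\perp}$; you argue directly via $\nrm{q}^2 = \inner{q}{u}_\cH = 0$ where the paper argues by contrapositive, and you spell out two steps the paper leaves implicit, namely that $\bigcap_n \ker(v_n) = V^\perp$ and that $V$ is closed because it is finite-dimensional.
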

\begin{proof} Let $V = \mathrm{span}(v_1,\ldots, v_N)$.
Suppose that $u \in V$ where $u = \sum_n \lambda_n v_n$. Let $w \in \bigcap \ker(v_n)$. Then $w \in \ker(u)$ since \(\langle w, u\rangle_{\cH} = \sum_{n \in [N]} \lambda_n \langle w, v_n\rangle_{\cH} = 0.\)
Suppose on the other hand that $u \notin V$. Since $\cH$ is a Hilbert space, $u$ decomposes into components in $V$ and $V^\perp$,
\(u = u_V + u_{V^\perp},\)
where the orthogonal component is nonzero, $u_{V^\perp} \ne 0$. Then, $u_{V^\perp} \in \bigcap \ker(v_n)$, but $u_{V^\perp} \notin \ker(u)$.
\end{proof}

\begin{proof}[Proof of \cref{prop:monotone-to-convex}]
    Throughout, we view $\cR_1,\ldots, \cR_K$ and $\cR_Q$ as elements of the dual space $\cH^*$. \Cref{lem:linear-span} shows that:
    \begin{equation} \label{eqn:kernel-implication}
        \bigcap_{k \in [K]} \ker(\cR_k) \subset \ker(\cR_Q)  \implies \cR_Q \in \mathrm{span}_{\cH^*}(\cR_1,\ldots, \cR_K).
    \end{equation}
    Assume for now that the left-hand side holds. The right-hand side almost gives the result, except that we need to ensure that $\cR_Q$ is not just a linear span, but a convex combination. Let $\cR_Q = \sum \lambda_k \cR_k$. Then, in fact, it is a convex combination:
    
    By the second condition of \Cref{prop:monotone-to-convex}, $\mathbf{1} \in \cH$. Thus, we obtain
    \[1 = \cR_Q(\mathbf{1}) = \sum_{k \in [K]} \lambda_k \cR_k(\mathbf{1}) = \sum_{k \in [K]} \lambda_k.\]
     
    Fix any $j \in [K]$. We show that $\lambda_j \geq 0$. By linear independence, the risk functionals $\cR_k$ ranging over $k \in [K]$ are linearly independent. \Cref{lem:linear-span} implies that there exists $\phi \in \cH$ such that:
    \[\phi \in \bigcap_{k \in [K] \setminus \{j\}} \ker(\cR_k) \quad \textrm{and}\quad \phi \notin \ker(\cR_j).\]
    By the first condition of \Cref{prop:monotone-to-convex}, there exist $f, f' \in \cF$ such that $\phi \in \mathrm{span}(\ell_f- \ell_{f'})$. For all $k \ne j$, we deduce from $\phi \in \ker(\cR_k)$ and from $\phi \notin \ker \cR_j$ that:
    \(\cR_k(f) = \cR_k(f')\) and \(\cR_j(f)\ne \cR_j(f')\).
    Swapping $f$ and $f'$ if needed, this shows that $f$ is strictly $\bR$-dominated by $f'$. We also obtain
    \begin{align*}
        \cR_Q(f) - \cR_Q(f') = \lambda_j \big(\cR_j(f) - \cR_j(f')\big).
    \end{align*}
    Since $\cR_j(f) - \cR_j(f') \ne 0$, weak monotonicity forces $\lambda_j \geq 0$.

    To finish the proof, we just need to show that the left-hand side of the implication in \Cref{eqn:kernel-implication} holds, namely, $\bigcap_{k \in [K]} \ker(\cR_k) \subset \ker(\cR_Q)$. Let $\phi \in \bigcap \ker(\cR_k)$. Again, by the first condition of \Cref{prop:monotone-to-convex}, there are $f, f' \in \cF$ such that $\phi \in \mathrm{span}(\ell_f - \ell_{f'})$. This implies that $\cR_k(f) = \cR_k(f')$ for all $k \in [K]$. By \Cref{lem:weak-sufficiency}, weak monotonicity implies that $\cR_Q(f) = \cR_Q(f')$. Thus, $\phi \in \ker(\cR_Q)$.
\end{proof}

\subsubsection{Covariate and label shift.}
Assume that the source and target risks share the same feature space \(\cX\), label space \(\cY\), and loss function \(\ell\), but may differ in their distributions. Let \(\cZ=\cX\times\cY\), let \(z=(x,y)\), and write \(\ell(f,z):=\ell(f(x),y)\). Let \((X,Y)\sim Q\) and \((X^k,Y^k)\sim P_k\) for \(k\in[K]\).

We say that \(Q\) satisfies \emph{covariate shift} relative to \(P_1,\dots,P_K\) if, for every \(k\in[K]\), the regular conditional law of \(Y^k\) given \(X^k\) under \(P_k\) coincides with the regular conditional law of \(Y\) given \(X\) under \(Q\), \(P_{k,X}\)-almost surely. Similarly, we say that \(Q\) satisfies \emph{label shift} relative to \(P_1,\dots,P_K\) if, for every \(k\in[K]\), the regular conditional law of \(X^k\) given \(Y^k\) under \(P_k\) coincides with the regular conditional law of \(X\) given \(Y\) under \(Q\), \(P_{k,Y}\)-almost surely. 

\begin{lemma}
\label{lem:covariate-and-label-shift}
    Covariate shift and label shift are each neither necessary nor
sufficient for monotonicity.
\end{lemma}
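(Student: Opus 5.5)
The plan is to prove the four claims separately: covariate shift is not necessary, not sufficient; label shift is not necessary, not sufficient. Each claim gets an explicit counterexample on a small finite space, in the spirit of \cref{ex:convex-hull-counterexample}. Throughout I take $\cX$ and $\cY$ finite, a small finite $\cF$ with $K=1$ or $K=2$ sources, and a $[0,1]$-valued loss such as $0$-$1$ or squared loss on $\cY=\{0,1\}$. Monotonicity is then checked directly from \eqref{eqn:weak-monotone} by comparing the finitely many risk vectors.

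\emph{Not necessary.} It suffices to have a target that is monotone in the sources while neither conditional matches. The simplest choice is $K=1$ with $P_1=Q$ perturbed so that $\cR_1=\cR_Q$ on $\cF$. For example, take $\cF=\{f_a,f_b\}$ where $f_b$ incurs lower loss than $f_a$ at every point $z$. Then every distribution ranks $f_b$ below $f_a$, so \eqref{eqn:weak-monotone} holds for every source/target pair. Now pick $Q$ and $P_1$ on $\cX\times\cY$ with both $P_{1,Y|X}\neq Q_{Y|X}$ and $P_{1,X|Y}\neq Q_{X|Y}$ (e.g.\ put mass on different atoms of a $2\times 2$ grid, ensuring full support in $X$ and $Y$ so the conditionals are compared on sets of positive measure). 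This single construction refutes necessity of both shift types at once.

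\emph{Not sufficient.} Here I need a pair $f,f'$ with $\bR(f)\preceq\bR(f')$ but $\cR_Q(f)>\cR_Q(f')$. For covariate shift, fix $Y=X$ deterministically on $\cX=\{x_1,x_2\}$, $\cY=\{0,1\}$, with $x_1\mapsto 0$ and $x_2\mapsto 1$. Take $P_1$ with $X$ concentrated on $x_1$ and $Q$ with $X$ concentrated on $x_2$, using the same conditional of $Y$ given $X$. Let $f$ be correct on $x_1$ and wrong on $x_2$, and $f'$ the reverse. Then $\cR_1(f)=0<1=\cR_1(f')$ while $\cR_Q(f)=1>0=\cR_Q(f')$, violating monotonicity. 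For label shift I use the symmetric construction: $P_1$ puts label mass on $y=0$ and $Q$ on $y=1$, with a shared $X|Y$ ($x_1$ given $0$, $x_2$ given $1$), and the same $f,f'$.

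\emph{Main obstacle.} The main subtlety is the almost-sure qualifier in the definitions. The conditional laws only need to agree $P_{k,X}$-a.s. (resp.\ $P_{k,Y}$-a.s.), so with point masses the shift conditions hold somewhat trivially. To keep the examples non-degenerate, I would instead give $P_1$ mass $0.9$ on $x_1$ and $0.1$ on $x_2$, and $Q$ the reverse. The ranking reversal persists ($0.1$ vs.\ $0.9$), and the shift conditions now hold on full support. In the necessity example, I would likewise ensure full support so that the conditionals genuinely differ on positive-measure sets. Beyond this, everything is a direct finite computation.
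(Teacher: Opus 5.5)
Your proposal is correct. The two non-sufficiency constructions are essentially the paper's. The paper likewise swaps the $X$-marginal ($0.9/0.1$ versus $0.1/0.9$) under a common deterministic labeling. For label shift it swaps the $Y$-marginal, using a one-point $\cX$ and constant classifiers. Your two sufficiency examples in fact induce the same joint laws, since $X$ and $Y$ determine each other. That single pair therefore satisfies covariate and label shift simultaneously, so it refutes sufficiency of both at once.

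The genuine difference is in non-necessity. The paper takes $\cF$ to be all classifiers under $0$--$1$ loss. It lets the target be a label-noise corruption of the source, $Y=Y^1\oplus N$ with $N\sim\Ber(p)$ and $p<1/2$. Then $\cR_Q=p+(1-2p)\cR_1$ is an increasing affine function of the source risk on the whole class. This shows monotonicity surviving for a rich hypothesis class under a natural, non-degenerate shift, which is what the paper's remark about robustness to benign structural changes rests on.

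Your two-element class with pointwise loss domination is logically sufficient; even $|\cF|=1$ would be. But it forces \eqref{eqn:weak-monotone} for every pair of distributions, so it says nothing about how the shift interacts with monotonicity.

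When you instantiate it, be careful with the loss. With $0$--$1$, squared or absolute loss and predictions in $[0,1]$, no $f_b$ can have strictly smaller loss than $f_a$ at both $(x,0)$ and $(x,1)$. With full $X$-support, both labels carry positive mass under $P_1$ or $Q$ at every $x$ where the two conditionals differ. So your literal requirement (strictly lower loss at every $z$) fails for these losses. You have two fixes:
\begin{itemize}
\item use a bespoke loss, e.g.\ let $f_a$ output an abstention value incurring loss $1$; or
\item ask only for weak pointwise domination, plus a strict gap $\cR_1(f_b)<\cR_1(f_a)$ coming from some other $x'$ whose label is deterministic and shared by $P_1$ and $Q$.
\end{itemize}
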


The examples that we use below to prove this lemma highlight the following difference: monotonicity is more robust to benign structural changes (like noise, \cref{exm:covariate-label-not-necessary}) but more sensitive to marginal shifts (\cref{exm:covariate-not-sufficient,exm:label-not-sufficient}).

\begin{proof}
We prove the claim via three examples. Throughout, we consider a single source domain when constructing counterexamples; this is without loss of generality since failure (or validity) of weak monotonicity for $K=1$ implies the same for general $K$ by ignoring the other sources.

\begin{example}[Covariate and label shift are not necessary for weak monotonicity]
\label{exm:covariate-label-not-necessary}
Weak monotonicity may hold even if the regular conditional laws of \(Y^1\) given \(X^1\) under \(P_1\) and of \(Y\) given \(X\) under \(Q\) differ. Consider binary classification with \(0\)--\(1\) loss. Let \(X^1\sim \Ber(\tfrac12)\) and \(Y^1=X^1\) almost surely. Define the target by \(X=X^1\) and \(Y=Y^1\oplus N\), where \(N\sim \Ber(p)\) with \(0<p<\tfrac12\), independent of \((X^1,Y^1)\). Then covariate shift fails since, for instance, the conditional law of \(Y^1\) given \(X^1=1\) is \(\delta_1\), whereas the conditional law of \(Y\) given \(X=1\) is \((1-p)\delta_1+p\delta_0\). Similarly, label shift fails since the conditional law of \(X^1\) given \(Y^1=1\) under \(P_1\) is \(\delta_1\), whereas the conditional law of \(X\) given \(Y=1\) under \(Q\) is \((1-p)\delta_1+p\delta_0\).
Moreover, for every classifier \(f\), \(\cR_Q(f)=\PP_{(X,Y)\sim Q}(f(X)\neq Y)=(1-p)\PP_{(X^1,Y^1)\sim P_1}(f(X^1)\neq Y^1)+p\,\PP_{(X^1,Y^1)\sim P_1}(f(X^1)=Y^1)=p+(1-2p)\cR_1(f)\). Since \(1-2p>0\), \(\cR_1(f)\le \cR_1(f')\) implies \(\cR_Q(f)\le \cR_Q(f')\). Thus weak monotonicity holds although covariate and label shift fail.
\end{example}

\begin{example}[Covariate shift alone is not sufficient for weak monotonicity]
\label{exm:covariate-not-sufficient}
Even if the regular conditional laws of \(Y^1\) given \(X^1\) under \(P_1\) and of \(Y\) given \(X\) under \(Q\) coincide, weak monotonicity may fail. Consider binary classification with \(0\)--\(1\) loss, let \(\cX=\{a,b\}\), and let \(Y^1\equiv 0\) and \(Y\equiv 0\) almost surely. Thus covariate shift holds. Let \(\PP(X^1=a)=0.9\), \(\PP(X^1=b)=0.1\), \(\PP(X=a)=0.1\), and \(\PP(X=b)=0.9\). Define classifiers \(f(a)=0,f(b)=1\) and \(f'(a)=1,f'(b)=0\). Then \(\cR_1(f)=\PP_{(X^1,Y^1)\sim P_1}(f(X^1)\neq Y^1)=0.1<0.9=\PP_{(X^1,Y^1)\sim P_1}(f'(X^1)\neq Y^1)=\cR_1(f')\), whereas \(\cR_Q(f)=\PP_{(X,Y)\sim Q}(f(X)\neq Y)=0.9>0.1=\PP_{(X,Y)\sim Q}(f'(X)\neq Y)=\cR_Q(f')\). Hence weak monotonicity fails.
\end{example}

The next example is essentially identical to \cref{exm:covariate-not-sufficient}, but for label shift.
\begin{example}[Label shift alone is not sufficient for weak monotonicity]
\label{exm:label-not-sufficient}
Even if the regular conditional laws of \(X^1\) given \(Y^1\) under \(P_1\) and of \(X\) given \(Y\) under \(Q\) coincide, weak monotonicity may fail. Let \(\cX=\{x_0\}\), \(\cY=\{0,1\}\), and use \(0\)--\(1\) loss. Then label shift holds trivially. Let \(\PP(Y^1=0)=0.9\) and \(\PP(Y=0)=0.1\). For the constant classifiers \(f\equiv 0\) and \(f'\equiv 1\), we have \(\cR_1(f)=\PP_{(X^1,Y^1)\sim P_1}(f(X^1)\neq Y^1)=0.1<0.9=\PP_{(X^1,Y^1)\sim P_1}(f'(X^1)\neq Y^1)=\cR_1(f')\), whereas \(\cR_Q(f)=\PP_{(X,Y)\sim Q}(f(X)\neq Y)=0.9>0.1=\PP_{(X,Y)\sim Q}(f'(X)\neq Y)=\cR_Q(f')\). Hence weak monotonicity fails.
\end{example}

Together, these three examples show that covariate shift and label shift are each neither necessary nor sufficient for weak monotonicity.
\end{proof}

\section{Additional Discussion of the Pareto Covering}\label{sec:additional-Pareto-covering}

\cameraonly{
\subsection{Computation of a Pareto covering}
In general, \emph{exactly} computing a minimal $t$-Pareto cover is computationally hard \citep{zitzler2008quality,papadimitriou2000approximability,chvatal1979greedy}. 
For finite hypothesis classes, the problem becomes much simpler to analyze: Let $M$ be the number of models, $K$ the number of source benchmarks, and $P \leq M$ the size of the Pareto set. 
For $t=0$ (used in our experiments), we only need to compute the exact Pareto set. This can be done by checking pairwise dominance across all models, which requires at most $O(KM^2)$ comparisons. In practice, this may already yield a good candidate set.
For $t>0$, one can first restrict attention to the Pareto set and then build a $t$-Pareto cover on top using standard greedy set-cover heuristics to obtain a small approximate cover with a logarithmic size overhead and computational cost $O(KP^2)$ \citep{chvatal1979greedy}. Notice that a logarithmic size overhead is essentially irrelevant in the upcoming bounds.
}

\subsection{Comparison of Different Coverings}
\label{subsec:covering-comparison}

We begin by providing a more in-depth comparison with the two other ways of covering a Pareto front (beyond the Pareto covering from \cref{def:Pareto-cover}) that we discussed in \cref{subsec:geometry-covering}.

\paragraph{Norm covering.} The first option is to construct a covering in some norm, such as the $\ell_2$-norm. An example of this is visualized in \cref{fig:Pareto-covering-comparison} on the right. It is clear that this method of covering adapts to the geometry, and if the front has low intrinsic volume, this covering number will be small. 
And indeed, perhaps unsurprisingly, the $\ell_2$-norm covering converges to a uniform distribution with respect to the Hausdorff measure under some regularity conditions \citep{borodachov2007asymptotics}.
It also has other appealing properties, some of them discussed in \citet{zhang2024gliding}. 
From a statistical perspective, however, when a notion of monotonicity holds, it is not necessary to cover the Pareto front point-wise in this norm. After all, if a point lies far away in $\ell_2$-norm, but is almost dominated by another, there is no need to spend any statistical budget on it. In particular, this is reflected in the fact that when better trade-offs become available, this covering number can actually \emph{increase}, as visualized in \cref{fig:Pareto-covering-comparison}, and numerically validated in \cref{subfig:eps-pareto-set-curve}. Moreover, the distribution induced by this kind of covering does not avoid improper regions, see also \cref{fig:Pareto-covering}.
As demonstrated in \cref{sec:transfer-learning}, this is contrary to the actual statistical hardness under monotonicity, so these covering numbers do not offer a good perspective on the problem.

\paragraph{Simplex $\ell_1$-covering.} Another option is to cover the simplex $\triangle^{K-1}$ in $\ell_1$-norm, as done in \citet{mansour2021theory} (cf.\ \cref{sec:separation-mansour}), and use it as weights for a weighted average. In particular, standard covering results show that creating a $t$-covering $\Lambda\subset \triangle^{K-1}$ in $\ell_1$-norm requires a set of size $\abs{\Lambda}=\Theta(t^{-(K-1)})$. Because, under the assumption that the Pareto front is \emph{convex}, it can be fully recovered by solving optimization problems of the form
\begin{equation*}
    f_\lambda \in \argmin_{f\in\cF} \sum_{k=1}^K \lambda_k\cR_k(f),
\end{equation*}
it is natural to cover the Pareto front with points $\crl{\bR(f_\lambda):\lambda\in\Lambda}$, see center panel of \cref{fig:Pareto-covering-comparison}.
However, this approach has three major limitations. 
Firstly, the \emph{number} of covering points is completely oblivious to the Pareto front, even when it only contains one point. Hence bounds using this covering directly cannot show adaptivity to the geometry. Moreover, as demonstrated in \cref{lem:linf-bound-covering} below, the Pareto covering number is never larger than $\Theta(t^{-(K-1)})$. 
Secondly, while the induced distribution on the Pareto set changes with the front, it can produce unnecessary redundancies and emphasis on ``almost dominated'' parts of the front, as visualized in \cref{fig:Pareto-covering}. Informally, it is not hard to see that under suitable regularity conditions, the limiting distribution of such a covering as $t\to 0$ is the push-forward of the uniform distribution on $\triangle^{K-1}$ with the map $\lambda\mapsto \bR(f_\lambda)$. In the setting of \cref{ex:limiting-distribution} for $p>1$, this corresponds to a density on $[\eps,1-\eps]$ given by 
\begin{equation*}
    \frac{\d \pi_{\operatorname{simplex-}\ell_1}}{\d \lambda}(x) \propto \frac{x^{p-2}(1-x)^{p-2}}{(x^{p-1}+(1-x)^{p-1})^2},
\end{equation*}
where by abuse of notation here $\lambda$ denotes the Lebesgue measure. Note that this is different from the limiting distribution of the Pareto covering number.
And thirdly, when the front is non-convex, and only monotonicity holds, it may not reach the optimal solution (although this is ruled out under the stronger mixture assumption from \citet{mansour2021theory}).
\begin{figure}
    \centering
    \includegraphics[width=0.9\linewidth]{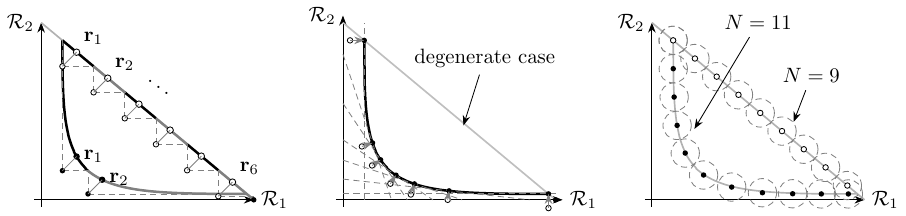}
    \caption{Comparison of different coverings. \emph{Left:} Pareto covering. \emph{Center:} $\ell_1$-simplex covering and weighted sum. \emph{Right:} $\ell_2$-covering.}
    \label{fig:Pareto-covering-comparison}
\end{figure}

\cameraonly{
In \cref{fig:Pareto-covering}, we visualize the setting of \cref{ex:limiting-distribution} for the values $p=2$ and $p=4$. \cref{subfig:eps-pareto-set-fronts} shows the Pareto fronts with numerically computed minimal Pareto covering as well as the two different coverings from above. \cref{subfig:eps-pareto-set-distributions} shows that each covering method induces a different distribution over the decision space, and depends differently on the Pareto front geometry.
Clearly, the Pareto covering number puts more mass on parts corresponding to the ``elbow'' of the Pareto fronts. \cref{subfig:eps-pareto-set-scale} shows that, unsurprisingly, all coverings scale as $t^{-(K-1)}=t^{-1}$, but there is a constant factor gap between them, with the Pareto covering being much smaller at the same scale. Therefore, the other two coverings can be ``unnecessarily large''. Finally, \cref{subfig:eps-pareto-set-curve,subfig:eps-pareto-set-distributions} together show that only the Pareto covering adapts to the Pareto front geometry as one may expect: the $\ell_2$-norm covering number \emph{increases} as more favorable trade-offs become available ($p=4$), and even puts more mass on improper regions. At the same time, naturally the simplex $\ell_1$-covering number is completely agnostic to the geometry of the front, and while the induced distributions are more similar to those of the Pareto covering, they still do not avoid improper regions in the case $p=4$.
\begin{figure}[t]
    \begin{minipage}[c]{0.72\textwidth}
        \begin{subfigure}{\linewidth}
            \includegraphics[width=\linewidth, trim={0cm 0cm 0cm 0cm}, clip]{figures/eps_pareto_set_horizontal_pareto_fronts.pdf}
            \caption{Three different covers of two Pareto fronts at the same scale $t$.}
            \label{subfig:eps-pareto-set-fronts}
        \end{subfigure}
        \begin{subfigure}{\linewidth}
            \includegraphics[width=\linewidth, trim={0cm 0.2cm 0cm 0cm}, clip]{figures/eps_pareto_set_horizontal_decision_space.pdf}
            \caption{The density estimates and limiting distributions in decision space.}
            \label{subfig:eps-pareto-set-distributions}
        \end{subfigure}
    \end{minipage}
    \hfill
    \begin{minipage}[c]{0.275\textwidth}
    \begin{subfigure}{\linewidth}
        \includegraphics[width=\linewidth, trim={0cm 0.4cm 0cm 0cm}, clip]{figures/eps_pareto_set_horizontal_sizes_vs_scale.pdf}
        \caption{Cover size dependence on $t$.}
        \label{subfig:eps-pareto-set-scale}
    \end{subfigure}
    \begin{subfigure}{\linewidth}
        \includegraphics[width=\linewidth, trim={0cm 0.5cm 0cm 0cm}, clip]{figures/eps_pareto_set_horizontal_sizes_at_eps.pdf}
        \caption{Cover size dependence on $p$.}
        \label{subfig:eps-pareto-set-curve}
    \end{subfigure}
    \end{minipage}
    \caption{(\subref{subfig:eps-pareto-set-fronts}) For a fixed scale $t$ and two different Pareto fronts from \cref{ex:limiting-distribution}, we plot a Pareto covering and two different covers from \cref{subsec:covering-discussion,subsec:covering-comparison}. 
    (\subref{subfig:eps-pareto-set-distributions}) We plot a kernel density estimate and the limiting distribution from \cref{thm:limiting-distribution} (for the other two limiting distributions, see \cref{subsec:covering-comparison}). 
    (\subref{subfig:eps-pareto-set-scale}) As a function of the scale $t$, the covering numbers are all of order $t^{-(K-1)}=t^{-1}$, but the constant factor depends on the geometry and the covering method.
    (\subref{subfig:eps-pareto-set-curve}) The Pareto covering number decreases with more favorable trade-offs, whereas the simplex $\ell_1$ cover is indifferent, and the $\ell_2$-norm cover even increases.}
    \label{fig:Pareto-covering}
\end{figure}
}

\subsection{A Worst-case Bound on the Pareto Covering Number}

Recall the definition of Pareto covering number from \cref{def:Pareto-cover}. Here we prove a bound that is independent of $\bR$ and hence worst-case over all possible Pareto front geometries.
\begin{lemma}
\label{lem:linf-bound-covering}
    Suppose that $\bR:\cF\to [0,1]^K$. Then $\Npar(t,\cF,\bR) \leq \prn{1+\frac{1}{t}}^{K-1}$ for all $t\in (0,1]$.
\end{lemma}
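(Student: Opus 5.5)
The plan is to build an explicit $t$-Pareto cover by discretizing the first $K-1$ coordinates of the front. Partition $[0,1]^{K-1}$ into axis-aligned cubes of side $t$. Concretely, for each index $j\in\{0,1,\ldots,\lceil 1/t\rceil-1\}^{K-1}$ take the cell
\[
C_j=\prod_{k=1}^{K-1}[j_k t,(j_k+1)t],
\]
so there are at most $\lceil 1/t\rceil^{K-1}\le (1+1/t)^{K-1}$ cells. For every cell $C_j$ that contains the first $K-1$ coordinates of some point of $\fR$, select one representative $\r_0^{(j)}\in\fR$. Among all front points whose first $K-1$ coordinates lie in $C_j$, choose the one with the smallest $K$-th coordinate, or a near-minimizer. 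Compactness of $\bR(\fR)$, or of its closure, is used here, together with the paper's standing compactness assumption on $\bR(\cF)$, to ensure a minimizer exists. This gives a candidate set $S$ of size at most $(1+1/t)^{K-1}$.

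Next I verify the covering property. Take any $\r\in\fR$, and let $C_j$ be a cell containing $(r_1,\ldots,r_{K-1})$. For the first $K-1$ coordinates, both $\r$ and $\r_0^{(j)}$ project into $C_j$, so $r^{(j)}_{0,k}-r_k\le t$ for $k<K$. For the last coordinate, $\r_0^{(j)}$ minimizes the $K$-th coordinate over the cell, so $r^{(j)}_{0,K}\le r_K$. Hence $d(\r_0^{(j)},\r)=\max_k\{r^{(j)}_{0,k}-r_k\}\le t$, which is exactly the $t$-Pareto cover condition of \cref{def:Pareto-cover}. Therefore $\Npar(t,\fR)\le |S|\le(1+1/t)^{K-1}$.

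The only delicate points are technical. The first is attainment of the minimum of the $K$-th coordinate within a cell. If $\fR$ itself is not closed, I would instead take the minimizer over $\bR(\cF)$ restricted to the closed cell; this exists by compactness of $\bR(\cF)$. I would then replace it by a Pareto-optimal dominating point via \cref{lem:monotonicity-pareto}. Such a point has coordinates no larger, so the inequalities above are preserved. However, it may leave the cell, and that is harmless because only the inequalities matter. The second is the counting of cells when $1/t$ is not an integer. Using $\lceil 1/t\rceil\le 1+1/t$ handles this, and boundary points shared by adjacent cells can be assigned to either cell. I expect the attainment issue to be the main (minor) obstacle; the rest is routine.
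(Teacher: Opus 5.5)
Your proof is correct and follows essentially the same route as the paper. Both arguments discretize the first $K-1$ coordinates on a grid of spacing $t$ (at most $\lceil 1/t\rceil^{K-1}\le(1+1/t)^{K-1}$ representatives), pick in each grid piece a point with (near-)minimal $K$-th coordinate, and then pass to a dominating Pareto-optimal point via \cref{lem:monotonicity-pareto}. The only differences are cosmetic. The paper uses lower-orthant slices $\{r_j\le a_j,\ j<K\}$ instead of cells. It also takes a $t$-approximate infimum in the last coordinate, which costs nothing because the Pareto distance is a maximum over coordinates, so it never needs the minimum to be attained; your near-minimizer variant works likewise provided its slack is at most $t$.
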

\begin{proof}
    It is a well-known fact that we can cover the cube $[0,1]^K$ in $\ell_\infty$-norm by $(1+1/t)^K$ points, which would immediately imply the weaker bound $\Npar(t,\cF,\bR) \leq \prn{1+\frac{1}{t}}^{K}$. The difference here is that we want to show an exponent of $K-1$ rather than $K$.

    Fix $t\in(0,1]$ and define the grid
    \begin{equation*}
        G= \crl{t,2t,\dots,\ceil{\frac{1}{t}}t}\subset[0,1+t].
    \end{equation*}
    Then the grid has cardinality at most $|G|=\ceil{\frac{1}{t}}\leq 1+\frac1t$. By replicating this grid on the first $K-1$ coordinates, we obtain the net $G^{K-1}$.
    Now, for each point in the grid $\a=(a_1,\dots,a_{K-1})\in G^{K-1}$ define the slice
    \begin{equation*}
        S_{\a} = \{\r\in \bR(\cF):\ r_j\le a_j\ \forall j\le K-1\}.
    \end{equation*}
    We ignore all $\a$ for which $S_{\a}=\emptyset$. For the rest,
    we let $\alpha(\a)= \inf\{r_K: \r\in S_{\a}\}\in[0,1]$. By definition of the infimum, there exists some $\r(\a)\in S_{\a}$ such that in the $K$th coordinate it holds $r_K(\a) \le \alpha(\a)+t$.
    Now pick any $f_{\a}\in\cF$ with $\bR(f_{\a})=\r(\a)$ (which is possible since $\r(\a)\in\bR(\cF)$) and let $\cI\subseteq G^{K-1}$ be the index set of all $\a$ with $S_{\a}\neq\emptyset$. Then
    \begin{equation*}
        |\cI|\le |G|^{K-1} \le \left(1+\frac1t\right)^{K-1}.
    \end{equation*}

    We claim that $\{f_{\a}\}_{\a\in\cI}$ is a $t$-Pareto cover of $\cF$:
    Take any $f\in\cF$ and write $\r'=\bR(f)\in[0,1]^K$.
    Choose $\a\in G^{K-1}$ by
    \[
    a_j := t \max\crl{1,\ceil{\frac{r_j'}{t}}},\qquad \forall j\in[K-1].
    \]
    Then it holds that $r_j' \le a_j \le r_j'+t$ for all $j\le K-1$, and hence $\r'\in S_{\a}$. In particular, we obtain that
    $S_{\a}\neq\emptyset$, so we know that ${\a}\in\cI$.
    Since $\r(\a)\in S_{\a}$, we have $r_j(\a)\le a_j\le r_j'+t$ for all $j\le K-1$.
    Moreover, because $\r'\in S_{\a}$, we get $\alpha(\a)\le r_K'$, hence
    $
    r_K(\a) \le \alpha(\a)+t \le r_K'+t.
    $
    Therefore, in all coordinates, $r_j(\a)\leq r_j'+t,j\in[K]$, i.e.,
    \[
    \bR(f_{\a})=\r(\a) \preceq \r'+t\one = \bR(f)+t\one.
    \]
    Thus for every $f$ there exists some $f_{\a},\a\in\cI$ with
    $\bR(f_\a)\preceq\bR(f)+t\one$, proving that $\Npar(t,\cF,\bR)\le |\cI|
    \le (1+1/t)^{K-1}$. Note that technically we require the $t$-Pareto set to only contain Pareto optimal models, which our construction does not guarantee. However, we may simply replace any $f_{\a}$ by a function that is Pareto optimal and dominates $f_{\a}$ using \cref{lem:monotonicity-pareto}. That concludes the proof.
\end{proof}

\section{Separation from LMSA by \texorpdfstring{\citet{mansour2021theory}}\ \  on Mixture Distributions}\label{sec:separation-mansour}

In this section, we compare the Pareto ERM algorithm (\cref{subsec:Pareto-ERM}) with the \emph{Limited Target Data Multiple Source Adaptation} (LMSA) algorithm from \cite{mansour2021theory} from a learning perspective. 
As mentioned in \cref{subsec:covering-discussion,subsec:covering-comparison}, if monotonicity holds but the target is not a mixture of the sources, it is obvious that the Pareto ERM can outperform LMSA simply because LMSA cannot reach all parts of the Pareto frontier, as \cref{ex:convex-hull-counterexample} demonstrates. And even if the target is a mixture (and by \cref{lem:Q-in-convex-hull} monotonicity holds), the upper bound of order $\sqrt{K\log(n)/n}$ by \citet{mansour2021theory} for the LMSA algorithm can be much worse (but not better) than the bound from \cref{cor:tstar} for Pareto ERM.

However, this does not yet imply that LMSA actually performs worse than Pareto ERM \emph{on mixture distributions}; only that our bound is adaptive. At the same time, \cref{fig:Pareto-covering} shows that the algorithms do actually fundamentally work differently; and in particular, that the covering used by LMSA can be overly redundant in certain parts of the Pareto frontier. We now demonstrate that this indeed can have an effect on the \emph{statistical} rates achieved by LMSA, and there is a true separation.

For simplicity, we show this separation in the case $K=2$; we leave the interesting but more involved case of $K>2$ for future work. To that end, let us first introduce the LMSA algorithm formally. 

\paragraph{LMSA for $K=2$.} Let $\fG$ be the set of uniform grids on $[0,1]$ with varying widths $h>0$:
\begin{equation*}
    \fG=\crl{(\theta+h\mathbb Z)\cap[0,1]:0<h\le1,\theta\in[0,h)}.
\end{equation*}
For $\Lambda\in\fG$, $\LMSA$ is defined as
\begin{enumerate}[leftmargin=*,itemsep=-0.1pt,topsep=-2pt,label=(\roman*)]
    \item Compute the set of minimizers for each fixed set of weights 
    $$\cF_{\Lambda}(\bR):=\Big\{f_\lambda:\;f_\lambda\in\argmin_{f\in\cF}\lambda\cR_1(f)+(1-\lambda)\cR_2(f),\lambda\in\Lambda\Big\},$$ 
    where we use any fixed tie-breaking rule (the choice does not matter for the following result).
    \item Return the
empirical target-risk minimizer $\fLMSA\in \argmin_{f\in \cF_{\Lambda}(\bR)}\Rhat_Q(f)$ over $\cF_\Lambda(\bR)$. 
\end{enumerate}
We get the following separation result.

\begin{theorem}[Separation from LMSA on mixture distributions]
\label{thm:mixture-separation}
There are universal constants $c,C>0$ such that, for all sufficiently large sample sizes
$n\in\NN$, there exists a class $\cC_n$ of instances
$(\cF,\bR,\cR_Q)$ with source risks in $[0,1]^2$ and target losses bounded in
$[0,1]$ satisfying: (i) Each target risk is an exact mixture of the two source risks: for
every instance there is a $\lambda^\star\in[0,1]$ such that $\cR_Q(f)=\lambda^\star\cR_1(f)+(1-\lambda^\star)\cR_2(f)$ for all $f\in\cF$. (ii) Every point of the Pareto front is supported by a weighted sum of source objectives.
(iii) Even after optimizing the grid, the $\LMSA$ estimator is lower bounded by
\begin{equation*}
    \inf_{\Lambda\in\fG}
    \sup_{(\cF,\bR,\cR_Q)\in\cC_n}
    \EE\brk{\excessRisk{\fLMSA}{\cF}}
    \ge
    c\sqrt{\frac{\log n}{n}}.
\end{equation*}
(iv) Uniformly over $\cC_n$, Pareto ERM with scale $1/n$ satisfies
\begin{equation*}
    \sup_{(\cF,\bR,\cR_Q)\in\cC_n}
    \EE\brk{\excessRisk{\Paretoerm{1/n}}{\cF}}
    \le
    \frac{C}{\sqrt n}.
\end{equation*}
\end{theorem}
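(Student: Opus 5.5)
We construct, for each large $n$, a two-objective family in which the Pareto front is a strictly convex curve whose curvature concentrates at a small ``elbow''. Concretely, take $\cF=[0,1]$ and risks $\cR_1(x)=\abs{x}^p$, $\cR_2(x)=\abs{1-x}^p$ as in \cref{ex:limiting-distribution}, but with $p=p_n$ growing slowly (say $p_n\asymp\log n$), or an equivalent front that is almost flat near both axes and sharply bent in the middle. Property (ii) then holds automatically because the front is convex and each point is supported by the weighted sum with $\lambda$ proportional to the normal vector. The class $\cC_n$ consists of the targets $\cR_Q=\lambda^\star\cR_1+(1-\lambda^\star)\cR_2$, realised as mixtures $Q=\lambda^\star P_1+(1-\lambda^\star)P_2$ with Bernoulli-type losses bounded in $[0,1]$, so (i) holds by \cref{lem:Q-in-convex-hull}. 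The key geometric fact to verify is that the map $\lambda\mapsto\bR(f_\lambda)$ sends almost all of $[0,1]$ to a tiny neighbourhood of the elbow, while near the endpoints it sweeps large portions of the front very quickly.

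For (iv), we bound the Pareto covering number at scale $1/n$ directly and apply \cref{thm:Geometry-complexity-upper-bound} together with $\uppermodulus(t)\le t$ (\cref{lem:bound-modulus-Lipschitz}). With the elbow geometry, the front is covered by $O(1)$ Pareto balls away from the elbow, since there $\beta(\n)$ is tiny (an improper trade-off: one side is almost flat). The elbow region itself has length $O(1/\sqrt n)$ or so. The target is to show $\Npar(1/n,\cF,\bR)=O(1)$, or at least $\log\Npar=O(1)$ uniformly in $n$, which gives $\EE[\excessRisk{\Paretoerm{1/n}}{\cF}]\le 1/n+C/\sqrt n$ after integrating the high-probability bound. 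I would do this by explicitly placing grid points along the front and using the monotone parametrisation, rather than through the limit theorem, since we need finite-$t$ control uniform in $n$.

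For (iii), fix any grid $\Lambda\in\fG$ of width $h$. There are two regimes. If $h\gtrsim\sqrt{\log n/n}$, the grid misses an interval of weights, so we choose $\lambda^\star$ in the gap. We then show that the points $f_\lambda$, $\lambda\in\Lambda$, adjacent to the gap have target excess risk $\gtrsim h$ times a curvature factor, so the bias is at least $c\sqrt{\log n/n}$. The grid can be shifted by $\theta$, so the adversarial $\lambda^\star$ must be chosen after seeing $\Lambda$; the supremum over $\cC_n$ permits this. If instead $h\lesssim\sqrt{\log n/n}$, the grid has $\gtrsim\sqrt{n/\log n}$ points, and most of them map to distinct, nearly indistinguishable models near the elbow. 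Choosing $\lambda^\star$ so that the optimum sits among $M\gtrsim n^{c}$ such candidates whose target risks differ by $\asymp\sqrt{\log M/n}$, a Fano/Assouad-type argument on the ERM over these $M$ arms shows that the ERM picks a suboptimal one with constant probability, incurring excess $\gtrsim\sqrt{\log n/n}$. This is the standard lower bound for ERM (not for all estimators) over $M$ near-equal arms with independent-ish noise. The noise must be arranged, through the choice of losses, so that the empirical risks of these candidates have nearly independent fluctuations; then the expected maximum of the noise is $\sqrt{\log M/n}$.

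The main obstacle is (iii)'s second regime together with the tension against (iv). The same front must make LMSA's dense elbow grid statistically harmful, via many distinct candidates with decorrelated losses, while the Pareto cover at scale $1/n$ merges them into $O(1)$ balls. This works because the Pareto ball is a large simplex in the flat directions. I would first design the losses coordinatewise (independent Bernoulli components per candidate, embedded into $[0,1]$) and then check that the mixture structure (i) is preserved.
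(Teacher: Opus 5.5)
Your construction has a genuine gap. A single fixed smooth front with a global elbow, such as $\{(x^p,(1-x)^p):x\in[0,1]\}$, cannot give (iii) and (iv) at the same time.

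Take the grid $\Lambda=\{1/8,3/8,5/8,7/8\}\in\fG$. The weighted-sum minimizer is $x_\lambda=1/(1+(\lambda/(1-\lambda))^{1/(p-1)})$. For these four weights, both coordinates of $\bR(x_\lambda)$ are at most $4\cdot 2^{-p}$. So for every mixture target, LMSA's excess risk is at most $4\cdot 2^{-p}$ deterministically, and (iii) forces $2^{-p}\gtrsim\sqrt{\log n/n}$.

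On the other hand, the arc exposed by weights in $[1/8,7/8]$ has slopes of magnitude in $[1/7,7]$ and $r_1$-extent $\asymp 2^{-p}$. A Pareto ball of radius $t$ covers $r_1$-length at most $8t$ of this arc, so $\Npar(1/n,\cF,\bR)\gtrsim n2^{-p}$. Your own estimate of an elbow of length $1/\sqrt n$ already gives $\Npar\gtrsim\sqrt n$, not $O(1)$. Hence $\Npar(1/n)=O(1)$ needs $2^{-p}\lesssim 1/n$, which contradicts (iii).

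Conversely, under (iii) the covering route yields only $\sqrt{\log n/n}$ for Pareto ERM. Worse, its own $1/n$-cover then places $\gtrsim\sqrt{n\log n}$ independently noisy candidates on the elbow, which is exactly the selection effect you want LMSA to suffer. The structural reason is that your elbow is the region of proper trade-offs, where the Pareto cover is densest. So the cover spends its budget where LMSA's grid does, and the elbow's extent is tied to its height, which also bounds the excess its candidates can carry.

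There is a second problem in your fine regime: the near-optimal competitor must be isolated. On a smooth front, a fine grid also places many noisy candidates near the optimum, and these profit equally from taking the minimum over many noises.

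The missing idea, which is what the paper does, is to decouple these scales with a finite, grid-dependent front; the order $\inf_\Lambda\sup_{\cC_n}$ permits this. Fine grids are those with $h\le 1/(40M)$, where $M=\lfloor 1/\gamma\rfloor$ and $\gamma=a_0\sqrt{\log n/n}$. For these, the paper builds:
\begin{itemize}
\item a good vertex $\g=(1/2,1/2)$ with constant loss $1/2$, exposed by a grid weight $\lambda^\star\in(11/20,3/5)$;
\item two neighbours $\r_0,\r_2$ with target excess exactly $\gamma$;
\item a polygonal micro-cluster $\b_1,\dots,\b_M$ attached to $\r_2$, of total extent at most $1/(10n)$, whose edge normals interleave $M$ grid weights in $(1/5,1/3)$.
\end{itemize}
Thus $\cF_\Lambda(\bR)$ contains all $M$ bad vertices, while $\{\r_0,\g,\r_2\}$ is a $1/n$-Pareto cover. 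The $\b_j$ get independent Bernoulli losses. Berry--Esseen and a Mills-ratio bound then give $\Rhat_Q(\b_j)<1/2$ with conditional probability $\gtrsim 1/M$ each. Independence yields a constant probability that LMSA selects a vertex with excess at least $\gamma$.

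This is an anti-concentration argument specific to ERM over $\cF_\Lambda(\bR)$. A Fano/Assouad bound would hold for every estimator, including Pareto ERM, and would therefore contradict (iv).

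Your coarse regime also fails on a smooth front. Let $v(\lambda)=\min_{\r\in\fR}\{\lambda r_1+(1-\lambda)r_2\}$. By the envelope theorem, under target $\lambda^\star$ the candidate exposed by weight $a$ has excess $v(a)+v'(a)(\lambda^\star-a)-v(\lambda^\star)=\tfrac12\abs{v''(\xi)}(\lambda^\star-a)^2$. This is quadratic in the gap, and of order $h$ only where $\abs{v''}\gtrsim 1/h$. The paper instead builds a genuine corner: a three-point front $\r_0,\g,\r_2$ whose middle vertex is exposed only by weights in a gap of the grid, which gives bias linear in $h$.
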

The proof is in the following \cref{proof:mixture-separation}.
Note that the gap is only of order $\sqrt{\log n}$, which is essentially the largest gap one may hope for in the case $K=2$. We suspect that a similar construction for $K>2$ would yield a bigger gap (depending on $K$), but we leave this to future work.

\subsection{Proof of \texorpdfstring{\cref{thm:mixture-separation}}{Theorem \ref{thm:mixture-separation}}}
\label{proof:mixture-separation}
Set $\gamma:=a_0\sqrt{\log(n)/n}$, with $a_0>0$ a sufficiently small universal
constant, and set $M:=\floor{1/\gamma}\geq 1$. Write
$\Slambda{\lambda}(\r)=\lambda r_1+(1-\lambda)r_2$ for linear scalarizations with weight $\lambda$ on the first objective. The only geometric fact we
use is that if two vectors $\r,\r'$ differ as $\r'-\r=\Delta(1,-\theta/(1-\theta))^\top$ with some $\Delta>0,\theta\in(0,1)$ then the scalarizations of $\r,\r'$ are tied at exactly $\theta$, that is, $\Slambda{\theta}(\r)=\Slambda{\theta}(\r')$, because for any $\lambda\in[0,1]$ we have
\begin{equation}
\label{eqn:tie-weight-formula}
    \Slambda{\lambda}(\r')-\Slambda{\lambda}(\r) = \lambda \Delta -(1-\lambda)\frac{\Delta\theta}{1-\theta} =
    \Delta\frac{\lambda(1-\theta)-(1-\lambda)\theta}{1-\theta}=
    \Delta\frac{\lambda-\theta}{1-\theta}.
\end{equation}

We make a case distinction depending on the grid width $h$. The main effort of the lower bound is in the fine case (small $h$), as we need to carefully construct a Pareto set on which LMSA will use an overly redundant covering (in a statistically meaningful way), while Pareto ERM remains statistically efficient. For coarse grids, we can invoke a simpler approximation error argument.

\paragraph{Case 1: Fine grids.}
Suppose $\Lambda\in\fG$ has width $h\le1/(40M)$.
Then
$\Lambda$ contains $M$ points $\lambda_1>\cdots>\lambda_M$ in $(1/5,1/3)$ because $(1/3-1/5)/h=2/(15h)>\frac{1}{40 h}(1+1/M)\geq M+1$.
Also, $\Lambda$ contains at least one point $\lambda^\star\in(11/20,3/5)$, since $(3/5-11/20)/h=1/(20h)=2/(40 h) \geq 2M\geq 2$. 

\emph{Construction of Pareto front.} Let $\g=\r_1=(1/2,1/2)$, choose
$\theta_L=2/3$, $\theta_R=1/2$, and define
\begin{equation}
\label{eqn:r0-r2-def}
    \r_0=\g-\gamma\frac{1-\theta_L}{\theta_L-\lambda^\star}
    \begin{pmatrix}1\\-\theta_L/(1-\theta_L)\end{pmatrix},
    \qquad
    \r_2=\g+\gamma\frac{1-\theta_R}{\lambda^\star-\theta_R}
    \begin{pmatrix}1\\-\theta_R/(1-\theta_R)\end{pmatrix}.
\end{equation}
Then $\g$ is selected over $\r_0,\r_2$ by scalarizing with $\lambda^\star$, because both $\r_0$ and $\r_2$ have scalarized risk $\Slambda{\lambda^\star}(\g)+\gamma$:
\begin{equation}
\label{eqn:scalarized-risks-grs}
    \begin{aligned}
         \Slambda{\lambda^\star}(\g) &= \lambda^\star \frac{1}{2}+(1-\lambda^\star)\frac{1}{2} =\frac{1}{2}, \\
    \Slambda{\lambda^\star}(\r_0) &= \lambda^\star \prn{\frac{1}{2}-\gamma\frac{1-\theta_L}{\theta_L-\lambda^\star}}+(1-\lambda^\star)\prn{\frac{1}{2}+\gamma \frac{\theta_L}{\theta_L-\lambda^\star}} 
    = \frac{1}{2}+\gamma, \\
    \Slambda{\lambda^\star}(\r_2) &= \lambda^\star \prn{\frac{1}{2}+\gamma\frac{1-\theta_R}{\lambda^\star-\theta_R}}+(1-\lambda^\star)\prn{\frac{1}{2}-\gamma \frac{\theta_R}{\lambda^\star-\theta_R}} = \frac{1}{2}+\gamma.
    \end{aligned}
\end{equation}
\begin{wrapfigure}{r}{0.45\textwidth}
    \vspace{-0.4cm}
    \includegraphics[width=\linewidth]{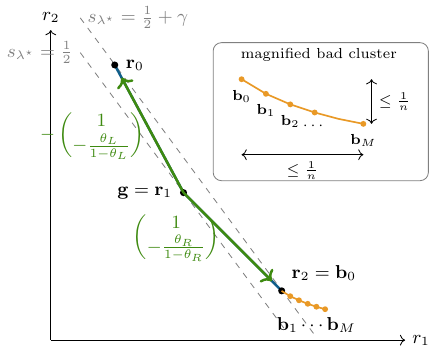}
    \caption{The Pareto front for fine grids.}
    \label{fig:LMSA-counter}
    \vspace{-2cm}
\end{wrapfigure}
Now we ``attach a bad cluster'' of points below $\r_2$. Set $\vartheta_0=2/5$, choose $\vartheta_j=(\lambda_{j+1}+\lambda_j)/2$ for $j=1,\dots,M-1$, and let $\b_0:=\r_2$ and recursively for $j=1,\dots,M$
\begin{equation*}
    \b_j
    =
    \b_{j-1}
    +
    \frac{1}{10nM}
    \begin{pmatrix}
        1\\
        -\vartheta_{j-1}/(1-\vartheta_{j-1}).
    \end{pmatrix}
\end{equation*}
We visualize the construction in \cref{fig:LMSA-counter}.
Importantly, the cluster $\crl{\r_2,\b_1,\dots,\b_M}$ lies within a $\ell_\infty$-ball of radius much smaller than
$1/n$:
\begin{equation}
\label{eqn:width-bad-cluster}
    \begin{aligned}
        \sum_{j=1}^M\abs{b_{j,1}-b_{j-1,1}} &=\sum_{j=1}^M \frac{1}{10nM} = \frac{1}{10n},\\ 
    \sum_{j=1}^M\abs{b_{j,2}-b_{j-1,2}} &=\sum_{j=1}^M \frac{1}{10nM} \frac{\vartheta_{j-1}}{1-\vartheta_{j-1}} \leq \frac{1}{15 n}
    \end{aligned}
\end{equation}
where we used that $\vartheta_0=2/5$ and $\vartheta_j\leq 2/5$ for all $j$.
Since, by definition of $\b_j$ and \cref{eqn:tie-weight-formula}, $\b_{j-1}$ and $\b_j$ tie for scalarization weight $\vartheta_{j-1}$ (that is, $\Slambda{\vartheta_{j-1}}(\b_j)=\Slambda{\vartheta_{j-1}}(\b_{j-1})$) and,
for $j<M$, $\b_j$ and $\b_{j+1}$ tie for weight $\vartheta_j$, the point
$\b_j$ is selected by minimizing $\Slambda{\lambda_j}(\r)$ over $\crl{\r_0,\g,\r_2,\b_1,\ldots,\b_M}$. Also, by combining \cref{eqn:scalarized-risks-grs,eqn:width-bad-cluster} and some more calculations we can see that
\begin{equation}
\label{eqn:scalarized-risk-bj}
    \gamma
    \le
    \Slambda{\lambda^\star}(\b_j)-\Slambda{\lambda^\star}(\g)
    \le
    \gamma+\frac1n,
    \qquad j=1,\dots,M.
\end{equation}
It is also clear that all points are supported by a weighted sum of source objective: $\r_0$ by sufficiently large weight $\lambda$, $\g$ at $\lambda^\star$, and $\b_1,\ldots,\b_M$ each with weights between each tie weights. 

\emph{Construction of distributions and risks.}
Let $\cF=\crl{\r_0,\g,\r_2,\b_1,\dots,\b_M}$ and identify each hypothesis with
its vector of source risks, so that $\bR(\r)=\r$. We can now realize these risks by
actual source distributions. For an observation $Z$, we define the whole loss vector
$(L_{\r})_{\r\in\cF}\in[0,1]^{\cF}$ where $L_\r = \ell(\r,Z)$. We may choose $Z$ and $\ell$ so that the distribution of $L$ is as follows: Under source $k\in\crl{1,2}$, $L_{\g}$ is
the constant $1/2$, $L_{\r_0}$ and $L_{\r_2}$ are the constants $r_{0,k}$ and
$r_{2,k}$, and the coordinates $L_{\b_1},\dots,L_{\b_M}$ are mutually
independent Bernoulli random variables with
\begin{equation*}
    \PP_k(L_{\b_j}=1)=b_{j,k},
    \qquad
    \PP_k(L_{\b_j}=0)=1-b_{j,k}.
\end{equation*}
We can then see that then exactly $\bR(\r)=\r$ for all $\r\in\cF$:
\begin{align*}
    \bR(\r_0) &= \EE\brk{L_{\r_0}} = \r_0, \quad \bR(\g) = \EE\brk{L_{\g}} = \g, \quad \bR(\r_2) = \EE\brk{L_{\r_2}} = \r_2, \\
    \bR(\b_j) &= \EE\brk{L_{\b_j}} = \begin{pmatrix}
        \PP_1(L_{\b_j}=1) \\
        \PP_2(L_{\b_j}=1)
    \end{pmatrix} 
    = \b_j, \qquad j=1,\ldots,M.
\end{align*}
We choose the target distribution is the mixture
$Q=\lambda^\star P_1+(1-\lambda^\star)P_2$, equivalently it first draws
$T\in\crl{1,2}$ with $\PP(T=1)=\lambda^\star$ and then draws the loss vector $L$
from $P_T$. Hence, for every $\r\in\cF$,
\begin{equation*}
    \cR_Q(\r)
    =\lambda^\star r_1+(1-\lambda^\star)r_2
    =\Slambda{\lambda^\star}(\r).
\end{equation*}

\emph{Lower bound for LMSA.}
Let $T_1,\dots,T_n$ be the source indices in the target sample and
$N_1:=\sum_{i=1}^n\mathbf{1}\crl{T_i=1}$. On the event
$E:=\crl{|N_1/n-\lambda^\star|\le\gamma/4}$, which has probability at least $1/2$ for all large $n$: By Hoeffding's inequality and $\gamma= a_0\sqrt{\log(n)/n}$ we have that
\begin{equation}
\label{eqn:E-bound}
    \PP(E^c)  \leq 2\exp\prn{-2n\frac{\gamma^2}{16}} = 2n^{-a_0^2/8},
\end{equation}
so for $n$ large enough, $\PP(E)\geq 1/2$.
On $E$ write $q:=N_1/n$. 
Conditional on the source indices, $\widehat{\cR}_Q(\b_j)$ is the average of
independent (but not necessarily identically distributed) Bernoulli variables with parameters in $[1/3,2/3]$, because all
coordinates of the front lie in this interval:
\begin{equation*}
    \Rhat_Q(\b_j) = \frac{1}{n} \sum_{i=1}^n \underbrace{\one\crl{T_i=1}(L_{\b_j})_1}_{\sim \Ber(b_{j,1})\text{ conditional on }T_i} + \underbrace{\one\crl{T_i=2}(L_{\b_j})_2}_{\sim \Ber(b_{j,2})\text{ conditional on }T_i}=:\frac{1}{n} \sum_{i=1}^n X_{j,i}.
\end{equation*}
Hence, conditional on
the source indices $T_1,\ldots,T_n$, the mean of $\widehat{\cR}_Q(\b_j)$ is
\begin{equation*}
    \EE\brk{\widehat{\cR}_Q(\b_j)\mid T_1,\ldots,T_n}=qb_{j,1}+(1-q)b_{j,2}
    =\Slambda{\lambda^\star}(\b_j)+(q-\lambda^\star)\prn{b_{j,1}-b_{j,2}}.
\end{equation*}
Since $\b_j\in[0,1]^2$ and $|q-\lambda^\star|\le\gamma/4$ on $E$, $\abs{(q-\lambda^\star)\prn{b_{j,1}-b_{j,2}}}\le\frac{\gamma}{4}$.
Combining this with
$1/2+\gamma\le\Slambda{\lambda^\star}(\b_j)\le1/2+\gamma+1/n$ from \cref{eqn:scalarized-risk-bj} gives, for all
large $n$,
\begin{equation*}
    \frac12+\frac{3\gamma}{4}
    \le
    \EE\brk{\widehat{\cR}_Q(\b_j)\mid T_1,\dots,T_n}
    \le
    \frac12+\frac{5\gamma}{4}+\frac1n
    \le
    \frac12+\frac{3}{2}\gamma . %C_0 = 3/2
\end{equation*}
We now use the celebrated Berry-Esseen theorem (see \citet{petrov1975sums} for different versions).
\begin{lemma}[Theorem 3 in Section 5.2 of \citet{petrov1975sums}]
\label{lem:Berry-Esseen}
    There exists a positive universal constant $C>0$ such that the following is true.
    Suppose $X_1,\ldots,X_n $ are independent (but not necessarily identically distributed) real-valued random variables with means $\mu_i$, variance $\sigma^2_i$, and third moment $\rho_i = \EE\brk{\abs{X_i-\mu_i}^3}<\infty$. Then for the standard normal c.d.f.\ $\Phi$ it holds
    \begin{equation*}
        \sup_{x\in\RR}\abs{\PP\prn{\frac{\sum_{i=1}^n [X_i -\mu_i]}{\sqrt{\sum_{i=1}^n \sigma_i^2}}\leq x}-\Phi(x)} \leq C\frac{\sum_{i=1}^n\rho_i}{(\sum_{i=1}^n\sigma_i^2)^{3/2}}.
    \end{equation*}
\end{lemma}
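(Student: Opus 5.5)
We follow the classical characteristic-function route (Esseen's smoothing inequality), which is how \citet{petrov1975sums} proves \cref{lem:Berry-Esseen}. This is a cited textbook result, so the plan below reconstructs the standard argument rather than something specific to the present paper; constants are not tracked.

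\emph{Normalization and smoothing.} Without loss of generality $\mu_i=0$. Write $B_n=\sum_i\sigma_i^2$ and $L_n=\sum_i\rho_i/B_n^{3/2}$, and let $S_n=\sum_i X_i/\sqrt{B_n}$ with distribution function $F_n$ and characteristic function $\varphi_n(u)=\prod_i\varphi_i(u/\sqrt{B_n})$.
\begin{itemize}
\item If $L_n\geq 1/4$, the claim is trivial with $C\geq 4$, since the left-hand side is at most $1$.
\item Otherwise, apply Esseen's smoothing inequality with the Gaussian $\Phi$, whose density is bounded by $1/\sqrt{2\pi}$:
\begin{equation*}
\sup_x\abs{F_n(x)-\Phi(x)}\leq \frac{1}{\pi}\int_{-T}^{T}\frac{\abs{\varphi_n(u)-e^{-u^2/2}}}{\abs{u}}\,\d u+\frac{24}{\pi\sqrt{2\pi}\,T}.
\end{equation*}
Choose $T=1/(4L_n)$, so the second term is $O(L_n)$.
\end{itemize}
It then suffices to show that for $\abs{u}\leq T$,
\begin{equation*}
\abs{\varphi_n(u)-e^{-u^2/2}}\leq c\,L_n\abs{u}^3e^{-u^2/3}.
\end{equation*}
Integrating $c L_n u^2e^{-u^2/3}$ over $\RR$ then gives $O(L_n)$, which finishes the proof.

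\emph{Inner region $\abs{u}\leq \tfrac12 L_n^{-1/3}$.} By Lyapunov's inequality, $\sigma_i^3\leq\rho_i$, hence $\sigma_i/\sqrt{B_n}\leq L_n^{1/3}$. In this region every factor therefore satisfies $\abs{1-\varphi_i(u/\sqrt{B_n})}\leq \sigma_i^2u^2/(2B_n)\leq 1/8$. A third-order Taylor expansion gives
\begin{equation*}
\varphi_i(v)=1-\tfrac{\sigma_i^2v^2}{2}+\vartheta\,\tfrac{\rho_i\abs{v}^3}{6},\qquad \abs{\vartheta}\leq 1.
\end{equation*}
Combining this with $\abs{\log(1+z)-z}\leq\abs{z}^2$ for $\abs{z}\leq 1/2$, summing over $i$, and bounding the quadratic remainder $\sum_i\sigma_i^4u^4/B_n^2\leq L_n\abs{u}^3$ via $\sigma_i^4\leq \rho_i\sigma_i$, we get
\begin{equation*}
\log\varphi_n(u)=-\tfrac{u^2}{2}+\epsilon(u),\qquad \abs{\epsilon(u)}\leq c'L_n\abs{u}^3\leq \tfrac{u^2}{6}.
\end{equation*}
The inequality $\abs{e^{z}-1}\leq\abs{z}e^{\abs{z}}$ then yields the target bound.

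\emph{Outer region $\tfrac12 L_n^{-1/3}\leq\abs{u}\leq T$.} Here we only bound the two terms $\abs{\varphi_n(u)}$ and $e^{-u^2/2}$ separately.
\begin{itemize}
\item Symmetrize: $\abs{\varphi_i(v)}^2$ is the characteristic function of $X_i-X_i'$, which has variance $2\sigma_i^2$ and third absolute moment at most $8\rho_i$. Taylor expansion and $1+z\leq e^{z}$ give
\begin{equation*}
\abs{\varphi_i(v)}^2\leq \exp\prn{-\sigma_i^2v^2+\tfrac43\rho_i\abs{v}^3}.
\end{equation*}
\item Taking the product over $i$, and using $L_n\abs{u}\leq 1/4$,
\begin{equation*}
\abs{\varphi_n(u)}^2\leq\exp\prn{-u^2+\tfrac43L_n\abs{u}^3}\leq e^{-2u^2/3}.
\end{equation*}
\item Hence $\abs{\varphi_n(u)-e^{-u^2/2}}\leq 2e^{-u^2/3}$. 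In this region $1\leq 8L_n\abs{u}^3$, so the right-hand side is at most $16L_n\abs{u}^3e^{-u^2/3}$, which is again the target bound.
\end{itemize}

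\emph{Main obstacle.} The delicate step is the inner-region logarithm expansion. One must verify that every factor stays close to $1$ uniformly, which is what Lyapunov's inequality provides, and that the second-order remainder of the logarithm is still controlled by $L_n\abs{u}^3$ rather than by a term that fails to sum. The remaining steps are routine bookkeeping of constants.
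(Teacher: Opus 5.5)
Your proposal is correct. The paper gives no proof of its own and only cites Petrov; your reconstruction is exactly the argument of that source: Esseen's smoothing inequality with $T=1/(4L_n)$, plus the bound $|\varphi_n(u)-e^{-u^2/2}|\le 16L_n|u|^3e^{-u^2/3}$ for $|u|\le T$, proved by the same split at $|u|=\frac12 L_n^{-1/3}$ (Lyapunov and a logarithm expansion inside, symmetrization outside), and the intermediate inequalities you state all check out.
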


We can apply this to our setting. Specifically, let $S_j:=n\widehat{\cR}_Q(\b_j)=\sum_{i=1}^n X_{j,i}$, $\mu_{j,i}=\EE\brk{X_{j,i}\mid T_1,\ldots,T_n}$, $m_j:=\EE[S_j\mid T_1,\dots,T_n] =\sum_{i=1}^n \mu_{j,i}$, $v_j^2:=\Var(S_j\mid T_1,\dots,T_n)=\sum_{i=1}^n \Var(X_i\mid T_1\ldots,T_n)$, $\rho_{j,i} = \EE\brk{\abs{X_{j,i}-\mu_{j,i}}^3\mid T_1\ldots,T_n}$ and $a_j:=m_j-n/2$. Then, on $E$,
\begin{equation*}
    \frac{3n\gamma}{4}\le a_j \le \frac{3n\gamma}{2}
    \qquad\text{and}\qquad
    \frac{2n}{9}\le v_j^2\le\frac{n}{4} \qquad \text{and}\qquad \sum_{i=1}^n\rho_{j,i}\leq n.
\end{equation*}
The variance bounds use $p(1-p)\in[2/9,1/4]$ for $p\in[1/3,2/3]$. We obtain that
\begin{equation}
\label{eqn:Berry-Esseen-applied-bound}
    \frac{\sum_{i=1}^n \rho_{j,i}}{v_j^3} \leq \frac{n}{(2n/9)^{3/2}} \leq \frac{10}{\sqrt{n}}.
\end{equation}
Let $x_j:=a_j/v_j$ and notice $0< x_j\le \frac{9}{2\sqrt{2}}\sqrt{n}\gamma$.
and \cref{lem:Berry-Esseen} yields
\begin{align*}
    \PP\prn{S_j\leq \frac{n}{2}-1\mid T_1,\dots,T_n} &= \PP\prn{\sum_{i=1}^n [X_{j,i}-\mu_{j,i}]\leq \frac{n}{2}-m_j-1 \mid T_1,\ldots,T_n} \\
    &= \PP\prn{\frac{\sum_{i=1}^n [X_{j,i}-\mu_{j,i}]}{v_j}\leq \frac{\frac{n}{2}-m_j}{v_j}-\frac{1}{v_j} \mid T_1,\ldots,T_n} \\
    &=\PP\prn{\frac{\sum_{i=1}^n [X_{j,i}-\mu_{j,i}]}{v_j}\leq -x_j-\frac{1}{v_j} \mid T_1,\ldots,T_n} \\
    &\ge\Phi\prn{-x_j-\frac{1}{v_j}}-\frac{10}{\sqrt{n}}. \tag{\cref{lem:Berry-Esseen} and \eqref{eqn:Berry-Esseen-applied-bound}}
\end{align*}
As shown in \citet[bottom of page 15]{gasull2014approximating}, Mills' ratio can be approximated to yield the lower bound $\Phi(-y)\ge \kappa (1+y)^{-1}\exp(-y^2/2)$ (for $\kappa=1/\sqrt{2\pi}$) for every
$y>0$. Since $x_j\le \frac{9}{2\sqrt{2}}\sqrt{n}\gamma$ for all large $n$ and $1/v_j \leq \sqrt{9/(2n)}$, we know that for large enough $n$ we have $x_j+1/v_j\leq 5\sqrt{n}\gamma$. The
last display then yields the bound
\begin{align*}
    \PP\prn{\Rhat_Q(\b_j)< \frac12 \mid T_1,\dots,T_n}&\ge
    \frac{c}{1+\sqrt{n}\gamma}\exp(- 13 n\gamma^2)-\frac{10}{\sqrt{n}} \tag{for a small enough $c>0$} \\
    &=\frac{c}{1+a_0\sqrt{\log n}}\exp(- 13 a_0^2\log n)-\frac{10}{\sqrt{n}} \tag{$\sqrt{n}\gamma=a_0\sqrt{\log n}$} \\
    &\geq \frac{c}{1+a_0 \sqrt{\log n}}n^{-13a_0^2} -\frac{10}{\sqrt{n}} \\
    &\geq c'\sqrt{\frac{\log n}{n}} \tag{for $13a_0^2<1/2$ and large enough $n$} \\
    &= \frac{c'}{a_0} \gamma \geq  c_0 \frac{1}{M}. \tag{by definition $M^{-1}\le 2\gamma$}
\end{align*}
Thus, after adjusting constants appropriately, uniformly over all source indices in $E$,
\begin{equation*}
    \PP\left(
        \widehat{\cR}_Q(\b_j)<\frac12
        \mid T_1,\dots,T_n
    \right)
    \ge
    \frac{c_0}{M}.
\end{equation*}
Conditional on the source indices, the events
$\crl{\widehat{\cR}_Q(\b_j)<1/2}$ are independent across $j$. Hence
\begin{align*}
    \PP\prn{\min_{1\le j\le M}\widehat{\cR}_Q(\b_j)<\frac12}&\geq \EE\brk{\one_E\PP\prn{\bigcup_{j=1}^M\crl{\Rhat_Q(\b_j)<\frac{1}{2}}\mid T_1,\ldots,T_n}} \\
    &=\PP\prn{E}\EE\brk{ \prn{1-\prod_{j=1}^M\prn{1-\PP\prn{\Rhat_Q(\b_j)<\frac{1}{2}\mid T_1,\ldots,T_n}}}\mid E} \\
    &\ge \frac{1}{2}\left(1-\left(1-\frac{c_0}{M}\right)^M\right)\ge c_1. \tag{by \eqref{eqn:E-bound} and the lower bound above}
\end{align*}
On this event, empirical minimization over $\cF_\Lambda(\bR)$ selects a bad
vertex because $\g\in\cF_\Lambda(\bR)$ has empirical risk exactly $1/2$. Hence, by \cref{eqn:scalarized-risk-bj} and because all other vertices have excess risk at least $\gamma$, we have
\begin{equation*}
    \EE_Q\brk{\excessRisk{\fLMSA}{\cF}}
    \ge \PP\prn{\min_{1\le j\le M}\widehat{\cR}_Q(\b_j)<\frac12} \gamma =c_1a_0 \sqrt{\frac{\log n}{n}}.
\end{equation*}

\emph{Upper Bound for Pareto ERM.}
It is easy to see that any minimal $1/n$-Pareto cover must contain $\g$ and must be of size $3$:
Since the bad cluster $\crl{\r_2=\b_0,\b_1,\dots,\b_M}$ has coordinate diameter at most $1/n$ (\cref{eqn:width-bad-cluster}), choosing any element will cover the cluster in Pareto distance. Moreover, no other point can cover $\r_0$ or $\g$ at scale $1/n$ since in each coodinate the distance is at least $\gamma \gg 1/n$, so $\r_0$ and $\g$ must be contained in any minimal Pareto covering.
For instance, the set $\crl{\r_0,\g,\r_2}$ is a minimal $1/n$-Pareto cover. 

Empirical target-risk minimization over $\crl{\r_0,\g,\r_2}$ has finite-class estimation
cost at most $\sqrt{2\log(6)/n}$ for losses in $[0,1]$. In particular, by \cref{thm:Geometry-complexity-upper-bound} (since $\uppermodulus(1/n)\leq 1/n$ by \cref{lem:Q-in-convex-hull,lem:bound-modulus-Lipschitz}), we get for all $u>0$
\begin{equation*}
    \PP\prn{\excessRisk{\Paretoerm{1/n}}{\cF} > \frac{1}{n}+u} \leq 6\exp\prn{-\frac{nu^2}{2}}.
\end{equation*}
By simple tail integration we obtain that
\begin{align*}
    \EE_Q\brk{\excessRisk{\Paretoerm{1/n}}{\cF}}
    &\leq \frac{1}{n}+ \int_{0}^\infty 6\exp\prn{-nu^2/2} \d u \leq \frac{4}{\sqrt n}.
\end{align*}

\paragraph{Case 2: Coarse grids.}
Now suppose the grid width is $h>1/(40M)$. There is an interval
$[\theta_R,\theta_L]\subset[11/20,3/5]$ with $[\theta_R,\theta_L]\cap \Lambda = \emptyset$ of length at least
$\min\crl{h/2,1/40}$, since $3/5-11/20=1/20$. Set the parameter $\delta=(\theta_L-\theta_R)/10$.
Let $\lambda^\star$ be its midpoint, set $\g=(1/2,1/2)$, and define
$\r_0,\r_2$ by \eqref{eqn:r0-r2-def} but with these values of $\theta_L,\theta_R$ and $\gamma$ replaced by $\delta$. Then
$\g$ is the target minimizer and both neighbors have target excess $\delta$, by the same calculation as \cref{eqn:scalarized-risks-grs}:
\begin{equation*}
    \Slambda{\lambda^\star}(\r_0)=\Slambda{\lambda^\star}(\r_2)= \Slambda{\lambda^\star}(\g)+\delta .
\end{equation*}
In this case we do not add the bad cluster to the Pareto frontier.

\emph{Lower bound for LMSA.}
Since $\Lambda$ does not intersect $[\theta_R,\theta_L]$, the good vector $\g$ is not in
$\cF_\Lambda(\bR)$. Every selected candidate is $\r_0$ or $\r_2$. We can, again, realize the
sources by deterministic loss coordinates $L_{\r}=r_k$ under source $k$, and
let the target again draw source 1 with probability $\lambda^\star$ and source
2 otherwise. Then $\cR_Q(\r)=\Slambda{\lambda^\star}(\r)$, and deterministically,
\begin{equation*}
    \excessRisk{\fLMSA}{\cF}\ge \delta\geq \frac{1}{10} \min\crl{\tfrac{h}{2},\tfrac{1}{40}} \geq \frac{1}{800 M} \geq \frac{1}{800}\gamma = \frac{a_0}{800} \sqrt{\frac{\log n}{n}}.
\end{equation*}

\emph{Upper bound for Pareto ERM.} The coarse instances already have only three vectors so the bound follows immediately from the same calculation as above. 

Take $\cC^{\operatorname{mix}}_n$ to be the union over fine and coarse instances constructed above.
Then, combining the fine and coarse cases show that
\begin{equation*}
    \inf_{\Lambda\in\fG}\sup_{(\cF,\bR,\cR_Q)\in \cC^{\operatorname{mix}}_n}\EE_Q\brk{\excessRisk{\fLMSA}{\cF}}
    \ge
    c\sqrt{\frac{\log n}{n}}  \quad \text{while} \quad \sup_{(\cF,\bR,\cR_Q)\in \cC^{\operatorname{mix}}_n}\EE_Q\brk{\excessRisk{\Paretoerm{1/n}}{\cF}}
    \le\frac{C}{\sqrt n}.
\end{equation*}
That concludes the proof.

\section{Additional Learning Results}

Here we provide an additional result for \cref{sec:transfer-learning} on transfer learning when the loss is strongly convex in \cref{subsec:aggregation-for-transfer}, examples and a general bound for monotonicity above a threshold in \cref{app_subsec: monotonicity above a threshold}, and a learning bound for model selection aggregation with general loss functions and examples for \cref{thm:fast-rate-MSA} in \cref{subsec:additional-model-selection}.

\subsection{Fast Rates in Transfer Learning}
\label{subsec:aggregation-for-transfer}

While for general loss functions running ERM on the Pareto covering is sufficient, for strongly convex losses we would like to achieve fast rates instead. In this section, we assume that the loss $\ell(\yhat,y)$ is $\kappa$-strongly convex in $\yhat$ and $L$-Lipschitz on $[-1,1]$.
Luckily, we can achieve those fast rates by running an aggregation procedure on a $t$-Pareto set. In particular, we consider the algorithm that for some fixed $t>0$
\begin{enumerate}[label=(\roman*)]
    \item builds a minimal $t$-Pareto set $\crl{f_1,\ldots,f_N}$ and
    \item computes the \emph{star estimator} \citep{audibert2007progressive,audibert2009fast} on the covering, that is,
\begin{equation*}
    \begin{aligned}
    \ParetoStar{t}&\in\argmin_{f\in \starhull(\Paretoerm{t},\crl{f_k}_{k=1}^N)} \Rhat_Q(f) \\ 
    &\text{where} \quad \starhull(\Paretoerm{t},\crl{f_k}_{k=1}^N) = \crl{\alpha\Paretoerm{t}+(1-\alpha) f_i: i\in [N], \alpha\in[0,1]},
    \end{aligned}
\end{equation*}
\end{enumerate}
and recall that $\Paretoerm{t}$ is the Pareto ERM on the same $t$-Pareto set.

\begin{proposition}
\label{prop:geometry-fast-rates}
    There exists a constant $C>0$, so that for every $t>0$,
    with probability at least $1-\delta$, the Pareto star-aggregator $\ParetoStar{t}$ achieves
    \begin{equation*}
        \cR_Q(\ParetoStar{t})-\inf_{f\in\cF}\cR_Q(f) \leq \uppermodulus(t) + \frac{CL^2}{\kappa}\frac{\log (\Npar(t,\cF,\bR)/\delta)}{n}.
    \end{equation*}
\end{proposition}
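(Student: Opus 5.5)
The proof splits the excess risk into an approximation term and an estimation term:
\[
\cR_Q(\ParetoStar{t})-\inf_{f\in\cF}\cR_Q(f) = \Big[\cR_Q(\ParetoStar{t})-\min_{j\in[N]}\cR_Q(f_j)\Big] + \Big[\min_{j\in[N]}\cR_Q(f_j)-\cR_Q(\fstar)\Big].
\]
Here $\crl{f_1,\ldots,f_N}$ is the minimal $t$-Pareto set, so $N=\Npar(t,\cF,\bR)$.

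\textbf{Approximation term.} We argue exactly as in the first part of \cref{thm:Geometry-complexity-upper-bound}. By \cref{lem:monotonicity-pareto}, for any $\fstar\in\argmin_{f\in\cF}\cR_Q(f)$ there is a Pareto optimal $f'\preceq_\bR\fstar$. The $t$-Pareto cover property then gives some $f_j$ with $\bR(f_j)\preceq\bR(f')+t\one\preceq\bR(\fstar)+t\one$, that is, $\Paretodistance(f_j,\fstar)\le t$. The definition of the upper modulus (\cref{def:modulus-of-monotonicity}) yields $\cR_Q(f_j)-\cR_Q(\fstar)\le\uppermodulus(t)$. This step is deterministic and uses no strong convexity.

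\textbf{Estimation term.} This is pure model selection aggregation over the finite dictionary $\crl{f_1,\ldots,f_N}$, using the star estimator of \citet{audibert2007progressive}. I would invoke, as a black box, the known high-probability guarantee for the star algorithm. For losses that are $\kappa$-strongly convex and $L$-Lipschitz in the prediction on a bounded range, with predictions in $[-1,1]$, the star estimator over a dictionary of size $N$ satisfies, with probability at least $1-\delta$,
\[
\cR_Q(\ParetoStar{t})-\min_{j}\cR_Q(f_j)\le \frac{CL^2}{\kappa}\frac{\log(N/\delta)}{n}.
\]
Audibert's original statement is for squared loss or in expectation, so if needed I would reprove it briefly. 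The reproof has three steps.
\begin{enumerate}[leftmargin=*]
\item Let $\hat g=\Paretoerm{t}$ and let $f_{j^\star}$ be the best dictionary element. The star hull contains the segment $[\hat g,f_{j^\star}]$. Strong convexity of the loss gives, for the midpoint $m$, the inequality $\cR_Q(m)\le\frac12\cR_Q(\hat g)+\frac12\cR_Q(f_{j^\star})-\frac{\kappa}{8}\nrm{\hat g-f_{j^\star}}^2_{L^2}$. This supplies a negative quadratic offset.
\item Combine this with Bernstein's inequality over the $O(N^2)$ segments $[f_i,f_j]$. On each segment, the excess-loss increments have variance bounded by $L^2\nrm{f-g}_{L^2}^2$, since the loss is Lipschitz.
\item Use a union bound, or equivalently a uniform Bernstein bound over the one-dimensional segments via a peeling/localization argument in $\alpha$, to turn the offset into a $\frac{L^2}{\kappa}\frac{\log(N/\delta)}{n}$ rate.
\end{enumerate}

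Adding the two terms gives the claim, with constants absorbed into $C$.

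\textbf{Main obstacle.} The obstacle is the uniform control of the empirical process along the continuum of points in each segment of the star hull. Proper selection over the dictionary only gives $\sqrt{\log N/n}$, so the argument must exploit the Jensen/strong-convexity gap. It must also handle that $\hat g$ is data-dependent. The standard fix is a union bound over all pairs $(i,j)$ combined with a localized Bernstein bound in $\alpha\in[0,1]$. This costs only $\log(N^2/\delta)\lesssim\log(N/\delta)$.
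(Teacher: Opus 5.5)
Your proposal is correct and matches the paper's proof. Both use the same split into an approximation term, bounded by $\uppermodulus(t)$ via the $t$-Pareto set (your explicit step through \cref{lem:monotonicity-pareto} is a slightly more careful version of the paper's one-line argument), and an estimation term handled by a black-box high-probability rate for the star estimator on the fixed, target-independent dictionary $\crl{f_1,\ldots,f_N}$; the paper takes that rate from \citet{kanade2024exponential} (their Appendix A.1), which already covers $\kappa$-strongly convex, $L$-Lipschitz losses with exponential tails, so your optional reproof is not needed.
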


\begin{proof}
    Let $N=\Npar(t,\cF,\bR)$ and $\crl{f_1,\ldots, f_N}$ be the minimal $t$-Pareto set in $\cF$ on which we compute the Pareto star aggregator $\ParetoStar{t}$. By definition, for every $f\in \cF$ there is a $f_j$ with $\cR_k(f_j)\leq \cR_k(f)+t$ for all $k$. By \cref{def:modulus-of-monotonicity}, we know that hence $\cR_Q(f_j)\leq \cR_Q(f) +\uppermodulus(t)$. 

    Let $\fstar\in\argmin_{f\in\cF}\cR_Q(f)$.
    By applying the the main result of \citet{kanade2024exponential} to the star estimator (see their Appendix A.1), there exists a universal constant $C>0$ so that with probability at least $1-\delta$
    \begin{equation*}
        \cR_Q(\ParetoStar{t})-\min_{j\in[N]}\cR_Q(f_j) \leq \frac{C L^2}{\kappa} \frac{\log(N/\delta)}{n}=:r.
    \end{equation*}
    Hence, the same error decomposition as in the proof of \cref{thm:Geometry-complexity-upper-bound} yields that
    \begin{align*}
        &\cR_Q(\ParetoStar{t})-\cR_Q(\fstar)= \underbrace{\cR_Q(\ParetoStar{t})-\min_{j\in[N]} \cR_Q(f_j)}_{\leq r} + \underbrace{\min_{j\in[N]} \cR_Q(f_j)-\cR_Q(\fstar)}_{\leq \uppermodulus(t)},
    \end{align*}
    and the bound of \cref{prop:geometry-fast-rates} follows.
\end{proof}

\subsection{More on the Margin ERM}
\label{app_subsec: monotonicity above a threshold}

Recall that in \cref{subsec:Pareto-ERM}, we derived upper bounds that are consistent whenever $\uppermodulus(t)\to 0$ as $t\to 0$ and hence monotonicity holds.
However, the latter need not hold; instead the upper and lower moduli may stay bounded away from zero near the origin.
Recall from \cref{sec: beyond exact monotonicity}, that we defined $\tlower := \inf\crl{t\in[0,1]: \lowermodulus(t)\geq 0}$.
We now demonstrate this in an example where the lower modulus and the threshold $\smash{\tlower}$ have a closed form solution.
 We then prove a general bound on the excess risk of the margin ERM when $\smash{\gamma>\tlower}$.

\begin{minipage}{0.48\textwidth}
\begin{example}[Monotonicity above a threshold]
\label{exm:monotone-above-threshold-linear}
    Consider the hypothesis space $\cF=\crl{f_\theta\equiv \theta\in[0,1]^2: \theta_1+\theta_2\geq 1}$, the source risks $\cR_1(f_\theta)=\theta_1, \cR_2(f_\theta)=\theta_2$ (e.g., expected absolute loss) and, for $\gamma_0\in(0,1/2)$, the target risk $\cR_Q(f_\theta)=\abs{\theta_1+\theta_2-(1+\gamma_0)}$. Then a calculation shows that for all $t\geq 0$ (cf. \cref{fig:pruned-threshold-set-linear})
    \begin{align*}
        \lowermodulus(t) &= 
        \begin{cases}
            -\gamma_0 & t\leq \gamma_0/2, \\
            2(t-\gamma_0) & \gamma_0/2 < t \leq 1/2, \\
            1 & t>1/2.
        \end{cases}
    \end{align*}
    so that $\tlower = \gamma_0$ and the pruned sets are $\Fpruned{\gamma} = \crl{\theta:1\leq \theta_1+\theta_2\leq 1+2\gamma }$.
\end{example}
\end{minipage}
\hfill
\begin{minipage}{0.5\textwidth}
    \centering
    \captionsetup{hypcap=false}
    \includegraphics[width=\linewidth]{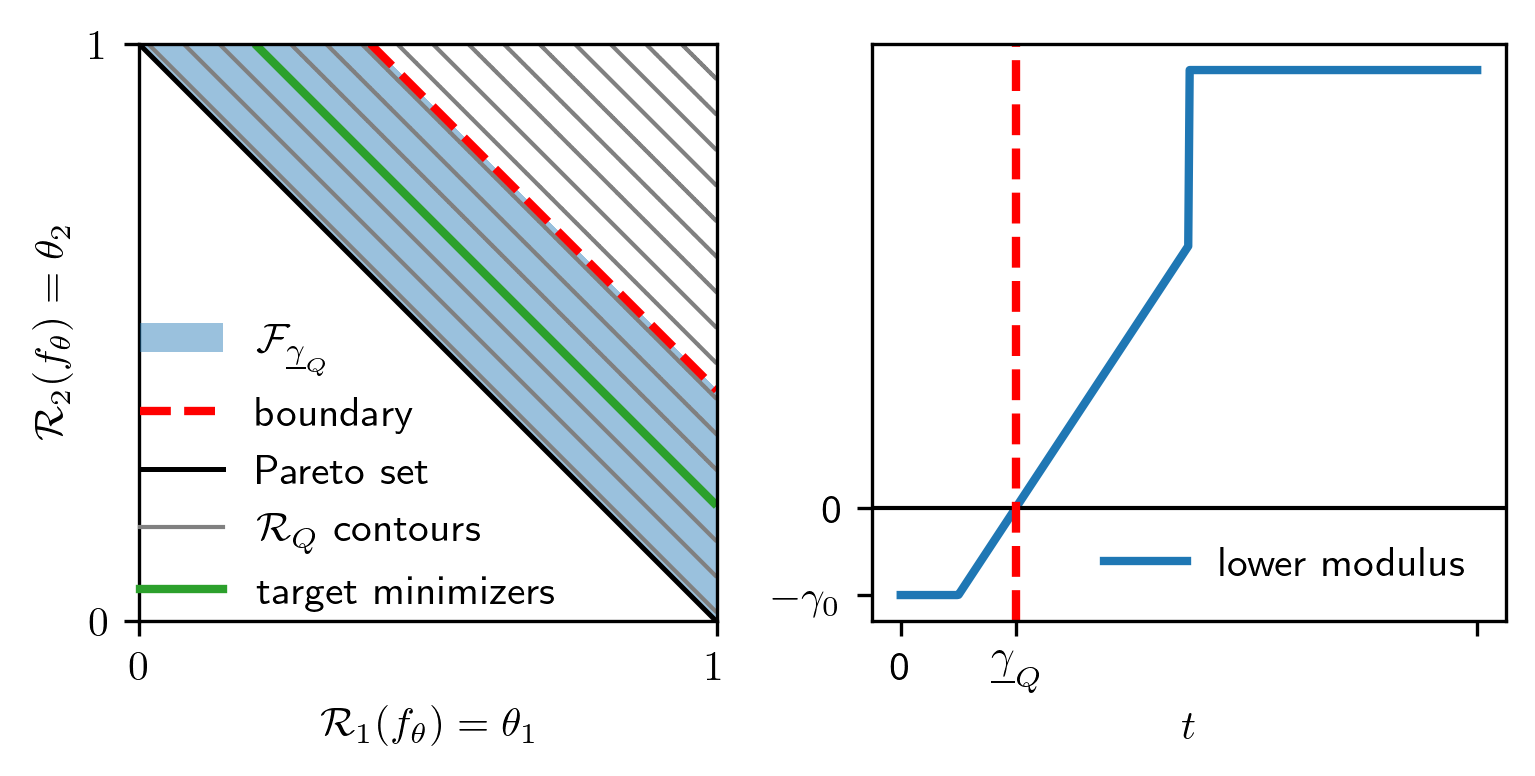}
    \captionof{figure}{\emph{Left:} the objective space of \cref{exm:monotone-above-threshold-linear} with the modulus-pruned set $\Fpruned{\gamma_0}$, the Pareto front and contour lines of the target risk. \emph{Right:} the lower modulus as a function of $t$ in this example.}
    \label{fig:pruned-threshold-set-linear}
\end{minipage}

In \cref{exm:monotone-above-threshold-linear}, monotonicity is not satisfied because $\lowermodulus(0)<0$ and hence also $\uppermodulus(t)$ stays bounded away from zero. See \cref{fig:pruned-threshold-set-linear} for a visualization, and \cref{fig:modulus HELM and VHELM} for the moduli of HELM and VHELM.

Notice that in \cref{exm:monotone-above-threshold-linear} the threshold satisfies $\smash{\tlower = \gamma_0 <\infty}$. 
Hence, any functions $f,f'$ for which $\bR(f)\llcurly \bR(f')-\gamma_0\one$ satisfy $\cR_Q(f) \leq \cR_Q(f')$, and hence we may prune $f'$ from the candidate set, resulting in $\smash{\Fpruned{\tlower}}$, visualized in \cref{fig:pruned-threshold-set-linear}. Of course, in \cref{exm:monotone-above-threshold-linear}, if we knew $\tlower$, we could improve upon this approach; here it just serves as an easy example. 

Also recall from \cref{sec: beyond exact monotonicity} that for $\gamma>0$ the pruned hypothesis space is defined as
\begin{equation*}
    \Fpruned{\gamma} =  \crl{f\in\cF: \text{there exists no } f'\in\cF \text{ such that } \bR(f')\llcurly \bR(f)-\gamma\one},
\end{equation*}
and the margin ERM $\Thresholderm{\gamma}$ is defined by running ERM on $\Fpruned{\gamma}$; that is, $\Thresholderm{\gamma}\in\argmin_{f\in \Fpruned{\gamma}} \Rhat_Q(f)$. We obtain the following result, which is also used in the proof of the parameter-free strategy \cref{prop:monotonicity-above-threshold-adaptive}.
\begin{proposition}
\label{prop:monotonicity-above-threshold}
    For any $\gamma>\tlower$, with probability at least $1-\delta$, the margin ERM $\Thresholderm{\gamma}$ achieves
    \begin{align*}
        &\cR_Q(\Thresholderm{\gamma})-\inf_{f\in\cF}\cR_Q(f) \leq 4\radComp(\ell\circ \Fpruned{\gamma})+2\sqrt{\frac{2\log(4/\delta)}{n}},
    \end{align*}
    where $\radComp (\ell\circ \Fpruned{\gamma})$ denotes the standard Rademacher complexity of the pruned hypothesis class composed with the loss
    $$\radComp(\ell\circ \cF) = \EE_{z_1^n\sim Q^{\otimes n}}\EE_{\bsigma}\sup_{f\in\cF}\abs{\frac{1}{n} \sum_{i=1}^n \sigma_i \ell(f,z_i)}$$ with i.i.d.\ Rademacher random variables $\bsigma=(\sigma_1,\ldots,\sigma_n)$.
\end{proposition}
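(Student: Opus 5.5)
The argument has two parts: an approximation step and a standard uniform-convergence step. The approximation step shows that for $\gamma>\tlower$ the pruned class $\Fpruned{\gamma}$ contains a minimizer of $\cR_Q$ over $\cF$. Once that holds, the margin ERM is simply ERM over a class containing the target-optimal model, and the bound follows from symmetrization.

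\emph{Step 1 (an optimal model survives pruning).} Let $\fstar\in\argmin_{f\in\cF}\cR_Q(f)$, which exists by the standing assumption. Suppose $\fstar\notin\Fpruned{\gamma}$.

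Then some $f'\in\cF$ satisfies $\bR(f')\llcurly\bR(\fstar)-\gamma\one$. Since $\bR(\cF)$ is compact, I will pick $f'$ attaining $\inf_{f'}\Paretodistance(f',\fstar)=-\Gamma(\fstar)$. This gives $\Paretodistance(f',\fstar)\le-\Gamma(\fstar)<-\gamma$, so we may set $t:=\Gamma(\fstar)>\gamma>\tlower$.

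Monotonicity of $\lowermodulus$ and the definition of $\tlower$ give $\lowermodulus(t)\ge 0$. Indeed, the infimum defining $\tlower$ yields some $t'\in(\tlower,t]$ with $\lowermodulus(t')\ge0$, and $\lowermodulus$ is non-decreasing. By the definition of the lower modulus, $\cR_Q(f')\le\cR_Q(\fstar)-\lowermodulus(t)\le\cR_Q(\fstar)$, so $f'$ is also a minimizer.

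I then need $f'$ itself, or some further minimizer, to lie in $\Fpruned{\gamma}$. Because $f'$ attains the infimum of $\Paretodistance(\cdot,\fstar)$, any $f''$ with $\bR(f'')\llcurly\bR(f')-\gamma\one$ would satisfy $\Paretodistance(f'',\fstar)<\Paretodistance(f',\fstar)-\gamma$, contradicting minimality. Hence $\Gamma(f')\le\gamma$ and $f'\in\Fpruned{\gamma}$.

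This step is the main obstacle. It requires care with strict versus non-strict inequalities and with attainment of the infimum; compactness of $\bR(\cF)$ should handle attainment. The case $\gamma=0$ with $\Fpruned{0}=\pareto(\cF,\bR)$ does not arise, since $\gamma>\tlower\ge0$.

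\emph{Step 2 (estimation error).} Call the surviving minimizer $\fstar_\gamma\in\Fpruned{\gamma}$. The excess risk decomposes as
\[
\cR_Q(\Thresholderm{\gamma})-\cR_Q(\fstar_\gamma)\le \big[\cR_Q(\Thresholderm{\gamma})-\Rhat_Q(\Thresholderm{\gamma})\big]+\big[\Rhat_Q(\fstar_\gamma)-\cR_Q(\fstar_\gamma)\big],
\]
using $\Rhat_Q(\Thresholderm{\gamma})\le\Rhat_Q(\fstar_\gamma)$. Each bracket is at most $\sup_{f\in\Fpruned{\gamma}}|\cR_Q(f)-\Rhat_Q(f)|$.

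The standard symmetrization argument and McDiarmid's inequality (losses in $[0,1]$) bound this supremum by $2\radComp(\ell\circ\Fpruned{\gamma})+\sqrt{2\log(4/\delta)/n}$ for each one-sided deviation, each with probability $1-\delta/2$. A union bound then gives the total $4\radComp(\ell\circ\Fpruned{\gamma})+2\sqrt{2\log(4/\delta)/n}$ with probability at least $1-\delta$. The constants are loose and match the stated ones.
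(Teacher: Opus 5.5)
Your proof is correct, and it has the same two-part structure as the paper's. The paper splits the excess risk into $\cR_Q(\Thresholderm{\gamma})-\inf_{f\in\Fpruned{\gamma}}\cR_Q(f)$, which it bounds by citing the standard Rademacher bound for ERM over a bounded loss class, and $\inf_{f\in\Fpruned{\gamma}}\cR_Q(f)-\cR_Q(\fstar)$, which it shows is zero. Your symmetrization-plus-McDiarmid derivation reproduces the first bound with slack in the constants.

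The one real difference is how you show that an optimal model survives pruning.

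\emph{The paper's route.} It iterates. If $\fstar\notin\Fpruned{\gamma}$, any $f$ with $\bR(f)\llcurly\bR(\fstar)-\gamma\one$ is again a minimizer, because $\lowermodulus(\gamma)\ge 0$. Repeating this produces a chain of minimizers in which every coordinate of $\bR$ drops by more than $\gamma$ at each step. Boundedness of $\bR$ and $\gamma>0$ force the chain to stop at a minimizer inside $\Fpruned{\gamma}$.

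\emph{Your route.} You make a single extremal choice: $f'$ minimizing $\Paretodistance(\cdot,\fstar)$ over $\cF$. This needs the standing compactness of $\bR(\cF)$, whereas the chain needs only boundedness. In exchange you get slightly more. Your contradiction argument still works with $\gamma$ replaced by $0$, since the inequality $\Paretodistance(f'',\fstar)<\Paretodistance(f',\fstar)-\gamma$ is strict. So $\Gamma(f')\le 0$, meaning $f'$ is weakly Pareto optimal and lies in $\Fpruned{\gamma'}$ for every $\gamma'>0$.

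A small simplification: you can apply the lower modulus at $t=\gamma$ directly, since $\Paretodistance(f',\fstar)<-\gamma$. This avoids passing through $t=\Gamma(\fstar)$ and the slightly awkward infimum argument about $t'\in(\tlower,t]$.
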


\begin{proof}
Let $\fstar\in \argmin_{f\in \cF}\cR_Q(f)$. Then
\begin{align*}
    \cR_Q(\Thresholderm{\gamma})-\cR_Q(\fstar) = \cR_Q(\Thresholderm{\gamma})-\inf_{f\in\Fpruned{\gamma}}\cR_Q(f) + \inf_{f\in\Fpruned{\gamma}}\cR_Q(f)-\cR_Q(\fstar).
\end{align*}
Recalling that $\Thresholderm{\gamma}$ is the empirical risk minimizer on $\Fpruned{\gamma}$, standard ERM analysis for bounded losses \citep[Theorem 26.5]{shalev2014understanding} shows that with probability at least $1-\delta$
\begin{equation*}
    \cR_Q(\Thresholderm{\gamma})-\inf_{f\in\Fpruned{\gamma}}\cR_Q(f) \leq 4\radComp(\ell\circ \Fpruned{\gamma})+2\sqrt{\frac{2\log(4/\delta)}{n}}.
\end{equation*}

We argue now that $\inf_{f\in\Fpruned{\gamma}}\cR_Q(f)-\cR_Q(\fstar)=0$, in other words, that a minimizer of the target risk lies in $\Fpruned{\gamma}$. First, recall that 
\begin{equation*}
    \tlower := \inf\crl{t\geq 0: \lowermodulus(t)\geq 0},
\end{equation*}
and $t\mapsto\lowermodulus(t)$ is a non-decreasing function.
Hence, for $\gamma>\tlower$,
we know that any functions $f,f'$ with $\bR(f)\preceq \bR(f')-\gamma\one$ satisfy $\cR_Q(f) \leq \cR_Q(f')$ by \cref{def:modulus-of-monotonicity} of the lower modulus. Recalling that $\Fpruned{\gamma}$ is defined as
\begin{equation*}
    \Fpruned{\gamma} =  \crl{f\in\cF: \text{there exists no } f'\in\cF \text{ such that } \bR(f')\llcurly \bR(f)-\gamma\one}
\end{equation*}
we can consider two cases: if $\fstar\in \Fpruned{\gamma}$, then we are done. If $\fstar$ is not in $\Fpruned{\gamma}$, then there is another function $f\in \cF$ so that $\bR(f)\llcurly \bR(\fstar)-\gamma\one$, and hence $\cR_Q(f) \leq \cR_Q(\fstar)$, so $f$ must also be a minimizer of $\cR_Q$, call that minimizer $\fstar_2=f$. By induction, we obtain a chain $\fstar_i$ of minimizers, each satisfying $\bR(\fstar_i)\llcurly \bR(\fstar_{i-1})-\gamma\one$. Since $\gamma>\tlower\geq 0$, and $\bR$ is bounded from below, this chain cannot be infinite, which means there exists an $i$ so that $\fstar_i\in\Fpruned{\gamma}$, yielding the claim.
\end{proof}

\subsection{Model Selection Aggregation}
\label{subsec:additional-model-selection}
In this section, we further discuss the model selection aggregation problem from \cref{sec: aggregation} and provide some additional results and examples.

First, note that under monotonicity and a convex loss, it holds by definition for any estimator $\rhohat$ that $\cR_k(\psi_{\rhohat})\leq \cR_k(f_{\rhohat})-\Psi(\rhohat)$, which then implies the following gap between the risks of $\psi_{\rhohat}$ and $f_{\rhohat}$:
\begin{equation}
\label{eqn:gap-convex-Pareto}
\begin{aligned}
    \cR_Q(\psi_{\rhohat}) \leq \cR_Q(f_{\rhohat})- \lowermodulus(\Psi(\rhohat)) \leq \EE_{k\sim \rhohat}\cR_Q(f_k)-\lowermodulus(\Psi(\rhohat)).
\end{aligned}
\end{equation}
The first inequality holds by definition of $\lowermodulus$ (\cref{def:modulus-of-monotonicity}), and the second is Jensen's inequality. If $\cF$ contains the convex hull $\conv(f_1,\ldots,f_K)$, it holds that $\lowermodulus(\Psi(\rho))\geq 0$ for all $\rho$ under monotonicity.

\subsubsection{Tilted Exponential Weights}
\label{subsubsec:tilted-exponential-weights}
In this section, we discuss a \emph{tilted exponential weights} estimator, defined for $\lambda, J \geq 0,\pi\in\cP([K])$ as
\begin{equation*}
    \rhotilt := \argmin_{\rho\in \cP([K])} \crl{\EE_{k\sim \rho} \Rhat_Q(f_k) +\frac{\KL(\rho,\pi)}{\lambda}-J\Psi(\rho)},
\end{equation*}
where we recall the definition of $\Psi$ from \cref{eqn:aggregation-map-Pareto-set}.
Note here that for $J=0$ we recover the regular exponential weights estimator \cite{alquier2024user}.
We call it a \emph{tilted} exponential weights posterior, because the function $\Psi$ tilts the objective towards the center of the simplex, depending on where the gap between the Pareto front and the convex hull, as measured by $\Psi$, is largest.

To state the bound for this estimator, we use the following \emph{maximal additive gain} $G_J(Q)$, defined as
\begin{equation*}
    G_J(Q):= \sup_{\rho\in\cP([K])}\crl{J\Psi(\rho)-\prn{\EE_{k\sim \rho}\cR_Q(f_k)-\min_{k\in[K]}\cR_Q(f_k)}}.
\end{equation*}
Under monotonicity \eqref{eqn:weak-monotone}, $G_J(Q)\in[0,J]$: $G_J(Q)\geq 0$ follows from choosing $\rho$ as a point mass on the best function in the dictionary and noting that then $\Psi(\rho)=0$, and $G_J(Q)=0$ can be attained, e.g., when one of the dictionary elements is optimal in $\cF$, that is, $f_k\in\argmin_{f\in\cF}\cR_Q(f)$. On the other hand $G_J(Q)\leq J$ follows from boundedness of the loss.
\begin{proposition} \label{lem:tilted-EW-additive}
Let the loss $\ell(f,z)$ be bounded in $[0,1]$ and convex in $f$, and assume that $\lowermodulus(t)\geq J t$ for all $t\geq 0$, where $J$ is known to the algorithm, in particular implying \eqref{eqn:weak-monotone}. Choose the prior $\pi$ as uniform and $\lambda=\sqrt{8n \log (2K/\delta)}$.
    Then it holds with probability at least $1-\delta$ that
    \begin{equation*}
        \cR_Q(\psi_{\rhotilt})-\min_{k\in[K]} \cR_Q(f_k) \leq \sqrt{\frac{2\log (2K/\delta) }{n}}-G_J(Q).
    \end{equation*}
\end{proposition}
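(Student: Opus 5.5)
We will combine the PAC-Bayes uniform-deviation inequality for the (possibly data-dependent) posterior $\rhotilt$ with the gap inequality \eqref{eqn:gap-convex-Pareto}, and then use the optimality of $\rhotilt$ for its tilted objective. Throughout, write $\cR_Q(\rho):=\EE_{k\sim\rho}\cR_Q(f_k)$ and $\Rhat_Q(\rho):=\EE_{k\sim\rho}\Rhat_Q(f_k)$.

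\emph{Step 1 (gap).} By \eqref{eqn:gap-convex-Pareto} and the assumption $\lowermodulus(t)\ge Jt$, every $\rho\in\cP([K])$ satisfies
\[
\cR_Q(\psi_\rho)\le \cR_Q(\rho)-J\Psi(\rho).
\]
Convexity of the loss enters here through Jensen's inequality. Taking $\rho=\rhotilt$ reduces the claim to bounding $\cR_Q(\rhotilt)-J\Psi(\rhotilt)$.

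\emph{Step 2 (PAC-Bayes).} We use a two-sided Catoni/McAllester-type bound for losses in $[0,1]$. Hoeffding's lemma gives $\EE e^{\pm\lambda(\cR_Q(f_k)-\Rhat_Q(f_k))}\le e^{\lambda^2/(8n)}$. Combined with the Donsker--Varadhan variational formula and a union bound over the two signs, this yields that, with probability at least $1-\delta$, simultaneously for all $\rho$,
\[
\abs{\cR_Q(\rho)-\Rhat_Q(\rho)}\le \frac{\lambda}{8n}+\frac{\KL(\rho,\pi)+\log(2/\delta)}{\lambda}.
\]

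\emph{Step 3 (optimality of $\rhotilt$).} By the upper side of Step 2, then optimality of $\rhotilt$ against an arbitrary competitor $\rho$, then the lower side of Step 2 applied to $\rho$, we get
\[
\cR_Q(\rhotilt)-J\Psi(\rhotilt)\le \Rhat_Q(\rho)+\frac{\KL(\rho,\pi)}{\lambda}-J\Psi(\rho)+\frac{\lambda}{8n}+\frac{\log(2/\delta)}{\lambda}
\le \cR_Q(\rho)-J\Psi(\rho)+\frac{2\KL(\rho,\pi)}{\lambda}+\frac{\lambda}{4n}+\frac{2\log(2/\delta)}{\lambda}.
\]
Since $\pi$ is uniform, $\KL(\rho,\pi)\le\log K$. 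Subtracting $\min_k\cR_Q(f_k)$ and taking the infimum over $\rho$ turns $\cR_Q(\rho)-\min_k\cR_Q(f_k)-J\Psi(\rho)$ into $-G_J(Q)$.

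The remainder is $\frac{2\log(2K/\delta)}{\lambda}+\frac{\lambda}{4n}$. With $\lambda=\sqrt{8n\log(2K/\delta)}$ this equals $\frac{3}{\sqrt2}\sqrt{\frac{\log(2K/\delta)}{n}}$. That is slightly larger than the claimed $\sqrt{2\log(2K/\delta)/n}$, whose constant is $\sqrt2\approx1.41$ versus our $3/\sqrt2\approx2.12$.

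The main obstacle is therefore matching the constant. Our plan is to avoid paying both sides of the PAC-Bayes bound with the KL term. On the lower side, we compare against a competitor that is data-independent, fixed in advance, and only needs a Hoeffding bound: a near-maximizer $\rho^\star$ of $G_J(Q)$. This uses concentration of $\Rhat_Q(\rho^\star)$ at cost $\sqrt{\log(2/\delta)/(2n)}$, rather than a second KL term. The remaining terms would then be tuned with the stated $\lambda$.

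If the constant still does not close exactly, we would fall back to the looser constant, since the structure of the bound (Step 1 plus the tilted PAC-Bayes argument) is unaffected. One should also check that the $\argmin$ defining $\rhotilt$ exists. It does: $\Psi$ is continuous enough on the compact simplex, or one can pass to an approximate minimizer.
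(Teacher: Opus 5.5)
Your Steps 1--3 are exactly the paper's proof: the gap inequality \eqref{eqn:gap-convex-Pareto} with $\lowermodulus(t)\ge Jt$, Catoni's bound (Alquier's Theorem 2.1) for the upper side, the reverse bound plus a union bound (the trick from Alquier's Theorem 4.2) for the lower side, and $\KL(\rho,\pi)\le\log K$ for the uniform prior. The remainder $\frac{2\log(2K/\delta)}{\lambda}+\frac{\lambda}{4n}$ you arrive at is precisely the paper's.

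The only error is the final arithmetic. With $\lambda=\sqrt{8n\log(2K/\delta)}$, each of $\frac{\lambda}{4n}$ and $\frac{2\log(2K/\delta)}{\lambda}$ equals $\sqrt{\log(2K/\delta)/(2n)}$. Their sum is therefore exactly $\sqrt{2\log(2K/\delta)/n}$, so your argument already proves the stated bound with the stated constant.

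The data-independent-competitor workaround and the looser fallback are unnecessary. The fallback would in fact not prove the statement as written. The workaround would additionally only give $-G_J(Q)+\epsilon$ on an event that depends on the chosen near-maximizer, unless the supremum in $G_J(Q)$ is attained. Your original two-sided bound avoids this, because it holds uniformly in $\rho$ and lets you take the infimum directly.
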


The first term is the minimax rate $\sqrt{\log(K)/n}$ from the standard aggregation setting with convex loss.
However, the bound can be negative for large $G_J(Q)$ (respectively, large $n$), because we are in an improper setting. But its clear that when one of the functions in the dictionary is optimal in all of $\cF$, then $G_J(Q)$ will be zero and there is no gain. 

\begin{proof}
From the assumption that $\lowermodulus(t)\geq J t$ and \cref{eqn:gap-convex-Pareto}, we know that $\cR_Q(\psi_\rho)\leq \EE_{k\sim \rho}\cR_Q(f_k)-J\Psi(\rho)$.
Combining this with Theorem 2.1 in \citet{alquier2024user} yields that
\begin{equation*}
    \PP\prn{\forall\rho\in\cP([K]):\quad \cR_Q(\psi_\rho) \leq \EE_{k\sim \rho} \Rhat_Q(f_k) + \frac{\lambda}{8n}+\frac{\KL(\rho,\pi)+\log(1/\delta)}{\lambda}-J\Psi(\rho)}\geq 1-\delta.
\end{equation*}
Minimizing the bound over $\rho$, we get the tilted posterior; that is,
\begin{equation*}
    \PP\prn{\cR_Q(\psi_{\rhotilt}) \leq \inf_{\rho\in\cP([K])} \crl{ \EE_{k\sim \rho} \Rhat_Q(f_k) + \frac{\lambda}{8n}+\frac{\KL(\rho,\pi)+\log(1/\delta)}{\lambda}-J\Psi(\rho)}}\geq 1-\delta.
\end{equation*}
To turn this into an oracle inequality, we can use the same trick as \citet[Theorem 4.2]{alquier2024user}, that is, plugging in their Equation (4.2) and using a union bound yields that
\begin{equation*}
    \PP\prn{\cR_Q(\psi_{\rhotilt}) \leq \inf_{\rho\in\cP([K])} \crl{ \EE_{k\sim \rho} \cR_Q(f_k) + \frac{\lambda}{4n}+2\frac{\KL(\rho,\pi)+\log(2/\delta)}{\lambda}-J\Psi(\rho)}}\geq 1-\delta.
\end{equation*}
Now note that for a uniform prior $\pi$ on $[K]$ we have that $\KL(\rho,\pi) = \sum_{k=1}^K \rho_k\log\prn{K\rho_k}\leq \log K$
for all $\rho\in\triangle^{K-1}$. Plugging this and the choice of $\lambda$ into the previous bound, we get that with probability $1-\delta$,
\begin{equation*}
    \cR_Q(\psi_{\rhotilt}) \leq \inf_{\rho\in \cP([K])} \crl{\EE_{k\sim \rho} \cR_Q(f_k) +\sqrt{\frac{2\log(2K/\delta)}{n}}-J\Psi(\rho)}.
\end{equation*}
Subtracting $\min_{k\in[K]}\cR_Q(f_k)$ on both sides and plugging in the definition of $G_J(Q)$ yields the result.
\end{proof}

\subsubsection{Examples for Fast Rates}
\label{subsec:examples-aggregation}

\begin{wrapfigure}{r}{0.22\textwidth}
    \vspace{-0.4cm}
    \includegraphics[width=\linewidth]{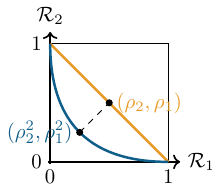}
    \caption{Construction of \cref{ex:fast-rate}.}
    \label{fig:fast-rate-example}
    \vspace{-0.4cm}
\end{wrapfigure}

We describe two examples of where \cref{asm:fast-rate-aggregation} holds: in the first one we may use $(\Phi,\phi)=(\Psi,\psi)$ from \cref{eqn:aggregation-map-Pareto-set}, whereas in the latter we have to construct it separately.

In the first example, any estimator whose range lies in the convex hull will have excess risk lower bounded by $1/\sqrt{n}$ in the worst case, while our estimator achieves the fast rate of order $1/n$. The same construction also serves as the basis for the construction that appears in the proof of the lower bound in \cref{thm:lower-bound-convex-hull}, essentially by stacking multiple copies of this problem when $K>2$.

\begin{example}
\label{ex:fast-rate}
    We begin by constructing a tuple $(\bR,\cF,\crl{\cR_Q:Q\in \cQ})$ describing the problem; we then verify the assumptions of \cref{thm:fast-rate-MSA}. 
    Let $K=2$ and $\cX=\crl{x_1,x_2}$. We define two distributions $P_1,P_2$: Under $P_1$ let $X= x_1$ and $Y= 0$ almost surely, whereas under $P_2$ we let $X=x_2$ and $Y= 1$ almost surely. Using absolute loss, we then find that 
    $\cR_1(f)=\EE_{(X,Y)\sim P_1}\abs{f(X)-Y}=f(x_1)$ and $\cR_2(f)=\EE_{(X,Y)\sim P_2}\abs{f(X)-Y}=1-f(x_2)$
    for any function $f:\cX\to [0,1]$. Take the dictionary $\crl{f_1\equiv 0,f_2\equiv 1}$ so that the convex combination always outputs $f_\rho = \rho_1f_1+\rho_2f_2 =\rho_2$ and the benchmark values of the convex combination are $\bR(f_\rho)=(\rho_2,\rho_1)$. Now let the hypothesis space be
    $$\cF=\conv(\crl{f_1,f_2})\cup \crl{g_u:u\in[0,1]}$$ 
    where we define the functions $g_u$ as $g_u(x_1)=u^2$ and $g_u(x_2)=2u-u^2$, so that $\bR(g_u)=(u^2,(1-u)^2)$; see \cref{fig:fast-rate-example}. Let the target risk be $\cR_Q=q_1\cR_1+q_2\cR_2$ for $\q\in\triangle^{1}$ which we may obtain by also using absolute loss and target distribution $Q\in\cQ=\crl{q_1P_1 +q_2P_2:\q\in \triangle^1}$. 
    Then, on the convex hull of the dictionary, we have that $\cR_Q(f_\rho)=q_1\rho_2 +q_2\rho_1$. 

    Due to using absolute loss, standard aggregation methods are not guaranteed to achieve a fast rate in this setting (and in fact, \cref{thm:lower-bound-convex-hull} uses the same idea of this setting to prove this formally). Our estimator, on the other hand, achieves a fast rate.

    To that end, we now verify the assumptions of \cref{thm:fast-rate-MSA}. We begin with \cref{asm:fast-rate-aggregation} by showing that $\Psi$ from \cref{eqn:aggregation-map-Pareto-set} itself is strongly concave and hence we may choose $(\Phi,\phi)=(\Psi,\psi)$.
    Let us compute the map $\rho\mapsto \psi_\rho$ parameterizing the Pareto set.
    It is not hard to see that the minimizer $\psi_\rho$ is given by $\psi_\rho = g_{\rho_2}$ and $\Psi(\rho) = \rho_1\rho_2$.
    Indeed, we have that $\cR_1(g_u)-\cR_1(f_\rho) = u^2-\rho_2$ and $\cR_2(g_u)-\cR_2(f_\rho)=(1-u)^2-\rho_1$. And for $u=\rho_2$, by the fact that $\rho_1+\rho_2=1$, these two terms are balanced and equal to $\rho_2^2-\rho_2=-\rho_1\rho_2$. Hence, we also obtain that $\bR(\psi_\rho)=(\rho_2^2,\rho_1^2)$.
    A calculation now verifies that $\Psi(\rho)=\rho_1\rho_2$ is $2$-strongly concave in $\nrm{f_\rho-f_\gamma}_2^2 = (\rho_2-\gamma_2)^2$ and $\bR(\psi_\rho)=\bR(f_\rho)-\Psi(\rho)\one$, and so the pair $(\psi,\Psi)$ satisfies \cref{asm:fast-rate-aggregation}. 
    Further, we can verify the other assumptions of \cref{thm:fast-rate-MSA}: first, notice that in this setting we have $\uppermodulus(t)\leq t$ by \cref{lem:bound-modulus-Lipschitz}, and the absolute loss is $L_\ell=1$-Lipschitz on $[0,1]$. Second, since $\abs{\psi_\rho(x_1)-\psi_\gamma(x_1)}=\abs{g_{\rho_2}(x_1)-g_{\gamma_2}(x_1)}=\abs{\rho_2^2-\gamma_2^2}\leq 2\abs{\rho_2-\gamma_2}=2\abs{f_\rho(x_1)-f_\gamma(x_1)}$, and analogously for $x_2$ we have $\abs{\psi_\rho(x_2)-\psi_\gamma(x_2)} \leq 2\abs{f_\rho(x_2)-f_\gamma(x_2)}$, the last requirement holds with $L_\phi=2$.
    
    Consequently, this is a setting in which we may apply \cref{thm:fast-rate-MSA}. That is, our estimator $\phi_{\rhofast}=\psi_{\rhofast}$ achieves excess risk of order at most $1/n$ for all distributions $Q\in \cQ=\crl{q_1P_1+q_2P_2:\q\in \triangle^{1}}$: For some constant $c>0$ and with probability at least $1-\delta$, we have
    \begin{equation*}
        \excessRisk{\phi_{\rhofast}}{\crl{f_1,f_2}} \leq \frac{c\log(1/\delta)}{ n}.
    \end{equation*}
\end{example}

The next example is in the setting of linear regression with absolute loss. We consider the case of $K=d=2$ for the example to be simpler to follow; it can be extended to any dimension $d$ and number of source tasks $K$. As opposed to \cref{ex:fast-rate}, now the map $\Psi$ itself is not strongly concave. We visualize \cref{exm:lin-reg-aggregation} in \cref{fig:lin-reg-aggregation}.
\begin{example}
\label{exm:lin-reg-aggregation}
Again, we begin by constructing the problem setting.
Let $K= d=2$ and consider the hypothesis space $\cF=\crl{\x\mapsto \inner{\x}{\w}:\w\in B_2^2}$.
Let $P_1, P_2$ be two joint distributions of the random vectors $X \in B_2^d$ and $Y \in [-1,1]$, defined via $X= \Sigma_k^{1/2} U$, where $U\sim \uniform{\SSS^{d-1}}$ and $Y = \inner{X}{\w_k}$ 
with ground truths $\w_1 = \e_1, \w_2=-\e_1$ and shape matrices 
\begin{equation*}
    \Sigma_1 = \frac{\pi^2}{16(1+\sigma)}
    \begin{pmatrix}
        1 & \sigma \\
        \sigma & 1
    \end{pmatrix}
    \qquad \text{and} \qquad \Sigma_2 = \frac{\pi^2}{16(1+\sigma)}
    \begin{pmatrix}
        1 & -\sigma \\
        -\sigma & 1
    \end{pmatrix}
\end{equation*}
where $\sigma \in [0,1)$. Note that the largest eigenvalues of both shape matrices $\Sigma_k$ are bounded by $1$ so that $\abs{Y}=\abs{\inner{X}{\w_k}}\leq 1$ almost surely.
Further, we consider the risk $\cR_k(\w) = \EE_{(X,Y)\sim P_k}\abs{\inner{X}{\w}-Y}$ and notice that $\cR_k(\w) = \tfrac{2}{\pi}\sqrt{ (\w-\w_k)^\top\Sigma_k(\w-\w_k)}$. Therefore, the two benchmark risks are given by
\begin{align*}
    \cR_1(\w) &= c_\sigma\sqrt{(w_1-1)^2+2\sigma(w_1-1)w_2 +w_2^2},\quad
    \cR_2(\w) = c_\sigma\sqrt{(w_1+1)^2-2\sigma(w_1+1)w_2 +w_2^2},
\end{align*}
where $c_\sigma=\frac{1}{2\sqrt{1+\sigma}}$.
The scaling of $\Sigma_k$ ensures that $\cR_k$ maps into $[0,1]$.
We can now write $\bR(\w) = (\cR_1(\w),\cR_2(\w))$.
Define for $\rho\in\triangle^{1}$ the convex combination $\w_\rho = \rho_1 \w_1+\rho_2\w_2$ and notice that
    $
        \bR(\w_\rho) = c_\sigma\prn{1-(\rho_1-\rho_2),1+(\rho_1-\rho_2)}.
    $
    Finally, define the family of distributions $Q_\q= q_1P_1+ q_2P_2$ for $\q\in \triangle^1$. A calculation shows that then
    \begin{equation*}
        \cR_{\q}(\w)\equiv \cR_{Q_\q}(\w) := \EE_{(X,Y)\sim Q_\q} \abs{\inner{X}{\w}-Y} = q_1 \cR_1(\w)+q_2\cR_2(\w).
    \end{equation*}

Now, given this setting, in the following \cref{lem:lin-reg-example-aggregation} we verify that \cref{asm:fast-rate-aggregation} is satisfied, and show that weights $\v_\rho\in B_2^2$ exist that satisfy $\bR(\v_\rho)=\bR(\w_\rho)-\Phi(\rho)\one$.
\begin{lemma}
\label{lem:lin-reg-example-aggregation}
    The function $\Phi:\triangle^{1}\to [0,\infty)$ defined as $\Phi(\rho) = \eta \frac{\pi^2}{8(1+\sigma)} \rho_1\rho_2$
    with the constant 
    $$
    \eta=\frac{8\sqrt{1+\sigma}}{\pi^2}\prn{1-\sqrt{\frac{1-\sigma^2}{1+3\sigma^2}}}\prn{1+\sqrt{\frac{1-\sigma^2}{1+3\sigma^2}}}^{-1}
    $$
    satisfies \cref{asm:fast-rate-aggregation}: (i) $\Phi(\e_k)=0$ for $k\in\crl{1,2}$, (ii) $\Phi(\rho)$ is $\eta$-strongly concave w.r.t. $\nrm{\inner{\cdot}{\w_\rho-\w_\gamma}}_{L^2(Q)}$, that is, for all $\rho,\gamma\in \triangle^{1}$, $\alpha\in[0,1]$,
    \begin{equation*}
        \Phi(\alpha\rho+(1-\alpha)\gamma) \geq \alpha \Phi(\rho) + (1-\alpha) \Phi(\gamma) + \frac{\eta}{2}\alpha(1-\alpha)\frac{\pi^2}{4(1+\sigma)}(\rho_1-\gamma_1)^2,
    \end{equation*}
     and (iii) the weights $\v_\rho \in B_2^2$ that satisfy the equation $\bR(\v_\rho ) = \bR(\w_\rho)-\Phi(\rho)\one$ exist and are given by
    \begin{equation*}
        \v_\rho =\prn{\frac{(\rho_1-\rho_2)(1-\Phi(\rho)/c_\sigma)}{1-\sigma v_2}, v_2}
    \end{equation*}
    where $v_2$ is given through the polynomial equation $P(v_2)=0$ and $P$ is given in \cref{eqn:polynomial}.
\end{lemma}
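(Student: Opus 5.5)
Parts (i) and (ii) of \cref{lem:lin-reg-example-aggregation} are direct computations with the quadratic $\Phi$, and all the real work is in (iii). For (i), note that $\rho_1\rho_2=0$ at $\rho=\e_1,\e_2$, so $\Phi$ vanishes there.

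For (ii), write $\rho_2=1-\rho_1$. Then $\Phi(\rho)=C\rho_1(1-\rho_1)$ with $C=\eta\pi^2/(8(1+\sigma))$. The exact identity for a concave quadratic gives
\begin{equation*}
\Phi(\alpha\rho+(1-\alpha)\gamma)-\alpha\Phi(\rho)-(1-\alpha)\Phi(\gamma)=C\alpha(1-\alpha)(\rho_1-\gamma_1)^2 .
\end{equation*}
It remains to identify $(\rho_1-\gamma_1)^2$ times a constant with $\nrm{\inner{\cdot}{\w_\rho-\w_\gamma}}_{L^2(Q)}^2$. Here $\w_\rho-\w_\gamma=2(\rho_1-\gamma_1)\e_1$. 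Under each $P_k$ we have $\EE[XX^\top]=\Sigma_k/2$, because $\EE[UU^\top]=I/2$. Both $\Sigma_k$ share the same $(1,1)$ entry, so the norm does not depend on the mixture weights $\q$, and it equals a fixed multiple of $(\rho_1-\gamma_1)^2$. I would carry out this short computation and check that the constant matches the factor $\frac{\pi^2}{4(1+\sigma)}$ in the displayed inequality. If the constants are off by a factor, the displayed inequality with the stated factor is still implied by the exact identity, up to the value of $\eta$.

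For (iii), set $d=\rho_1-\rho_2$ and $\Phi=\Phi(\rho)$. Since $\bR(\w_\rho)=c_\sigma(1-d,1+d)$, we must solve
\begin{equation*}
\cR_1(\v)=c_\sigma(1-d)-\Phi, \qquad \cR_2(\v)=c_\sigma(1+d)-\Phi .
\end{equation*}
First I check that both right-hand sides are nonnegative, that is $\Phi\le c_\sigma(1-|d|)$, using the explicit $\eta$. Then I square both equations.

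\textbf{Difference of the squared equations.} The left side gives $\cR_1^2-\cR_2^2=-4c_\sigma^2 v_1(1-\sigma v_2)$. The right side gives $-4c_\sigma d(c_\sigma-\Phi)$. Hence
\begin{equation*}
v_1=\frac{d(1-\Phi/c_\sigma)}{1-\sigma v_2},
\end{equation*}
which is the stated formula.

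\textbf{Sum of the squared equations.} Substituting this $v_1$ into the sum and clearing the denominator $(1-\sigma v_2)^2$ yields a polynomial equation $P(v_2)=0$, the one in \cref{eqn:polynomial}.

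\textbf{Existence of a root.} I would use the intermediate value theorem on $v_2$. At $v_2=0$ the candidate point is $(d(1-\Phi/c_\sigma),0)$. Its risks lie on the image segment of the convex hull, which is strictly above the target. So the sum mismatch has one sign there, and $\Phi>0$ makes it strict.

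For the other endpoint I would use the Pareto-optimal point $\psi_\rho$ reached along the "balanced" direction, which satisfies $\bR(\psi_\rho)=\bR(\w_\rho)-\Psi(\rho)\one$. Because of the difference equation, the one-parameter family of points satisfying it passes from $\w_\rho$ toward such a balanced point. If $\Psi(\rho)\ge\Phi(\rho)$ the mismatch changes sign along this family, so a root exists. I also need to confirm that the resulting $\v_\rho$ lies in $B_2^2$, for instance because it is dominated-sandwiched between $\w_\rho$ and the Pareto set, which lies in the ball.

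\textbf{Main obstacle.} The hardest step is the uniform lower bound
\begin{equation*}
\Psi(\rho)\ge \eta\frac{\pi^2}{8(1+\sigma)}\rho_1\rho_2 \quad\text{for all }\rho,
\end{equation*}
with exactly the stated $\eta$. I would attack it by parametrizing the Pareto curve of the two elliptical norms $\cR_1,\cR_2$ in closed form, via Lagrange multipliers or the tangency of ellipses. I would then compute the balanced gap $\Psi$ and show that $\Psi(\rho)/(\rho_1\rho_2)$ is minimized at a boundary or central configuration, giving the factor $\bigl(1-\sqrt{(1-\sigma^2)/(1+3\sigma^2)}\bigr)/\bigl(1+\sqrt{(1-\sigma^2)/(1+3\sigma^2)}\bigr)$. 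If a closed form proves too messy, a fallback is to bound $\Psi$ below by the gap to a single explicit curve of feasible $\v$ inside the ball, which keeps the sign-change argument intact.
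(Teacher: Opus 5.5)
Parts (i), (ii) and your formula for $v_1$ are fine and agree with the paper. Your hedge in (ii) is warranted: $\EE[XX^\top]=\Sigma_k/2$ gives $\nrm{\inner{\cdot}{\w_\rho-\w_\gamma}}_{L^2(Q)}^2=\frac{\pi^2}{8(1+\sigma)}(\rho_1-\gamma_1)^2$, which is half the paper's value. This is harmless, because the displayed inequality is an exact identity and a smaller norm only strengthens strong concavity.

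The gap is the existence part of (iii). Your sign at $v_2=0$ is right, but the opposite sign is supposed to come from $\Psi(\rho)\ge\Phi(\rho)$, and that is not a reduction. Since $\Psi(\rho)=\max\{t:\exists\w\in B_2^2,\ \bR(\w)\preceq\bR(\w_\rho)-t\one\}$, any solution $\v_\rho$ already witnesses $\Psi(\rho)\ge\Phi(\rho)$. Conversely (see below), $\Psi\ge\Phi$ gives a solution up to the ball constraint. So you have essentially restated (iii) and left it open.

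It is also delicate. With $s=\sqrt{(1-\sigma^2)/(1+3\sigma^2)}$ one has exactly $\Phi(\rho)=\frac{1-s}{1+s}\,\rho_1\,\cR_1(\w_\rho)$. Near $\bR(\e_1)=(0,2c_\sigma)$ the front satisfies $r_1=s\,(2c_\sigma-r_2)+O\bigl((2c_\sigma-r_2)^2\bigr)$. To see this, note that moving from $\e_1$ by $\epsilon\mathbf{u}$ raises $\cR_1$ by $c_\sigma\epsilon\,(u_1^2+2\sigma u_1u_2+u_2^2)^{1/2}$ and changes $\cR_2$ by $c_\sigma\epsilon\,(u_1-\sigma u_2)+O(\epsilon^2)$; the best ratio is $s$. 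This gives $\Psi(\rho)=\frac{1-s}{1+s}\cR_1(\w_\rho)+O(\rho_2^2)$ as $\rho\to\e_1$. So the stated $\eta$ is exactly the value at which $\Phi$ and $\Psi$ agree to first order at the endpoints. Whether $\Phi\le\Psi$ then depends on second-order behaviour there, which a guess about where $\Psi/(\rho_1\rho_2)$ is minimized does not settle.

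The bridge is also wrong as stated. Write $a=\rho_1-\rho_2$ (your $d$). For $a\neq0$, the curve $v_1(1-\sigma v_2)=a(1-\Phi(\rho)/c_\sigma)$ contains neither $\w_\rho$ nor $\psi_\rho$; the latter lies on the analogous curve with $\Psi$ in place of $\Phi$. Once $\Psi\ge\Phi$ is known, a correct bridge is as follows. Set $\mathbf{r}=\bR(\w_\rho)-\Phi(\rho)\one$. The filled ellipses $E_k=\{\w\in\RR^2:\cR_k(\w)\le r_k\}$ both contain $\psi_\rho$, while $\e_1\in E_1\setminus E_2$ and $-\e_1\in E_2\setminus E_1$. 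Hence $\partial E_1\cap\partial E_2\neq\emptyset$, and any point there solves the system. Membership in $B_2^2$ still needs a separate parameter-space argument, since being sandwiched in risk space says nothing about $\nrm{\v}_2$.

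Also, carrying out your substitution does not produce \eqref{eqn:polynomial}. With $\tau=\Phi(\rho)/c_\sigma$ it gives $a^2(1-\tau)^2+\bigl(v_2^2-2\sigma v_2+1-(1-\tau)^2-a^2\bigr)(1-\sigma v_2)^2=0$. The $P$ of \eqref{eqn:polynomial} instead equals $(\sigma-v_2)(1-\sigma v_2)^3-\sigma a^2(1-\tau)^2$. That is the condition $\sigma(1-v_1^2+v_2^2)=(1+\sigma^2)v_2$ that $\nabla\cR_1^2$ and $\nabla\cR_2^2$ be parallel, restricted to your curve. At $a=0$ it does not involve $\Phi$ at all, and its only root in $[-1,1]$, $v_2=\sigma$, is the balanced Pareto point, whose gap is $\Psi$.

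The paper's route differs from yours: it never invokes $\Psi$ and concludes from a sign test of that quartic at $v_2=\pm1$. This does not fill your gap, because the correct quartic is positive at $v_2=0$ and $v_2=-1$, and, e.g.\ for $\sigma=\tfrac12$, $a=0$, also at $v_2=1$. In either formulation the missing ingredient is the same. For each $\rho$ you need an explicit $a$-dependent $v_2\in[-1,1]$ at which the correct quartic is nonpositive, together with a check that the resulting root gives $\nrm{\v_\rho}_2\le1$.
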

Note that $\eta>0$ iff $\sigma>0$, which is exactly the case when the Pareto set and the convex hull do not coincide and the set $\crl{\v_\rho:\rho\in \triangle^1}$ ``bends away'' from the convex hull. This is visualized in \cref{fig:lin-reg-aggregation}, where we also show that in this case, we do \emph{not} output models in the Pareto set, because the Pareto front does not exhibit sufficient curvature.\footnote{While \cref{asm:fast-rate-aggregation} is satisfied, the Lipschitz condition of \cref{thm:fast-rate-MSA} is not, so we cannot apply it black box. However, it is not hard to see that with some additional work, fast rates can also be achieved here.}
\begin{figure}
    \centering
    \includegraphics[width=0.9\linewidth]{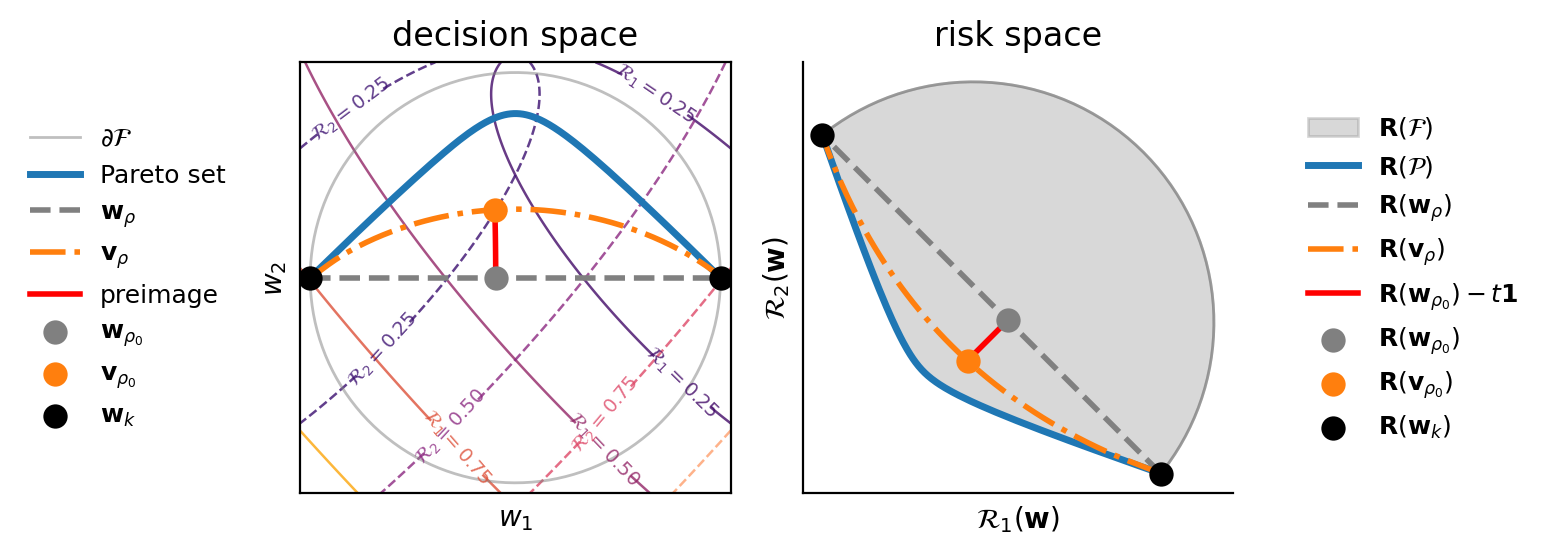}
    \caption{The two-dimensional \cref{exm:lin-reg-aggregation,lem:lin-reg-example-aggregation} visualized. On the left the decision space with contour lines of the two benchmarks, and on the right the risk space.}
    \label{fig:lin-reg-aggregation}
\end{figure}
\end{example}

\begin{proof}[Proof of \cref{lem:lin-reg-example-aggregation}]
    First, denote the risk minimizer by $\w_\q = \argmin_{\w\in B_2^2} q_1 \cR_1(\w)+ q_2 \cR_2(\w)$
    (assume for now that $\w_\q$ is in the interior of $B_2^2$). Moreover, we can write the $L^2(Q)$-norm of $\inner{\cdot}{\w_\rho-\w_\gamma}$, because $\w_\rho-\w_\gamma = (2(\rho_1-\gamma_1),0)$, as
    $
        \nrm{\inner{\cdot}{\w_\rho-\w_\gamma}}_{L^2(Q)}^2=\sum_{k=1}^K q_k\nrm{\w_\rho-\w_\gamma}_{\Sigma_k}^2 = \frac{\pi^2}{4(1+\sigma)}(\rho_1-\gamma_1)^2
    $
    independently of $\q$. 

    The proof of the first point then is obvious.
    The strong concavity of $\Phi$ holds because the function $g(p):= \Phi((p,1-p))=\eta \frac{\pi^2}{8(1+\sigma)} p(1-p)$ has second derivative $g''(p)=-\frac{\pi^2}{4(1+\sigma)}\eta$ and is therefore $\frac{\pi^2}{4(1+\sigma)}\eta$-strongly concave in squared distance on $[0,1]$. Hence $\Phi$ is $\eta$-strongly concave in $\frac{\pi^2}{4(1+\sigma)}(\rho_1-\gamma_1)^2$.

    Finally, to show that $\v_\rho$ exists, fix some $\rho$ and denote $a=\rho_1-\rho_2$, $t=\Phi(\rho)$. The condition that $\bR(\v ) = \bR(\w_\rho)-\Phi(\rho)\one$ is equivalent to the system of two equations
    \begin{equation*}
        \begin{aligned}
            c_\sigma\sqrt{(v_1-1)^2+2\sigma(v_1-1)v_2 +v_2^2}&=c_\sigma(1-a) -t \\
            c_\sigma\sqrt{(v_1+1)^2-2\sigma(v_1+1)v_2 +v_2^2} &= c_\sigma(1+a)-t
        \end{aligned}
    \end{equation*}
    Subtracting one from the other yields $ v_1(1-\sigma v_2)=a(1-t/c_\sigma)$.
    For $v_2\in[-1,1]$ and $\sigma\in[0,1)$ we know $1-\sigma v_2>0$ and so
    \begin{equation*}
        v_1=\frac{a(1-t/c_\sigma)}{1-\sigma v_2}.
    \end{equation*}
    Substituting that back into the first equality from our system of equations yields
    \begin{equation}
    \label{eqn:polynomial}
        P(v_2) = \sigma^3 v_2^4-(\sigma^4+3\sigma^2)v_2^3+(3\sigma^3+3\sigma)v_2^2-(3\sigma^2+1)v_2+\sigma\prn{1-a^2(1-t/c_\sigma)^2}=0.
    \end{equation}
    A sign test with $v_2 = -1$ and $v_2=1$ shows that this quartic has a solution on $(-1,1)$, and with some additional but elementary work, it can be shown that there exists a solution that satisfies $\nrm{\v}_2\leq 1$.
\end{proof}

\cameraonly{
\subsubsection{A Separation from Aggregation in the Convex Hull}
\label{subsubsec:lower-bound-convex-hull}
Finally, we verify that \cref{thm:fast-rate-MSA} can genuinely yield fast rates where standard aggregation methods (in the convex hull) would fail to achieve fast rates.
We prove \cref{thm:lower-bound-convex-hull} in \cref{proof:lower-bound-convex-hull}.
\begin{theorem}
\label{thm:lower-bound-convex-hull}
    Assume $K\geq 4$ and $\log(K)/n \leq 1$.
    There exists a problem setting $(\cF,\bR,\crl{\cR_Q:Q\in\cQ})$ where $\cQ$ is a family of distributions such that (i) any estimator $f_{\rhohat}$ in the convex hull has worst-case risk over $Q\in \cQ$ lower bounded by
    \begin{equation*}
        \sup_{Q\in \cQ} \EE\brk{\excessRisk{f_{\rhohat}}{\crl{f_k}_{k=1}^K}} \geq \frac{1}{38}\sqrt{\frac{\log K}{n}}, 
    \end{equation*}
    and (ii) the assumptions of \cref{thm:fast-rate-MSA} are satisfied with $(\phi,\Phi)$ chosen as $(\psi,\Psi)$ from \cref{eqn:aggregation-map-Pareto-set} and the constants $L_\ell=1,L_\phi=2,\eta=2,\kappa=0$, and $\lambda = 1/22$ for all $Q\in \cQ$. Hence, our estimator achieves
    \begin{equation*}
        \sup_{Q\in \cQ} \EE\brk{\excessRisk{\phi_{\rhofast}}{\crl{f_k}_{k=1}^K}}  \leq 22 \frac{\log K}{n}.
    \end{equation*}
\end{theorem}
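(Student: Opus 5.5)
The plan is to stack $m:=\lfloor K/2\rfloor$ independent copies of \cref{ex:fast-rate}, one per pair of dictionary elements, on a common input space $\cX=\{x_1,\dots,x_K\}$, and then run a standard Fano/Assouad-type argument restricted to the convex hull. Concretely, I would let the benchmarks be $\cR_k(f)=\EE_{P_k}\abs{f(X)-Y}$ with $P_k=\delta_{(x_k,y_k)}$, using labels $y_{2j-1}=0$ and $y_{2j}=1$. The dictionary would be functions $f_k$ that are extreme on the relevant coordinates, so that for each pair $j$ the pair $(f_{2j-1},f_{2j})$ reproduces the two-point geometry of \cref{ex:fast-rate}. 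The class $\cF$ would be the convex hull together with the ``bent'' functions $g_u$ coordinatewise. The targets would be $\cQ=\{Q_{\q}=\sum_k q_kP_k\}$. By \cref{lem:Q-in-convex-hull,lem:bound-modulus-Lipschitz}, every such target satisfies $\uppermodulus(t)\le t$ and $\lowermodulus(t)\ge t$.

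\textbf{Part (ii).} Here I would verify, exactly as in \cref{ex:fast-rate}, that $\psi_\rho$ is obtained by the map $u\mapsto g_u$ applied to the appropriate coordinates. This should give $\Psi(\rho)$ as a quadratic of the form $\sum$ of products $\rho_k\rho_{k'}$, which is $2$-strongly concave with respect to $\nrm{f_\rho-f_\gamma}_{L^2(Q_X)}^2$. It should also give $L_\phi=2$ from $\abs{u^2-v^2}\le 2\abs{u-v}$, and $L_\ell=1$, $\kappa=0$ for the absolute loss. Plugging these into the choice of $\lambda$ in \cref{thm:fast-rate-MSA} gives $\lambda=\frac{1}{11}\min\{1/2,2/3\}=1/22$. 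The high-probability bound $44\log(K/\delta)/n$ then converts by tail integration into an expectation bound of order $\log K/n$, which yields the stated constant after a little care.

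\textbf{Part (i).} This is the main obstacle. The difficulty is that the excess risk is measured against the dictionary rather than against $\cF$. On the convex hull, $\cR_Q(f_\rho)$ is linear in $\rho$, namely $\sum_k\rho_k\cR_Q(f_k)$, so any $f_{\rhohat}$ pays $\sum_k\rhohat_k\bigl(\cR_Q(f_k)-\min_j\cR_Q(f_j)\bigr)$. This is the classical setting in which improper convex-hull aggregation cannot beat $\sqrt{\log K/n}$. I would take a family of mixtures $\q^{(i)}$, $i\in[K]$, each a small perturbation of uniform weights that tilts by $\epsilon\asymp\sqrt{\log K/n}$ in favor of a specific dictionary element $f_i$. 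Each $f_{i'}$ with $i'\ne i$ then has excess $\gtrsim\epsilon$, and the pairwise KL divergences between the $n$-fold products are $\lesssim n\epsilon^2\le c\log K$. By linearity, $\EE\sum_{k\neq i}\rhohat_k\epsilon$ lower bounds the excess risk. Fano's inequality (or a reduction to identifying $i$ from the posterior mass) then shows that some $i$ has $\EE[1-\rhohat_i]\ge 1/2$, which gives a bound of order $\epsilon/2$. The explicit constant $1/38$ would follow from choosing $\epsilon=c\sqrt{\log K/n}$, using $\log K/n\le 1$ to keep the $q$'s valid, and using $K\ge 4$ so that Fano's bound is nontrivial.

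The delicate point is to design the $f_k$ and the sample space so that these target risks are genuinely separated by $\epsilon$ with KL divergence of order $\epsilon^2$. Deterministic $P_k$ make the target observation reveal only which source was drawn, which is a multinomial over $[K]$. Its KL divergence is $\chi^2$-controlled, so this design is consistent, while the stacked pair structure retains the curvature needed for (ii).
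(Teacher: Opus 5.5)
There is a genuine gap in both parts, and both come from the same design choice: point-mass benchmarks $P_k=\delta_{(x_k,y_k)}$, one per input point.

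\textbf{Part (ii).} With point-mass benchmarks, each $\cR_k$ depends on $f$ only through $f(x_k)$. So the uniform improvement $\Psi(\rho)=\max_{f\in\cF}\min_k\crl{\cR_k(f_\rho)-\cR_k(f)}$ is governed by the worst block, not by a sum of products $\rho_k\rho_{k'}$.
\begin{itemize}
\item It is $0$ as soon as $f_\rho$ fits some $x_m$ exactly. For example, take $\rho=\tfrac12(\delta_1+\delta_2)$ when $f_1,f_2$ share an extreme value on another block. This contradicts $\Psi(\rho)\ge\Phi(\rho)\ge V(\rho)>0$, which the assumption forces via \cref{lem:Delta-geq-Var}.
\item Otherwise, the best uniform improvement available from the bent functions is the minimum over blocks of the per-block gains. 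Such a minimum is flat along perturbations that move only a non-bottleneck block, e.g.\ $\rho\pm s(\delta_1-\delta_2)$ when $f_1,f_2$ agree off block $1$. Hence it is not $2$-strongly concave in $\nrm{f_\rho-f_\gamma}_{L^2(Q_X)}$.
\item On a single block, only one value of $u_j$ lowers both of its risks by the same amount, namely that block's gain. So $\bR(\psi_\rho)=\bR(f_\rho)-\Psi(\rho)\one$ fails once two blocks have different gains.
\end{itemize}

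\textbf{Part (i).} A target sample here only reveals an index $k\sim\q$. The $\chi^2$ control you invoke gives, for two $\eps$-tilts of a near-uniform $\q$, a per-sample value $\sum_k(q_k-q'_k)^2/q'_k$ of order $K\eps^2$, not $\eps^2$. So the $n$-fold KL is of order $nK\eps^2$, and Fano only certifies $\eps\asymp\sqrt{\log K/(Kn)}$.

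In fact, suppose each $f_l$ errs maximally on all $x_k$ with $k\neq l$; this is the simplest way to keep $\Psi>0$ on pairwise mixtures. Then $\cR_Q(f_l)=1-q_l$, and the hull problem becomes mode estimation. The empirical mode has expected excess $O(1/\sqrt n+\log K/n)$ by a Bernstein argument, so the claimed lower bound fails for large $K$.

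\textbf{What is missing.} Every benchmark must see every block. The paper takes $d=\lfloor\log_2 K\rfloor$ blocks $\crl{(i,-1),(i,+1)}$. It indexes both the dictionary and the benchmarks by $\btau\in\{\pm1\}^d$, with $f_{\btau}(i,\cdot)=\one\crl{\tau_i=+1}$ and $P_{\btau}$ uniform over the $d$ points $(i,\tau_i)$.
\begin{itemize}
\item For part (ii): $g_{\bu}$ with $u_i=f_\rho(i)$ lowers every $\cR_{\btau}$ by exactly $\Psi(\rho)=\frac1d\sum_i f_\rho(i)(1-f_\rho(i))$. This is an \emph{average} over blocks, and it is $2$-strongly concave in $\nrm{f_\rho-f_\gamma}^2_{L^2(Q_X)}=\frac1d\sum_i(f_\rho(i)-f_\gamma(i))^2$ for every target.
\item For part (i): the targets $Q_{\bsigma}$, $\bsigma\in\{\pm1\}^d$, tilt block $i$ by $\pm\eps$. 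The hull excess is at least $\frac{\eps}{2d}\nrm{\bsigmahat-\bsigma}_1$, while neighbouring KLs are at most $22n\eps^2/d$. Assouad with $\eps\asymp\sqrt{d/n}$ then gives $\frac1{38}\sqrt{\log K/n}$.
\end{itemize}
The $\log K$ thus enters as the hypercube dimension, not through Fano over $K$ near-uniform mixtures.

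\textbf{A smaller point.} The constant $22$ comes from the in-expectation bound $\EE[Z_n]\le 0$ in the proof of \cref{thm:fast-rate-MSA}, which gives $\log K/(\lambda n)$. Tail-integrating the high-probability bound only yields $44(\log K+1)/n$.
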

The proof idea is based on \cref{ex:fast-rate}, and constructs approximately $\log_2 K$ copies of the problem.
}

\section{Proof of \texorpdfstring{\cref{thm:limiting-distribution}}{Theorem \ref{thm:limiting-distribution}}}\label{proof:limiting-distribution}
Throughout this section, let $\fR$ be the Pareto front associated with $\pareto(\cF, \bR)$ that is \emph{nice} (\cref{assumption: nice}). \cref{thm:limiting-distribution} computes the mass of certain subsets $A \subset \fR$ with respect to the limiting distribution of the Pareto covering as its granularity goes to zero. In particular, these subsets $A$ are assumed to be \emph{Jordan measurable}, which means that their boundaries have zero measure (see \cref{def:Jordan-measurable} below). We begin with a few remarks on these assumptions. 

Niceness ensures that the Pareto front is a smooth manifold; the smoothness structure on the boundary of $\fR$ is handled in the usual way by smoothly extending the manifold within $\RR^K$ \citep{lee2003smooth}. It also lower bounds each component of the normal vector $\n(\r)$ away from zero, where $n_k(\r) > \eta > 0$ for all $k \in [K]$ and $\r \in \fR$. 
To interpret this assumption, suppose that there were an interior point $\r$ in $\fR$ whose normal vector $\n(\r)$ achieves zero on some component $n_k(\r) = 0$, as exemplified in \cref{subfig:inflection}. Such a point is an \emph{improper} trade-off, which intuitively means that it is almost dominated: it is possible to improve on some other objective at the cost of only an infinitesimal amount of utility in the $k$th objective \citep{geoffrion1968proper}. In this sense, the lower bound condition ensures that points in $\fR$ are not `barely' Pareto optimal.

Jordan measurability is a standard assumption that helps us avoid pathological situations, such as when the subset $A \subset \fR$ is a space-filling curve or a countable dense set in $\fR$. In these cases, the Hausdorff measure of $A$ is zero, while its covering numbers at all scales coincide with that of $\fR$; the connection between the Hausdorff measure and limiting distribution necessarily breaks down for such irregular sets.

As a high-level roadmap, the first section \Cref{subsubsec:smooth-machinery} provides some technical machinery for working with smooth Pareto fronts through approximations via linear Pareto fronts. In linear fronts, Pareto balls correspond to simplexes, so the next section \Cref{subsubsec:simplex-coverings} works out the asymptotics of simplex coverings. Finally, \Cref{subsubsec:fine-bound} proves \Cref{thm:limiting-distribution} by passing through a limit of linear approximations of the nice Pareto front. See \cref{fig:proof-limiting-distribution-graph} for an overview. All technical proofs are provided afterwards in \Cref{subsec:technical-proofs}.

\begin{figure}[H]
    \centering
    \begin{tikzpicture}[
  >=Latex,
  node distance=10mm and 10mm,
  every node/.style={
    draw,
    rounded corners,
    align=center,
    minimum width=18mm,
    minimum height=7mm
  }
]

\node (theorem) at (0,0) {\cref{thm:limiting-distribution} \\ (\ref{subsubsec:fine-bound})};

%approximation
\begin{scope}[shift={(-1.5,-2)}]
\filldraw[
  fill=blue!5,
  draw=blue!60!black,
  rounded corners=6pt
] (-5,4) rectangle (-0.5,0.5);
\node[draw=none, fill=none, inner sep=0pt, outer sep=0pt] (approximation) at (-2.75,3.7){Approximation (\ref{subsubsec:smooth-machinery})};
\node (local) at (-4,3) {\cref{lem:local-linear-approx}};
\node (comparability) at (-4,2) {\cref{lem:comparability}};
\node (piecewise) at (-1.5,3) {\cref{lem:piecewise-approximation}};
\node (distortion) at (-1.5,1) {\cref{lem:distortion-to-covering}};
\node (coarse) at (-1.5,2) {\cref{lem:coarse-bounds}};
\end{scope}

%implications
\draw[->] (local) -- (comparability);
\draw[->] (local) -- (piecewise);
\draw[->] (comparability) -- (piecewise);
\draw[->] (comparability) -- (coarse);
\draw[->] (distortion) -- (theorem);
\draw[->] (piecewise) -- (theorem);
\draw[->] (coarse) -- (theorem);

%simplex coverings
\begin{scope}[shift={(7,-2)}]
\filldraw[
  fill=orange!5,
  draw=orange!60!black,
  rounded corners=6pt
] (-5,4) rectangle (-0.5,0.5);
\node[draw=none, fill=none, inner sep=0pt, outer sep=0pt] (simplexcovering) at (-2.75,3.7){Simplex Coverings (\ref{subsubsec:simplex-coverings})};
\node (minkowski) at (-3.7,1) {\cref{prop:covering-to-minkowski}};
\node (density) at (-4,3) {\cref{lem:covering-density}};
\node (boxes) at (-1.5,2) {\cref{lem:covering-for-boxes}};
\node (parallelepiped) at (-1.5,3) {\cref{lem:centers-in-fundamental-parallelepiped}};
\node (volumesimplex) at (-4,2) {\cref{lem:volume-simplex}};
\end{scope}

\draw[->] (minkowski) -- (theorem);
\draw[->] (density) -- (theorem);
\draw[->] (volumesimplex) -- (theorem);
\draw[->] (boxes) -- (minkowski);
\draw[->] (parallelepiped) -- (boxes);

\end{tikzpicture}
    \caption{Logical proof structure of \cref{thm:limiting-distribution}.}
    \label{fig:proof-limiting-distribution-graph}
\end{figure}

\subsection{Approximations of Nice Pareto Fronts}
\label{subsubsec:smooth-machinery}
Before we prove \Cref{thm:limiting-distribution}, we first provide five basic results about nice Pareto fronts. The proofs of these results are fairly standard: they use of Taylor's theorem and compactness. We further defer them to \Cref{sec:technical-proofs-thm1}. 

These results are about approximating the Pareto front linearly. First, we show that we can approximate the Pareto front \emph{locally} at one point with its tangent space (\cref{lem:local-linear-approx}). We then use this to establish that the Pareto distance and the Euclidean distance are equivalent for any given Pareto front (\cref{lem:comparability}). These two results together let us prove the main approximation, \cref{lem:piecewise-approximation}, that establishes a \emph{global} approximation consisting of a finite number of maps mapping into local tangent spaces. Importantly, these approximations distort the Pareto quasi-metric between the Pareto front and tangent spaces arbitrarily little.
Moreover, \cref{lem:local-linear-approx,lem:comparability} let us prove a first \emph{coarse} bound on the Pareto covering number (\cref{lem:coarse-bounds}). Finally, we prove a basic result about covering number under distortions (\cref{lem:distortion-to-covering}), which is independent of the previous steps but in the main proof will be applied to the approximation maps (which incur little distortion). This approach generally allows us to prove results for linear fronts and extend them to smooth fronts.

We begin with \Cref{lem:local-linear-approx} which shows that a smooth Pareto front $\fR$ can locally be approximated \emph{as a manifold} by a linear subspace, its tangent space, to arbitrary accuracy.
The proof of \cref{lem:local-linear-approx} is in \cref{proof:local-linear-approx}.

\begin{lemma}[Local linear approximation of smooth Pareto fronts]
    \label{lem:local-linear-approx}
    Let $\fR \subset \RR^K$ be a smooth Pareto front. Fix any $\r_0 \in \fR$ and define the tangent space at $\r_0$ as the hyperplane: 
    \[\mathsf{T}_{\r_0}\fR := \big\{\r_0 + \mathbf{v} \in \RR^K: \mathbf{v} \cdot \n(\r_0) = 0\big\}.\]
    For any $\varepsilon > 0$, there are subsets $U$ relatively open in $\fR$ and $V$ relatively open in $\mathsf{T}_{\r_0}\fR$, and a diffeomorphism $\phi : U \to V$ such that for any $1$-Lipschitz-smooth function $f: \RR^K \to \RR$, the following holds for all $\r, \r' \in U$:
    \begin{equation} \label{eqn:bounded-difference}
        \left|\big(f(\r) - f(\r')\big) - \big(f\big(\phi(\r)\big) - f\big(\phi(\r')\big)\big)\right| \leq \varepsilon \|\r - \r'\|_2.
    \end{equation}
\end{lemma}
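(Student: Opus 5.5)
The natural choice is to let $\phi$ be the orthogonal projection onto the tangent hyperplane, restricted to a small neighbourhood of $\r_0$. Concretely, define $\phi(\r) := \r - \big((\r-\r_0)\cdot \n(\r_0)\big)\,\n(\r_0)$, which maps $\RR^K$ onto $\mathsf{T}_{\r_0}\fR$. Its differential at $\r_0$, restricted to the tangent space of $\fR$ at $\r_0$, is the identity. By the inverse function theorem (applied in local coordinates, using that $\fR$ is a smooth $(K-1)$-manifold), $\phi$ is therefore a diffeomorphism from some relatively open $U_0\ni\r_0$ in $\fR$ onto a relatively open $V_0$ in $\mathsf{T}_{\r_0}\fR$. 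If $\r_0$ lies on $\partial\fR$, we first extend $\fR$ smoothly to a slightly larger manifold, as the paper does, and then restrict back.

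The key estimate is that the displacement $\r - \phi(\r) = \big((\r-\r_0)\cdot\n(\r_0)\big)\n(\r_0)$ has small variation on $U$:
\[
\big\|(\r-\phi(\r)) - (\r'-\phi(\r'))\big\|_2 = \big|(\r-\r')\cdot \n(\r_0)\big| \le \varepsilon \|\r-\r'\|_2 \quad \text{for all } \r,\r'\in U.
\]
To prove it, I will write $U$ locally as a graph over the tangent plane, $\r = \r_0 + \v + h(\v)\n(\r_0)$, with $h$ smooth, $h(0)=0$ and $\nabla h(0)=0$. Then $(\r-\r')\cdot\n(\r_0) = h(\v)-h(\v')$. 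By continuity of $\nabla h$, we have $\|\nabla h\|\le \varepsilon$ on a small ball, which gives $|h(\v)-h(\v')|\le \varepsilon\|\v-\v'\|\le \varepsilon\|\r-\r'\|_2$. Here $\v-\v'$ is the tangential component of $\r-\r'$, so its norm is at most $\|\r-\r'\|_2$. Equivalently, because the unit normal field is continuous, $\n(\r)$ stays close to $\n(\r_0)$, and the chord $\r-\r'$ is nearly tangent. The sets $U$ and $V=\phi(U)$ are obtained by shrinking $U_0$ to a convex-in-coordinates ball.

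Finally, I read ``$1$-Lipschitz-smooth'' as meaning that $\nabla f$ is $1$-Lipschitz; the same argument works if it only means that $f$ is $1$-Lipschitz. Under the $1$-Lipschitz reading the bound is immediate:
\[
\big|(f(\r)-f(\phi(\r))) - (f(\r')-f(\phi(\r')))\big|
\le \big\|(\r-\phi(\r))-(\r'-\phi(\r'))\big\|_2 \le \varepsilon\|\r-\r'\|_2 .
\]
This does not quite follow from the Lipschitz property of $f$ alone, since we are differencing two different pairs of points. I expect this to be the main subtlety. For smooth $f$, I will write $f(\r)-f(\phi(\r)) = \int_0^1 \nabla f(\phi(\r)+s\,\d(\r))\cdot \d(\r)\,\mathrm{d}s$ with $\d(\r) := \r-\phi(\r)$. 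I then split the difference of two such integrals into two pieces. The first piece is $\nabla f(\cdot)\cdot(\d(\r)-\d(\r'))$, which is at most $\varepsilon\|\r-\r'\|_2$ because $\|\nabla f\|\le 1$. The second piece is $(\nabla f(\cdot)-\nabla f(\cdot'))\cdot \d(\r')$, which is at most $\|\r-\r'\|_2\,\|\d(\r')\|_2$. Since $\|\d(\r')\|_2 = |h(\v')| = O(\|\v'\|^2)$, shrinking $U$ makes this piece at most $\varepsilon\|\r-\r'\|_2$ as well. Replacing $\varepsilon$ by $\varepsilon/2$ throughout then gives the claimed inequality.
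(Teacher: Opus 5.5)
Your argument is correct, and it takes a different route from the paper.

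The paper picks an arbitrary chart $\psi$ at $\r_0$ and sets $\phi=\theta^{-1}\circ\psi$, where $\theta$ is the linearization of $\psi$ at $\r_0$. It Taylor-expands $f\circ\psi^{-1}$ and $f\circ\theta^{-1}$ at $\psi(\r')$. It then bounds the gradient mismatch by the variation of $\nabla\psi^{-1}$ plus a term $L\nrm{\r'-\phi(\r')}_2$.

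You instead fix $\phi$ as the orthogonal projection, so everything reduces to your graph function $h$. The displacement $\r-\phi(\r)=h(\v)\n(\r_0)$ is $\sup\nrm{\nabla h}_2$-Lipschitz and has size $\sup|h|=O(\mathrm{diam}(V)^2)$. Your exact integral identity then gives the explicit bound
\[
\Big|\big(f(\r)-f(\r')\big)-\big(f(\phi(\r))-f(\phi(\r'))\big)\Big|
\le\Big(\nrm{\nabla f}_\infty\sup_{V}\nrm{\nabla h}_2+\operatorname{Lip}(\nabla f)\,\sup_{V}|h|\Big)\nrm{\r-\r'}_2 .
\]
This has no $o(\cdot)$ Taylor remainder. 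In the paper, that remainder must hold uniformly over $f$ and over all pairs in $U$, not only as $\r\to\r'$, and this is left implicit.

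Your choice also makes the properties used downstream immediate:
\begin{itemize}
\item $\phi(\r_0)=\r_0$, which \cref{lem:mass-model-ball} uses.
\item The area Jacobian equals $(1+\nrm{\nabla h}_2^2)^{-1/2}\in[(1+\varepsilon^2)^{-1/2},1]$, which \cref{lem:piecewise-approximation} needs.
\item \cref{lem:comparability,lem:piecewise-approximation} only plug in the linear coordinate maps $\r\mapsto r_k$. For those, your key estimate $|(\r-\r')\cdot\n(\r_0)|\le\varepsilon\nrm{\r-\r'}_2$ alone suffices.
\end{itemize}
The paper's route is chart-agnostic, but its constants are less explicit.

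One correction to your write-up concerns the hypothesis on $f$. Your final argument uses both $\nrm{\nabla f}_\infty\le 1$ (in the first piece) and $\operatorname{Lip}(\nabla f)\le 1$ (in the second piece). The paper's step (iv) silently uses the same two properties, so state the hypothesis as both.

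Delete the claim that plain $1$-Lipschitzness suffices, and the display you call immediate. Under that reading the lemma is false for any $\phi$ whenever $U$ is not flat. To see this:
\begin{itemize}
\item Take $\r\in U$ with $\delta:=\nrm{\r-\phi(\r)}_2>0$, and $\r'\in U$ with $\nrm{\r-\r'}_2<\delta/8$.
\item Let $f$ be affine with gradient $\mathbf{u}=(\r-\r')/\nrm{\r-\r'}_2$ on a $\delta/4$-ball around $\r$.
\item Let $f$ be affine with gradient $-(\phi(\r)-\phi(\r'))/\nrm{\phi(\r)-\phi(\r')}_2$ on a $\delta/4$-ball around $\phi(\r)$, with both pieces vanishing at their centers.
\item These two pieces are jointly $1$-Lipschitz. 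Extend by McShane and mollify at scale below $\delta/8$ to get a smooth $1$-Lipschitz $f$.
\end{itemize}
For this $f$ the left-hand side equals $\nrm{\r-\r'}_2+\nrm{\phi(\r)-\phi(\r')}_2\ge\nrm{\r-\r'}_2$.
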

\begin{figure}[t]
\centering
\begin{subfigure}[b]{0.45\linewidth}
    \centering
    \includegraphics[height=3cm]{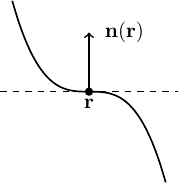}
    \caption{A point excluded by the normal vector condition.}
    \label{subfig:inflection}
\end{subfigure}
\begin{subfigure}[b]{0.45\linewidth}
    \centering
    \includegraphics[height=3cm]{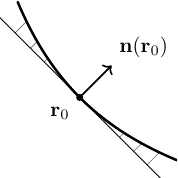}
    \caption{The linear approximation in \cref{lem:local-linear-approx}.}
    \label{subfig:linear-approx}
\end{subfigure}
\caption{(\subref{subfig:inflection}): An example of a point that is excluded by the normal vector condition, since the second coordinate $n_2(\r)=0$ vanishes. (\subref{subfig:linear-approx}): A visualization of the local linear approximation of $\fR$ at $\r_0$ constructed in \Cref{lem:local-linear-approx}. The thick curved line is a smooth Pareto front $\fR$. It is approximated by a tangent space $\mathsf{T}_{\r_0}\fR$, which has normal direction $\n(\r_0)$. The lines connecting $\fR$ with its tangent space represent the local approximation $\phi : U \to V$, where $U \subset \fR$ and $V \subset \mathsf{T}_{\r_0} \fR$.}
\label{fig:linear-approx}
\end{figure}

See \cref{subfig:linear-approx} for this construction.
For asymptotic or non-quantitative results, the following equivalence between Pareto and Euclidean distances is useful and follows from \cref{lem:local-linear-approx}: 
\Cref{lem:comparability} shows that the Pareto distance is bi-Lipschitz with the standard Euclidean distance on $\fR$. In particular, these two distances induce the same topology on the Pareto front.

\begin{lemma}[Comparability of Pareto and Euclidean distances] \label{lem:comparability}
    Let $d$ be the Pareto distance on a nice Pareto front $\fR$. There is a constant $c_\fR > 0$ depending on $\fR$ such that  
    \begin{equation} \label{eqn:metric-equivalence}
        \hspace{2em}c_\fR \cdot \|\r - \r'\|_2 \leq d(\r, \r') \leq  \|\r - \r'\|_2,\qquad \forall \r, \r' \in \fR.
    \end{equation}
\end{lemma}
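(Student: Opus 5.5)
The upper bound is immediate and needs no geometry. For $\r,\r'\in\fR$ we have
\[
d(\r,\r')=\max_k (r_k-r'_k)\le \max_k |r_k-r'_k| = \|\r-\r'\|_\infty\le \|\r-\r'\|_2 .
\]
So all the work is in the lower bound $d(\r,\r')\ge c_\fR\|\r-\r'\|_2$. The plan is to prove it first locally, using the normal vector condition and \cref{lem:local-linear-approx}, and then extend it to all pairs by compactness.

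\textbf{Local step.} Fix $\r_0\in\fR$ with normal $\n_0=\n(\r_0)$, so $n_{0,k}>\eta$ for all $k$. For any vector $\v$ with $\v\cdot\n_0=0$, the positive and negative parts balance:
\[
\sum_k n_{0,k}\,(v_k)_+=\sum_k n_{0,k}\,(v_k)_- .
\]
Hence
\[
\eta\,\|\v_+\|_1\le \|\v_-\|_1\le \eta^{-1}\|\v_+\|_1 .
\]
In particular $\max_k v_k\ge \|\v_+\|_1/K\ge c\,\|\v\|_2$ with $c\asymp \eta/K$. So on the tangent hyperplane the quantity $\max_k(\cdot)$ dominates a multiple of the Euclidean norm. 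To transfer this to $\fR$, apply \cref{lem:local-linear-approx} with $f$ ranging over the coordinate functions $\r\mapsto r_k$, which are $1$-Lipschitz-smooth, and a small $\varepsilon$. For $\r,\r'\in U$ this gives
\[
r_k-r'_k\ \ge\ \phi(\r)_k-\phi(\r')_k-\varepsilon\|\r-\r'\|_2 .
\]
The vector $\phi(\r)-\phi(\r')$ lies in the tangent direction, so its maximal coordinate is at least $c\|\phi(\r)-\phi(\r')\|_2$. Using the same estimate once more, $\|\phi(\r)-\phi(\r')\|_2\ge(1-\sqrt K\varepsilon)\|\r-\r'\|_2$. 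Choosing $\varepsilon$ small relative to $c$ yields
\[
d(\r,\r')\ge (c/2)\|\r-\r'\|_2 \quad\text{on a neighbourhood } U_{\r_0}.
\]
Boundary points need care: either \cref{lem:local-linear-approx} covers them through the smooth extension of the manifold, or the normal bound extends to $\fR$ by continuity.

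\textbf{Global step.} By compactness, finitely many neighbourhoods $U_{\r_0}$ cover $\fR$. Take a Lebesgue number $\delta>0$ for this cover with respect to $\|\cdot\|_2$. Pairs with $\|\r-\r'\|_2<\delta$ then lie in a common neighbourhood, and the local constant applies.

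For pairs with $\|\r-\r'\|_2\ge\delta$, consider $g(\r,\r')=d(\r,\r')$, which is continuous on the compact set $\{(\r,\r')\in\fR^2:\|\r-\r'\|_2\ge\delta\}$. It is strictly positive there. Indeed, if $d(\r,\r')\le 0$ then $\r\preceq\r'$. Pareto optimality of $\r'$ means no point of $\fR$ strictly dominates it, so $\r\prec\r'$ is impossible and hence $\r=\r'$, contradicting $\|\r-\r'\|_2\ge\delta$. So $g$ attains a minimum $m>0$, and since $\|\r-\r'\|_2\le\sqrt K$ on $[0,1]^K$, we get $d\ge (m/\sqrt K)\|\r-\r'\|_2$ on this set. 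Taking $c_\fR=\min\{c/2,\ m/\sqrt K\}$ completes the proof.

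The main obstacle is the local step. It requires making the error from \cref{lem:local-linear-approx} uniformly small relative to the tangent-space constant, and checking that the Euclidean distortion between $\fR$ and its tangent space is controlled by the same $\varepsilon$.
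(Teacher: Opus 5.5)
Your proposal is correct and follows essentially the same route as the paper. It has the same trivial $\ell_\infty$ upper bound, the same tangent-hyperplane estimate $\max_k v_k \ge \frac{\eta}{K(1+\eta)}\|\v\|_2$ transferred to $\fR$ via \cref{lem:local-linear-approx} applied to the coordinate maps (with the same $(1-\sqrt{K}\varepsilon)$ factor), and the same compactness argument using that distinct front points are mutually non-dominating. The only difference is bookkeeping in the global step: you use a Lebesgue number plus the minimum of $d$ over far pairs, whereas the paper uses doubled balls $B_{\r_0}\subset B'_{\r_0}\subset U$ and minimizes the ratio $d(\r,\r')/\|\r-\r'\|_2$ over $\overline{B}_{\r_0}\times(\fR\setminus B'_{\r_0})$, and the two are equivalent.
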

The proof of \cref{lem:comparability} is in \cref{proof:comparability}.
Combining \cref{lem:local-linear-approx,lem:comparability} we can get the main approximation result \cref{lem:piecewise-approximation}. 
It provides a piecewise linear approximation to the Pareto front. This approximation distorts distances, but it turns out that by increasing the number of pieces, we can incur arbitrarily little distortion. By \emph{distortion}, we mean:

\begin{definition}[$\varepsilon$-distortion]
\label{def:distortion}
    Let $(U, d_U)$ and $(V, d_V)$ be quasi-metric spaces. A bijection $\phi : U \to V$ incurs \emph{$\varepsilon$-distortion} if
    \[\hspace{2em}(1 - \varepsilon) \cdot d_V\big(\phi(\r), \phi(\r')\big) \leq d_U(\r,\r') \leq (1 + \varepsilon) \cdot d_V\big(\phi(\r), \phi(\r')\big), \qquad \forall \r, \r' \in U.\]
    When $\varepsilon = 0$, we say that $\phi$ is an \emph{isometry}.
\end{definition}

In particular, \Cref{lem:piecewise-approximation} shows that a nice Pareto front $(\fR, d)$ can be partitioned into finitely many pieces, where each piece is arbitrarily well-approximated \emph{as a quasi-metric space} by a corresponding linear front. 
\begin{lemma}[Piecewise linear approximation of smooth Pareto fronts]\label{lem:piecewise-approximation}
    Let $\fR \subset \RR^K$ be a nice Pareto front and let $d$ be the Pareto distance. For any $\varepsilon > 0$, there exist a finite collection of disjoint open sets $U_1,\ldots, U_M \subset \fR$ where the diameter of each set $U_m$ is at most $\varepsilon$ and the following hold:
    \begin{itemize}
        \item For each $m \in [M]$, there is a diffeomorphism $\phi_m : (U_m, d) \to (V_m, d)$ that achieves $\varepsilon$-distortion, where  $V_m \subset \RR^{K}$ is contained in some hyperplane, i.e.
        \[V_m \subset \big\{\v \in \RR^K : \n_m \cdot \v = 0\big\},\]
        and each coordinate of $\n_m$ is positive. Moreover, the determinant of the Jacobian is bounded $1 - \varepsilon < |J\phi_m| < 1 + \varepsilon$ on all of $U_m$.
        \item The remainder region $Z = \fR \setminus \bigcup U_m$ is contained in the finite union of the boundaries of a set of balls:
        \[Z \subset \bigcup_{m \in [M]} \partial B_m,\]
        where each $B_m$ is a ball in $\fR$ with respect to the Euclidean distance, and is $\sH^{K-1}$-measure zero. 
    \end{itemize}
\end{lemma}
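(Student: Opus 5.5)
The construction combines \cref{lem:local-linear-approx} and \cref{lem:comparability} with a compactness argument, followed by a careful cutting procedure that keeps the leftover set of measure zero.

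\textbf{Step 1: local distortion bound.} Fix $\varepsilon>0$ and a point $\r_0\in\fR$. \cref{lem:local-linear-approx} with accuracy $\varepsilon'$ (to be chosen) gives a diffeomorphism $\phi_{\r_0}:U_{\r_0}\to V_{\r_0}\subset\mathsf{T}_{\r_0}\fR$.

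We apply the bounded-difference estimate \eqref{eqn:bounded-difference} to the $1$-Lipschitz coordinate functions $f(\r)=r_k$. This gives, for all $\r,\r'\in U_{\r_0}$,
\[
\big|(r_k-r'_k)-(\phi(\r)_k-\phi(\r')_k)\big|\le \varepsilon'\|\r-\r'\|_2 \qquad \text{for every } k .
\]
Taking the maximum over $k$ in the definition of $d$ yields
\[
|d(\r,\r')-d(\phi(\r),\phi(\r'))|\le \varepsilon'\|\r-\r'\|_2 .
\]
\cref{lem:comparability} bounds the right-hand side by $(\varepsilon'/c_\fR)\,d(\r,\r')$. For the opposite direction, we need the same comparability on the tangent hyperplane. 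This holds for a linear front whose normal has coordinates above $\eta$: there the Pareto distance is bi-Lipschitz to the Euclidean one with a constant depending only on $\eta$ and $K$. It also holds because $\phi$ is close to an isometry in $\ell_2$, since its differential at $\r_0$ is the identity.

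Choosing $\varepsilon'\ll \varepsilon\,c_\fR$ then gives $\varepsilon$-distortion on $U_{\r_0}$. We also shrink $U_{\r_0}$ so that:
\begin{itemize}
\item its diameter is below $\varepsilon$;
\item the Jacobian determinant of $\phi$ lies in $(1-\varepsilon,1+\varepsilon)$, which is possible by continuity, since $J\phi(\r_0)=1$ because $D\phi(\r_0)$ is the identity on the tangent space;
\item it contains a Euclidean ball $B(\r_0,\rho_{\r_0})\cap\fR$.
\end{itemize}
The normal $\n(\r_0)$ has positive coordinates by niceness. Translating the hyperplane to pass through the origin is an isometry for $d$, because $d$ is translation invariant. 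So $V_m$ may be taken inside $\{\n_m\cdot\v=0\}$ with $\n_m=\n(\r_0)$.

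\textbf{Step 2: finite cover and disjointification.} By compactness of $\fR$, finitely many balls $B_m=B(\r_m,\rho_m)\cap\fR$ cover $\fR$, each contained in the corresponding good neighborhood. We make them disjoint by setting
\[
U_m := \operatorname{int}\Big(B_m\setminus \bigcup_{j<m}\overline{B_j}\Big).
\]
Each $U_m$ is open, is contained in $B_m$, and inherits the distortion, diameter and Jacobian bounds by restriction. For $\phi_m$ we restrict $\phi_{\r_m}$ to $U_m$ and replace $V_m$ by the image.

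Every point of $\fR$ lies either in some $U_m$ or on some sphere boundary $\partial B_j$. Hence the remainder satisfies $Z\subset\bigcup_m\partial B_m$.

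\textbf{Step 3: the remainder has measure zero (main obstacle).} The expected obstacle is showing that each $\partial B_m=\{\r\in\fR:\|\r-\r_m\|_2=\rho_m\}$ is $\sH^{K-1}$-null. This is not automatic for an arbitrary radius, since the front could contain a piece of a sphere.

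I would handle this by choosing the radii generically. The function $\r\mapsto\|\r-\r_m\|_2$ is smooth on $\fR$, so its level sets for distinct radii are disjoint. A finite measure $\sH^{K-1}\llcorner\fR$ can give positive mass to at most countably many of these disjoint sets. Therefore, for all but countably many $\rho\in(0,\rho_{\r_m})$ the sphere is null.

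Fixing such radii before extracting the finite subcover makes every $\partial B_m$ measure zero. This completes the plan.
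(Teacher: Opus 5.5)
Your proposal is correct and follows essentially the same route as the paper. You apply \cref{lem:local-linear-approx} to the coordinate projections, turn the additive error into a multiplicative distortion via \cref{lem:comparability}, then cover by finitely many balls and set $U_m=B_m\setminus\bigcup_{j<m}\overline{B_j}$. The paper obtains the reverse inequality by inverting the two-sided bound, so your extra comparability on the tangent hyperplane is harmless but unnecessary. Your Step 3 fills in a point the paper only asserts, and the generic-radius argument works. In fact niceness already makes every sphere $\{\|\r-\r_m\|_2=\rho\}$ meet $\fR$ transversally: for $\r\neq\r_m$ on the front, $\r-\r_m$ has coordinates of both signs and is never parallel to the positive normal, so any radius would do.
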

The proof of \cref{lem:piecewise-approximation} is in \cref{proof:piecewise-approximation}.

As both a useful and illuminating step, we now show that the comparability of the Pareto and Euclidean distances in \cref{lem:comparability} also lets us obtain a coarse upper and lower bound on the Pareto covering number with respect to the usual Euclidean covering number; \Cref{lem:coarse-bounds} shows that the Pareto and Euclidean covering numbers in $\fR$ are comparable.

\begin{lemma}[Coarse bound on Pareto covering numbers] \label{lem:coarse-bounds}
    Let $(\fR, d)$ be a nice Pareto front and its Pareto distance. Further, denote by $N_2(t,A)$ the $t$-covering number of $A$ with respect to the Euclidean distance. There is some $C_\fR > 0$ depending on $\fR$ so that for all $A \subset \fR$, the Pareto and Euclidean covering numbers equivalent up to multiplicative constants:
    \[C_\fR \cdot N_2(t,A)\leq \Npar(t,A) \leq N_2(t,A).\]
    Moreover, let $U \subset \fR$ be a ball in $\fR$ with respect to the Euclidean distance and let $\partial U$ be its boundary. Then, the growth rate of the $t$-Pareto covering number of $\partial U$ is strictly dominated by $t^{K-1}$: 
    \[\lim_{t\to 0}\, \Npar(t, \partial U) \cdot t^{K-1} = 0.\]
\end{lemma}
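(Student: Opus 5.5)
The plan is to transfer covers between the two distances using \cref{lem:comparability}. Covers are taken inside $A$ (Definition~\ref{def:Pareto-cover}), and $N_2(t,A)$ is understood the same way, with centers in $A$.

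\emph{Upper bound.} Let $S\subset A$ be a Euclidean $t$-cover of $A$. For each $\r\in A$ there is $\r_0\in S$ with $\|\r_0-\r\|_2\le t$. By the right inequality of \cref{lem:comparability}, $d(\r_0,\r)\le\|\r_0-\r\|_2\le t$. So $S$ is also a $t$-Pareto cover, and $\Npar(t,A)\le N_2(t,A)$. This inequality in fact holds directly from the definition: $\max_k(r_{0,k}-r_k)\le\|\r_0-\r\|_\infty\le\|\r_0-\r\|_2$.

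\emph{Lower bound.} Let $S$ be a $t$-Pareto cover of $A$. By the left inequality, each $\r\in A$ has some $\r_0\in S$ with $\|\r_0-\r\|_2\le d(\r_0,\r)/c_\fR\le t/c_\fR$. Hence $N_2(t/c_\fR,A)\le\Npar(t,A)$.

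It remains to compare $N_2(t/c_\fR,A)$ with $N_2(t,A)$ up to a constant factor. Since $\fR$ is a compact smooth $(K-1)$-manifold, it is Ahlfors regular: there are constants $a,b>0$ with $a s^{K-1}\le\sH(B_2(\r,s)\cap\fR)\le b s^{K-1}$ for all $\r\in\fR$ and $s\le 1$. A standard packing argument then bounds the number of $t$-balls with centers in $A$ needed to cover a single $(t/c_\fR)$-ball intersected with $A$ by a constant $D$ depending only on $c_\fR$ and $\fR$. For this step, pass to a maximal $t/2$-separated subset inside the larger ball. Its $t/4$-balls are disjoint and lie in an enlarged ball of radius $t/c_\fR+t/4$, so the doubling estimate caps their number. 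Alternatively, use the bi-Lipschitz charts of \cref{lem:local-linear-approx} together with compactness, which reduces the count to doubling in $\RR^{K-1}$. Either way, $N_2(t,A)\le D\,N_2(t/c_\fR,A)$, and we may take $C_\fR=1/D$.

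This doubling step is the only real work in the first claim. The one subtlety is to keep all centers inside $A$, which is handled by the usual factor-$2$ trick: replace each center by a point of $A$ within the same ball, and double the radius.

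\emph{Boundary claim.} By the first part, it suffices to show that $N_2(t,\partial U)\,t^{K-1}\to0$. Here $U=B_2(\r_0,s)\cap\fR$. By \cref{lem:local-linear-approx} and compactness, near $\partial U$ the front is the graph of a smooth map over the tangent hyperplane. For all but a measure-zero set of radii $s$, and more robustly by working in local charts, $\partial U$ is contained in a finite union of images of compact pieces of $(K-2)$-dimensional sets under Lipschitz maps. Concretely, $\partial U$ is the preimage of a regular value of the smooth function $\r\mapsto\|\r-\r_0\|_2^2$ restricted to $\fR$, hence a compact $(K-2)$-submanifold.

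If $s$ is a critical value, I would instead argue via the Minkowski content. The $t$-neighbourhood of $\partial U$ within $\fR$ lies in $\{s-t\le\|\r-\r_0\|_2\le s+t\}\cap\fR$. Its $\sH$-measure tends to $\sH(\partial U)=0$ as $t\to 0$, by continuity of measure and the fact that $\sH$ of a sphere intersected with $\fR$ is zero, since it is a level set of a nonconstant real-analytic-in-radius function. A maximal $t$-separated subset of $\partial U$ has disjoint $t/2$-balls inside this neighbourhood. Ahlfors regularity then gives $N_2(t,\partial U)\cdot a(t/2)^{K-1}\le\sH(\text{neighbourhood})\to0$, which is the claim.

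I expect the main obstacle to be justifying that the shell measure tends to zero for every radius. I would handle it by monotone convergence, noting that the shells decrease to the sphere-slice, whose $\sH$-measure is zero by the lemma's hypothesis.
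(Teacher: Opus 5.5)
Your two-sided bound $C_\fR N_2(t,A)\le\Npar(t,A)\le N_2(t,A)$ is correct and agrees with the paper except at the doubling step. There you invoke Ahlfors regularity of $\fR$ (or charts). The paper instead stays in the ambient space: a maximal $t$-separated subset of $A$ is a $t$-cover with centres in $A$, and any ball of radius $t/c_\fR$ in $\RR^K$ contains at most $(1+2/c_\fR)^K$ of its points by the volumetric packing bound. That route needs no regularity of $\fR$ and works for arbitrary $A\subset\RR^K$. Note that the paper displays this factor as $(1+2c_\fR)^K$; rescaling shows it should be $(1+2/c_\fR)^K$, which does not affect the conclusion.

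The gap is in the boundary claim. Sard handles almost every radius. For the remaining radii your shell argument needs $\sH(S_s)=0$, where $S_s:=\{\r\in\fR:\|\r-\r_0\|_2=s\}$. The shells decrease to $S_s\supseteq\partial U$, so it is $\sH(S_s)$, not $\sH(\partial U)$, that must vanish. Neither of your justifications establishes this. The lemma has no hypothesis about sphere slices. And $\fR$ is only smooth, not analytic, so being a level set of a nonconstant function gives nothing. A smooth hypersurface through $\r_0$ can contain a relatively open piece of the sphere of radius $s$ about $\r_0$; then $\sH(S_s)>0$ and your shell bound does not tend to zero. Smoothness alone cannot close this step.

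The missing idea is that the Pareto structure rules out tangency. Since $U$ is a ball in $\fR$, its centre $\r_0$ lies on the front. A point $\r\in\mathrm{int}\,\fR$ is critical for $g(\r)=\|\r-\r_0\|_2^2$ restricted to $\fR$ exactly when $\r-\r_0$ is parallel to $\n(\r)$. All coordinates of $\n(\r)$ exceed $\eta>0$, so for $\r\neq\r_0$ this forces $\r\llcurly\r_0$ or $\r_0\llcurly\r$. That is impossible for two points of the front, which are mutually non-dominating (as used in the proof of \cref{lem:comparability}). Hence every $s>0$ is a regular value of $g$ on $\mathrm{int}\,\fR$, and $S_s\cap\mathrm{int}\,\fR$ is an embedded $(K-2)$-submanifold. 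Therefore $\sH(S_s)\le\sH(S_s\cap\mathrm{int}\,\fR)+\sH(\partial\fR)=0$. With this, your estimate $N_2(t,\partial U)\,t^{K-1}\le 2^{K-1}a^{-1}\sH(W_{t/2})$, where $W_\tau=\{\r\in\fR: s-\tau\le\|\r-\r_0\|_2\le s+\tau\}$, tends to $2^{K-1}a^{-1}\sH(S_s)=0$ for every radius. The Sard case split then becomes unnecessary. The paper simply asserts that $\partial U$ is a piecewise smooth compact $(K-2)$-manifold of Minkowski dimension $K-2$; this same transversality observation is what justifies that assertion.
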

The proof of \cref{lem:coarse-bounds} is in \cref{proof:coarse-bounds-and-boundary-covering}.
This result follows almost immediately from \Cref{lem:comparability}. The second part is a corollary of the first; we can bound the Pareto covering number of low-dimensional sets in $\fR$. This is easy to do since asymptotically, the Pareto and Euclidean covering numbers qualitatively behave the same. 

To relate the covering numbers on the Pareto front to those on the linear approximation, we need to quantify how much an $\eps$-distortion affects the covering numbers.
The last lemma of this technical preparation allows us to control exactly that; we prove it in \cref{proof:distortion-to-covering}. Importantly, recall that the Pareto distance is a quasi-metric on the Pareto manifold by \cref{lem:pareto-distance-axioms}.

\begin{lemma}[Covering numbers under distortions] \label{lem:distortion-to-covering}
    Let $\phi : (U,d_U) \to (V,d_V)$ be a map between \emph{quasi-metric} spaces incurring $\varepsilon$-distortion. For each $A \subset U$, let $N_U(t,A)$ denote the $t$-covering number of $A$, and let $N_V(t,\phi(A))$ be the $t$-covering number of $\phi(A)$ under their respective quasi-metrics. Then: 
    \[N_V\big(t/(1 - \varepsilon), \phi(A)\big) \leq N_U(t,A) \leq N_V\big(t/(1 + \varepsilon),\phi(A)\big).\]
\end{lemma}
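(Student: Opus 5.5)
The plan is to transport covers through $\phi$ in both directions, using the distortion inequality to convert the covering radius. Throughout, a $t$-covering of $A$ means a set $S\subset A$ such that every $\r\in A$ has some $\r_0\in S$ with $d_U(\r_0,\r)\le t$. The order of arguments, ``center first'', is preserved because $\phi$ is applied to both arguments simultaneously. Since $\phi$ is a bijection onto $V$, it restricts to a bijection $A\to\phi(A)$, so cardinalities of corresponding covers agree.

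\emph{Upper bound.} Let $S'\subset\phi(A)$ be a minimal $t/(1+\varepsilon)$-cover of $\phi(A)$ in $d_V$, and set $S=\phi^{-1}(S')\subset A$, so $|S|=|S'|$. For $\r\in A$, pick $\v_0\in S'$ with $d_V(\v_0,\phi(\r))\le t/(1+\varepsilon)$, and write $\v_0=\phi(\r_0)$ with $\r_0\in S$. The right inequality of $\varepsilon$-distortion gives
\[
d_U(\r_0,\r)\le(1+\varepsilon)\,d_V\big(\phi(\r_0),\phi(\r)\big)\le t.
\]
Hence $S$ is a $t$-cover of $A$, and $N_U(t,A)\le N_V(t/(1+\varepsilon),\phi(A))$.

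\emph{Lower bound.} Let $S\subset A$ be a minimal $t$-cover of $A$ in $d_U$, and set $S'=\phi(S)\subset\phi(A)$. For $\v\in\phi(A)$, write $\v=\phi(\r)$ and pick $\r_0\in S$ with $d_U(\r_0,\r)\le t$. The left inequality gives
\[
(1-\varepsilon)\,d_V\big(\phi(\r_0),\phi(\r)\big)\le d_U(\r_0,\r)\le t,
\]
so $d_V(\phi(\r_0),\v)\le t/(1-\varepsilon)$, assuming $\varepsilon<1$. Hence $S'$ is a $t/(1-\varepsilon)$-cover of $\phi(A)$ of the same size, and $N_V(t/(1-\varepsilon),\phi(A))\le N_U(t,A)$.

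There is no real obstacle; the only points needing care are:
\begin{itemize}
\item keeping the asymmetric quasi-metric arguments in the same order on both sides of each inequality;
\item requiring $\varepsilon<1$, so that $t/(1-\varepsilon)$ makes sense (otherwise the left inequality is vacuous, reading $N_V(\infty,\cdot)\le N_U$);
\item possibly noting that minimal covers exist, or arguing with arbitrary finite covers and taking infima, which handles infinite covering numbers trivially.
\end{itemize}
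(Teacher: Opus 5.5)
Your proposal is correct and follows the same route as the paper's proof: for the upper bound you pull back a minimal $t/(1+\varepsilon)$-cover of $\phi(A)$ through the bijection, and for the lower bound you push forward a minimal $t$-cover of $A$, converting radii with the corresponding side of the distortion inequality. Your version is slightly cleaner than the paper's, which writes strict inequalities where the cover definition uses $\le$; your remarks on keeping the quasi-metric arguments in order and on needing $\varepsilon<1$ are the right caveats.
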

These results let us reduce covering of the smooth Pareto front to covering of its piecewise linear approximation. In the next section we hence focus on how to cover linear fronts.

\cameraonly{\clearpage}
\subsection{Simplex Coverings}\label{subsubsec:simplex-coverings}
\cameraonly{
\begin{wrapfigure}{r}{0.25\textwidth}
    \centering
    \input{figures/simplex_view.tex}
    \caption{The Pareto ball in the tangent space of a Pareto front is a simplex.}
    \label{fig:tangent}
    \vspace{-0.5cm}
\end{wrapfigure}
}
In this section, we bound the covering number for linear Pareto fronts. It turns out that Pareto balls in linear fronts are simplices (\cref{fig:tangent} and \cref{eqn:Pareto-ball-is-simplex} below), so this problem reduces to one of computing \emph{simplex covering numbers}.
We focus on covering subsets $A \subset \RR^\kappa$ of a $\kappa$-dimensional linear space with simplices. In the context of \Cref{thm:limiting-distribution}, $\kappa = K-1$ is the dimension of the linear front.
\begin{definition}[Simplex] \label{def:simplex}
    A subset $\Omega \subset \RR^\kappa$ is a $\kappa$-dimensional \emph{simplex} if it is the convex hull of $\kappa+1$ points in general linear position. That is, there exist $\mathbf{w}_0,\ldots, \mathbf{w}_\kappa \in \RR^\kappa$ such that any collection of $\kappa$ vectors is linearly independent, and
    \[\Omega = \mathrm{conv}(\mathbf{w}_0,\ldots, \mathbf{w}_\kappa).\]
\end{definition}

Throughout, let $\mathrm{Vol}$ denote the $\kappa$-dimensional Lebesgue measure.
And whenever $A, B \subset \RR^\kappa$ are subsets, $v \in \RR^\kappa$ is a vector, and $t > 0$ is a scalar, we use the notation:
\(A + B = \{\mathbf{a} + \mathbf{b}: \mathbf{a} \in A \textrm{ and } \mathbf{b} \in B\}\), \(A + \mathbf{v} = \{\mathbf{a} + \v : \mathbf{a} \in A\}\) and \(tA = \{t\mathbf{a} : \mathbf{a} \in A\}\).

In particular, it is not hard to see that Pareto balls in nice linear Pareto fronts are always simplices: Take the linear front $V=\crl{\v\in\RR^K: \v\cdot \n =0}$ where $n_k>0$ for all $k$. Then the Pareto ball centered at zero can be written as
\begin{equation}
\label{eqn:Pareto-ball-is-simplex}
    \begin{aligned}
    \crl{\v\in V: d(\mathbf{0},\v)\leq t}
    &=t\,\crl{\v\in V: v_k\geq -1 \text{ for all }k\in[K]} \\
    &=t\,\conv\prn{\w^{(1)},\ldots,\w^{(K)}} \quad \text{with} \quad w^{(j)}_k = 
    \begin{cases}
        -1 & j\neq k, \\
        \frac{\one \cdot \n}{n_k}-1 & j=k.
    \end{cases}
    \end{aligned}
\end{equation}
And hence it coincides with a simplex, which we denote $\Omega_\n$. We plot two views of such simplices in \cref{fig:simplex-Pareto}.
\begin{figure}[H]
    \centering
    \includegraphics[height=0.25\linewidth]{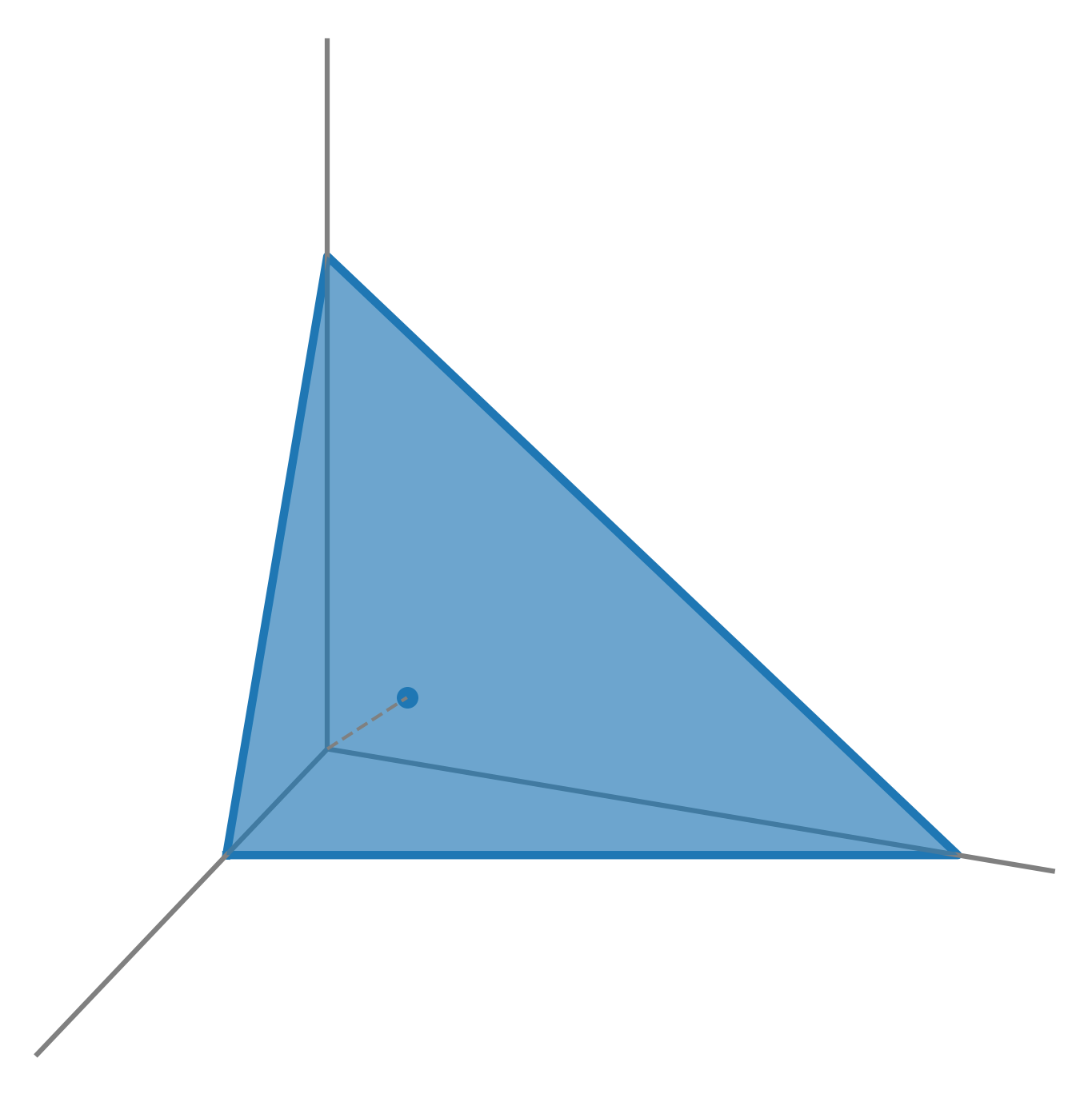} \hspace{2em}
    \includegraphics[height=0.25\linewidth]{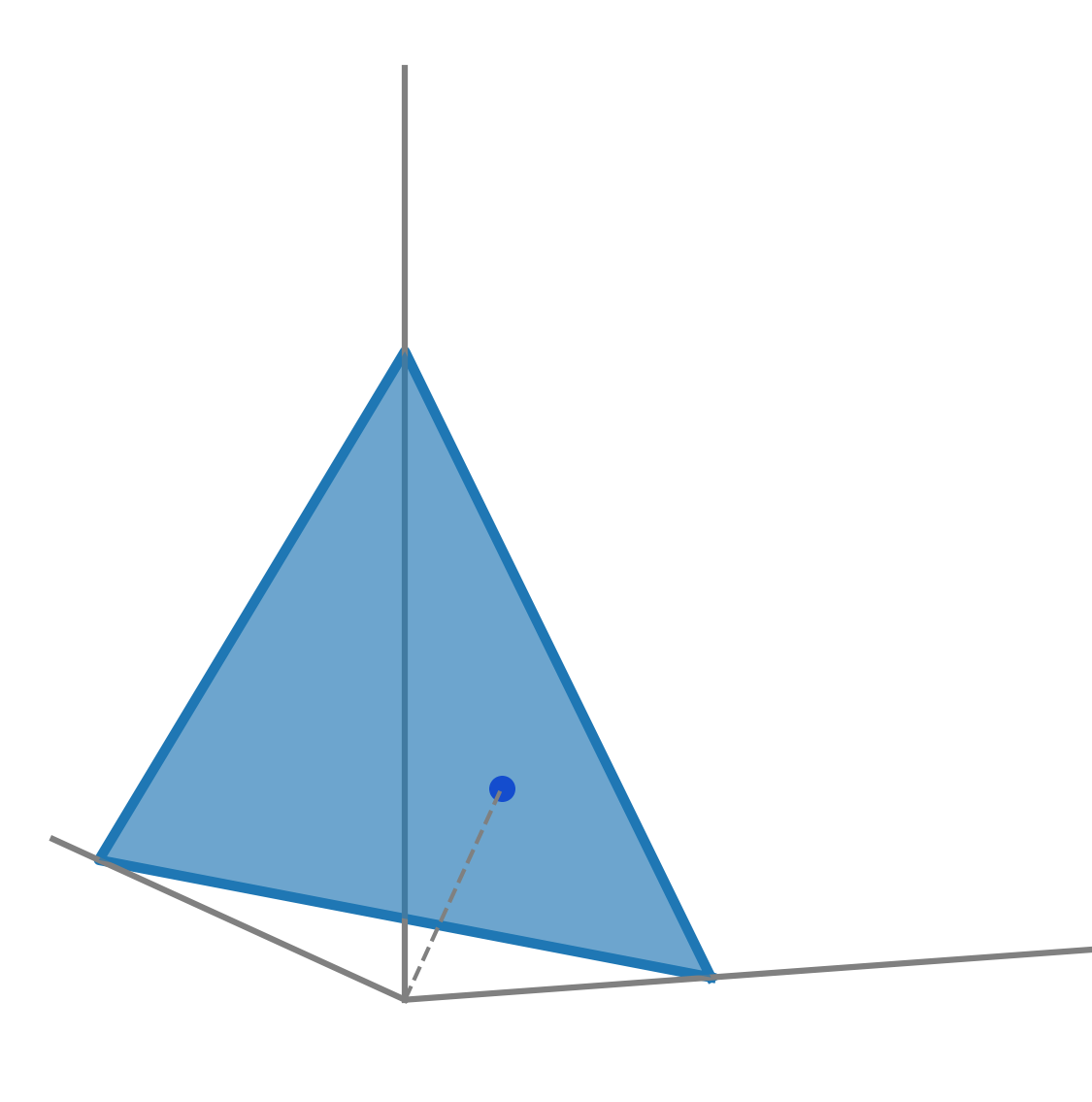}
    \caption{Two different views of a Pareto ball in a linear Pareto front. This is a higher-dimensional analog of \Cref{fig:Pareto-geometry-tikz}, see also \cref{fig:tangent}.}
    \label{fig:simplex-Pareto}
\end{figure}

A standard result in geometric measure theory shows that if $A \subset \RR^\kappa$ is a closed $\kappa$-dimensional rectifiable set, then its Minkowski content, defined as $\mathcal{M}(A) = \limsup_{t \to 0}\, \mathrm{Vol}\big(A + tB\big)$ where $B$ is the unit $\ell_\infty$-ball, coincides with the Hausdorff measure \citep[Theorem 3.2.39]{federer1996geometric}. This tells us that the asymptotic covering number is governed by the Minkowski content. This section first provides an analogous result, \Cref{prop:covering-to-minkowski}, but for the setting where the covering is done with general compact sets $\RR^\kappa$, before specializing to $\kappa$-dimensional simplices. 

\paragraph{A covering limit via periodic translative covering densities.}
We now define efficient or `economical' periodic coverings of $\RR^\kappa$ by a convex set $\Omega$, which we can think of as a repetitive tiling of $\RR^\kappa$ by many translated copies of $\Omega$. We are interested in the \emph{periodic translative covering density} $\theta(\Omega)$, which measures on average how many copies of $\Omega$ cover a random point in space. If $\Omega$ can tessellate the space, then the optimal density is one. But usually, these translations of $\Omega$ will need to overlap in order to cover the space. When $\Omega$ is a convex body, the value $\theta(\Omega)$ exists and is upper bounded by $O(\kappa \ln \kappa)$, as established in \citet{rogers1957note}.

Before we define these periodic coverings, we recall the standard notion of a lattice (capturing the periodicity of the covering). The following is based off of the reference \citet{micciancio2002complexity}.
\begin{definition}[Lattice, \citet{micciancio2002complexity}]
    A (full-rank) lattice $\Lambda \subset \RR^\kappa$ is a set
    \[\Lambda = \big\{z_1 \v_1 + \dotsm + z_\kappa \v_\kappa : z_1,\ldots, z_\kappa \in \ZZ\big\},\]
    where $V = \begin{bmatrix} \v_1& \dotsm & \v_\kappa\end{bmatrix} \in \RR^{\kappa \times \kappa}$ form a linearly independent set called a \emph{basis} of $\Lambda$. The \emph{fundamental parallelepiped} spanned by the basis $V$ is defined as the set
    \[P(V) = \big\{\lambda_1 \v_1 + \dotsm + \lambda_\kappa \v_\kappa : \lambda_1,\ldots, \lambda_\kappa \in [0,1)\big\}.\]
    The \emph{determinant} of the lattice is the $\kappa$-dimensional volume of $P(V)$, given by $\det\Lambda = |\det V|$.
\end{definition}

It is well-established that the determinant of any fixed lattice is independent of the choice of basis \cite{micciancio2002complexity}. We now define periodic coverings and the covering density $\theta(\Omega)$, following the exposition given by \citet{naszodi2018flavors}. 

\begin{definition}[Periodic translative covering density]
\label{def:periodic-covering-density}
    Let $\Omega \subset \RR^\kappa$ be convex, $\Lambda \subset \RR^\kappa$ be a lattice, and let $S \subset \RR^\kappa$ be a finite set. The \emph{periodic arrangement of translates}
    \[\mathcal{C}(\Omega, \Lambda, S) = \big\{\Omega + \v + \mathbf{s}: \v \in \Lambda \textrm{ and } \mathbf{s} \in S\big\}\]
    is an \emph{periodic $\Omega$-covering} of $\RR^\kappa$ if it covers $\RR^\kappa$.
    The \emph{periodic translative covering density} of $\Omega$ is the following infimum taken over all periodic $\Omega$-coverings:
    \[\theta(\Omega) := \inf \big\{\delta(\mathcal{C}) : \mathcal{C} \textrm{ is an periodic $\Omega$-covering of $\RR^\kappa$}\big\},\qquad \textrm{where } \delta(\mathcal{C}) = \frac{|S| \cdot \mathrm{Vol}(\Omega)}{\det \Lambda}, \]
    where the infimum is taken over all lattices $\Lambda$ and finite sets $S$, and $\delta(\mathcal{C})$ is called the \emph{density} of $\mathcal{C}$.
\end{definition}

\begin{definition}[$\Omega$-covering number]
\label{def:Omega-covering-number}
    Let $\Omega \subset \RR^\kappa$ be a compact set whose interior contains $\mathbf{0} \in \mathrm{int}(\Omega)$. Given any $\w \in \RR^\kappa$, we say that $\Omega + \w$ is a translation of $\Omega$ \emph{centered} at $\w$. Let $A \subset \RR^\kappa$ be any set.  The \emph{$\Omega$-covering number} $N(A;\Omega)$ of $A$ is defined as the minimal number of translations of $\Omega$ centered at points in $A$ needed to cover $A$, i.e.,
    \[N(A;\Omega) := \inf_{S \subset A} \, \big\{|S| : A \subset \Omega + S\big\}.\]
\end{definition}

Before we prove the main result of this section (\Cref{prop:covering-to-minkowski}), we provide two lemmata. We leave the proofs to \Cref{sec:technical-proofs-thm1}. 
The first lemma counts the number of centers of a periodic covering $\cC(\Omega, \Lambda, S)$ that lands inside any fundamental parallelepiped of $\Lambda$. This number is at most $|S|$, and in fact, equality holds unless there are distinct points $\mathbf{s}, \mathbf{s}' \in S$ that are perfectly spaced apart so that $\mathbf{s} - \mathbf{s}' \in \Lambda$. The proof is in \cref{proof:centers-in-fundamental-parallelepiped}.

\begin{lemma}[Centers in the fundamental parallelepiped] \label{lem:centers-in-fundamental-parallelepiped}
    Let $\Omega \subset \RR^\kappa$ be a compact set whose interior contains $\mathbf{0} \in \mathrm{int}(\Omega)$. Suppose that $\cC = \cC(\Omega, \Lambda, S)$ is an $\Omega$-periodic covering of $\RR^n$. Let $V$ be any basis of $\Lambda$ and let $P(V)$ be its fundamental parallelepiped. Let $\cC(V) = \{ \v + \mathbf{s} : \v \in \Lambda, \mathbf{s} \in S\} \cap P(V)$ be the set of centers contained in $P(V)$. Then $|\cC(V)| \leq |S|$. 
\end{lemma}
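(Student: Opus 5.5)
The plan is to prove the bound by an injectivity argument: map each center lying in $P(V)$ back to an element of $S$, and show this map is one-to-one. The key fact is that $P(V)$ is a fundamental domain for $\Lambda$. Every $\x\in\RR^\kappa$ has a unique decomposition $\x = \v + \mathbf{p}$ with $\v\in\Lambda$ and $\mathbf{p}\in P(V)$.

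To see this, write $\x = V\boldsymbol{\lambda}$ for a unique coefficient vector $\boldsymbol{\lambda}\in\RR^\kappa$, which exists because $V$ is a basis. Split $\boldsymbol{\lambda}$ coordinatewise as $\floor{\lambda_i} + \{\lambda_i\}$, where the fractional part lies in $[0,1)$. Then $\v = V\floor{\boldsymbol{\lambda}}\in\Lambda$ and $\mathbf{p}=V\{\boldsymbol{\lambda}\}\in P(V)$. Uniqueness holds because if $\v+\mathbf{p}=\v'+\mathbf{p}'$, then $\mathbf{p}-\mathbf{p}'\in\Lambda$ has coefficients in $(-1,1)$ that are also integers, hence zero. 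Equivalently, for each fixed $\mathbf{s}\in S$, the coset $\Lambda+\mathbf{s}$ meets $P(V)$ in exactly one point.

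Next, define $\Phi: S\to P(V)$ by sending $\mathbf{s}$ to the unique point of $(\Lambda+\mathbf{s})\cap P(V)$. Every center in $\cC(V)$ has the form $\v+\mathbf{s}\in P(V)$ for some $\mathbf{s}\in S$, and by uniqueness it equals $\Phi(\mathbf{s})$. Hence $\cC(V)\subseteq \Phi(S)$, which gives $|\cC(V)|\le|\Phi(S)|\le|S|$.

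As a remark, equality fails exactly when $\Phi(\mathbf{s})=\Phi(\mathbf{s}')$ for some $\mathbf{s}\neq\mathbf{s}'$, that is, when $\mathbf{s}-\mathbf{s}'\in\Lambda$. This matches the comment preceding the lemma.

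The only real step is the fundamental-domain decomposition, and it is routine. The covering property and the compactness of $\Omega$ are not needed for the inequality itself; they only matter in the lemma's later use.
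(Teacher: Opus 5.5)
Your proof is correct and is essentially the paper's argument. Both rest on the fact that $P(V)$ contains at most one point of each coset $\Lambda+\mathbf{s}$. The paper phrases this as an injection $\cC(V)\to S$, $\v+\mathbf{s}\mapsto\mathbf{s}$, whereas you use the reverse map $S\to P(V)$, whose image contains $\cC(V)$; the existence half of your decomposition only serves to define that map on all of $S$ and to justify your equality remark.
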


We use \cref{lem:centers-in-fundamental-parallelepiped} to prove the second lemma which is a simple case of the upcoming \Cref{prop:covering-to-minkowski}, when restricted to boxes; as it presents a significant first step, we mention it here explicitly.

\begin{lemma}[$\Omega$-covering number of boxes]
    \label{lem:covering-for-boxes}
    Let $\Omega \subset \RR^\kappa$ be a compact set with $\mathbf{0} \in \mathrm{int}(\Omega)$. Let $A = [0,1]^\kappa$. Then: 
    \[\lim_{t \to 0}\, N(A; t\Omega) \cdot t^\kappa = \frac{\theta(\Omega) \cdot \mathrm{Vol}(A)}{\mathrm{Vol}(\Omega)}.\]
\end{lemma}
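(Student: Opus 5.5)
To prove \cref{lem:covering-for-boxes}, I would establish matching upper and lower bounds on $\limsup$ and $\liminf$ of $N(A;t\Omega)\,t^\kappa$. Both bounds compare an optimal cover of the unit box $A=[0,1]^\kappa$ with near-optimal periodic coverings of $\RR^\kappa$. Throughout I use that $\Omega$ is compact with $\mathbf 0\in\mathrm{int}(\Omega)$. Hence there are radii $0<r\le R$ with $rB\subset\Omega\subset RB$, where $B$ is the unit $\ell_\infty$-ball. So boundary effects at scale $t$ live in a $tR$-neighbourhood of $\partial A$, whose volume is $O(tR)$ and therefore negligible relative to $\mathrm{Vol}(A)$.

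\textbf{Upper bound.} Fix $\epsilon>0$ and pick a periodic covering $\cC(\Omega,\Lambda,S)$ with $\delta(\cC)\le\theta(\Omega)+\epsilon$. Scaling gives the periodic covering $\cC(t\Omega,t\Lambda,tS)$ of $\RR^\kappa$ with the same density. Fix a basis $V$ of $\Lambda$. The translates $tP(V)+t\v$, $\v\in\Lambda$, tile $\RR^\kappa$. Only those meeting $A+tRB$ can contain centers whose translate hits $A$. By \cref{lem:centers-in-fundamental-parallelepiped}, each tile contains at most $|S|$ centers. The number of relevant tiles is at most $\mathrm{Vol}(A+t(R+\mathrm{diam}P(V))B)/(t^\kappa\det\Lambda)$. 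This yields at most
\[
\frac{|S|\,\mathrm{Vol}(A)(1+O(t))}{t^\kappa\det\Lambda}=\frac{\delta(\cC)\,\mathrm{Vol}(A)}{t^\kappa\,\mathrm{Vol}(\Omega)}(1+O(t))
\]
translates covering $A$. One subtlety is that \cref{def:Omega-covering-number} demands centers in $A$. I would handle this by working with $t\Omega'$ for a slightly smaller compact $\Omega'\subset\Omega$ (e.g.\ $(1-\epsilon)\Omega$ when $\Omega$ is star-shaped, or $\Omega\ominus\epsilon B$ in general). I would then move each center $\mathbf c$ to a point of $A$ within distance $t\epsilon r$, which keeps coverage. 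Since the statement is stated with $\theta(\Omega)$, I would instead use the simpler device of covering the shrunken box $A_t=[tR',1-tR']^\kappa$ where relevant. Letting $t\to0$ and then $\epsilon\to0$ gives $\limsup\le\theta(\Omega)\mathrm{Vol}(A)/\mathrm{Vol}(\Omega)$.

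\textbf{Lower bound.} This is the main obstacle: I must turn an arbitrary optimal cover of $A$ into a periodic covering of $\RR^\kappa$. Let $S_t\subset A$ be a minimal cover, $A\subset t\Omega+S_t$. Slightly enlarge the box to get a set that tiles periodically. Take the lattice $\Lambda_t=(1+2tR)\ZZ^\kappa$ and fill the boundary strip $[0,1+2tR]^\kappa\setminus A$ with extra translates of $t\Omega$. The strip has volume $O(tR)$, so by the crude bound from $rB\subset\Omega$ the number of extra translates is $O(tR/(tr)^\kappa)$. This is $o(t^{-\kappa})$ relative to the main term. The resulting set $S'_t$ gives a periodic $t\Omega$-covering $\cC(t\Omega,\Lambda_t,S'_t)$. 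Rescaling by $1/t$ gives a periodic $\Omega$-covering. The definition of $\theta$ then yields
\[
\theta(\Omega)\le\frac{|S'_t|\,t^\kappa\,\mathrm{Vol}(\Omega)}{(1+2tR)^\kappa}\le\frac{(N(A;t\Omega)+o(t^{-\kappa}))\,t^\kappa\,\mathrm{Vol}(\Omega)}{(1+2tR)^\kappa}.
\]
Taking $\liminf_{t\to0}$ gives $\liminf N(A;t\Omega)\,t^\kappa\ge\theta(\Omega)/\mathrm{Vol}(\Omega)=\theta(\Omega)\mathrm{Vol}(A)/\mathrm{Vol}(\Omega)$.

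The delicate points are that the strip filling is genuinely lower order and that the boundary-center constraint does not cost a constant factor. Both reduce to the volume estimate $\mathrm{Vol}((A+sB)\setminus A)=O(s)$ for the box. Combining the two bounds gives the limit.
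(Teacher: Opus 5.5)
Your proposal is correct and follows the paper's two-sided argument. The lower bound periodizes a minimal cover of the box and compares its density with $\theta(\Omega)$. Your enlarged lattice $(1+2tR)\ZZ^\kappa$ with strip filling is harmless but unnecessary: $[0,1]^\kappa+\ZZ^\kappa=\RR^\kappa$ already, which is why the paper can use $\mathcal{C}(\Omega,(1/t)\ZZ^\kappa,(1/t)S_t)$ directly. The upper bound restricts a near-optimal periodic covering via \cref{lem:centers-in-fundamental-parallelepiped} and adds an $O(t^{1-\kappa})$ boundary correction, as the paper does.

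For the centers-in-$A$ constraint, drop the $\Omega'$ device. It fails because centers of translates meeting $A$ can lie up to $tR$ outside $A$, not within $t\epsilon r$. Make the shrunken-box device explicit instead: translates meeting $[tR,1-tR]^\kappa$ have centers in $A$. The leftover shell $A\setminus[tR,1-tR]^\kappa$ is then covered by $O(t^{1-\kappa})$ translates of $trB\subset t\Omega$ centered in the shell, exactly as the paper treats its region $Z_t$.
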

The proof of \cref{lem:covering-for-boxes} is in \cref{proof:covering-for-boxes}.
The following result then generalizes \Cref{lem:covering-for-boxes} from boxes to general Jordan measurable sets $A$. To that end, recall the definition of Jordan measurable sets.
\begin{definition}[Jordan measurable set]
\label{def:Jordan-measurable}
    A set $A$ in a metric measure space is \emph{Jordan measurable} if $A$ is bounded and the boundary of $A$ has measure zero.
\end{definition}
Intuitively, we can generalize \cref{lem:covering-for-boxes} to Jordan measurable sets, because the boundary of some set $A$ having measure zero implies that there is a countable disjoint union of boxes $B_i$ contained in $A$ so that $A\setminus \bigcup B_i$ has volume zero. We obtain \cref{prop:covering-to-minkowski}, which (perhaps surprisingly) to the best of our knowledge seems to not have been established in the literature.

\begin{proposition}[$\Omega$-covering number of Jordan measurable sets] \label{prop:covering-to-minkowski}
    Let $\Omega \subset \RR^\kappa$ be compact with $\mathbf{0} \in \mathrm{int}(\Omega)$. Let $A \subset \RR^\kappa$ be a Jordan measurable set, so that it is bounded and has measure zero boundary $\mathrm{Vol}(\partial A)=0$. Then: 
    \[\lim_{t \to 0}\, N(A; t\Omega) \cdot t^\kappa = \frac{\theta(\Omega) \cdot \mathrm{Vol}(A)}{\mathrm{Vol}(\Omega)}.\]
\end{proposition}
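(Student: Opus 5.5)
The plan is to reduce to \cref{lem:covering-for-boxes} by a sandwich argument: squeeze $A$ between finite unions of grid boxes and use the scaling and translation invariance of the $\Omega$-covering number. Write $c := \theta(\Omega)/\mathrm{Vol}(\Omega)$.

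First, I would note that \cref{lem:covering-for-boxes} extends to any axis-parallel cube $Q = \mathbf{a} + h[0,1]^\kappa$. Translation invariance gives $N(\mathbf{a}+B;t\Omega)=N(B;t\Omega)$, and scaling gives $N(hB;t\Omega)=N(B;(t/h)\Omega)$. Together these yield
\[\lim_{t\to 0} N(Q;t\Omega)\, t^\kappa = c\,\mathrm{Vol}(Q).\]

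Second, fix $h>0$ and consider the dyadic grid of cubes of side $h$.

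\emph{Inner set.} Let $\mathcal{I}_h$ be the cubes contained in the interior of $A$. Since $A$ is Jordan measurable, the cubes meeting $\partial A$ have total volume $\to 0$ as $h\to0$. Hence $\mathrm{Vol}(\bigcup\mathcal{I}_h)\to\mathrm{Vol}(A)$.

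\emph{Outer set.} Let $\mathcal{O}_h$ be the cubes meeting $\overline{A}$. Their union contains $A$, and its volume also tends to $\mathrm{Vol}(A)$, because $\mathrm{Vol}(\partial A)=0$.

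\emph{Upper bound.} Covering is subadditive, but the definition requires centers to lie in $A$ itself. I would handle this by covering $\overline{\bigcup\mathcal{O}_h}$ with $t\Omega$ centered anywhere, and then re-centering. Since $\mathbf{0}\in\mathrm{int}(\Omega)$ and $\Omega$ is compact, there is $r>0$ with $rB_\infty\subset\Omega\subset RB_\infty$. For small $t$, a translate $t\Omega+\w$ meeting $A$ is contained in $t\Omega'+\mathbf{a}$ for some $\mathbf{a}\in A$, where $\Omega' := (1+2R/r)\Omega$. A slightly cruder alternative is to cover by $t(1-\epsilon)\Omega$, then shift each center to a point of $A$ within distance $\epsilon t r$; the shifted translate still covers the part of $A$ it covered before. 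This shows $N(A;t\Omega)\le N'(\bigcup\mathcal{O}_h;(1-\epsilon)t\Omega)$ plus a boundary term, where $N'$ denotes covering with centers anywhere.

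\emph{Centers anywhere.} I also need \cref{lem:covering-for-boxes} with unrestricted centers. The same box argument gives the same limit for it, since the periodic-covering upper bound is exactly of that form, and restricted centers only increase the count. Summing over the finitely many cubes of $\mathcal{O}_h$ gives
\[\limsup_{t\to0} N(A;t\Omega)\,t^\kappa \le c\,(1-\epsilon)^{-\kappa}\,\mathrm{Vol}\Big(\bigcup\mathcal{O}_h\Big).\]

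\emph{Lower bound.} Any cover of $A$ by translates of $t\Omega$ covers each cube of $\mathcal{I}_h$, but a single translate may be shared among neighbouring cubes. To avoid double counting, I would shrink each inner cube to a concentric cube of side $(1-\epsilon)h$. For $t<\epsilon h/(2R)$, a translate of $t\Omega$ meets at most one shrunken cube. Each translate that meets a shrunken cube can be re-centered into that cube by enlarging the scaling by a factor $(1+2R/r)$. I would prefer to avoid that enlargement and instead use the lower bound from the box lemma directly, which holds for arbitrary centers, since it is derived from density considerations. Then
\[N(A;t\Omega)\ \ge\ \sum_{Q\in\mathcal{I}_h} N'\big((1-\epsilon)Q;t\Omega\big),\]
and hence $\liminf_{t\to0} N(A;t\Omega)\,t^\kappa \ge c\,(1-\epsilon)^\kappa\,\mathrm{Vol}(\bigcup\mathcal{I}_h)$.

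Finally, let $h\to0$ and then $\epsilon\to0$; the upper and lower bounds coincide at $c\,\mathrm{Vol}(A)$.

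The main obstacle is the mismatch between centers constrained to $A$ (or to a box) and free centers, together with the double counting across adjacent boxes. Concretely, I must verify that the proof of \cref{lem:covering-for-boxes}, via \cref{lem:centers-in-fundamental-parallelepiped}, gives a lower bound valid for arbitrary centers, and an upper bound whose centers can be moved into the set at a cost of a $(1\pm\epsilon)$ rescaling that vanishes in the limit.
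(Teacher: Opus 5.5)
Your lower bound is correct and differs from the paper's. The paper uses a Whitney decomposition and adds a shell correction $T_i$ of size $O(t^{-(\kappa-1)})$ per box. You instead separate shrunken cubes by gaps of width $\epsilon h$, so each translate meets at most one of them. You also correctly note that the periodic-extension argument gives $N'(Q;t\Omega)\,t^\kappa\ge \theta(\Omega)\mathrm{Vol}(Q)/\mathrm{Vol}(\Omega)$ for every $t$ and for arbitrary centers. This avoids the shell bookkeeping entirely.

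\textbf{The gap is in the upper bound.} You claim every center of the free cover of $\bigcup\mathcal{O}_h$ by $(1-\epsilon)t\Omega$ can be moved to a point of $A$ within distance $\epsilon t r$. This is false for translates that reach into $A$ from outside: their centers can lie at distance up to roughly $tR$ from $A$, and no point of $A$ is nearby. You mention ``a boundary term'' but then drop it from the displayed $\limsup$ without any estimate. That term is exactly where $\mathrm{Vol}(\partial A)=0$ must be used quantitatively: you must show the part of $A$ left uncovered costs only $o(t^{-\kappa})$ translates centered in $A$. Enlarging every translate to $(1+2R/r)t\Omega$ does not help, since it loses a constant factor in the limit. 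Also, the inclusion $(1-\epsilon)t\Omega+\mathbf{w}\subset t\Omega+\mathbf{a}$ needs convexity of $\Omega$, which is available only implicitly through the definition of $\theta$.

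\textbf{The repair is a packing bound.} The translates you cannot re-center only covered points of $A\cap(\partial A)^{tR}$, the $\ell_\infty$-neighbourhood of the boundary. Discard them. A maximal $tr$-separated subset of $A\cap(\partial A)^{tR}$ covers it using at most $\mathrm{Vol}((\partial A)^{t(R+r)})/(tr)^\kappa=o(t^{-\kappa})$ centers, because $\partial A$ is compact and null.

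\textbf{A simpler route, which is what the paper does,} drops outer cubes and free centers from the upper bound altogether. Cover each inner cube via the box lemma, so the centers lie in $A$ automatically. The remainder $A\setminus\bigcup\mathcal{I}_h$ lies in the union $U_h$ of grid cubes meeting $\partial A$; cover it by a maximal $tr$-packing. This gives $\limsup_{t\to0}N(A;t\Omega)\,t^\kappa\le c\,\mathrm{Vol}(A)+\mathrm{Vol}(U_h)/r^\kappa$, and $\mathrm{Vol}(U_h)\to0$ as $h\to0$.
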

The proof makes use of \cref{lem:covering-for-boxes} in the way described above; we defer it to \Cref{proof:covering-to-minkowski}.

\paragraph{Application to simplices.} We now show that the form of the limit in \cref{prop:covering-to-minkowski} is special when we cover with \emph{simplices}.
In particular, it turns out that the covering density $\theta(\Omega)$ of all simplices is equal. It follows from the fact that any two simplices $\Omega$ and $\Omega'$ are affinely isomorphic: the same isomorphism maps periodic $\Omega$-coverings into periodic $\Omega'$-coverings. 

\begin{lemma}[Covering density of simplices is unique] \label{lem:covering-density}
    Let $\Omega, \Omega' \subset \RR^\kappa$ be two $\kappa$-dimensional simplices. Then $\theta(\Omega) = \theta(\Omega')$. 
\end{lemma}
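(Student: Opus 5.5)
The plan is to transport coverings between $\Omega$ and $\Omega'$ by an affine isomorphism and check that it preserves the density $\delta(\mathcal{C})$. Since both are $\kappa$-dimensional simplices, write $\Omega=\conv(\w_0,\ldots,\w_\kappa)$ and $\Omega'=\conv(\w_0',\ldots,\w_\kappa')$. There is a unique affine map $T(\x)=M\x+\b$ with $T(\w_i)=\w_i'$ for all $i$. The matrix $M$ is invertible because the difference vectors $\w_i-\w_0$ and $\w_i'-\w_0$, $i\in[\kappa]$, each form a basis of $\RR^\kappa$; this is the general position assumption in \cref{def:simplex}. Affine maps preserve convex hulls, so $T(\Omega)=\Omega'$.

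Next, fix any periodic $\Omega$-covering $\mathcal{C}(\Omega,\Lambda,S)$ of $\RR^\kappa$. For $\v\in\Lambda$ and $\mathbf{s}\in S$ we have
\[
T(\Omega+\v+\mathbf{s})=M\Omega+\b+M\v+M\mathbf{s}=\Omega'+M\v+M\mathbf{s},
\]
using $M\Omega+\b=\Omega'$. Set $\Lambda'=M\Lambda$, which is a full-rank lattice with basis $MV$, and $S'=MS$, which has $|S'|=|S|$ since $M$ is injective. Then the image arrangement is exactly $\mathcal{C}(\Omega',\Lambda',S')$. Because $T$ is a bijection of $\RR^\kappa$, this image covers $\RR^\kappa$, so it is a periodic $\Omega'$-covering.

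Its density follows from the change-of-variables factor. We have $\Vol(\Omega')=|\det M|\Vol(\Omega)$ and $\det\Lambda'=|\det(MV)|=|\det M|\det\Lambda$. Hence
\[
\delta(\mathcal{C}')=\frac{|S|\cdot|\det M|\Vol(\Omega)}{|\det M|\det\Lambda}=\delta(\mathcal{C}).
\]
Taking the infimum over all periodic $\Omega$-coverings gives $\theta(\Omega')\le\theta(\Omega)$. Applying the same argument to $T^{-1}$ gives the reverse inequality, so $\theta(\Omega)=\theta(\Omega')$.

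None of these steps is a real obstacle. The only points needing care are the invertibility of $M$, which comes from general position, and the observation that the translation part $\b$ of $T$ is absorbed into the image of $\Omega$, so the new translates still lie on a lattice plus a finite set.
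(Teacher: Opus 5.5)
Your proof is correct and follows the paper's argument: an affine isomorphism $T$ with $T(\Omega)=\Omega'$ carries each periodic $\Omega$-covering to a periodic $\Omega'$-covering of the same density, because the factor $|\det M|$ cancels between $\Vol(\Omega')$ and $\det(M\Lambda)$. Your version is slightly more careful than the paper's in two places: you use the linear part $MS$ for the shifts, and you invoke $T^{-1}$ explicitly to get the reverse inequality.
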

The proof is in \cref{proof:covering-density}.
As the final result of this section, we compute the volume of a simplex corresponding to the unit Pareto ball on a linear Pareto front with normal vector $\n$. Effectively, this is where the form of the density in \cref{thm:limiting-distribution} comes from, as we can plug it into \cref{prop:covering-to-minkowski}. The proof is in \cref{proof:volume-simplex} and is just a calculation.
\begin{lemma}[Volume of the simplex] \label{lem:volume-simplex}
    Let $\n = (n_1,\ldots, n_K)$ be a normal vector where $\|\n\|_2 = 1$ and $n_k > 0$ for all $k \in [K]$. Define, as in \cref{eqn:Pareto-ball-is-simplex}, the set $\Omega_\n := \big\{\v \in \RR^K : \v \cdot \n = 0\big\} \cap \big\{\v \in \RR^K : v_k \geq - 1\big\}$.
    Then, $\Omega_\n$ is a $(K-1)$-dimensional simplex and for $t\geq 0$, the $(K-1)$-dimensional Hausdorff volume of $t \Omega_n$ is 
    \[\sH^{K-1}(t \Omega_\n) = \frac{1}{(K-1)!}\frac{(\mathbf{1} \cdot \n)^{K-1}}{\prod_{k \in [K]} n_k} \cdot t^{K-1}.\]
\end{lemma}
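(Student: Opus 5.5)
The plan is to compute the volume directly: write $\Omega_\n$ as the convex hull of the vertices $\w^{(1)},\ldots,\w^{(K)}$ from \cref{eqn:Pareto-ball-is-simplex}, and use the fact that a $(K-1)$-simplex in a hyperplane of $\RR^K$ is a facet of a $K$-dimensional cone. By homogeneity, $\sH^{K-1}(t\Omega_\n)=t^{K-1}\sH^{K-1}(\Omega_\n)$, so it suffices to treat $t=1$.

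\textbf{Step 1: $\Omega_\n$ is a simplex.} Shift coordinates by $\u=\v+\one$. Then $\Omega_\n+\one=\{\u\in\RR^K: \u\succeq 0,\ \u\cdot\n=\one\cdot\n\}$. Set $c:=\one\cdot\n>0$. This set is the intersection of the nonnegative orthant with the hyperplane $\{\u\cdot\n=c\}$. Since every $n_k>0$, the intersection is the convex hull of the $K$ axis points $(c/n_k)\e_k$. These points are affinely independent, so the set is a $(K-1)$-simplex. Translating back recovers exactly the vertices $\w^{(k)}$.

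\textbf{Step 2: volume via the cone.} Let $\Delta=\conv(\mathbf 0,(c/n_1)\e_1,\ldots,(c/n_K)\e_K)$. This is a $K$-dimensional simplex with Lebesgue volume
\[
\frac{1}{K!}\prod_k \frac{c}{n_k}.
\]
It is also the cone over the face $F=\Omega_\n+\one$ with apex $\mathbf 0$, whose height is the distance from $\mathbf 0$ to the hyperplane, namely $c/\|\n\|_2=c$. The cone volume formula gives $\mathrm{Vol}_K(\Delta)=\frac{1}{K}\,c\,\sH^{K-1}(F)$. Solving,
\[
\sH^{K-1}(F)=\frac{K}{c}\cdot\frac{c^K}{K!\prod_k n_k}=\frac{1}{(K-1)!}\frac{c^{K-1}}{\prod_k n_k}.
\]
Translation invariance of $\sH^{K-1}$ transfers this to $\Omega_\n$, and the scaling factor $t^{K-1}$ from the first paragraph gives the claim.

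\textbf{Main obstacle.} The only step needing care is justifying the cone formula $\mathrm{Vol}_K=\frac1K\,h\,\sH^{K-1}(\text{base})$. I would prove it by slicing. The slice of $\Delta$ at height $s\in[0,c]$ along $\n$ is $(s/c)F$, with area $(s/c)^{K-1}\sH^{K-1}(F)$. Integrating over $s$ by Fubini along the unit direction $\n$ gives the factor $\frac{c}{K}$.
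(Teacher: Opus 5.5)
Your proof is correct, and it reaches the formula by a different route than the paper.

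\textbf{The paper's route.} The paper parameterizes the hyperplane $\{\v\cdot\n=0\}$ as a graph over the first $K-1$ coordinates, $\bu\mapsto(u_1,\ldots,u_{K-1},-\tfrac{1}{n_K}\sum_{k<K}u_kn_k)$. It computes the area factor $\sqrt{\det((J\Phi)^\top J\Phi)}=1/n_K$, which is where $\|\n\|_2=1$ enters. It then computes the $(K-1)$-dimensional Lebesgue volume of the preimage of $t\Omega_\n$. After a shift by $t\one$, this preimage is the corner simplex $\{\a\succeq\mathbf 0:\sum_{k<K}a_kn_k\le t(\one\cdot\n)\}$ with legs $t(\one\cdot\n)/n_k$ for $k<K$.

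\textbf{Your route.} You realize $\Omega_\n+\one$ as the facet opposite the origin of the $K$-dimensional corner simplex $\conv(\mathbf 0,(c/n_1)\e_1,\ldots,(c/n_K)\e_K)$ and invert the pyramid formula. The normalization enters only through the height $c/\|\n\|_2$. The slicing step implicitly uses $\Delta=\{\bu\succeq\mathbf 0:\bu\cdot\n\le c\}$, proved as in your Step 1. It also uses rotation invariance of Lebesgue measure and the fact that $\sH^{K-1}$ on an affine hyperplane is $(K-1)$-dimensional Lebesgue measure under an isometric identification; both are standard.

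\textbf{What each approach buys.} Your argument treats all coordinates symmetrically and avoids the Gram-determinant computation. Your Step 1 also proves that $\Omega_\n$ is a simplex with vertices $\w^{(k)}$, which the paper only asserts in \cref{eqn:Pareto-ball-is-simplex}. The paper's route yields a conversion factor between $\sH^{K-1}$ on the hyperplane and Lebesgue measure on $\RR^{K-1}$ that holds for arbitrary subsets. That is the same kind of area-formula identification used in the main proof of \cref{thm:limiting-distribution}. Your pyramid argument, by contrast, is tied to the specific orthant geometry of this one set.
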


\subsection{Main Proof of \texorpdfstring{\cref{thm:limiting-distribution}}{Theorem \ref{thm:limiting-distribution}}}
\label{subsubsec:fine-bound}

    We are now ready to prove \Cref{thm:limiting-distribution}. Let $A \subset \fR$ be a Jordan measurable set in a nice Pareto front. 

    Let $\theta_K$ be the covering density (\cref{def:periodic-covering-density}) of $(K-1)$-dimensional simplices, which is independent of the choice of simplex by \Cref{lem:covering-density}. As shown in \cite{rogers1957note}, this covering density satisfies $1 \leq \theta_K \leq O(K \log K)$. Choosing $C_K = \theta_K \cdot (K-1)!$ we have 
    \begin{equation} \label{eqn:beta}
        \beta\big(\n\big) = \theta_K \cdot (K-1)!\cdot \frac{\prod_{k \in [K]} n_k}{(\mathbf{1} \cdot \n)^{K-1}}.
    \end{equation}
    For any $\varepsilon > 0$, by \Cref{lem:piecewise-approximation} we can construct a piecewise linear Pareto manifold achieving $\varepsilon$-distortion.
    In particular, there is a decomposition $\{\phi_m : U_m\to V_m : m \in [M]\}$ where $U_m$ has diameter at most $\varepsilon$, $V_m$ is contained in the hyperplane normal to $\n_m = \n(\r_m)$ for some $\r_m \in U_m$, and there is bounded distortion of the volume $1 - \varepsilon < |J \phi_m| < 1 + \varepsilon$ for all $m$. We use the following notation: we denote the remainder region by $Z=\fR\setminus \bigcup U_m$ and let $N_m$ denote the covering number in the linear Pareto front $V_m$ under the Pareto distance. For any normal vector $\n$, we denote by $\Omega_{\n}$ the simplex defined in \Cref{lem:volume-simplex}, by which we also know that 
    \begin{equation} \label{eqn:local-simplex-volume}
        \sH^{K-1}(\Omega_{\n}) = \frac{1}{(K-1)!}\frac{(\mathbf{1} \cdot \n)^{K-1}}{\prod_{k \in [K]} n_k} = \frac{\theta_K}{\beta(\n)}.
    \end{equation}
    
    \paragraph{Upper bound.} We can compute an upper bound on the $t$-Pareto covering number:
    \begin{align*} 
        \Npar(t,A) &\leq \sum_{m \in [M]} \Npar\big(t, A \cap U_m\big) + \Npar(t, Z)
        \\&\leq \sum_{m \in [M]} N_m\big(t/(1 + \varepsilon), \phi_m(A \cap U_m)\big) + \Npar(t,Z),
    \end{align*}
    where the first step follows because the covering number is subadditive, and the second applies \Cref{lem:distortion-to-covering}. Let $\pi_m : V_m \to \mathbb{R}^{K-1}$ be any injective affine map. We can use the simplex covering notation from \Cref{subsubsec:simplex-coverings} and \Cref{prop:covering-to-minkowski} to obtain an upper bound on the (rescaled) first term
    \begin{align*} 
        \limsup_{t \to 0} \, N_m\big(t/(1 + \varepsilon), \phi_m(A\cap U_m)\big) \cdot t^{K-1} &\overset{(i)}{=} \limsup_{t \to 0}\, N\big(\pi_m (\phi_m(A \cap U_m)), (t/(1+\varepsilon)) \pi_m(\Omega_{\n_m})\big) \cdot t^{K-1}
        \\&\overset{(ii)}{=} \frac{\theta_K \cdot \mathrm{Vol}(\pi_m(\phi_m(A \cap U_m)))}{\mathrm{Vol}(\pi_m(\Omega_{\n_m}))} \cdot (1 + \varepsilon)^{K-1}
        \\&\overset{(iii)}{=} \frac{\theta_K \cdot \sH^{K-1}(\phi_m(A\cap U_m))}{\sH^{K-1}(\Omega_{\n_m})} \cdot (1 + \varepsilon)^{K-1}
        \\&\overset{(iv)}{\leq} \frac{\theta_K \cdot \sH^{K-1}(A\cap U_m)}{\sH^{K-1}(\Omega_{\n_m})} \cdot (1 + \varepsilon)^{K}
        \\&\overset{(v)}{=} \beta(\n_m) \cdot \sH^{K-1}(A \cap U_m) \cdot (1 + \varepsilon)^{K}.
    \end{align*}
    Here (i) uses the fact that the Pareto covering number is the same as the simplex covering by the derivations in \cref{subsubsec:simplex-coverings}, (ii) applies \Cref{prop:covering-to-minkowski}, (iii) uses the fact that $\pi_m$ is affine and applies the area formula (Theorem 3.2.3 in \cite{federer1996geometric}). Moreover, (iv) again applies the area formula and the fact that the determinant of the Jacobian is bounded $|J \phi_m| \leq 1 + \varepsilon$, so
    \[\sH^{K-1}\big(\phi_m(A \cap U_m)\big) = \int_{\phi_m(A \cap U_m)}\d \sH^{K-1} = \int_{A \cap U_m} |J\phi_m| \d \sH^{K-1} \leq (1 + \varepsilon) \cdot \sH^{K-1}(A \cap U_m),\]
    and (v) substitutes in \Cref{eqn:local-simplex-volume}.  Further, we can bound the second term using \Cref{lem:coarse-bounds}, so we have the following limit:
    \[\limsup_{t \to 0}\, \Npar(t,Z) \cdot t^{K-1} = 0.\]
    
    Therefore, the $t$-Pareto covering number $\Npar(t,A)$ satisfies:
    \begin{align}
        \limsup_{t \to 0}\, \Npar(t,A)\cdot t^{K-1} &\leq (1 + \varepsilon)^{K} \cdot \sum_{m \in [M]} \beta(\n_m) \cdot \sH^{K-1}(A \cap U_m) \notag
        \\&= (1 + \varepsilon)^{K} \cdot \int_A \sum_{m \in [M]} \beta\big(\n(\r_m)\big) \cdot \mathbf{1}_{ U_m} \, \d \sH^{K-1}. \label{eqn:integrand}
    \end{align}
    The function $(\beta \circ \n)(\r)$ is uniformly continuous, since $(\beta \circ \n)(\r)$ is smooth and $\fR$ is compact. It follows that for every $\delta > 0$, there is a sufficiently small $\varepsilon > 0$ such that whenever $\|\r - \r'\|_2 < \varepsilon$, then $|(\beta \circ \n)(\r) - (\beta \circ \n)(\r')| < \delta$. In particular, since each $U_m$ has diameter at most $\varepsilon$, we have that the integrand in \Cref{eqn:integrand} is bounded above by $\beta(\n(\r)) + \delta$. We obtain
    \[\limsup_{t \to 0} \, \Npar(t,A) \cdot t^{K-1} \leq (1 + \varepsilon)^K \cdot \int_{A \setminus Z} \big(\beta(\n(\r)) + \delta\big)\, \d \sH^{K-1}(\r).\]
    Finally, by sending $\delta \to 0$, which also forces $\varepsilon \to 0$, we deduce the upper bound:
    \[\limsup_{t\to 0} \, \Npar(t,A) \cdot t^{K-1} \leq \int_{A\setminus Z} \beta(\n(\r))\, \d \sH^{K-1}(\r) = \int_{A} \beta(\n(\r))\, \d \sH^{K-1}(\r),\]
    where in the last step we use the fact that $Z$ is $\sH^{K-1}$-measure zero. 
    
    \paragraph{Lower bound.} We now compute a matching lower bound. For each $s > 0$ and $U_m$, define the set of points in $U_m$ that are $s$-close to the boundary of $U_m$ in the following sense:
    \[\partial U_m^s = \big\{\r \in U_m : \exists \r' \in \fR \setminus U_m \textrm{ s.t. } d(\r', \r) < s\big\}.\]
    Notice that $\partial U_m^s$ converges to $\emptyset$ as $s$ goes to zero: for each $\r$, once $s$ becomes sufficiently small, $\r \notin \partial U_m^s$. Fix any $s > 0$ and Jordan measurable $A \subset \fR$. For all $0 < t < s$, we have:
    \[\Npar(t,A) \geq \sum_{m \in [M]} \Npar(t, A \cap U_m) - \Npar(t, \partial U_m^s).\]
    To see this, let $S \subset A$ be any minimal $t$-Pareto covering of $A$ and let $S_m \subset \partial U_m^s$ be a minimal $t$-Pareto covering of $\partial U_m^s$. We claim that the set $(S \cap U_m) \cup S_m$ forms a $t$-Pareto covering of $A \cap U_m$. Let's assume the claim for now. It implies the upper bound
    \[|S \cap U_m| + |S_m| \geq \Npar(t, A \cap U_m).\]
    We deduce the above inequality by taking a summation over $m \in [M]$, and combining with the facts that
    \[\Npar(t,A) = \sum_m |S \cap U_m| + |S \cap Z| \geq \sum_m |S \cap U_m|\qquad \textrm{and}\qquad |S_m| = \Npar(t,\partial U_m^s) .\]
    As for the claim, let $\r \in A \cap U_m$. Then, either it is covered by an element of $S \cap U_m$, or it was covered by an element of $S$ centered at a point $\r' \notin U_m$. In this case, $d(\r', \r) < t < s$, which implies that $\r \in \partial U_m^s$, and it is covered by an element of $S_m$. 

    Applying \Cref{lem:distortion-to-covering} and multiplying through by $t^{K- 1}$, we have:
    \begin{align*}
        \Npar(t,A) \cdot t^{K-1}\geq \sum_{m \in [M]} N_m(t/(1 - \varepsilon), \phi_m(A \cap U_m))\cdot t^{K-1} - N_m(t/(1 + \varepsilon), \phi_m(\partial U_m^s)) \cdot t^{K-1},
    \end{align*}
    where recall that $N_m$ gives the covering number for sets in $V_m$.
    By \Cref{prop:covering-to-minkowski}, the limit of the right-hand side exists as $t \to 0$, and so we obtain that:
    \begin{align*} 
        \liminf_{t \to 0} &\, \Npar(t,A) \cdot t^{K-1} 
        \\&\geq \sum_{m \in [M]} \frac{\theta_K \cdot \sH^{K-1}(\phi_m(A \cap U_m))}{\sH^{K-1}(\Omega_{\n_m})} \cdot (1 - \varepsilon)^{K-1} - \frac{\theta_K \cdot \sH^{K-1}(\phi_m(\partial U_m^s))}{\sH^{K-1}(\Omega_{\n_m})} \cdot (1 + \varepsilon)^{K-1}.
    \end{align*}
    Let $Z^s = \bigcup \partial U_m^s \cup Z$. Once again, relying on the uniform continuity of $(\beta \circ \n)(\r)$, we can apply the same argument used for the upper bound to deduce that for all $\delta > 0$, 
    \begin{align*} 
        \liminf_{t \to 0} &\, \Npar(t,A) \cdot t^{K-1} 
        \\&\geq (1 - \varepsilon)^K \cdot \int_{A \setminus Z} \big(\beta(\n(\r)) - \delta\big)\, \d \sH^{K-1}(\r) - (1 + \varepsilon)^K \cdot \int_{Z^s} \big(\beta(\n(\r)) + \delta\big)\, \d \sH^{K-1}(\r).
    \end{align*}
    This holds for all $s > 0$, so we can also let $s$ go to zero, where $Z^s$ converges to $Z$. Since $\sH^{K-1}(Z) = 0$, not only does the second term vanishes, but for the first term, it is equivalent to taking the integral over $A$ instead of $A\setminus Z$. We obtain: 
    \[\liminf_{t \to 0}\, \Npar(t,A) \cdot t^{K-1} \geq (1 - \varepsilon)^K \cdot \int_{A} \big(\beta(\n(\r)) - \delta\big)\, \d \sH^{K-1}(\r).\]
    Since this holds for all $\delta > 0$, we can let $\delta$ go to zero. This also forces $\varepsilon$ to zero:
    \[\liminf_{t \to 0}\, \Npar(t,A) \cdot t^{K-1} \geq \int_{A} \beta(\n(\r))\,\d\sH^{K-1}(\r).\]
    
    Since the limit infimum and limit supremum coincide, the limit exists and is given by the theorem statement.

\subsection{Technical Proofs} \label{sec:technical-proofs-thm1} \label{subsec:technical-proofs}

\subsubsection{Proof of \texorpdfstring{\Cref{lem:local-linear-approx}}{Lemma \ref{lem:local-linear-approx}}}
\label{proof:local-linear-approx}

    Choose any set of smooth coordinates $\psi: \tilde{U} \to \RR^{K-1}$ centered at $\r_0$, so that $\tilde{U} \subset \fR$ is an open set relative to $\fR$ containing $\r_0$, and $\psi(\r_0) = 0$. By shrinking $\tilde{U}$ if needed, we may assume that $\psi$ is $L$-bi-Lipschitz:
    \[\hspace{2em}\frac{1}{L} \|\r - \r'\|_2 \leq \|\psi(\r) - \psi(\r')\|_2 \leq L \|\r - \r'\|_2,\qquad \forall \r, \r'\in \tilde{U}.\]
    Define the following map $\theta : \mathsf{T}_{\r_0} \fR \to \RR^{K-1}$, which is also $L$-bi-Lipschitz:
    \[\theta(\r_0 + \mathbf{v}) = \nabla \psi(\r_0)\mathbf{v}.\]
    Define $\phi : U \to V$ by $\phi = \theta^{-1} \circ \psi$ where we set $U \subset \tilde{U}$ later and let $V = \phi(U)$. Also see \Cref{fig:linear-approx}.

    We now show that $\phi$ approximately preserves smooth maps. Let $f : \RR^K \to \RR$ be any smooth, 1-Lipschitz map. Let $\r, \r' \in U \subset \tilde{U}$ come from a sufficiently small region around $\r_0$. Taylor's theorem implies that:
    \begin{align*}
        f(\r) - f(\r') &= (f \circ \psi^{-1})(\psi(\r)) - (f \circ \psi^{-1})(\psi(\r'))
        \\&= \nabla (f \circ \psi^{-1})(\psi(\r'))\big(\psi(\r) - \psi(\r')\big) + o\big(\|\psi(\r) - \psi(\r')\|_2\big)
    \end{align*}
    For short, let $\hat{\r} = \phi(\r)$ and $\hat{\r}' = \phi(\r')$. Taylor's theorem also implies that:
    \begin{align*}
        f(\hat{\r}) - f(\hat{\r}') &= (f \circ \theta^{-1})(\psi(\r)) - (f \circ \theta^{-1})(\psi(\r'))
        \\&= \nabla (f \circ \theta^{-1})(\psi(\r')) \big(\psi(\r) - \psi(\r')\big) + o\big(\|\psi(\r) - \psi(\r')\|_2\big)
    \end{align*}
    Let $M_f(\r') = \nabla (f \circ \psi^{-1})(\psi(\r')) - \nabla(f \circ \theta^{-1})(\psi(\r'))$. The term in the absolute value of \Cref{eqn:bounded-difference} is:
    \begin{align*}
        \big(f(\r) - f(\r')\big) - \big(f(\hat{\r}) - f(\hat{\r}')\big) &= M_f(\r') \big(\psi(\r) - \psi(\r')\big) + o\big(\|\psi(\r) - \psi(\r')\|_2\big).
    \end{align*}
    By bi-Lipschitzness, we have that $\|\psi(\r) - \psi(\r')\|_2 = \Theta(\|\r - \r'\|_2)$, and so we have that:
    \[\big|\big(f(\r) - f(\r')\big) - \big(f(\hat{\r}) - f(\hat{\r}')\big)\big| \leq \|M_f(\r')\|_2 \cdot O(\|\r - \r'\|_2).\]
    It suffices to show that when $U \subset \tilde{U}$ is a sufficiently small region around $\r_0$, then $\|M_f(\r')\|_2$ also becomes arbitrarily small for all 1-Lipschitz-smooth $f$. Indeed, this holds:
    \begin{align*} 
        \|&M_f(\r')\|_2 
        \\&\overset{(i)}{=} \|\nabla f (\psi^{-1}(\psi(\r')))\nabla \psi^{-1}(\psi(\r')) - \nabla f(\theta^{-1}(\psi(\r')) \nabla \theta^{-1}(\psi(\r'))\|
        \\&\overset{(ii)}{=} \|\nabla f (\psi^{-1}(\psi(\r')))\nabla \psi^{-1}(\psi(\r')) - \nabla f (\psi^{-1}(\psi(\r')))\nabla \theta^{-1}(\psi(\r')) \\
        &\phantom{= \| \nabla f (\psi^{-1}(\psi(\r')))\nabla \psi^{-1}(\psi(\r'))}\,\, + \nabla f (\psi^{-1}(\psi(\r')))\nabla \theta^{-1}(\psi(\r')) -  \nabla f (\theta^{-1}(\psi(\r')))\nabla \theta^{-1}(\psi(\r'))\|
        \\&\overset{(iii)}{\leq} \|\nabla f(\r')\| \cdot \|\nabla \psi^{-1} (\psi(\r')) - \nabla \theta^{-1}(\psi(\r'))\| + \|\nabla f(\r') - \nabla f(\theta^{-1}(\psi(\r'))\| \cdot \|\nabla \theta^{-1} (\psi(\r'))\|
        \\&\overset{(iv)}{\leq} \|\nabla \psi^{-1}(\psi(\r')) - \nabla \psi^{-1}(\psi(\r_0))\|  + L \cdot \|\r' - \phi(\r')\|,
    \end{align*}
    where (i) applies the chain rule, (ii) adds and subtracts the inner terms, (iii) applies triangle inequality, (iv) simplifies the first term by using the facts that (a) the gradient $\nabla f$ is 1-Lipschitz and (b) the gradient $\nabla\theta^{-1} \equiv \nabla\psi^{-1}(\r_0)$ is constant, and (iv) also simplifies the second term also using the fact that $\nabla f$ is 1-Lipschitz and that $\theta$ is $L$-bi-Lipschitz. Both terms in (iv) can be made to be arbitrarily close to zero by controlling the size of $U$ around $\r_0$. Both terms go to zero as $\r' \to \r_0$.

\subsubsection{Proof of \texorpdfstring{\Cref{lem:comparability}}{Lemma \ref{lem:comparability}}}
\label{proof:comparability}

We begin by making the following observation on comparability of Pareto and Euclidean distances for \emph{linear} Pareto fronts.   
Let $\n \in \RR^K_{>0}$ be a unit normal vector where $n_k > \eta > 0$ is bounded away from 0. Let $\fR = \{\r \in \RR^K : \r \cdot n = 0\}$ be the hyperplane with normal vector $\n$. Then:
\begin{equation}
    \label{eqn:linear-comparability}
    \frac{\eta}{K(1 + \eta)} \big\|\r - \r'\big\|_2 \leq d(\r, \r').
\end{equation}
\begin{proof}[Proof of \Cref{eqn:linear-comparability}]
    Let $\v = \r - \r'$ for short, and let $I = \{k \in [K]: v_k > 0\}$ be the coordinates where $r_k' < r_k$. Let $J = [K] \setminus I$ be the remaining coordinates. Thus, $d(\r, \r') = \max_{k \in [K]} v_k$. Because $\v \cdot \n = 0$, we have:
    \[\sum_{i \in I} n_iv_i = -\sum_{j \in J} n_jv_j.\]
    We also have that $\|\v\|_1 = \sum_{i \in I} v_i - \sum_{j \in J} v_j$. The unit vector $\n$ satisfies $1 \geq n_k > \eta > 0$. Together:
    \[\sum_{i \in I} v_i \geq \sum_{i \in I} n_i v_i = - \sum_{j \in J} n_j v_j \geq - \, \eta \cdot \sum_{j \in J} v_j = \eta \cdot \|\v\|_1 - \eta \sum_{i \in I} v_i.\]
    Rearranging, we obtain that:
    \[\sum_{i \in I} v_i \geq \frac{\eta}{ 1 + \eta} \|\v\|_1 \geq \frac{\eta}{1 + \eta} \|\v\|_2.\]
    Dividing through by $K$ and using the fact that $\max_{k\in[K]} v_k \geq \frac{1}{K} \sum_{i \in I} v_i$ gives the result. 
\end{proof}

We can use \Cref{eqn:linear-comparability} combined with \cref{lem:local-linear-approx} to prove \cref{lem:comparability}.

\begin{proof}[Proof of \Cref{lem:comparability}]
    The Pareto distance is bounded above by the $\ell_\infty$-distance, which is upper bounded by the $\ell_2$-distance:
    \[d(\r,\r') = \max_{k \in [K]} \,\big(r_k - r_k'\big) \leq \|\r - \r'\|_\infty \leq \|\r- \r'\|_2.\]

    As for the lower bound, we use the niceness of $\fR$. We claim that for each $\r_0 \in \fR$, there is a sufficiently small open Euclidean ball $B_{\r_0}:= B(\r_0, \alpha_{\r_0}) \cap \fR$ such that the following has a positive lower bound:
    \begin{equation} \label{eqn:lower-bound-comparability}
        \inf_{\r \in B_{\r_0}}\inf_{\r' \in \fR} \,\frac{d(\r, \r')}{\|\r - \r'\|_2} > c_{\r_0} > 0.
    \end{equation}
    The result follows from compactness of $\fR$. In particular, there is a finite subcover of $\fR$ by balls $B_{\r_1},\ldots, B_{\r_N}$, and we let the constant in the lemma statement be defined as $c = \min_{n\in[N]} c_{\r_n}$. 

    Fix $\r_0$ and some $\varepsilon > 0$. \Cref{lem:local-linear-approx} constructs a diffeomorphism $\phi : U \to V$ from a relatively open set in $\fR$ containing $\r_0$ to a locally linearized Pareto front satisfying \Cref{eqn:bounded-difference}. Choose any ball $B_{\r_0} := B(\r_0, \alpha_{\r_0}) \cap \fR$ such that the ball $B_{\r_0}'$ with twice the radius is contained in $U$:
    \[B_{\r_0}' := B(\r_0, 2\alpha_{\r_0}) \cap \fR \subset U.\]
    We show \Cref{eqn:lower-bound-comparability} for two cases: (1)  $\r' \in B_{\r_0}'$ and (2) $\r' \notin B_{\r_0}'$.
    \begin{enumerate}
        \item \underline{Local}. Let $\r \in B_{\r_0}$ and $\r' \in B_{\r_0}'$. Let $\hat{\r} = \phi(\r)$ and $\hat{\r}' = \phi(\r')$. For each $k \in [K]$, the coordinate map projecting $\r$ down to its $k$ component $r_k$ is $1$-Lipschitz smooth. Thus, \Cref{lem:local-linear-approx} shows that:
        \[\phantom{\qquad \forall k \in [K]}\frac{r_k - r_k'}{\|\r - \r'\|_2} \geq \frac{\hat{r}_k - \hat{r}_k'}{\|\r - \r'\|_2} - \varepsilon, \qquad \forall k \in [K].\]
        Taking the max over $k \in [K]$ implies that:
        \begin{align*} 
            \frac{d(\r, \r')}{\|\r- \r'\|_2} &\geq \frac{d(\hat{\r}, \hat{\r}')}{\|\r - \r'\|_2} - \varepsilon
            \geq \frac{d(\hat{\r}, \hat{\r}')}{\|\hat{\r} - \hat{\r}'\|_2} \frac{\|\hat{\r} - \hat{\r}'\|_2}{\|\r - \r'\|_2} - \varepsilon
            \geq \frac{\eta}{K(1 + \eta)} \cdot (1 - \varepsilon \sqrt{K}) - \varepsilon,
        \end{align*}
        where in the last inequality, we use the facts that (a) \Cref{lem:local-linear-approx} shows that the map $\phi$ has bounded distortion, and (b) $V \subset \mathsf{T}_{\r_0}\fR$ is a linear Pareto front, so that \Cref{eqn:linear-comparability} shows that $d(\hat{\r}, \hat{\r}') / \|\r - \r'\| \geq \eta/(K(1 + \eta))$. Once $\varepsilon$ is sufficiently small, the lower bound is strictly positive. 
        
        \item \underline{Global}. Consider $\r \in B_{\r_0}$ and $\r' \in \fR \setminus B_{\r_0}'$. The product set $\mathcal{K} = \overline{B}_{\r_0} \times (\fR \setminus B_{\r_0}') \subset \fR^2$ is compact, while the function $(\r, \r') \mapsto d(\r, \r') / \|\r - \r'\|_2$ is continuous. Thus, it attains its minimum on $\mathcal{K}$. Since $\overline{B}_{\r_0}$ and $(\fR \setminus B_{\r_0}')$ are disjoint, this function is strictly greater than zero. This is because distinct points on $\fR$ are mutually non-dominating; $\r'$ does not dominate $\r$, so there is a coordinate $k \in [K]$ such that $r_k - r_k' > 0$. Thus, the numerator  $d(\r,\r')=\max_{k\in[K]}(r_k-r_k')>0$ for $(\r,\r')\in \mathcal{K}$. The denominator is upper bounded by the diameter of $\fR$, which is finite as $\fR$ is compact.
    \end{enumerate}
    That concludes the proof of \cref{lem:comparability}.
\end{proof}

\subsubsection{Proof of \texorpdfstring{\Cref{lem:piecewise-approximation}}{Lemma \ref{lem:piecewise-approximation}}}
\label{proof:piecewise-approximation}

We begin by proving the following fact that inverses of bounded distortions are bounded distortions:
Let $\varepsilon \in(0,1/2)$ and let $\phi : (U, d_U) \to (V, d_V)$ be an $\varepsilon$-distortion (recalling \cref{def:distortion}). Then the map $\phi^{-1} : (V,d_V) \to (U,d_U)$ is a $2\varepsilon$-distortion. 

    Indeed, whenever $\r, \r' \in U$, define $\hat{\r} = \phi(\r)$ and $\hat{\r}' = \phi(\r')$. By definition, the map $\phi$ satisfies:
     \[\hspace{2em}(1 - \varepsilon) \cdot d_V\big(\hat{\r}, \hat{\r}'\big) \leq d_U(\r,\r') \leq (1 + \varepsilon) \cdot d_V\big(\hat{\r}, \hat{\r}'\big), \qquad \forall \r, \r' \in U.\]
     Since $(1 + 2\varepsilon)(1 - \varepsilon)  - 1 = \varepsilon (1 - 2\varepsilon) > 0$ whenever $\varepsilon \in (0,1/2)$, we have: 
     \[d_V(\hat{\r}, \hat{\r}') \leq \frac{1}{1 - \varepsilon} \cdot d_U(\r, \r') \leq (1 + 2\varepsilon) \cdot d_U(\r, \r').\]
     And since $(1 + \varepsilon)(1 - \varepsilon) - 1 = - \varepsilon^2 < 0$, we have that:
     \[d_V(\hat{\r}, \hat{\r}') \geq \frac{1}{1+ \varepsilon} \cdot d_U(\r, \r') \geq (1 - \varepsilon) \cdot d_U(\r, \r').\]
     Together, these show that $\phi^{-1}$ is a $2\varepsilon$-distortion. 

    We now come to the proof of \cref{lem:piecewise-approximation}.
    We first prove the local result, showing that for each $\r_0 \in \fR$, there is a local linear approximation that preserves the Pareto distance. This is almost an immediate consequence of \Cref{lem:local-linear-approx}. The second step applies compactness to cover $\fR$ with a finite collection of well-approximated local regions.
    
    \paragraph{Step 1: Low-distortion for small neighborhoods.} Fix any $\r_0 \in \fR$ and without loss of generality, we may fix $\varepsilon \in (0,1/2)$. \Cref{lem:local-linear-approx} constructs a diffeomorphism $\phi : U \to V$ satisfying \Cref{eqn:bounded-difference}. For any $\r, \r' \in U$, let $\hat{\r} = \phi(\r)$ and $\hat{\r}' = \phi(\r')$ for short. Applying \Cref{eqn:bounded-difference} to the standard basis functions of $\RR^K$, where $\pi_k : \RR^K \to \RR$ projects to the $k$th coordinate. Each of these maps are 1-Lipschitz smooth, and so we obtain that for each $k \in [K]$:
    \[(r_k - r_k') = (\hat{r}_k - \hat{r}_k')  + \Delta_k,\]
    where $|\Delta_k| \leq \varepsilon \|\r - \r'\|_2$. We obtain the upper bound:
    \[d(\r, \r') = \max_{k \in [K]}\, r_k - r_k' = \max_{k \in [K]} \hat{r}_k - \hat{r}_k' + \Delta_k \leq d(\hat{\r}, \hat{\r}') + \varepsilon \|\r - \r'\|_2.\]
    The analogous lower bound holds. \Cref{lem:comparability} shows that there is a constant $c_\fR > 0$ depending on $\fR$ with
    \[
        \hspace{2em}c_\fR \cdot \|\r - \r'\|_2 \leq d(\r, \r') \leq  \|\r - \r'\|_2,\qquad \forall \r, \r' \in \fR.
    \]
    Let $\varepsilon' = \max\{\varepsilon, \varepsilon/c_\fR\}$. It follows that:
    \[ \left(1 - \varepsilon'\right) \cdot d(\r, \r') \leq d(\hat{\r}, \hat{\r}') \leq (1 + \varepsilon') \cdot d(\r, \r').\]
    Thus, $\phi^{-1}$ is an $\varepsilon'$-distortion. As shown above, this means $\phi$ is a $2\varepsilon'$-distortion. Reparametrizing $\varepsilon'$, we obtain a construction for an $\varepsilon$-distortion $\phi$ around any point $\r_0$ in $\fR$, where $\phi$ maps into a linear Pareto front with normal vector $\n(\r_0)$. By construction, $J \phi(\r_0)$ is the identity, so $|J\phi(\r_0)| = 1$. We may restrict the domain of $\phi$ to an open ball $B(\r_0, \alpha_{\r_0}) \cap \fR \subset U$ so that for all $\r$ in the domain of $\phi$,
    \[1 - \varepsilon < |J\phi(\r)| < 1 + \varepsilon.\]

    \paragraph{Step 2: Partitioning the Pareto front.} In the previous step, we saw that for any $\r_0 \in \fR$, there is a triple $(B, \phi, \n)$ where $B$ is a relatively open ball in $\fR$ containing $\r_0$, and $\phi$ is a $\varepsilon$-distortion mapping $B$ into a subset of $\{x \in \RR^K: x \cdot \n = 0\}$. We can always shrink the ball so that its radius is less than $\varepsilon$. Since $\fR$ is compact, there must be a finite collection of such triples:
    \[(B_1, \phi_1, \n_1),\ldots, (B_M, \phi_M, \n_M),\]
    where $B_1,\ldots, B_M$ cover $\fR$. To obtain the collection $\phi_m : (U_m, d) \to (V_m, d)$ in the statement, we set:
    \[U_m := B_m \setminus \bigcup_{i < m} \overline{B}_i \qquad \textrm{and}\qquad V_m =\phi_m(U_m).\]
    By construction, the first condition is satisfied. The second condition is also satisfied, since the only points potentially not covered by the family $\{U_m : m \in [M]\}$ are the boundary points of $B_m$ that we removed.

\subsubsection{Proof of \texorpdfstring{\Cref{lem:coarse-bounds}}{Lemma \ref{lem:coarse-bounds}}}
\label{proof:coarse-bounds-and-boundary-covering}

We begin with a basic fact about the scaling of $\ell_2$-covering numbers. Let $A \subset \RR^K$ and $c, t > 0$. Then: 
\begin{equation}
\label{eqn:covering-scaling}
    N_2(t,A) \leq \big(1 + 2 c\big)^K \cdot N_2(t/c, A).
\end{equation}

\begin{proof}[Proof of \Cref{eqn:covering-scaling}]
    First, recall the following fact (e.g., Corollary 4.2.13 in \cite{vershynin2018high}):
    Let $t > 0$ and let $B \subset \RR^K$ be the unit ball in $\ell_2$-norm. Then the covering and packing numbers with respect to the Euclidean distance scale as:
    \begin{equation}
    \label{eqn:l2-covering-standard}
        N_2(t, B) \leq P_2(t,B) \leq \left(1 + \frac{2}{t}\right)^K.
    \end{equation}

    When $c \geq 1$, then $N_2(t/c,A) \geq N_2(t,A)$, and so the inequality holds. Assume in the remainder that $c < 1$. Greedily construct a $t$-covering $S$ of $A$ as follows: while $A$ is not fully covered by $S$, choose any point in $A$ that is distance at least $t$ from $S$, and include it in $S$. The resulting $t$-covering is also a $t$-packing.
    
    Suppose that $S' \subset A$ is a $t/c$-covering of $A$. If $s' \in S'$, then by \eqref{eqn:l2-covering-standard}, we must have:
    \[\big|B(s', t/c) \cap S\big| \leq P_2(1/c, B) \leq \left(1 + 2c\right)^K.\]
    Every point in $A$ is within a distance of $t/c$ from a point in $S'$, so it follows that:
    \[|S| \leq \sum_{s' \in S'}\big|B(s', t/c) \cap S\big| \leq (1 + 2c)^K \cdot N_2(t/c, A)\]
    which is \Cref{eqn:covering-scaling}. 
\end{proof}

\begin{proof}[Proof of \Cref{lem:coarse-bounds}]
    This follows almost immediately from \Cref{lem:comparability}. By the upper bound on $d(\r,\r')$ in \Cref{eqn:metric-equivalence}, every $t$-Euclidean covering of a set $A$ is a $t$-Pareto covering, implying the first inequality:
    \[\Npar(t,A) \leq N_2(t, A).\]
    The lower bound also implies that a $t$-Pareto covering is a $t/c_\fR$-Euclidean covering, so that:
    \begin{align*} 
        \Npar(t,A) &\geq N_2(t/c_\fR, A)
        \\&\geq \left(1 + 2c_\fR\right)^{-K} N_2(t, A),
    \end{align*}
    where the last inequality uses \Cref{eqn:covering-scaling}.  

    Now, since $U = B \cap \fR$ is the intersection of a ball with $\fR$, the boundary $\partial U$ is a piecewise smooth, compact $(K-2)$-dimensional manifold. It is a standard fact that the Minkowski dimension of such manifolds exists so that we have (Section 3.2 of \cite{falconer2013fractal}):
    \[\lim_{t \to 0}\, \frac{\log N_2(t, \partial U)}{\log (1/t)} = K-2.\]
    This shows that $N_2(t, \partial U) \leq t^{-(K-2) \cdot (1 + o(1))}$ as $t\to 0$, which implies that
    \[\lim_{t \to 0}\, N_2(t, \partial U) \cdot t^{K-1} = 0.\]
    The second claim then follows immediately from the coarse bound on the covering numbers above, concluding the proof of \cref{lem:coarse-bounds}.
\end{proof}

\subsubsection{Proof of \texorpdfstring{\cref{lem:distortion-to-covering}}{Lemma \ref{lem:distortion-to-covering}}}
\label{proof:distortion-to-covering}

Let $S \subset A$ be a minimal subset such that $\phi(S)$ is a $(t/(1+\varepsilon))$-covering of $A$. Because $\phi$ is a bijection, the size of $S$ is $N_V(t/(1+\varepsilon), \phi(A))$. For every $\phi(\r') \in \phi(A)$, there is some $\phi(\r) \in \phi(S)$ such that $d_V(\phi(\r), \phi(\r')) < t/(1 + \varepsilon)$. Since $\phi$ is an $\varepsilon$-distortion, this implies that $d_U(\r, \r') < t$. In particular, $S$ is a $t$-covering of $A$. This implies the upper bound
\[N_U(t,A) \leq N_V\big(t/(1 + \varepsilon),\phi(A)\big).\]
The lower bound is similar: if $S$ is a minimal $t$-covering of $A$, then $\phi(S)$ is a $(t/(1 - \varepsilon))$-covering of $\phi(A)$.

\subsubsection{Proof of \texorpdfstring{\Cref{lem:covering-density}}{Lemma \ref{lem:covering-density}}}
\label{proof:covering-density}
    We claim that there is an affine isomorphism $T : \RR^\kappa \to \RR^\kappa$ such that $T(\Omega) = \Omega'$.
    
    Define the simplex $\Omega_0 := \mathrm{conv}(\mathbf{0}, \mathbf{e}_1,\ldots, \mathbf{e}_\kappa)$, where $\mathbf{e}_i \in \RR^\kappa$ is a standard basis vector. It suffices to show that there is an affine isomorphism from $\Omega_0$ to $\Omega_1$ where $\Omega_1 = \mathrm{conv}(\v_0, \v_1, \ldots, \v_\kappa)$. Let $L : \RR^\kappa \to \RR^\kappa$ be the following linear map:
    \[L\mathbf{e}_i = \v_i - \v_0.\]
    It is invertible because the points $\{\v_k: k = 0,1,\ldots, \kappa\}$ are in general linear position.  
    Set $T(\mathbf{u}) = \v_0 + L(\mathbf{u})$. Notice that the determinants of $T$ and $V$ coincide, $\det T = \det L$.

    Let $\mathcal{C}(\Omega, \Lambda, S)$ be an $\Omega$-periodic covering of $\RR^\kappa$. Then, Fact 2.1 of \citet{naszodi2018flavors} shows that the following is an $\Omega$-periodic covering of $\RR^\kappa$,
    \[T(\mathcal{C}):= \mathcal{C}(T(\Omega), L(\Lambda), T(S)).\] 
    Because any basis $V$ of $\Lambda$ is mapped to a basis $LV$ of $L(\Lambda)$, the density is preserved:
    \[\delta(T(\mathcal{C})) = \frac{|T(S)| \cdot \mathrm{Vol}(T(\Omega))}{\det L(\Lambda)} = \frac{|S| \cdot \det T \cdot \mathrm{Vol}(\Omega)}{\det L \det \Lambda} = \delta(\mathcal{C}),\]
    since $\det L(\Lambda) = \det (LV) = \det L  \det V = \det L \det \Lambda$, and where the last step used the fact that $\det T = \det L$.
    Thus, the infimum is also preserved, $\theta(\Omega) = \theta(T(\Omega))$.

\subsubsection{Proof of \texorpdfstring{\cref{lem:centers-in-fundamental-parallelepiped}}{Lemma \ref{lem:centers-in-fundamental-parallelepiped}}}
\label{proof:centers-in-fundamental-parallelepiped}
    We first observe that the following set of differences contains only one lattice point, namely $\mathbf{0} \in \Lambda$,
    \[\big\{\w - \w' : \w, \w' \in P(V)\big\} = \big\{\lambda_1 \v_1 + \dotsm + \lambda_\kappa \v_\kappa : \lambda_1,\ldots, \lambda_\kappa \in (-1,1)\big\}.\]
    Set this aside for now. We now construct an injective map $\iota : \cC(V) \to S$, which implies the result.
    
    Let $\mathbf{w} \in \cC(V)$. By definition of $\cC(V)$, there is some $\v \in \Lambda$ and $\mathbf{s} \in S$ such that $\mathbf{w} = \v + \mathbf{s}$. Choose any such $\mathbf{s}$ and define $\iota(\mathbf{w}) = \mathbf{s}$. We claim that $\iota$ is injective. Suppose that there are $\w, \w' \in \cC(V)$ such that $\iota(\w) = \iota(\w')$. It follows that
    \[\w - \w' = \big(\w - \iota(\w)\big) - \big(\w' - \iota(\w')\big) = \v-\v' \in \Lambda.\]
    By the observation at the beginning, we deduce that $\w = \w'$. 

\subsubsection{Proof of \texorpdfstring{\cref{lem:covering-for-boxes}}{Lemma \ref{lem:covering-for-boxes}}}
\label{proof:covering-for-boxes}

    For any $t > 0$, let $A_t$ denote the scaled set $(1/t) A = [0, 1/t]^\kappa$. To prove the result, we show that the limit supremum and infimum coincide. 

    \paragraph{Lower bound.}
    For each $t > 0$, let $S_t \subset A$ be a set such that the collection $\{t\Omega + \mathbf{s} : \mathbf{s} \in S_t\}$ forms a minimal covering of $A$ by $t\Omega$. Then, the collection $\{\Omega + (1/t) \mathbf{s} : \mathbf{s} \in S_t\}$ covers $A_t$. As the set $A_t$ tessellates $\RR^\kappa$, we can construct the $\Omega$-periodic covering $\mathcal{C}(\Omega, (1/t) \mathbb{Z}^\kappa,  (1/t)S_t)$ of $\RR^\kappa$ (recall \cref{def:periodic-covering-density}). By definition, the density of this covering $\mathcal{C}$ is at least $\theta(\Omega)$, and so
    \[\theta(\Omega) \leq |S_t| \cdot \mathrm{Vol}(\Omega) \cdot t^\kappa,\]
    where we use the fact that the set $(1/t) \mathrm{Id} \in \RR^{\kappa \times \kappa}$ forms a basis of $(1/t)\ZZ^\kappa$, so that $\det(1/t) \ZZ^\kappa = t^{-\kappa}$. 
    Dividing through by $\mathrm{Vol}(\Omega)$ and writing $|S_t| = N(A;t\Omega)$, we obtain the limit infimum:
    \[\liminf_{t\to 0}\,  N(A; t\Omega) \cdot t^\kappa \geq \frac{\theta(\Omega)}{\mathrm{Vol}(\Omega)}.\]

    \paragraph{Upper bound.} 
    For the upper bound, we will not only need $A_t = [0,1/t]^\kappa$, but for all $s < 1/t$, we will also let $A_{t,s} \subset A_t$ be the slightly smaller open box:
    \[A_{t,s} = \left(s, \frac{1}{t}- s\right)^\kappa.\]
    Let $\mathcal{C} = \cC(\Omega, \Lambda, S)$ be an $\Omega$-periodic covering of $\RR^\kappa$ such that $\delta(\mathcal{C}) < \theta(\Omega) + \varepsilon$. We now show that for sufficiently small $t > 0$, we can use $\mathcal{C}$ to construct an economical covering of $A$ by $t\Omega$, with:
    \[\limsup_{t \to 0} \, N(A;t\Omega) \cdot t^\kappa <\frac{\theta(\Omega) + \varepsilon}{ \mathrm{Vol}(\Omega)}. \]
    And since $\varepsilon > 0$ is arbitrary, the result follows for $A = [0,1]^\kappa$ by taking the limit as $\varepsilon$ goes to zero.

    Fix any basis $V$ of the lattice $\Lambda$, and let $D_V = \mathrm{diam}_{\ell_\infty}(P(V))$ and $D_\Omega = \mathrm{diam}_{\ell_\infty}(\Omega)$ be the diameters of the fundamental parallelepiped and of $\Omega$ under the $\ell_\infty$-distance. As a high-level roadmap, we will first cover nearly all of $A_t$ by a tessellation of $P(V)$, where the corresponding covering number can be related to the volume. Then, we can transfer to a simplex covering using \Cref{lem:centers-in-fundamental-parallelepiped}, which counts the number of elements of $\cC$ that are centered in $P(V)$. The main subtlety will be points in $A_t$ that are very close to its boundary, but the covering number of this region grows at a slower rate.
    
    We begin by constructing the \emph{lattice tesselation.} For each $t > 0$, let $\cL_t \subset \Lambda$ be the (possibly empty) set of lattice-valued translations $\v$ such that $P(V) + \v$  is fully contained in $A_t$,
    \[\cL_t = \big\{\v : \v \in \Lambda \textrm{ and }  P(V) + \v \subset A_t\big\}.\]
    Notice that every point $\mathbf{a} \in A_t$ is covered by exactly one translated parallelepiped $P(V) + \v $ where $\v \in \Lambda$, since $P(V)$ tessellates $\RR^\kappa$. Moreover, if this point $\mathbf{a}$ is $D_V$-separated from the boundary $\partial A_t$ under the $\ell_\infty$-distance and $\mathbf{a} \in P(V) + \v$, then we also have that $P(V) + \v \subset A_t$. It follows that:
    \[A_{t, D_V} \quad\subset \quad \bigcup_{\v \in \cL_t} P(V) + \v \quad \subset\quad A_t,\]
    where $A_{t,D_V}$ is the slightly smaller box. The volume of each term is:
    \[\left(\frac{1}{t} - 2D_V \right)^\kappa \quad\leq\quad |\cL_t| \det \Lambda\quad \leq \quad  \frac{1}{t^\kappa}.\hspace{3em}\]
    And so, the following limit exists: 
    \begin{equation} \label{eqn:lattice-limit}
        \lim_{t \to 0}\,  |\cL_t| \cdot t^\kappa = \frac{1}{\det \Lambda}.
    \end{equation}
    
    We now construct the \emph{simplex covering.} Recall that $\cC$ is a collection whose elements $\Omega + \w$ are translations of $\Omega$. Let $\cC_t \subset \cC$ consist of those whose centers $\w$ are covered by some $P(V) + \v$ for $\v \in \cL_t$:
    \[\cC_t = \left\{\Omega + \w \in \cC : \w \in \bigcup_{\v \in \cL_t} P(V) + \v\right\}.\]
    By \Cref{lem:centers-in-fundamental-parallelepiped}, the size of $\cC_t$ is at most $|\cC_t| \leq |S| \cdot |\cL_t|$. This subcollection $\cC_t$ covers almost all of $A_t$, except possibly points near the boundary $\partial A_t$. By a similar argument as before, if $\mathbf{a} \in A_t$ is $(D_V + D_\Omega)$-separated from the boundary and if $\mathbf{a} \in \Omega + \w$ is covered by an element of $\cC$, then $\w$ must be $D_V$-separated from the boundary. This further implies that $\w \in P(V) + \v$ for some $\v \in \cL_t$. In particular, this argument shows that the smaller box $A_{t,D_V + D_\Omega}$ is covered by $\cC_t$:
    \begin{equation} \label{eqn:small-box-cover}
        A_{t,D_V + D_\Omega} \quad \subset \quad \bigcup_{\Omega + \w \in \cC_t} \Omega + \w.
    \end{equation}
    We are almost done, since this shows that the asymptotic covering number of the set on the left satisfies:
    \[\limsup_{t \to 0}\, N(A_{t,D_V + D_\Omega}; \Omega) \cdot t^\kappa \overset{(i)}{\leq} \lim_{t \to 0}\, |S| \cdot |\cL_t| \cdot t^{\kappa} \overset{(ii)}{=} \frac{|S|}{\det \Lambda} \overset{(iii)}{=} \frac{\delta(\cC)}{\mathrm{Vol}(\Omega)} \overset{(iv)}{<} \frac{\theta (\Omega) + \varepsilon}{\mathrm{Vol}(\Omega)},\]
    where (i) combines \cref{eqn:small-box-cover} with the above upper bound $|\cC_t| \leq |S| \cdot |\cL_t|$, (ii) applies \cref{eqn:lattice-limit}, (iii) uses the definition of $\delta(\cC)$, and (iv) follows from our choice of $\cC$ to be an $\Omega$-periodic covering where $\delta(\cC) < \theta(\Omega) + \varepsilon$.
    To finish the proof, we show that the remaining region $Z_t = A_t \setminus A_{t, D_V + D_\Omega}$ satisfies
    \[\limsup_{t \to 0}\, N(Z_t;\Omega) \cdot t^\kappa = 0.\]
    This follows because $Z_t$ is nearly the $(\kappa-1)$-dimensional object $\partial A_t$. It has just been fattened up by a constant amount $D_V + D_\Omega$, which is independent of the size of the box $1/t$. 
    
    Let's formally show that $N(Z_t, \Omega) = O(t^{-(\kappa-1)})$. As $\mathbf{0}$ is in the interior of $\Omega$, for some $r > 0$, this set contains some $r$-box under the $\ell_\infty$-distance,
    \[[-r,r]^\kappa \subset \Omega.\]
    The region $Z_t$ is a thin shell at the boundary $\partial A_t$ with thickness $D_V + D_\Omega$. The number of $(\kappa-1)$-dimensional faces of the box $A_t$ is $2\kappa$; we can tile $Z_t$ with $r$-boxes centered in $Z_t$ using no more than
    \[2 \kappa \times \left\lceil \frac{1}{tr}\right\rceil^{\kappa-1} \times \left\lceil \frac{D_V + D_\Omega}{r}\right\rceil \textrm{ $r$-boxes.}\]
    It follows that $N(Z_t, \Omega) = O(t^{-(\kappa-1)})$ grows at a rate dominated by $t^{-\kappa}$. The result follows, since
    \begin{align*} 
        \limsup_{t \to 0}\, N(A;t\Omega) \cdot t^\kappa &= \limsup_{t \to 0}\,  N(A_t; \Omega) \cdot t^\kappa 
        \\&\leq \limsup_{t \to 0} \, N(A_{t, D_V + D_\Omega}; \Omega) \cdot t^\kappa + \limsup_{t \to 0}\,  N(Z_t; \Omega) \cdot t^{\kappa}< \frac{\theta(\Omega) + \varepsilon}{\mathrm{Vol}(\Omega},
    \end{align*}
    using the fact that $N(Z_t, \Omega) \cdot t^\kappa = O(t)$ goes to zero with $t$. 
        
    We can also compute the asymptotic covering number of boxes with any length $\ell A = [0,\ell]^\kappa$, since 
    \[\lim_{t \to 0} \, N\big(\ell A; t\Omega\big) \cdot t^\kappa = \lim_{t \to 0}\, N\big(A; (t/\ell) \Omega\big) \cdot (t/\ell)^\kappa \cdot \ell^\kappa = \frac{\theta(\Omega) \cdot \mathrm{Vol}(\ell A)}{\mathrm{Vol}(\Omega)}.\]

\subsubsection{Proof of \texorpdfstring{\Cref{prop:covering-to-minkowski}}{Proposition \ref{prop:covering-to-minkowski}}}
\label{proof:covering-to-minkowski}

    Let $A \subset \RR^\kappa$ be Jordan measurable. Let $D_\mathrm{in}, D_\mathrm{out} > 0$ satisfy $B_\infty(\mathbf{0}, D_\mathrm{in}) \subset \Omega \subset B_\infty(\mathbf{0}, D_\mathrm{out})$.

    \paragraph{Lower bound.} Since $A$ is Jordan measurable (recall \cref{def:Jordan-measurable}), there is a countable disjoint union of boxes $B_i \subset A$ covering almost all of $A$,
    \[\mathrm{Vol}\left(A \setminus \bigcup_{i \in \mathbb{N}} B_i\right) = 0,\]
    where each $B_i = B_\infty(\mathbf{v}_i, r_i)$ is an $\ell_\infty$-ball. As it is enough to cover the interior of $A$, this follows from the Whitney covering lemma \citep{grafakos2008classical}. Without loss of generality, we can assume that $r_i \leq 1$. For each $t > 0$, let $S_t \subset A$ be a set such that the collection $\{t\Omega + \mathbf{s} : \mathbf{s} \in S_t\}$ forms a minimal covering of $A$ by translations of $t\Omega$. We now show that there is a constant $C$ depending only on $\Omega$ such that for each box $B_i$, the size of the collection $B_i \cap S_t$ is bounded:
    \[|B_i \cap S_t| \geq N\big(B_i, t\Omega\big)  - Ct^{-(\kappa-1)}.\]
    To see this, we augment the centers in $B_i \cap S_t$ by a set $T_i$ so that $\{t \Omega + \mathbf{s} : \mathbf{s} \in (B_i \cap S_t) \cup T_i\}$ covers $B_i$. The centers in $B_i \cap S_t$ must cover all points in $B_i$, except possibly those within a distance of $D_\mathrm{out}$ to the boundary of $B_i$ (since those may have been covered by simplices with centers outside $B_i$). Thus, to ensure that we cover all of $B_i$, we just need to cover a thin shell near the boundary of $B_i$. There are $2\kappa$ faces of the ball $B_i$, and for each face, we need to cover an $r_i\times \cdots\times r_i \times D_\mathrm{out}$ box. Recall that we restricted $r_i \leq 1$. Thus, there is a choice of $T_i$ such that:
    \[|T_i| \leq 2\kappa \cdot \left\lceil \frac{D_\mathrm{out}}{D_\mathrm{in}}\right\rceil \cdot \left\lceil\frac{r_i}{tD_\mathrm{in}}\right\rceil^{\kappa-1} \leq C t^{-(\kappa - 1)},\]
    where $C$ is a constant depending only on $\Omega$. We use this to show that:
    \begin{equation} \label{eqn:lower-general-cover}
        \liminf_{t \to 0}\, N(A;t\Omega) \cdot t^\kappa \geq \frac{\theta(\Omega) \cdot \mathrm{Vol}(A)}{\mathrm{Vol}(\Omega)}.
    \end{equation}

    For any $\varepsilon > 0$, choose $M_\varepsilon \in \mathbb{N}$ sufficiently large so that
    \[\mathrm{Vol}\left(A \setminus \bigcup_{i =1}^{M_\varepsilon} B_i\right) < \varepsilon.\]
    For each $t > 0$, we obtain the chain of inequalities:
    \begin{align*}
       N(A;t\Omega) = |S_t| &\geq \sum_{i=1}^{M_\varepsilon} |B_i \cap S_t| \geq \sum_{i=1}^{M_\varepsilon} N(B_i; t\Omega) - CM_\varepsilon t^{-(\kappa - 1)}.
    \end{align*}
    Multiplying through by $t^\kappa$ and taking a limit infimum, we apply \Cref{lem:covering-for-boxes} to obtain
    \[
        \liminf_{t\to0}\, N(A;t\Omega)\cdot t^\kappa \geq \sum_{i=1}^{M_\varepsilon} \frac{\theta(\Omega) \cdot \mathrm{Vol}(B_i)}{\mathrm{Vol}(\Omega)} \geq \frac{\theta(\Omega) \cdot (\mathrm{Vol}(A) - \varepsilon)}{\mathrm{Vol}(\Omega)},
    \]
    where the last step uses $\sum_i \Vol(B_i) \geq  \Vol(A)-\eps$. Since this holds for all $\varepsilon > 0$, \Cref{eqn:lower-general-cover} holds.

    \paragraph{Upper bound.} Fix $r > 0$ and consider a tessellation of $\mathbb{R}^\kappa$ by $r$-boxes $B_\infty(\mathbf{v}, r)$. Because $A$ is bounded, there is a finite number of such boxes that are fully contained in $A$. Let us call these $B_1,\ldots, B_N$, and let $Z$ denote the remainder
    \[Z = A \setminus \bigcup_{i=1}^N B_i.\]
    For each $s > 0$, let $\partial A^s$ denote the $s$-expansion of $\partial A$, consisting of the points whose distance from $\partial A$ is less than $s$ under the $\ell_\infty$-distance. Notice that if $\mathbf{a} \in A \setminus \partial A^{2r}$ has $\ell_\infty$-distance more than $2r$ from $\partial A$, then every $r$-box containing $\mathbf{a}$ is fully contained in $A$. Thus, $Z \subset \partial A^{2r}$. 

    For every $t > 0$, the following holds:
    \[N(A;t\Omega) \leq \sum_{i=1}^N N(B_i;t\Omega) + N(Z;t\Omega).\]
    Each $N(B_i;t\Omega)$ is bounded by \cref{lem:covering-for-boxes}.  To bound $N(Z;t\Omega)$, we construct a $t\Omega$-covering of $Z$. To this end, let $S_{t;Z} \subset Z$ be a maximal $tD_\mathrm{in}$-packing under the $\ell_\infty$-norm. Since $S_{t;Z}$ is maximal, this means that every point in $Z$ must be within a distance of $tD_\mathrm{in}$ of a center in $S_{t;Z}$. As $t\Omega \supset B_\infty(\mathbf{0}, t D_\mathrm{in})$, this implies that this set is a $t\Omega$-covering of $Z$. 

    To bound the size of $|S_{t;Z}|$, note that as $S_{t;Z}$ form a $tD_\mathrm{in}$-packing, the collection of boxes with $\ell_\infty$-radii $tD_\mathrm{in}/2$ centered at $S_{t;Z}$ do not overlap. And because the centers of these boxes are contained in $Z \subset \partial A^{2r}$, each of these boxes must be contained in the larger expansion $\partial A^{2r + tD_{\mathrm{in}}}$. It follows that:
    \[ |S_{t;Z}| \cdot (t D_\mathrm{in})^\kappa \leq \mathrm{Vol}(\partial A^{2r + t D_\mathrm{in}}).\]
    Rearranging, we obtain an upper bound on $|S_{t;Z}| \cdot t^\kappa$. Putting these upper bounds together, we obtain:
    \begin{align*}
    N(A;t \Omega) \cdot t^\kappa &\leq \sum_{i=1}^N N(B_i; t\Omega) \cdot t^\kappa + |S_{t;Z}| \cdot t^\kappa
    \\&\leq \sum_{i=1}^N N(B_i; t\Omega) \cdot t^\kappa + \frac{1}{D_\mathrm{in}^\kappa} \cdot \mathrm{Vol}(\partial A^{2r + tD_\mathrm{in}}).
    \end{align*}
    By the continuity of the Lebesgue measure, the limit of the right-hand side exists as $t \to 0$, and we obtain:
    \begin{align*} 
    \limsup_{t \to 0}\, N(A;t\Omega) \cdot t^\kappa &\leq \sum_{i=1}^N \frac{\theta(\Omega) \cdot \mathrm{Vol}(B_i)}{\mathrm{Vol}(\Omega)} + \frac{1}{D_\mathrm{in}^\kappa} \cdot \mathrm{Vol}(\partial A^{2r})
    \\&\leq \frac{\theta(\Omega) \cdot \mathrm{Vol}(A)}{\mathrm{Vol}(\Omega)} + \frac{1}{D_\mathrm{in}^\kappa} \cdot \mathrm{Vol}(\partial A^{2r}),
    \end{align*}
    where the first inequality uses \Cref{lem:covering-for-boxes}, and the second inequality uses the fact that the $B_i$ are disjoint subsets of $A$. Finally, because $\partial A$ is a compact set, we have that $\partial A^{2r} \downarrow \partial A$ as $r \to 0$. By the continuity of the Lebesgue measure, we also have that $\mathrm{Vol}(\partial A^{2r}) \to \mathrm{Vol}(\partial A) = 0$, as the boundary has measure zero by assumption. This yields the upper bound:
    \[
    \limsup_{t \to 0}\, N(A;t\Omega) \cdot t^\kappa \leq \frac{\theta(\Omega)\cdot \mathrm{Vol}(A)}{\mathrm{Vol}(\Omega)}.
    \]
    This upper bound along with the lower bound \Cref{eqn:lower-general-cover} gives the result. 

\subsubsection{Proof of \texorpdfstring{\cref{lem:volume-simplex}}{Lemma \ref{lem:volume-simplex}}}
\label{proof:volume-simplex}
Let $\cH=\big\{\v \in \RR^K : \v \cdot \n = 0\big\}$ be the hyperplane through the origin. Parameterize $\cH$ via the map $\Phi:\RR^{K-1}\to \RR^K$ defined as $\Phi(\bu) = (u_1,...,u_{K-1},-\tfrac{1}{n_K}\sum_{k=1}^{K-1}u_kn_k)$.
Then the Hausdorff measure $\sH^{K-1}$ on $\cH$ can be written as $\sH^{K-1}(\Phi(A)) = \sqrt{\det \prn{(J\Phi)^\top (J\Phi)}} \mathrm{Vol}(A)$ (see Theorem 3.2.3 in \cite{federer1996geometric}),
where the Jacobian is given by
\begin{equation*}
    J\Phi = 
    \begin{pmatrix}
        &\I_{K-1}& \\
        -n_1/n_K &\dots & -n_{K-1}/n_K
    \end{pmatrix}
    \in\RR^{K\times (K-1)}.
\end{equation*}
A computation then yields that $\sqrt{\det \prn{(J\Phi)^\top (J\Phi)}}=1/n_K$, which implies
$\sH^{K-1}(\Phi(A)) = \tfrac{1}{n_K} \mathrm{Vol}(A)$. One can now verify that the preimage of $\Omega_\n$ under $\Phi$ is given by the set
\begin{equation*}
    A_t = \crl{\a\in\RR^{K-1}: a_k\geq -t, \sum_{k=1}^{K-1}a_kn_k \leq tn_K} = \crl{\a\in\RR^{K-1}:a_k\geq 0, \sum_{k=1}^{K-1} a_kn_k \leq t \sum_{k=1}^Kn_k}-t\one.
\end{equation*}
Another computation using the Jacobian of the map $\a\mapsto \sum_{k=1}^{K-1}n_ka_k$ yields that the volume of this set is given by
\begin{equation*}
    \sH^{K-1}(\Omega_\n)=\frac{1}{n_K}\mathrm{Vol}(A_t)= \frac{t^{K-1}}{(K-1)!} \frac{\prn{\sum_{k=1}^K n_k}^{K-1}}{\prod_{k=1}^{K}n_k}.
\end{equation*}
That yields the claim. 

\section{Deferred Proofs}

\subsection{Proofs for Monotonicity and Pareto Optimality}
\label{proof:basics-monotonicity}

\subsubsection{Proof of \texorpdfstring{\cref{lem:monotonicity-pareto}}{Lemma \ref{lem:monotonicity-pareto}}}

    Fix any $f\in\cF$ and set $\r= \bR(f)\in\bR(\cF)$. Consider the set
    \begin{equation*}
        S = \{\bu\in\bR(\cF): \bu\preceq \r\}
    = \bR(\cF)\cap\prod_{k=1}^K(-\infty,r_k],
    \end{equation*}
    where $\prod_{k=1}^K(-\infty,r_k]$ is the Cartesian product of sets. Clearly $S\neq\emptyset$ since $\r\in S$. Because $\bR(\cF)$ is compact by our standing assumption, and $S$ is a
    closed subset of $\bR(\cF)$, the set $S$ is also compact.
    Define the continuous scalarization $s:\RR^K\to\RR$ as
    $s(\bu)= \sum_{k=1}^K u_k$.
    By compactness of $S$, there exists $\bu^\star\in S$ such that
    \begin{equation*}
        \bu^\star \in \arg\min_{\bu\in S}s(\bu).
    \end{equation*}
    Since $\bu^\star\in\bR(\cF)$, there exists $f'\in\cF$ with $\bR(f')=\bu^\star$.
    By construction, $\bu^\star\preceq \r$, hence $\bR(f')\preceq\bR(f)$.
    
    It remains to show that $f'$ is Pareto optimal. Suppose otherwise. Then there
    exists $f''\in\cF$ such that $\bR(f'')\prec\bR(f')=\bu^\star$. In particular,
    $\bR(f'')\preceq \bu^\star\preceq \r$, so $\bR(f'')\in S$. Moreover,
    $s(\bR(f''))<s(\bu^\star)$ since at least one coordinate is strictly smaller and
    none is larger, contradicting the minimality of $\bu^\star$ over $S$.
    Therefore $f'\in\pareto(\cF,\bR)$ and $\bR(f')\preceq\bR(f)$. Since $f$ was arbitrary, the claim follows.

    Now let $\fstar\in \argmin_{f\in\cF}\cR_Q(f)$, which exists by our standing assumptions. Then there exists a $f$ with $\bR(f)\preceq \bR(\fstar)$ and $f\in \pareto(\cF,\bR)$ by the previous argument. By monotonicity, we hence know that $\cR_Q(f)\leq \cR_Q(\fstar)$, and $f$ is also a minimizer of $\cR_Q$. That concludes the proof.

\subsubsection{Proof of \texorpdfstring{\cref{lem:weak-sufficiency}}{Lemma \ref{lem:weak-sufficiency}}}

    For any $\r \in \range(\bR)$ with some $f \in \cF$ satisfying $\bR(f) = \r$, define $s$ as
    \[s(\r) = \cR_Q(f),\]
    and let $s$ be arbitrarily monotonic elsewhere. The map $s$ is well-defined: for any $f, f'$ such that $\bR(f) = \bR(f')$, we have by monotonicity
    \[\cR_Q(f) \leq \cR_Q(f') \qquad \textrm{and}\qquad \cR_Q(f') \leq \cR_Q(f),\]
    in which case $\cR_Q(f) = \cR_Q(f')$. By construction, we have that $\cR_Q = s \circ \bR$.

\subsection{Proof of \texorpdfstring{\cref{thm:Geometry-complexity-upper-bound}}{Theorem \ref{thm:Geometry-complexity-upper-bound}}}
\label{proof:Geometry-complexity-upper-bound}

We split the proof into the bound for Pareto ERM and the bound for Pareto EW.
\subsubsection{Proof for Pareto ERM}
    Fix $t>0$, let $N=\Npar(t,\cF,\bR)$ and let $f_1,\ldots, f_N$ be the minimal $t$-Pareto set in $\cF$ on which we compute the Pareto ERM $\Paretoerm{t}$. By \cref{def:Pareto-cover}, this means that for every $f\in \cF$ there is a $f_j$ with $\cR_k(f_j)\leq \cR_k(f)+t$ for all $k$. By \cref{def:modulus-of-monotonicity}, we know that hence $\cR_Q(f_j)\leq \cR_Q(f) +\uppermodulus(t)$. 
    Now consider the event
    \begin{equation*}
        \cE=\crl{ \max_{j\in[N]} \abs{\Rhat_Q(f_j)-\cR_Q(f_j)}\leq r} \quad \text{with} \quad r=\sqrt{\frac{\log (2N/\delta)}{2n}}.
    \end{equation*}
    Recalling that $\Rhat_Q(f)=\frac{1}{n}\sum_{i=1}^n \ell(f,z_i)$, we may apply Hoeffding's bound to these i.i.d.\ $[0,1]$-valued random variables $\ell(f,z_i)$ with expectation $\cR_Q(f)$. 
    In particular, we get that
    \begin{equation*}
        \PP(\cE^c) \leq \sum_{j\in[N]} \PP\prn{\abs{\sum_{i=1}^n \ell(f_j,z_i)-n\cR_Q(f_j)}>nr} \leq 2N\exp\prn{-\frac{2n^2r^2}{n}} = \delta
    \end{equation*}
    and so, $\PP(\cE)\geq 1-\delta$.
    Let $\fstar\in\argmin_{f\in\cF}\cR_Q(f)$.
    On $\cE$, for $i=i(\fstar)$ being the index in the covering that corresponds to $\fstar$ (meaning that $\cR_Q(f_{i})\leq \cR_Q(\fstar)+\uppermodulus(t)$), we get  that
    \begin{align*}
        &\cR_Q(\Paretoerm{t})-\cR_Q(\fstar)\\
        &\leq \underbrace{\cR_Q(\Paretoerm{t})-\Rhat_Q(\Paretoerm{t})}_{\leq r}+  \underbrace{\Rhat_Q(\Paretoerm{t})-\Rhat_Q(f_i)}_{\leq 0} +\underbrace{\Rhat_Q(f_{i})-\cR_Q(f_i)}_{\leq r} +\underbrace{\cR_Q(f_i)-\cR_Q(\fstar)}_{\leq \uppermodulus(t)},
    \end{align*}
    where we used that $\Paretoerm{t}$ is the empirical risk minimizer on the Pareto covering.
    And so, $\cR_Q(\Paretoerm{t})-\cR_Q(\fstar)\leq 2r+\uppermodulus(t)$, concluding the proof.
\subsubsection{Proof for Pareto EW}

Throughout the proof, we assume that $\fR$ is a nice Pareto front (\Cref{assumption: nice}), so that the limiting distribution $\mu$ computed in \Cref{thm:limiting-distribution} is well-defined. Recall $\mu$ induces the prior $\pi$ over the Pareto set, obtained by pulling back and normalizing,\footnote{Technically, the distribution is defined on the quotient space $\pareto(\cF, \bR)/\bR$. But, without loss of generality, we can assign a canonical representative to each class, and assume that $\bR$ is injective.}
\[\pi = \frac{1}{\mu(\fR)}\cdot \bR^* \mu,\]
so that $\pi(A) \propto \mu\big(\bR(A)\big)$ whenever $A \subset \pareto(\cF, \bR)$ is a measurable set.

We begin with an observation, which relates the Pareto covering number and the limiting distribution. First, we lower bound the mass of near-optimal models close to $\fstar \in \argmin_{f\in\cF} \cR_Q(f)$, where recall that we assumed $\fstar\in\pareto(\cF,\bR)$. In particular, for any $t > 0$, define the following subset:
\begin{equation}
\label{eq:Nt-def}
    \cN_t:=\crl{f\in\pareto(\cF,\bR):\ \bR(f)\leq \bR(f^\star)+t\one}.
\end{equation}

\begin{lemma}[Mass on near-optimal models] \label{lem:mass-model-ball}
    Let $\fR$ be a nice Pareto front and let $\bR(\fstar)$ be in the relative interior of $\fR$. For every $\gamma > 0$, there is some $t_0 > 0$ such that for all \(t\in(0,t_0]\),
    \begin{align}
        \label{eq:ass-local-prior-mass}
        \pi(\cN_t) &\geq \frac{1 - \gamma}{\Npar(t, \cF, \bR)}.
    \end{align} 
\end{lemma}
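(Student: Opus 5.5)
The plan is to reduce the claim to a statement about the $\mu$-mass of one small Pareto ball, and then compare that mass with the asymptotics of \cref{thm:limiting-distribution}.

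\textbf{Rewriting the target set.} Write $\r^\star=\bR(\fstar)$. By the definition of $\pi$, $\pi(\cN_t)=\mu(\bR(\cN_t))/\mu(\fR)$. Moreover
\[
\bR(\cN_t)=\crl{\r\in\fR:\r\preceq\r^\star+t\one}=\crl{\r\in\fR: d(\r,\r^\star)\le t}=:B_t ,
\]
which is exactly the set of front points that a Pareto ball centered near $\r^\star$ has to reach. So the claim is equivalent to
\[
\Npar(t,\fR)\,\mu(B_t)\ \ge\ (1-\gamma)\,\mu(\fR)\qquad\text{for all small } t.
\]
\cref{thm:limiting-distribution} applied to $A=\fR$ gives $\Npar(t,\fR)\,t^{K-1}\to\mu(\fR)$. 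Hence for any $\gamma'>0$ and all small $t$ we have $\Npar(t,\fR)\ge(1-\gamma')\mu(\fR)\,t^{-(K-1)}$. It therefore suffices to show the local estimate
\[
\liminf_{t\to0}\ \mu(B_t)\,t^{-(K-1)}\ \ge\ \theta_K\ \ge 1 ,
\]
where $\theta_K\ge1$ is the simplex covering density used in the main proof of \cref{thm:limiting-distribution}. Choosing $\gamma'$ and the slack in this liminf small relative to $\gamma$ then yields \eqref{eq:ass-local-prior-mass}.

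\textbf{Local estimate via linearization.} Fix $\eps>0$ and apply Step 1 of the proof of \cref{lem:piecewise-approximation} at $\r_0=\r^\star$. This gives a neighborhood $U\ni\r^\star$ and a map $\phi:U\to V\subset\mathsf T_{\r^\star}\fR$ with the following properties:
\begin{itemize}
\item $\phi$ is an $\eps$-distortion for the Pareto distance;
\item $\phi(\r^\star)=\r^\star$;
\item $1-\eps<|J\phi|<1+\eps$ on $U$.
\end{itemize}
Since $\r^\star$ is in the relative interior, $V$ contains a Euclidean neighborhood of $\r^\star$ in the tangent hyperplane.

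By \eqref{eqn:linear-comparability}, the tangent Pareto ball $\r^\star+s\,\Omega_{\n(\r^\star)}$ (recall \eqref{eqn:Pareto-ball-is-simplex}) has Euclidean diameter $O(s)$. So for $t$ small it lies inside $V$ when $s=t/(1+\eps)$. For any point $\phi(\r)$ in this tangent ball, $d(\r,\r^\star)\le(1+\eps)\,d(\phi(\r),\r^\star)\le t$. This gives the inclusion
\[
\phi(B_t\cap U)\ \supseteq\ \r^\star+\tfrac{t}{1+\eps}\,\Omega_{\n(\r^\star)} .
\]

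\textbf{Converting to $\mu$-mass.} By the area formula and the Jacobian bound,
\[
\sH(B_t)\ \ge\ (1+\eps)^{-1}\,\sH\bigl(\phi(B_t\cap U)\bigr)\ \ge\ (1+\eps)^{-K}\,t^{K-1}\,\sH(\Omega_{\n(\r^\star)}) .
\]
By \eqref{eqn:local-simplex-volume}, $\sH(\Omega_{\n(\r^\star)})=\theta_K/\beta(\n(\r^\star))$. The density $\beta\circ\n$ is continuous and positive, and by \cref{lem:comparability} $B_t$ shrinks to $\r^\star$, so on $B_t$ we have $\beta\circ\n\ge(1-\eps)\,\beta(\n(\r^\star))$ for small $t$. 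Putting these together,
\[
\mu(B_t)\ \ge\ (1-\eps)(1+\eps)^{-K}\,\theta_K\,t^{K-1}.
\]
Letting $\eps\to0$ gives the local estimate.

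\textbf{Main obstacle.} The delicate step is the inclusion for $\phi(B_t\cap U)$. It needs three things at once:
\begin{itemize}
\item the direction of the distortion inequality must give the containment of the tangent simplex, not the reverse;
\item the tangent simplex must fit inside $V$, which is exactly where the interior-point assumption enters;
\item points of $B_t$ lying outside $U$ must be harmless. They are: they can only add mass to $B_t$, and in any case \cref{lem:comparability} forces $B_t\subset U$ once $t$ is small.
\end{itemize}
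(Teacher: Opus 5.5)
Your proposal is correct and follows the paper's proof essentially step for step. Both arguments identify $\bR(\cN_t)$ with a Pareto ball $B_t$ at $\r^\star$, linearize at $\r^\star$ with the local $\varepsilon$-distortion $\phi$ from the proof of \cref{lem:piecewise-approximation}, fit the tangent simplex of radius $t/(1+\varepsilon)$ inside $\phi(B_t)$, use the Jacobian bound and continuity of $\beta\circ\n$ to get $\mu(B_t)\geq(1-O(\varepsilon))\,\theta_K t^{K-1}$, and conclude from $\Npar(t,\cF,\bR)\,t^{K-1}\to\mu(\fR)$ together with $\theta_K\geq 1$. One cosmetic slip: the tangent ball $\{\v : d(\v,\r^\star)\leq s\}$ that you actually use is $\r^\star - s\,\Omega_{\n(\r^\star)}$, not $\r^\star + s\,\Omega_{\n(\r^\star)}$; this changes nothing, since the two sets have the same volume.
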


\begin{proof}
    The set $\bR(\cN_t)$ is a Pareto ball in $\fR$. Let $\r^\star = \bR(\fstar)$. Then $\bR(\cN_t) = B_t$, where:
    \[B_t:= \{\r  \in \fR : d(\r^\star, \r) \leq t\}\qquad \textrm{and}\qquad \pi(\cN_t) = \frac{\mu(B_t)}{\mu(\fR)}.\]
    We hence proceed by lower bounding the volume of Pareto balls centered at $\r^\star$ in $\fR$ by linearizing the manifold at $\r^\star$. Let $\n^\star = \n(\r^\star)$, where $\n(\r)$ is the normal vector of $\fR$ at $\r$. Niceness imposes a lower bound $n_k(\r) > \eta > 0$ for all $k \in [K]$.  Also recall that $\beta(\n)$ is defined in \Cref{eqn:beta} as:
    \[\beta\big(\n\big) = \theta_K \cdot (K-1)!\cdot \frac{\prod_{k \in [K]} n_k}{(\mathbf{1} \cdot \n)^{K-1}},\]
    where $\theta_K$ is the covering density of $(K-1)$-dimensional simplices (see \Cref{lem:covering-density}). Since the covering density is lower bounded $\theta_K \geq 1$ \citep{rogers1957note}, by niceness we have that $\beta(\n^\star) > 0$ is bounded away from zero.

    For any $\varepsilon > 0$, \Cref{lem:piecewise-approximation} shows that there exists a smooth diffeomorphism $\phi : U \to V$ centered at $\r^\star$, where $U \subset \fR$ is relatively open in $\fR$ containing $\r^\star$ and $V \subset \{\v \in \RR^K : \v \cdot \n^\star = 0\}$ is in the linear subspace orthogonal to $\n^\star$. The diffeomorphism can be constructed so that $\phi(\r^\star) = \r^\star$, the map $\phi$ is an $\varepsilon$-distortion, and it is locally volume-preserving at $\r^\star$, where the determinant of the Jacobian is $|J\phi(\r)| = 1$. Define the ball $\tilde{B}_{t,\varepsilon}$ by:
    \[\tilde{B}_{t,\varepsilon} := \big\{ \r \in V : d(\r^\star, \r) < t/(1 + \varepsilon) \big\}.\]
    Because $\r^\star$ is in the relative interior of $\fR$ and $\phi$ is an $\varepsilon$-distortion, we have that $\tilde{B}_{t, \varepsilon} \subset \phi(B_t)$.

    We now lower bound $\mu(B_t)$ using the definition of $\mu$ from \cref{thm:limiting-distribution}, given by $\mu(B_t) = \int_{B_t} \beta(\n(\r)) \d \sH(\r)$.
    The functions $\phi(\r)$ and $\beta(\n(\r))$ are smooth, so for any $\gamma > 0$, by shrinking $U$ to a sufficiently small relatively open set around $\r^\star$, we can ensure that
    \begin{equation}
        \label{eq:lowerbounds}
        |J\phi(\r)| > 1 - \gamma \quad \textrm{and}\quad \beta(\n(\r)) > (1 - \gamma) \cdot \beta(\n^\star),
    \end{equation}
    where the second inequality is possible because $\beta(\n^\star) > 0$.
    We have:
    \begin{align*} 
        \mu(B_t)
        &\geq(1 - \gamma) \cdot \beta(\n^\star) \cdot \int_{\phi(B_t)} \big|J\phi(\r)\big|\d \sH(\r)
        \\&\geq (1 - 2\gamma) \cdot \beta(\n^\star) \cdot \sH\big(\phi(B_t)\big)\vphantom{\int_\phi}
        \\&\geq(1 - 2 \gamma) \cdot \beta(\n^\star) \cdot \sH\big(\tilde{B}_{t,\varepsilon}\big),
    \end{align*}
    where the first two inequalities follow from \cref{eq:lowerbounds} 
    and the fact that $(1 - \gamma)(1 - \gamma') \geq 1 - (\gamma + \gamma')$ whenever $\gamma, \gamma' \geq 0$, and the last inequality uses the fact that $\phi(B_t)$ contains $\tilde{B}_{t,\varepsilon}$. 
    Now  \Cref{lem:volume-simplex} implies that
    \begin{align*} 
        \sH(\tilde{B}_{t,\varepsilon}) &= \frac{1}{(K-1)!} \frac{(\mathbf{1} \cdot \n^\star)^{K-1}}{\prod_{k \in [K]} n_k^\star} \cdot \left(\frac{t}{1 + \varepsilon}\right)^{K-1} \geq (1 - \gamma) \cdot \frac{1}{(K-1)!} \frac{(\mathbf{1} \cdot \n^\star)^{K-1}}{\prod_{k \in [K]} n_k^\star} \cdot t^{K-1},
    \end{align*}
    where the inequality holds for $\varepsilon < \log (1/\gamma)/ K$, since $1/(1+\varepsilon)^{K-1} \geq 1/e^{\varepsilon \cdot (K-1)}$. Continuing from the above inequality, we obtain:
    \begin{align*} 
        \mu(B_t) &\geq (1 - 2 \gamma) \cdot \beta(\n^\star) \cdot \sH\big(\tilde{B}_{t,\varepsilon}\big)
        \\&\overset{\phantom{(iv)}}{=} (1 - 2 \gamma) \cdot\beta(\n^\star)  \cdot  (1 - \gamma) \cdot \frac{1}{(K-1)!} \frac{(\mathbf{1} \cdot \n^\star)^{K-1}}{\prod_{k \in [K]} n_k^\star} \cdot t^{K-1}
        \\&\geq (1 - 3\gamma) \cdot \theta_K \cdot t^{K-1},
    \end{align*}
    where the last inequality again uses the fact that $(1 - \gamma)(1 - \gamma') \geq 1 - (\gamma + \gamma')$.
    Finally, \Cref{thm:limiting-distribution} also shows that:
    \[\lim_{t \to 0} \, \Npar(t,\cF, \bR) \cdot t^{K-1} = \mu(\fR),\]
    so that for sufficiently small $t$, we have that $\mu(\fR) \leq (1 + \gamma) \cdot \Npar(t,\cF, \bR) \cdot t^{K-1}$. We obtain that:
    \[\frac{\mu(B_t)}{\mu(\fR)} \geq \frac{(1 - 3 \gamma) \cdot \theta_K \cdot t^{K-1}}{(1 + \gamma) \cdot \Npar(t, \cF, \fR) \cdot t^{K-1}} \geq (1 - 4 \gamma) \cdot \frac{\theta_K}{\Npar(t, \cF, \fR)},\]
    where we use the fact that for $\gamma < 1$, that $1/(1 + \gamma)\geq 1 -\gamma$. 
    
    The result then follows from the lower bound $\theta_K \geq 1$, and reparametrizing $\gamma$.
\end{proof}

We now prove the bound for Pareto EW from \cref{thm:Geometry-complexity-upper-bound}.
Fix \(t\in(0,1]\) and recall $\cN_t$ from \Cref{eq:Nt-def}. We just established by \Cref{lem:mass-model-ball} that the mass $\pi(\cN_t)$ is bounded away from zero. 
Recall the definition of Pareto EW from \cref{subsec:Pareto-ERM} as the aggregated model $f_{\rhohat}=\EE_{f\sim \rho}[f]$ using the Gibbs posterior $\rhohat$ from \cref{eq:ew-posterior}.
We now show that for any \(\delta\in(0,1)\) and $\lambda>0$, with probability at least \(1-\delta\)
\begin{equation}
\label{eq: fixed lambda EW bound}
    \cR_Q(f_{\rhohat})-\inf_{f\in\cF}\cR_Q(f)
\le \uppermodulus(t)
+\frac{\lambda}{4n}
+\frac{2\log\prn{2/(\delta \pi(\cN_t)})}{\lambda}.
\end{equation}

Indeed, by Theorem~4.2 in~\citet{alquier2024user}, the exponential weights estimator satisfies with probability at least \(1-\delta\) and for every \(\lambda>0\),
\begin{equation*}
\EE_{f\sim \rhohat}\!\big[\mathcal R_Q(f)\big]
\le 
\inf_{\rho\ll\pi}\left\{
\EE_{f\sim\rho}[\mathcal R_Q(f)]
+\frac{\lambda}{4n}
+\frac{2\big(\KL(\rho,\pi)+\log(\tfrac{2}{\delta})\big)}{\lambda}
\right\}.
\end{equation*}

Now, define the localized distribution
\begin{equation*}
    \rho_t(A) := \pi(A \mid \mathcal N_t) = \frac{\pi(A\cap \mathcal N_t)}{\pi(\mathcal N_t)}.
\end{equation*}
Then $\rho$ is absolutely continuous with respect to $\pi$ and 
$
\KL(\rho_t,\pi) =\int \log\left(\frac{\d\rho_t}{\d\pi}\right)\d\rho_t =\log\left(\frac{1}{\pi(\cN_t)}\right).
$
Then the infimum is upper bounded by the evaluation at this specific $\rho_t$ so that we obtain
\begin{equation*}
\EE_{f\sim \rhohat}\brk{\mathcal R_Q(f)}
\leq 
\EE_{f\sim\rho_t}[\mathcal R_Q(f)]
+\frac{\lambda}{4n}
+\frac{2\log\prn{2/(\delta\pi(\cN_t))}}{\lambda}.
\end{equation*}

Next, since \(\rho_t\) is supported on \(\mathcal N_t\), by the definition of the upper modulus (Definition~\ref{def:modulus-of-monotonicity}, applied with comparator \(f^\star\)) we have for all \(f\in\mathcal N_t\), $\cR_Q(f)\le \cR_Q(\fstar)+\uppermodulus(t)$,
and therefore
\begin{equation*}
\EE_{f\sim\rho_t}[\mathcal R_Q(f)]
\le 
\mathcal R_Q(f^\star)+\uppermodulus(t).
\end{equation*}
Combining the last two displays yields
\begin{equation*}
\EE_{f\sim \rhohat}\brk{\mathcal R_Q(f)}
\leq 
\cR_Q(\fstar)+\uppermodulus(t)
+\frac{\lambda}{4n}
+\frac{2\log\prn{2/(\delta\pi(\mathcal N_t))}}{\lambda},
\end{equation*}
for all \(\lambda>0\) and with probability at least $1-\delta$. 
Finally, since the loss defining \(\mathcal R_Q\) is convex in the prediction, Jensen's inequality implies that the aggregated predictor satisfies
$
\cR_Q(f_{\rhohat})=\cR_Q\prn{\EE_{f\sim \rhohat}[f]}\le\EE_{f\sim \rhohat}\brk{\mathcal R_Q(f)}.
$
Therefore the same bound holds with \(\cR_Q(f_{\rhohat})\) on the left-hand side, and recalling that \(\inf_{f\in\mathcal F}\mathcal R_Q(f)=\mathcal R_Q(f^\star)\) concludes the proof of  \cref{eq: fixed lambda EW bound}.

Finally, recall that by \Cref{lem:mass-model-ball} we have that for a sufficiently small $t_0 > 0$  
\[\pi(\cN_t) \geq \frac{1}{2 \Npar(t, \cF, \fR)}, \qquad \forall t \in (0, t_0].\]
We can hence upper bound the log term $\log\prn{\frac{2}{\delta\pi(\cN_t)}}\leq \log\prn{\frac{4\,N_{\mathrm{Par}}(t,\cF,\mathbf R)}{\delta}}$.
Finally, optimizing the right-hand side of \cref{eq: fixed lambda EW bound} (where we upper bounded as above the $\log$ term) over $\lambda>0$ yields the choice
\[
\lambda=\sqrt{8n\log\!\Big(\tfrac{4\Npar(t,\cF,\bR)}{\delta}\Big)},
\]
which gives the bound for the Pareto EW stated in \cref{thm:Geometry-complexity-upper-bound}.

\subsection{Proof of \texorpdfstring{\cref{cor:tstar}}{Corollary \ref{cor:tstar}}}
\label{proof:tstar}

The first part of the proof is a direct consequence of \cref{thm:Geometry-complexity-upper-bound}. Indeed, recall that $t_n$ is defined in \cref{cor:tstar} as
\begin{equation*}
    t_n =\inf\crl{t>0: t^2\geq \frac{2\log\prn{2\Npar(t,\cF,\bR)}}{L^2 n}}.
\end{equation*}
By definition of the Pareto covering number, the function $t\mapsto 2\log\prn{2\Npar(t,\cF,\bR)}/(L^2 n)$ is non-increasing, and so for any $t>t_n$, it holds that
\begin{equation*}
    t^2\geq \frac{2\log\prn{2\Npar(t,\cF,\bR)}}{L^2 n}.
\end{equation*}
Combining this with \cref{thm:Geometry-complexity-upper-bound}, we obtain that, for any $t> t_n$,
\begin{equation*}
    \cR_Q(\Paretoerm{t})-\inf_{f\in\cF}\cR_Q(f) \leq \uppermodulus(t) + \sqrt{\frac{2\log (2\Npar(t,\cF,\bR)/\delta)}{n}} \leq Lt+Lt+\sqrt{\frac{2\log(1/\delta)}{n}}.
\end{equation*}
As this is valid for all $t> t_n$, we can conclude the first part of the proof.

For the second part, note that the assumption $L^2n>2\log 2$ together with the fact that $\bR(\cF)\subset [0,1]^K$ implies that
\begin{equation*}
    \frac{2\log(2\Npar(1,\cF,\bR))}{L^2n} =\frac{2\log 2}{L^2 n}\leq 1.
\end{equation*}
As a consequence, $t_n\leq 1$ and we can make the following case distinction based on the value of
$\theta := \frac{K-1}{L^2n}\log\prn{1+\frac{16L^2 n}{K-1}}$.
\begin{itemize}
    \item If $\theta \geq 1$ the claimed bound from \cref{cor:tstar} becomes $t_n^2 \leq 1\wedge \theta = 1 $, which we already verified.
    \item If $\theta <1$, \cref{lem:linf-bound-covering} implies that $\log \Npar(t,\cF,\bR)\leq (K-1) \log(1+\frac{1}{t})$. Thus, any solution to 
    \begin{equation*}
        t^2 \geq \frac{2\log\prn{2(1+1/t)^{K-1}}}{L^2 n}
    \end{equation*}
    also upper bounds $t_n$. Now, write $a=2(K-1)/(L^2n)$ and $b=2^{K/(K-1)}$. We claim that $t^2=\frac{a}{2}W\prn{\frac{2b^2}{a}}$ satisfies the inequality above, where $W$ denotes the Lambert $W$-function. Indeed,
    \begin{equation}
    \label{eqn:bound-t-squared}
        t^2=\frac{a}{2}W\prn{\frac{2b^2}{a}} \leq \frac{a}{2} \log\prn{1+\frac{2b^2}{a}}=\frac{K-1}{L^2n}\log\prn{1+\frac{L^2 n2^{2\tfrac{K}{K-1}}}{K-1}} \leq \theta,
    \end{equation}
    where we used $W(x)\leq \log(1+x)$ for $x>0$. Furthermore, $\theta<1$ implies $t<1$ and, consequently, 
    \begin{equation*}
        \frac{2\log\prn{2(1+1/t)^{K-1}}}{L^2 n} \leq \frac{2\log\prn{2^{K}/t^{K-1}}}{L^2 n}=\frac{2(K-1)}{L^2n}\log\prn{\frac{2^{K/(K-1)}}{t}} = a\log \prn{\frac{b}{t}}.
    \end{equation*}
    Finally, by definition of the Lambert $W$-function, we know that our choice of $t^2$ satisfies $t^2= a\log(b/t)$, and the claim follows. Hence, by \cref{eqn:bound-t-squared} we have
    \begin{equation*}
        t_n^2 \leq t^2 \leq \theta= \frac{K-1}{L^2n}\log\prn{1+\frac{16L^2 n}{K-1}}.
    \end{equation*}
    \end{itemize}

Putting both cases together yields the bound $t_n^2 \leq 1 \wedge \theta$, concluding the proof. Note that, by going through the same calculations, one can derive an analogous bound for the Pareto EW.

\subsection{Proof of \texorpdfstring{\cref{thm:Geometry-complexity-lower-bound}}{Theorem \ref{thm:Geometry-complexity-lower-bound}}}
\label{proof:Geometry-complexity-lower-bound}

The high-level road map for the proof of the lower bound is the following. We construct a set of $N$ points $\r\in[0,1]^K$, which allows us to choose a function $f$ such that $\bR(f)=\r$ for each point $\r$. We choose the points so that for any pair $\r,\r'$, neither point lies in the $t$-Pareto ball centered at the other, that is, their pairwise Pareto distance exceeds $t$. Further, we add two dominated points separated by exactly $t\one$ along the diagonal. These points do not change the $t$-Pareto covering number, but they force the modulus at scale $t$ to be exactly $t$. Finally, we construct a loss and a family of distributions $\cQ$ in a way that for each point $\r$, there is a corresponding distribution $Q\in\cQ$ under which the associated hypothesis is the unique risk minimizer and every other point has excess risk at least $C t$. Estimating a nearly optimal model therefore induces a multiple testing problem, which is lower bounded by a standard application of Fano's inequality.

We begin by noting that, by the assumptions on $n$ and $t$, we have $t\leq 1/60$.
Let $m\in[8]$ and define the values $v_m = \frac{1}{2}+8t\prn{m-\tfrac{9}{2}}$. Then, $v_m\in [0,1]$ since $t\leq 1/56$ and $\abs{v_m-\frac{1}{2}}\leq 8t\frac{7}{2}=28t\leq \frac{1}{2}$.
For $\a\in [8]^{K-1}$, define the points in risk space 
\begin{equation*}
    \r(\a):= \prn{v_{a_1},\ldots, v_{a_{K-1}},\tfrac{1}{2}}\in [0,1]^{K}.
\end{equation*}
For the sum $S(\a)=\sum_{i=1}^{K-1}a_i$, let $A_s = \crl{\a\in[8]^{K-1}:S(\a)=s}$; if $\a\prec \a'$ then clearly $S(\a)<S(\a')$. Thus, for all $\a,\a'\in A_s$ it cannot be that $\a\neq \a'$ and $\a\preceq \a'$. Since there are exactly $8^{K-1}$ different $\a\in[8]^{K-1}$ and exactly $7(K-1)+1$ different values that $S$ can take (which are $K-1,K,\ldots,8(K-1)$), by the \emph{pigeonhole principle} there must be at least one $s$ such that
\begin{equation*}
    \abs{A_s} \geq \frac{8^{K-1}}{7(K-1)+1} \geq \exp\prn{K-1} \geq N,
\end{equation*}
where in the second inequality we used that $K\geq 4$ and in the third the assumption on $N$.
Let $A\subset A_s$ be any subset of size exactly $N$. Then, for any $\a\neq \a'\in A$, it holds that $\a\not \preceq \a'$, and there is one coordinate $j\leq K-1$ with $a_j>a'_j$. As a consequence
\begin{equation*}
    r_j(\a)-r_j(\a')=v_{a_j}-v_{a_j'}=8t(a_j-a_j')\geq 8t.
\end{equation*}
Now, by definition of Pareto distance (cf. \cref{def:Pareto-distance}), it follows that
\begin{equation*}
    d(\r(\a),\r(\a')) = \min\crl{t'\geq 0: \r(\a)-t'\one\preceq \r(\a')} \geq 8t
\end{equation*}
and hence no $t$-Pareto ball can cover any two of the points $\crl{\r(\a):\a\in A}$. In particular, we have constructed a set that has $t$-Pareto covering number $\Npar(t,\crl{\r(\a):\a\in A})=N$. 

Let us introduce the hypothesis space $\cF=\crl{f_\a:\a\in A}\cup\crl{g_0,g_1}$ so that $\abs{\cF}=N+2$. Define $\bR:\cF\to[0,1]^K$ as
\begin{align*}
    \bR(f_\a) = \r(\a), \qquad \bR(g_0) = \prn{\frac{1}{2}+29t }\one, \qquad \text{and} \qquad \bR(g_1) = \prn{\frac{1}{2}+30t }\one.
\end{align*}
By construction, every $f_\a$ is Pareto optimal, and both $g_0$ and $g_1$ are dominated by all $f_\a$, implying that we retain $\Npar(t,\cF,\bR)=N$.

Let us now introduce the coordinate-wise independent distributions $\cQ=\crl{Q_\q : \q\in A}$ via $Z\in \cZ:=\crl{0,1}^{A\cup\crl{0,1}}$ that, for all $\a\in A$,
\begin{equation*}
    Z_{\a} \sim \Ber(\theta_{\q,\a}) \quad \text{with} \quad \theta_{\q,\a} = 
    \begin{cases}
        \frac{1}{2}-2t & \a=\q, \\
        \frac{1}{2} & \a\neq \q.
    \end{cases}
    \quad \text{and} \quad Z_0 \sim \Ber\prn{\frac{1}{2}+29t}, \quad Z_1 \sim \Ber\prn{\frac{1}{2}+30t}.
\end{equation*}
Moreover, let us denote the risks $\cR_\q:= \cR_{Q_\q}$ through the loss on $\cF\times \cZ$
\begin{equation*}
    \ell(f,z) = \begin{cases}
    z_\a & f=f_\a, \\
    z_0 & f=g_0, \\
    z_1 & f=g_1. \\
    \end{cases}
    \implies
    \cR_\q (f) = 
    \begin{cases}
        1/2 -2t & f=f_\q \\
        1/2 & f=f_\a , \a\neq \q \\
        1/2 +29t & f=g_0 \\
        1/2 + 30t & f=g_1
    \end{cases}
\end{equation*}
Importantly, the loss is independent of the distribution $Q$, and between $0$ and $1$. By comparing each ordered pair in $\cF$ we see that the only one within margin $t$ is $(g_1,g_0)$; $\bR(g_1)=\bR(g_0)+t\one$. Since $\cR_Q(g_1)-\cR_Q(g_0)=t$, this verifies that $\uppermodulus(t)=t$ for all $Q$.

In the following lower bound we consider proper estimators $\fhat\in \cF$, but notice that we could set the loss to $1$ outside of $\cF$, so that any improper estimator must have excess risk at least $1/2$.

Let $\fhat\in \cF$ be any estimator and define the test
\begin{equation*}
    \bqhat =
    \begin{cases}
        \a & \fhat=f_\a \\
        \a\sim \uniform{A}  & \fhat\in \crl{g_0,g_1} 
    \end{cases}.
\end{equation*}
Then, by design, on the event that $\bqhat \neq \q$
\begin{equation*}
    \cR_\q(\fhat) -\min_{f\in \cF}\cR_\q(f)\geq \frac{1}{2}-\prn{\frac{1}{2}-2t}=2t.
\end{equation*}
Therefore,
\begin{equation*}
    \sup_\q\PP\prn{\cR_\q(\fhat) -\min_{f\in \cF}\cR_\q(f)\geq 2t} \geq \sup_\q\PP\prn{\bqhat\neq \q}.
\end{equation*}
We are now ready to apply Fano's inequality.
\begin{lemma}[Fano's inequality \citep{yu1997assouad}]
\label{lem:Fano}
    Let $J\sim \uniform{[N]}$, and let $(Z|J=j)\sim Q_j$. Assume that $\KL(Q_i,Q_j)\leq \alpha$ for all $i,j$. Then, for any estimator $\psi(Z)$ computed from $Z$,
    \begin{equation*}
        \PP\prn{\psi(Z)\neq J} \geq 1-\frac{\alpha+\log 2}{\log N}.
    \end{equation*}
\end{lemma}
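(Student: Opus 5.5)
The plan is the standard information-theoretic route: bound the conditional entropy of $J$ given the estimate from above via the classical Fano entropy inequality and from below via mutual information. Then control the mutual information by the pairwise KL bound $\alpha$. Write $P_e:=\PP(\psi(Z)\neq J)$ and $\widehat J:=\psi(Z)$. If $\psi$ is randomized, as with the tie-breaking $\uniform{A}$ used for $\bqhat$ above, I append independent randomness $U$ to $Z$. This replaces each $Q_j$ by $Q_j\otimes P_U$, which leaves every $\KL(Q_i,Q_j)$ unchanged, so we may assume $\psi$ is deterministic.

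\emph{Step 1 (Fano's entropy inequality).} Let $E=\one\{\widehat J\neq J\}$. Expanding $H(E,J\mid \widehat J)$ in two ways gives
\[
H(J\mid \widehat J)\le H(E)+P_e\log(N-1)\le \log 2+P_e\log N .
\]
This uses two facts: $H(J\mid \widehat J,E=0)=0$, and on $\{E=1\}$ the variable $J$ ranges over at most $N-1$ values.

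\emph{Step 2 (entropy to mutual information).} Since $J$ is uniform, $H(J\mid\widehat J)=\log N-I(J;\widehat J)$. The data processing inequality for the Markov chain $J\to Z\to \widehat J$ gives $I(J;\widehat J)\le I(J;Z)$. Combining with Step 1 yields $\log N - I(J;Z)\le \log 2+P_e\log N$, and hence
\[
P_e\ge 1-\frac{I(J;Z)+\log 2}{\log N}.
\]

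\emph{Step 3 (mutual information to pairwise KL).} With the mixture $\bar Q=\frac1N\sum_i Q_i$, we have $I(J;Z)=\frac1N\sum_j \KL(Q_j,\bar Q)$. By convexity of $\KL(Q_j,\cdot)$ in its second argument, $\KL(Q_j,\bar Q)\le \frac1N\sum_i \KL(Q_j,Q_i)\le\alpha$. Therefore $I(J;Z)\le \alpha$, and substituting into the display of Step 2 gives the claim.

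No step is a real obstacle, since this is textbook material. The only points needing care are measure-theoretic. First, $I(J;Z)$ must be defined as the KL divergence between the joint law and the product of marginals for general (non-discrete) $Z$; this representation is what lets data processing and the identity $I(J;Z)=\frac1N\sum_j\KL(Q_j,\bar Q)$ go through without densities. Second, if some $\KL(Q_i,Q_j)=\infty$, the hypothesis fails and there is nothing to prove.
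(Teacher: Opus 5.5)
Your proof is correct and follows the same route as the standard argument behind the citation: the paper gives no proof of its own and simply invokes \citet{yu1997assouad}, whose Fano lemma rests on the same three steps. These are the Fano entropy inequality, data processing along $J\to Z\to\widehat J$, and the bound $I(J;Z)=\frac1N\sum_j\KL(Q_j,\bar Q)\le\frac{1}{N^2}\sum_{i,j}\KL(Q_j,Q_i)\le\alpha$. Your handling of the randomized test $\bqhat$ is also fine, with one small caveat: ``$J$ ranges over at most $N-1$ values on $\{E=1\}$'' requires $\widehat J\in[N]$. This does not matter, since you only use the weaker $\log N$ bound, and mapping any out-of-range value of $\widehat J$ into $[N]$ cannot increase the error probability.
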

In order to apply \Cref{lem:Fano}, we bound the KL-divergence between the $Q_\q$ we constructed. By \citet[Lemma D.2]{wegel2025sample}
, we have
\begin{align*}
    \KL(Q_\q^{\otimes n},Q_{\q'}^{\otimes n}) &= n\prn{\KL(\Ber(\tfrac{1}{2}-2t),\Ber(\tfrac{1}{2}))+\KL(\Ber(\tfrac{1}{2}), \Ber(\tfrac{1}{2}-2t))} \\
    &\leq n\prn{4(2t)^2+4(2t)^2} = 32nt^2.
\end{align*}
As the supremum over $\q$ is lower bounded by the average over $\q$, \cref{lem:Fano} implies that
\begin{equation*}
    \sup_\q \PP(\bqhat \neq \q) \geq \frac{1}{N} \sum_{\q\in A} \PP(\bqhat \neq \q) \geq 1-\frac{32nt^2+\log 2}{\log N} \geq 1/2,
\end{equation*}
where we used that $t\leq \sqrt{\log(N/4)/(64 n)}$ and $N\geq 4$. Finally, it follows that
\begin{equation*}
    \sup_{\q\in A} \EE\brk{\cR_\q(\fhat) -\min_{f\in \cF}\cR_\q(f)} \geq 2t \sup_{\q}\PP\prn{\cR_\q(\fhat) -\min_{f\in \cF}\cR_\q(f)\geq 2t} \geq t,
\end{equation*}
which yields the result and concludes the proof of \cref{thm:Geometry-complexity-lower-bound}.

\subsection{Proof of \texorpdfstring{\cref{prop:monotonicity-above-threshold-adaptive}}{Proposition \ref{prop:monotonicity-above-threshold-adaptive}}}
\label{proof:monotonicity-above-threshold-adaptive}

Recall that $\AdaptiveThresholdERM = \Thresholderm{\tadapt}$. Therefore, to prove
\cref{prop:monotonicity-above-threshold-adaptive}, we may apply the same decomposition presented in the proof of \cref{prop:monotonicity-above-threshold}:
\begin{equation*}
    \cR_Q(\Thresholderm{\tadapt})-\cR_Q(\fstar) \leq 4\radComp(\ell\circ \Fpruned{\tadapt})+2\sqrt{\frac{2\log(4/\delta)}{n}}  + \underbrace{\inf_{f\in\Fpruned{\tadapt}}\cR_Q(f)-\cR_Q(\fstar)}_{=:\eps_n},
\end{equation*}
where, as opposed to the proof of \cref{prop:monotonicity-above-threshold}, we do not have a guarantee that $\eps_n=0$ for all $n$ since it is not guaranteed that $\tadapt>\tlower$ for all $n$.
We can further bound the Rademacher complexity of a finite function class using Massart's Lemma \citep[Lemma 26.8]{shalev2014understanding} as
\begin{equation*}
    4\radComp(\ell\circ \Fpruned{\tadapt})+2\sqrt{\frac{2\log(4/\delta)}{n}} \leq 4\sqrt{\frac{2\log (2\abs{\Fpruned{\tadapt}})}{n}}+ 2\sqrt{\frac{2\log(4/\delta)}{n}} \leq 40\sqrt{\frac{\log(8\abs{\Fpruned{\tadapt}}/\delta)}{n}}.
\end{equation*}

It remains to argue that $\eps_n=0$ if $n\geq n_0:=\min_{\gamma>\tlower}\abs{\Fpruned{\gamma}}$. As demonstrated in the proof of \cref{prop:monotonicity-above-threshold}, for every $\gamma>\tlower$, we know that $\inf_{f\in\Fpruned{\gamma}} \cR_Q(f) = \inf_{f\in \cF}\cR_Q(f)$. Thus, it is enough if the adaptive set $\Fpruned{\tadapt}$ contains $\Fpruned{\gamma}$ for some $\gamma>\tlower$.

By construction, $\tadapt = \gamma_{(m_n)}$, where recall that
\begin{equation*}
    m_n = \min\crl{\abs{\cF},\max\crl{n,\abs{\pareto(\cF,\bR)}}}.
\end{equation*}
Hence either $\tadapt=0$ and $\Fpruned{\tadapt}=\pareto(\cF,\bR)$, or $\tadapt>0$ and $\Fpruned{\tadapt}=\crl{f\in\cF:\Gamma(f)\leq \gamma_{(m_n)}}$ keeps all models up to the $m_n$-th smallest margin (including ties).
Since $\gamma\mapsto \abs{\Fpruned{\gamma}}$ is an increasing step function of $\gamma$, we know that $n_0=\min_{\gamma>\tlower}\abs{\Fpruned{\gamma}}$ is well-defined and attained by some $\gamma^\star$.
Moreover, because $\pareto(\cF,\bR)\subseteq \Fpruned{\gamma^\star}\subseteq \cF$ for any $\gamma^\star >\tlower$, we know that $n\geq \abs{\Fpruned{\gamma^\star}}$ implies the second inequality in $\abs{\Fpruned{\gamma_{(m_n)}}}\geq m_n\geq \abs{\Fpruned{\gamma^\star}}$, and therefore $\gamma_{(m_n)}\geq \gamma^\star$ which implies $\Fpruned{\gamma^\star}\subseteq \Fpruned{\gamma_{(m_n)}}=\Fpruned{\tadapt}$ by nestedness. As argued above, this yields the claim.

\subsection{Proof of \texorpdfstring{\cref{thm:fast-rate-MSA}}{Theorem \ref{thm:fast-rate-MSA}}}
\label{proof:fast-rate-MSA}

We demonstrate that the proof of \cref{thm:fast-rate-MSA} follows almost verbatim from the proof of the $Q$-aggregation estimator from \citet{lecue2014optimal}. There are, however, some important differences that we highlight along the way.

In this section we use the shorthand notation $\cR= \cR_Q$, $\Rhat=\Rhat_Q$ and $\nrm{f}_2=\nrm{f}_{L^2(Q_X)}$, where $Q$ remains the target distribution. Moreover, we associate the simplex $\triangle^{K-1}$ with the set of all distributions $\cP([K])$ through the canonical map, and switch between both notations without explicitly stating it.

Define the function $\Rtilde(\rho) = \frac{1}{2}\cR(\phi_\rho)+\frac{1}{2} \EE_{k\sim\rho} \cR(f_k)$ and denote $\elltilde_{\rho}(x,y)=\frac{1}{2}\ell(\phi_\rho(x),y)+\frac{1}{2}\EE_{k\sim \rho}\ell(f_k(x),y)$. Note that $\Rtilde(\rho)= Q\elltilde_\rho$ where we use the standard empirical process theory notation $ Qf = \int f\d Q$.
We actually prove the result for a generalized version of the estimator using a prior $\pi\in\cP([K])$ defined as
\begin{equation*}
    \rhofast \in \argmin_{\rho\in\cP([K])} \Fhat(\rho) \quad \text{where} \quad \Fhat(\rho)= \frac{1}{2}\Rhat_Q(\phi_\rho)+ \frac{1}{2}\EE_{k\sim \rho}\Rhat_Q(f_k)+\frac{K(\rho,\pi)}{\lambda n},
\end{equation*}
where $K(\rho,\pi)= \sum_{k=1}^K \rho_k \log \prn{1/\pi_k}$. It reduces to the estimator in \cref{subsec:fast-rates-aggregation} by taking the uniform prior $\pi_k=1/K$, since then $K(\rho,\pi) = \log(K) \sum_{k=1}^K \rho_k=\log K$ is independent of $\rho$ and does not affect the optimization problem.
We use the shorthand $\rhohat=\rhofast$ throughout this section.
Define the population version of $\Fhat$ and the comparator $\rhostar$ as
\begin{equation*}
    \rhostar \in \argmin_{\rho\in \cP([K])}\crl{F(\rho)+\mu V(\rho)} \quad \text{where} \quad F(\rho) = \frac{1}{2}\cR(\phi_\rho)+ \frac{1}{2}\EE_{k\sim \rho}\cR(f_k)+\frac{K(\rho,\pi)}{\lambda n}
\end{equation*}
so that $F(\rho)=\Rtilde(\rho)+\frac{K(\rho,\pi)}{\lambda n}$ and where we denote the variance $V(\rho):= \EE_{X\sim Q_X} \Var_{k\sim \rho} f_k(X) = \EE_{k\sim \rho}\nrm{f_k-f_{\rho}}_2^2$.

\subsubsection{Preliminary lemmata}
To start the proof, we need some lemmata that essentially demonstrate why the proof of the $Q$-aggregation estimator applies almost verbatim to the aggregator on the Pareto set. 
\begin{lemma}
\label{lem:upper-and-lower-modulus-equality}
    If $\lowermodulus(t)\geq t$ and $\uppermodulus(t) \leq t$, then $\Phi(\rho) = \cR(f_\rho)-\cR(\phi_\rho)$.
\end{lemma}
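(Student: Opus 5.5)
The plan is to prove the two inequalities $\Phi(\rho)\leq \cR(f_\rho)-\cR(\phi_\rho)$ and $\Phi(\rho)\geq \cR(f_\rho)-\cR(\phi_\rho)$ separately. Each follows by applying one of the two moduli to the exact shift relation $\bR(\phi_\rho)=\bR(f_\rho)-\Phi(\rho)\one$ from part (iii) of \cref{asm:fast-rate-aggregation}. Both $f_\rho$ and $\phi_\rho$ lie in $\cF$ (the convex hull is assumed to be contained in $\cF$, and $\phi_\rho\in\cF$ by the assumption), so the definitions of the moduli in \cref{def:modulus-of-monotonicity} apply to this pair.

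For the lower bound I will use the lower modulus. Set $t=\Phi(\rho)\geq 0$. Then $\bR(\phi_\rho)\preceq \bR(f_\rho)-t\one$, so
\begin{equation*}
    \Paretodistance(\phi_\rho,f_\rho)=\max_k\crl{\cR_k(\phi_\rho)-\cR_k(f_\rho)}=-t\leq -t .
\end{equation*}
By the definition of $\lowermodulus$, this gives $\cR(f_\rho)-\cR(\phi_\rho)\geq \lowermodulus(t)\geq t=\Phi(\rho)$, using the hypothesis $\lowermodulus(t)\geq t$. This is exactly \cref{eqn:gap-convex-Pareto-Phi} specialized to the assumed modulus bound.

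For the upper bound I will use the upper modulus with the roles of the two functions reversed. We have $\bR(f_\rho)=\bR(\phi_\rho)+t\one$, so $\Paretodistance(f_\rho,\phi_\rho)=t\leq t$. The definition of $\uppermodulus$ then yields $\cR(f_\rho)-\cR(\phi_\rho)\leq \uppermodulus(t)\leq t=\Phi(\rho)$. Combining the two inequalities gives the claimed equality.

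The only delicate point is range: the moduli are defined on $[0,1]$, so I must check that $t=\Phi(\rho)\in[0,1]$. This holds because $\Phi\geq 0$ and $\bR$ maps into $[0,1]^K$, so $\Phi(\rho)=\cR_k(f_\rho)-\cR_k(\phi_\rho)\leq 1$. When $\Phi(\rho)=0$, both steps still apply: they reduce to monotonicity in each direction, and hence give equality of the target risks.
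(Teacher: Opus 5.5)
Your proof is correct and matches the paper's argument: the paper uses the same sandwich $\Phi(\rho)\leq\lowermodulus(\Phi(\rho))\leq\cR(f_\rho)-\cR(\phi_\rho)\leq\uppermodulus(\Phi(\rho))\leq\Phi(\rho)$, obtained from the shift relation $\bR(\phi_\rho)=\bR(f_\rho)-\Phi(\rho)\one$. Your additional checks, that $f_\rho,\phi_\rho\in\cF$ and that $\Phi(\rho)\in[0,1]$ lies in the moduli's domain, are details the paper leaves implicit.
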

\begin{proof}
    Follows from the following sandwich inequalities: Since $\bR(\phi_\rho)= \bR(f_\rho)-\Phi(\rho)\one$, we have by definition of the moduli (\cref{def:modulus-of-monotonicity}) that
    \begin{equation*}
        \Phi(\rho) \leq \lowermodulus(\Phi(\rho)) \leq \cR(f_\rho)- \cR(\phi_\rho) \leq \uppermodulus(\Phi(\rho))\leq \Phi(\rho),
    \end{equation*}
    implying the equality.
\end{proof}
\cref{lem:upper-and-lower-modulus-equality} implies that we know exactly the improvement an estimator $\phi_\rho$ has over the estimator with $f_\rho$. Recall now that the strong concavity of the loss implies the following Jensen's gap.
\begin{lemma}[Proposition 2 in \citet{lecue2014optimal}]
\label{lem:Jensen-gap}
     If the loss $\ell(\cdot,y)$ is $\kappa$-strongly convex, then
\begin{equation*}
    \cR(f_\rho)\leq \EE_{k\sim \rho}\cR(f_k) -\frac{\kappa}{2}V(\rho).
\end{equation*}
Importantly, this is also true for $\kappa=0$, when the loss is merely convex.
\end{lemma}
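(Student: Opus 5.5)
The plan is to prove the inequality pointwise in $(x,y)$ and then integrate against $Q$. Fix $(x,y)$ and write $g(\yhat) := \ell(\yhat,y)$. The functions $f_k$ take values in $[0,1]$, so the predictions $f_k(x)$, $k\in[K]$, lie in the convex set on which $g$ is assumed $\kappa$-strongly convex. The convex combination $f_\rho(x)=\sum_k\rho_k f_k(x)$ lies in the same set.

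I will use the standard characterization of $\kappa$-strong convexity: the function $h(\yhat) := g(\yhat)-\tfrac{\kappa}{2}\yhat^2$ is convex. For $\kappa=0$ this is just convexity of $g$. Consider the finitely supported random variable $k\sim\rho$ and the real numbers $f_k(x)$. Jensen's inequality for $h$ gives
\begin{equation*}
    g\big(f_\rho(x)\big)-\tfrac{\kappa}{2}f_\rho(x)^2 \leq \EE_{k\sim\rho}\Big[g\big(f_k(x)\big)-\tfrac{\kappa}{2}f_k(x)^2\Big].
\end{equation*}
Rearranging, and using $\EE_{k\sim\rho}f_k(x)^2 - \big(\EE_{k\sim\rho}f_k(x)\big)^2 = \Var_{k\sim\rho}f_k(x)$, yields the pointwise Jensen gap
\begin{equation*}
    \ell\big(f_\rho(x),y\big) \leq \EE_{k\sim\rho}\,\ell\big(f_k(x),y\big) - \tfrac{\kappa}{2}\Var_{k\sim\rho}f_k(x).
\end{equation*}

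Next I take expectations over $(x,y)\sim Q$. Since $\rho$ is a distribution on the finite set $[K]$, the expectation over $k\sim\rho$ is a finite sum. It therefore commutes with the integral over $Q$, and all terms are bounded, so integrability is not an issue. This turns the left side into $\cR(f_\rho)$ and the first term on the right into $\EE_{k\sim\rho}\cR(f_k)$. The variance term becomes $\tfrac{\kappa}{2}\EE_{X\sim Q_X}\Var_{k\sim\rho}f_k(X)$, which is $\tfrac{\kappa}{2}V(\rho)$ by definition. For the alternative form $V(\rho)=\EE_{k\sim\rho}\nrm{f_k-f_\rho}_2^2$, I will expand the variance pointwise as $\sum_k\rho_k(f_k(x)-f_\rho(x))^2$ and again swap the finite sum with the integral.

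I expect no real obstacle. The only points needing care are that strong convexity is stated with respect to the prediction argument on a convex domain containing all the $f_k(x)$, and that for $\kappa=0$ the argument reduces to plain Jensen.
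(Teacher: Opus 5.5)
Your proof is correct. Writing $\kappa$-strong convexity as convexity of $\yhat\mapsto\ell(\yhat,y)-\tfrac{\kappa}{2}\yhat^2$ (equivalent to the two-point definition the paper uses for $\Phi$), applying Jensen over the finitely supported $k\sim\rho$, and integrating the pointwise gap $\tfrac{\kappa}{2}\Var_{k\sim\rho}f_k(x)$ against $Q$ gives exactly the statement, including the case $\kappa=0$. The paper does not prove the lemma itself but cites Proposition 2 of \citet{lecue2014optimal}, and your argument is the standard pointwise Jensen-gap computation behind that result, so it follows essentially the same route.
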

Next, we show that the strong concavity of $\Phi$, ensured by \cref{asm:fast-rate-aggregation}, implies that we can effectively control the variance of the estimator.
\begin{lemma}
\label{lem:Delta-geq-Var}
    If $\Phi$ is $\eta$-strongly concave in $\nrm{f_\rho-f_\gamma}_{2}$, as defined in \cref{eqn:strong-concavity-gap}, then $\Phi(\rho)\geq \tfrac{\eta}{2}V(\rho)$. 
\end{lemma}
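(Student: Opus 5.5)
We need to show $\Phi(\rho)\geq \tfrac{\eta}{2}V(\rho)$, where $V(\rho)=\EE_{k\sim\rho}\nrm{f_k-f_\rho}_2^2$. The plan is to use only properties (i) and (ii) of \cref{asm:fast-rate-aggregation}, writing $\rho$ as a convex combination of the vertices $\delta_k$ and iterating the two-point strong concavity inequality \eqref{eqn:strong-concavity-gap}.

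The cleanest route is induction on the support size of $\rho$. Suppose $\rho$ has support $S$ with $|S|=m\geq 2$, and pick some $j\in S$. Decompose
\[
\rho=\rho_j\delta_j+(1-\rho_j)\gamma,\qquad \gamma=\frac{\rho-\rho_j\delta_j}{1-\rho_j},
\]
so that $\gamma$ has support $S\setminus\{j\}$ of size $m-1$. Apply \eqref{eqn:strong-concavity-gap} with $\alpha=\rho_j$, using $\Phi(\delta_j)=0$ from (i):
\[
\Phi(\rho)\geq (1-\rho_j)\Phi(\gamma)+\tfrac{\eta}{2}\rho_j(1-\rho_j)\nrm{f_j-f_\gamma}_2^2 .
\]
The induction hypothesis gives $\Phi(\gamma)\geq\tfrac{\eta}{2}V(\gamma)$. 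The base case $m=1$ is trivial, since both sides vanish.

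It then remains to verify the variance decomposition
\[
V(\rho)=(1-\rho_j)V(\gamma)+\rho_j(1-\rho_j)\nrm{f_j-f_\gamma}_2^2 .
\]
This is the law of total variance. Pointwise in $x$, $k\sim\rho$ is a mixture of the point mass at $j$ (weight $\rho_j$) and $\gamma$ (weight $1-\rho_j$). The within-component variance is $(1-\rho_j)\Var_{\gamma}$, since the point mass contributes none. The between-component term is $\rho_j(1-\rho_j)(f_j(x)-f_\gamma(x))^2$, because the component means are $f_j(x)$ and $f_\gamma(x)$. Integrating over $x\sim Q_X$ yields the identity, and combining it with the two displays above closes the induction.

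The only step needing care is matching the strong concavity term $\nrm{f_\rho-f_\gamma}_2^2$ in \eqref{eqn:strong-concavity-gap} with the between-group variance. Here one uses that $f_{\delta_j}=f_j$ and that $\rho\mapsto f_\rho$ is affine. As an alternative check, one could average the inequality over $j\sim\rho$, but the inductive peeling above is simpler and I expect no real obstacle beyond this bookkeeping.
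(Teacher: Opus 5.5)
Your proof is correct and is essentially the paper's argument. The paper sets $D(\rho)=\Phi(\rho)-\tfrac{\eta}{2}V(\rho)$ and notes that $D$ is concave, because $V$ satisfies the exact identity $V(\alpha\rho+(1-\alpha)\gamma)=\alpha V(\rho)+(1-\alpha)V(\gamma)+\alpha(1-\alpha)\|f_\rho-f_\gamma\|_2^2$; it then applies Jensen's inequality using $D(\delta_k)=0$. Your induction on the support size is that Jensen step unrolled, and your law-of-total-variance decomposition is the special case of this identity in which one endpoint is a vertex $\delta_j$.
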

\begin{proof}
    To prove this, define the difference $D(\rho)=\Phi(\rho)-\tfrac{\eta}{2}V(\rho)$. Since $\Phi$ is $\eta$-strongly concave and $V$ is $2$-strongly concave, that is, 
    \begin{equation*}
        V(\alpha \rho+(1-\alpha)\gamma)=\alpha V(\rho)+(1-\alpha)V(\gamma)+\alpha(1-\alpha)\nrm{f_\rho-f_\gamma}_{2}^2,
    \end{equation*}
    we know that $D$ must be concave (e.g., by adding the two equations / inequalities). Let $\delta_j$ now denote the dirac delta on $j\in[K]$. Then, because $V(\delta_j)=\Phi(\delta_j)=0$ for all $j$, we know that $D(\delta_j)=0$ for all $j$. And so, by Jensen's inequality, for all $\rho\in \cP([K])$
    \begin{equation*}
        \Phi(\rho)-\tfrac{\eta}{2}V(\rho)=D(\rho)=D\prn{\sum_{k\in[K]}\rho_k\delta_k}\geq \sum_{k\in[K]}\rho_k D(\delta_k)=0 .
    \end{equation*}
    This yields the claim.
\end{proof}
Combining \cref{lem:upper-and-lower-modulus-equality,lem:Jensen-gap,lem:Delta-geq-Var} yields the proof of \cref{eqn:doubly-variance-reduced}:
\begin{equation*}
    \cR(\phi_\rho) = \cR(f_\rho)-\Phi(\rho)\leq  \EE_{k\sim \rho}\cR(f_k) -\frac{\kappa}{2}V(\rho)-\Phi(\rho) \leq  \EE_{k\sim \rho}\cR(f_k) -\frac{\kappa+\eta}{2}V(\rho).
\end{equation*}
This provides some intuition why the proof works; even when the loss is not strongly convex ($\kappa=0$), the estimator can penalize variance. Next, we prove the key property gained from the strongly convex objective. This is analogous to Proposition 4 in \citet{lecue2014optimal}.

\begin{lemma}
\label{lem:Rtilde-lower-bound}
    Let $\mu < (\eta +\kappa)/4$. It holds for any $\rho$ that
    \begin{equation*}
        \Rtilde(\rho)-\Rtilde(\rhostar)\geq -\frac{1}{\lambda n}(K(\rho,\pi)-K(\rhostar,\pi)) -\mu(V(\rho)-V(\rhostar)) +\prn{\frac{\eta+\kappa}{4}-\mu}\nrm{f_\rho-f_{\rhostar}}_2^2
    \end{equation*}
\end{lemma}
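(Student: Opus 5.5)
The plan is to mimic Proposition 4 of \citet{lecue2014optimal}: combine the first-order optimality of $\rhostar$ for the convex program $\min_\rho\{F(\rho)+\mu V(\rho)\}$ with a quadratic lower bound on $\Rtilde$ along segments. Fix $\rho$ and set $\rho_\alpha=\rhostar+\alpha(\rho-\rhostar)$ for $\alpha\in[0,1]$. Minimality of $\rhostar$ gives, for every $\alpha\in(0,1]$,
\begin{equation*}
    \Rtilde(\rho_\alpha)-\Rtilde(\rhostar)+\tfrac{1}{\lambda n}\big(K(\rho_\alpha,\pi)-K(\rhostar,\pi)\big)+\mu\big(V(\rho_\alpha)-V(\rhostar)\big)\geq 0 .
\end{equation*}
Since $K(\cdot,\pi)$ is linear in $\rho$, its difference along the segment is $\alpha$ times the full difference.

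The key is to show that $\Rtilde$ is $\tfrac{\eta+\kappa}{2}$-strongly convex along segments in $\nrm{f_\rho-f_\gamma}_2$, and that $V$ is exactly quadratic. For $\Rtilde$, use \cref{lem:upper-and-lower-modulus-equality} to write
\begin{equation*}
    \cR(\phi_\rho)=\cR(f_\rho)-\Phi(\rho),
\end{equation*}
so that $2\Rtilde(\rho)=\cR(f_\rho)-\Phi(\rho)+\EE_{k\sim\rho}\cR(f_k)$. Each term has the right convexity.
\begin{itemize}
    \item The map $\rho\mapsto\cR(f_\rho)$ is $\kappa$-strongly convex in $\nrm{f_\rho-f_\gamma}_2^2$, because $f_\rho$ is affine in $\rho$ and the loss is $\kappa$-strongly convex pointwise.
    \item The map $-\Phi$ is $\eta$-strongly convex by \cref{asm:fast-rate-aggregation}(ii).
    \item The map $\rho\mapsto\EE_{k\sim\rho}\cR(f_k)$ is linear.
\end{itemize}
Hence
\begin{equation*}
    \Rtilde(\rho_\alpha)\leq(1-\alpha)\Rtilde(\rhostar)+\alpha\Rtilde(\rho)-\tfrac{\eta+\kappa}{4}\alpha(1-\alpha)\nrm{f_\rho-f_{\rhostar}}_2^2 .
\end{equation*}
For $V$, the identity from the proof of \cref{lem:Delta-geq-Var} gives
\begin{equation*}
    V(\rho_\alpha)-V(\rhostar)=\alpha\big(V(\rho)-V(\rhostar)\big)+\alpha(1-\alpha)\nrm{f_\rho-f_{\rhostar}}_2^2 .
\end{equation*}

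Substituting these into the optimality inequality and dividing by $\alpha>0$ yields
\begin{equation*}
    \Rtilde(\rho)-\Rtilde(\rhostar)+\tfrac{1}{\lambda n}\big(K(\rho,\pi)-K(\rhostar,\pi)\big)+\mu\big(V(\rho)-V(\rhostar)\big)\geq(1-\alpha)\Big(\tfrac{\eta+\kappa}{4}-\mu\Big)\nrm{f_\rho-f_{\rhostar}}_2^2 .
\end{equation*}
Letting $\alpha\downarrow0$ gives the claim after rearranging; the condition $\mu<(\eta+\kappa)/4$ only makes the coefficient positive. A subtlety is that $\Phi$ is concave in $\rho$ rather than in $f_\rho$. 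However, \cref{asm:fast-rate-aggregation}(ii) is stated exactly along convex combinations in $\rho$ with the $f$-norm, which is precisely what the argument uses.

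The main obstacle is justifying the strong-convexity inequality for $\rho\mapsto\cR(f_\rho)$ with modulus $\kappa$ in $L^2(Q_X)$. Pointwise, $\kappa$-strong convexity of $\ell(\cdot,y)$ gives
\begin{equation*}
    \ell(f_{\rho_\alpha}(x),y)\leq(1-\alpha)\ell(f_{\rhostar}(x),y)+\alpha\ell(f_\rho(x),y)-\tfrac{\kappa}{2}\alpha(1-\alpha)\big(f_\rho(x)-f_{\rhostar}(x)\big)^2 ,
\end{equation*}
and integrating over $Q$ closes this step. One also needs the sign conventions on $\Phi$ and $V$ to combine consistently, which I would check carefully.
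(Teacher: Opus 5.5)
Your proof is correct and follows the paper's route. Both combine the $\frac{\eta+\kappa}{2}$-strong convexity of $\Rtilde$ (from \cref{lem:upper-and-lower-modulus-equality}, the $\kappa$-strong convexity of $\rho\mapsto\cR(f_\rho)$, and the $\eta$-strong concavity of $\Phi$) and the exact quadratic identity for $V$ with the minimality of $\rhostar$ along $\alpha\rho+(1-\alpha)\rhostar$, then divide by $\alpha$ and let $\alpha\to 0$; the only cosmetic difference is that the paper first packages these into strong convexity of $F+\mu V$ with modulus $\frac{\eta+\kappa}{2}-2\mu$, whereas you keep the pieces separate and use that $K(\cdot,\pi)$ is linear.
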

\begin{proof}
    First note that from the definition,
    \begin{equation*}
        F(\rho) = \Rtilde(\rho) +\frac{K(\rho,\pi)}{\lambda n} = \frac{1}{2}\cR(\phi_\rho)+ \frac{1}{2}\EE_{k\sim \rho}\cR(f_k)+\frac{K(\rho,\pi)}{\lambda n}
    \end{equation*}
    is $(\eta+\kappa)/2$-strongly convex with respect to $\nrm{f_\rho-f_\gamma}_2$: Since $\rho\mapsto \Phi(\rho)$ is $\eta$-strongly concave and $\rho\mapsto \cR(f_\rho)$ is $\kappa$-strongly convex (Proposition 3 in \cite{lecue2014optimal}) we have that $\rho\mapsto \cR(\phi_\rho) = \cR(f_\rho)-\Phi(\rho)$ is $\eta+\kappa$ strongly convex in $\nrm{f_\rho-f_\gamma}_2^2$, where we used \cref{lem:upper-and-lower-modulus-equality} for the equality. Hence, $\rho\mapsto \Rtilde(\rho)$ is $(\eta+\kappa)/2$-strongly convex. Finally $\rho\mapsto K(\rho,\pi)$ is convex.
    
    Moreover, since $V$ is $2$-strongly concave, that is, 
    \begin{equation*}
        V(\alpha \rho+(1-\alpha)\gamma)=\alpha V(\rho)+(1-\alpha)V(\gamma)+\alpha(1-\alpha)\nrm{f_\rho-f_\gamma}_2^2,
    \end{equation*}
    we have that $F_\mu(\rho):= F(\rho)+\mu V(\rho)$ is $(\eta+\kappa)/2-2\mu$-strongly convex.
    This strong convexity implies that 
    \begin{equation*}
        F_\mu(\alpha\rho+(1-\alpha)\rhostar)\leq \alpha F_\mu(\rho)+(1-\alpha)F_\mu(\rhostar)-\prn{\frac{\eta+\kappa}{4}-\mu}\alpha(1-\alpha)\nrm{f_\rho-f_{\rhostar}}_2^2.
    \end{equation*}
    Now, bounding $F_\mu(\rhostar)\leq F_\mu(\alpha\rho+(1-\alpha)\rhostar)$, subtracting $(1-\alpha)F_\mu(\rhostar)$ from both sides and dividing by $\alpha$ yields
    \begin{equation*}
        F_\mu(\rhostar) \leq F_\mu(\rho) -\prn{\frac{\eta+\kappa}{4}-\mu}(1-\alpha)\nrm{f_\rho-f_{\rhostar}}_2^2.
    \end{equation*}
    Taking $\alpha \to 0$ yields that $F_\mu(\rho)-F_\mu(\rhostar)\geq \prn{\frac{\eta+\kappa}{4}-\mu}\nrm{f_\rho-f_{\rhostar}}_2^2$. Therefore, we get that
    \begin{align*}
        F_\mu(\rho)-F_\mu(\rhostar)&= \mu(V(\rho)-V(\rhostar))+
        (\Rtilde(\rho)-\Rtilde(\rhostar))+\frac{1}{\lambda n}(K(\rho,\pi)-K(\rhostar,\pi)) \\
        &\geq \prn{\frac{\eta+\kappa}{4}-\mu}\nrm{f_\rho-f_{\rhostar}}_2^2
    \end{align*}
    Rearranging yields the result.
\end{proof}
\subsubsection{Main proof} We now come to the main proof. We begin by extracting the main empirical error term deterministically. This is analogous to Proposition 5 in \citet{lecue2014optimal}. Let $Q_n$ denote the empirical measure, and for some parameter $s>0$, define the random error term
\begin{equation*}
    Z_n=(Q-Q_n)(\elltilde_{\rhohat}-\elltilde_{\rhostar})-\mu \prn{2V(\rhohat)+V(\rhostar)+2\nrm{f_{\rhohat}-f_{\rhostar}}_2^2}-\frac{2}{s}K(\rhohat,\pi).
\end{equation*}
Notably, recall that the loss $\elltilde_\rho$ was defined via our map $\phi_\rho$. Hence, $Z_n$ differs from the error term in \citet{lecue2014optimal} via the loss functions and via the choice of $\rhohat$.
\begin{proposition}
\label{prop:Q-aggregation-error-pullout}
    Assume that $\mu \leq \tfrac{1}{20}(\kappa+\eta)$, $\frac{1}{\lambda n} \geq \frac{4}{s}$.
    Then it holds that
    \begin{equation*}
        \cR(\phi_{\rhohat})\leq \min_{k\in[K]} \crl{\cR(f_k)+\frac{\log (1/\pi_k)}{\lambda n}} + 2Z_n
    \end{equation*}
\end{proposition}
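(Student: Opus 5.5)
The plan is to follow Proposition 5 of \citet{lecue2014optimal}: it is purely deterministic and uses only optimality of $\rhohat$ for $\Fhat$, optimality of $\rhostar$ for $F+\mu V$, and the curvature from \cref{lem:Rtilde-lower-bound}. We do not control $Z_n$ here; that is left to a later concentration step.

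\textbf{Step 1: optimality of $\rhohat$.} Since $\Fhat(\rhohat)\le\Fhat(\rhostar)$, writing $\Rhat$-terms as $Q_n\elltilde$, we get
\[
Q\elltilde_{\rhohat}-Q\elltilde_{\rhostar}\le (Q-Q_n)(\elltilde_{\rhohat}-\elltilde_{\rhostar})+\frac{K(\rhostar,\pi)-K(\rhohat,\pi)}{\lambda n},
\]
that is, $\Rtilde(\rhohat)-\Rtilde(\rhostar)$ is bounded by the empirical process term plus the KL difference.

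\textbf{Step 2: rewrite in terms of $Z_n$.} Substitute the definition of $Z_n$ for the empirical process term. This adds the terms $\mu(2V(\rhohat)+V(\rhostar)+2\nrm{f_{\rhohat}-f_{\rhostar}}_2^2)+\tfrac{2}{s}K(\rhohat,\pi)$.

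\textbf{Step 3: target quantity.} We want to bound $\cR(\phi_{\rhohat})$. By \cref{lem:upper-and-lower-modulus-equality,lem:Jensen-gap,lem:Delta-geq-Var}, i.e.\ \cref{eqn:doubly-variance-reduced}, we have $\cR(\phi_\rho)\le \EE_{k\sim\rho}\cR(f_k)-\tfrac{\eta+\kappa}{2}V(\rho)$. Hence
\[
\cR(\phi_\rho)\le \Rtilde(\rho)-\tfrac{\eta+\kappa}{4}V(\rho).
\]

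\textbf{Step 4: comparator.} We need $\Rtilde(\rhostar)+\frac{K(\rhostar,\pi)}{\lambda n}+\mu V(\rhostar)\le \min_k\{\cR(f_k)+\log(1/\pi_k)/(\lambda n)\}$. Evaluate the objective defining $\rhostar$ at $\delta_k$. There $V(\delta_k)=0$ and $\Phi(\delta_k)=0$, so $\phi_{\delta_k}$ has the same risk as $f_k$ by \cref{lem:upper-and-lower-modulus-equality}, giving $\Rtilde(\delta_k)=\cR(f_k)$ and $K(\delta_k,\pi)=\log(1/\pi_k)$.

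\textbf{Step 5: combine.} Chaining the above gives
\[
\cR(\phi_{\rhohat})\le \min_k\{\cdots\}+Z_n+\text{(leftover)},
\]
where the leftover is
\[
-\tfrac{\eta+\kappa}{4}V(\rhohat)-\mu V(\rhostar)+\mu\bigl(2V(\rhohat)+V(\rhostar)+2\nrm{f_{\rhohat}-f_{\rhostar}}_2^2\bigr)+\tfrac{2}{s}K(\rhohat,\pi)-\tfrac{K(\rhohat,\pi)}{\lambda n}.
\]

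\textbf{Step 6: make the leftover nonpositive.} The KL pieces cancel favorably since $\tfrac1{\lambda n}\ge\tfrac4s$. The variance pieces are handled by $\mu\le(\eta+\kappa)/20$. The $\nrm{f_{\rhohat}-f_{\rhostar}}_2^2$ term is not yet absorbed.

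\textbf{Step 7: second inequality to absorb the cross term.} Apply \cref{lem:Rtilde-lower-bound} with $\rho=\rhohat$. It lower bounds $\Rtilde(\rhohat)-\Rtilde(\rhostar)$ by $-\tfrac{1}{\lambda n}\Delta K-\mu\Delta V+(\tfrac{\eta+\kappa}{4}-\mu)\nrm{f_{\rhohat}-f_{\rhostar}}_2^2$. Combined with the Step 1 upper bound, this gives
\[
\bigl(\tfrac{\eta+\kappa}{4}-\mu\bigr)\nrm{f_{\rhohat}-f_{\rhostar}}_2^2\le Z_n+(\text{controlled terms}).
\]
Adding this inequality to the Step 5 chain is what produces the factor $2Z_n$ in the statement.

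The main obstacle is the bookkeeping in Steps 6 and 7. One must choose the weights when adding the two inequalities so that every $V$, $\nrm{\cdot}_2^2$ and $K$ coefficient has the correct sign under $\mu\le(\eta+\kappa)/20$ and $1/(\lambda n)\ge 4/s$. I would mirror the exact coefficients of Lecu\'e--Rigollet, since the only structural change is replacing their $\kappa$ by $\eta+\kappa$, which \cref{eqn:doubly-variance-reduced} and \cref{lem:Rtilde-lower-bound} supply.
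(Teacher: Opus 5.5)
Your proposal is correct and follows the paper's proof. It uses the same two inequalities: optimality of $\rhohat$ rewritten through $Z_n$, and \cref{lem:Rtilde-lower-bound} at $\rho=\rhohat$. It also uses the same bound $\cR(\phi_\rho)\le\Rtilde(\rho)-\tfrac{\eta+\kappa}{4}V(\rho)$ and the same evaluation of the $\rhostar$-objective at Dirac masses.

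The only difference is the multiplier when combining the two inequalities. Adding them with weight exactly one, as you suggest, leaves coefficient $5\mu-\tfrac{\eta+\kappa}{4}$ on both $V(\rhohat)$ and $\nrm{f_{\rhohat}-f_{\rhostar}}_2^2$, and coefficient $\tfrac4s-\tfrac1{\lambda n}$ on $K(\rhohat,\pi)$. These are nonpositive under the two stated assumptions and give exactly $2Z_n$. The paper instead solves for $\nrm{f_{\rhohat}-f_{\rhostar}}_2^2$ and resubstitutes with factor $\tfrac{8\mu}{\eta+\kappa-12\mu}$. It then bounds $(1+\tfrac{8\mu}{\eta+\kappa-12\mu})Z_n$ by $2Z_n$, a step that tacitly assumes $Z_n\ge 0$, so your weight-one combination is slightly cleaner.
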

\begin{proof}
    From the definition of $\rhohat$ we know that $\Fhat(\rhohat)\leq \Fhat(\rhostar)$, spelled out being
    \begin{equation*}
        Q_n \elltilde_{\rhohat} +\frac{K(\rhohat,\pi)}{\lambda n} \leq Q_n \elltilde_{\rhostar} +\frac{K(\rhostar,\pi)}{\lambda n} 
    \end{equation*}
    which we can rearrange and add $\Rtilde(\rhohat)-\Rtilde(\rhostar)=Q(\elltilde_{\rhohat}-\elltilde_{\rhostar})$ on both sides to obtain
    \begin{align*}
        \Rtilde(\rhohat)-\Rtilde(\rhostar) &\leq (Q-Q_n)(\elltilde_{\rhohat}-\elltilde_{\rhostar})+\frac{1}{\lambda n}(K(\rhostar,\pi)-K(\rhohat,\pi))  \\
        &= Z_n + \mu \prn{2V(\rhohat)+V(\rhostar)+2\nrm{f_{\rhohat}-f_{\rhostar}}_2^2}+\frac{2}{s}K(\rhohat,\pi)  +\frac{1}{\lambda n}(K(\rhostar,\pi)-K(\rhohat,\pi))
    \end{align*}
    where the second equality is simply the definition of $Z_n$.
    Combining this with the lower bound from \cref{lem:Rtilde-lower-bound} and cancelling $\frac{1}{\lambda n}(K(\rhostar,\pi)-K(\rhohat,\pi))+\mu V(\rhostar)$ on both sides yields that
    \begin{equation*}
        \prn{\frac{\eta+\kappa}{4}-3\mu}\nrm{f_{\rhohat}-f_{\rhostar}}_2^2 \leq  Z_n + 3\mu V(\rhohat)+\frac{2}{s}K(\rhohat,\pi)
    \end{equation*}
    which we can plug back in using $\prn{\frac{\eta+\kappa}{4}-3\mu}^{-1}=\frac{4}{\eta+\kappa-12\mu}$ to get 
    \begin{align*}
        \Rtilde(\rhohat)-\Rtilde(\rhostar) &\leq Z_n +\mu(2V(\rhohat)+V(\rhostar))+\frac{2}{s}K(\rhohat,\pi)+ \frac{8\mu}{\eta+\kappa-12\mu} \prn{Z_n + \mu 3V(\rhohat)+\frac{2}{s}K(\rhohat,\pi)} \\
        &\qquad +\frac{1}{\lambda n}(K(\rhostar,\pi)-K(\rhohat,\pi)) \\
        &= \prn{1+\frac{8\mu}{\eta+\kappa-12\mu}}\prn{Z_n +\frac{2}{s}K(\rhohat,\pi)} + \prn{2\mu+\frac{24\mu^2}{\eta+\kappa-12\mu}}V(\rhohat) \\
        &\qquad +\frac{1}{\lambda n}(K(\rhostar,\pi)-K(\rhohat,\pi)) +\mu V(\rhostar).
    \end{align*}
    Now notice that by definition of $\Rtilde$, \cref{asm:fast-rate-aggregation} and \cref{lem:Delta-geq-Var,lem:Jensen-gap} we have that 
    \begin{align*}
        \Rtilde(\rho)&= \frac{1}{2}\cR(\phi_\rho)+\frac{1}{2}\EE_{k\sim \rho}\cR(f_k) \\
        &=\cR(\phi_\rho)-\frac{1}{2} (\cR(f_\rho)-\Phi(\rho)) +\frac{1}{2}\EE_{k\sim \rho}\cR(f_k) \\
        &\geq \cR(\phi_\rho)+\frac{ \eta}{4} V(\rho) + \frac{\kappa}{4} V(\rho) = \cR(\phi_\rho)+\frac{\eta+\kappa}{4} V(\rho),
    \end{align*}
    cf. \cref{eqn:doubly-variance-reduced}.
    It follows that
    \begin{align*}
        \cR(\phi_{\rhohat}) &\leq \Rtilde(\rhostar)+\mu V(\rhostar) + \frac{1}{\lambda n} K(\rhostar,\pi) + \prn{1+\frac{8\mu}{\eta+\kappa-12\mu}} Z_n \\
        &\qquad + \prn{2\mu+\frac{24\mu^2}{\eta+\kappa-12\mu}-\frac{\eta+\kappa}{4}}V(\rhohat) \\
        &\qquad + \prn{\prn{1+\frac{8\mu}{\eta+\kappa-12\mu}}\frac{2}{s}-\frac{1}{\lambda n}} K(\rhohat,\pi)
    \end{align*}
    Now, if the choice of parameters satisfies
    \begin{equation*}
        \quad 20\mu \leq \eta+\kappa, \quad \text{and} \quad \frac{1}{\lambda n} \geq \frac{4}{s}.
    \end{equation*}
    we can treat each term:
    $1+\frac{8\mu}{\eta+\kappa-12\mu} \leq 2$ since $\eta+\kappa-12\mu\geq 8\mu$.
    The second line is non-positive since $2\mu+\frac{24\mu^2}{\eta+\kappa-12\mu}-\frac{\eta+\kappa}{4} \leq 5\mu -\frac{\eta+\kappa}{4}\leq 0$. The third line is non-positive since $\prn{1+\frac{8\mu}{\eta+\kappa-12\mu}}\frac{2}{s}-\frac{1}{\lambda n}\leq \frac{4}{s}-\frac{1}{\lambda n}\leq 0$.

    It follows by definition of $\rhostar \in \argmin_{\rho} F(\rho)+\mu V(\rho) = \Rtilde(\rho)+\frac{1}{\lambda n}K(\rho,\pi)+\mu V(\rho)$ that
    \begin{align*}
        \cR(\phi_{\rhohat}) &\leq \Rtilde(\rhostar)+\mu V(\rhostar)  + \frac{1}{\lambda n} K(\rhostar,\pi) + 2Z_n \\
        &= \inf_{\rho} \crl{\Rtilde(\rho)+\mu V(\rho)+\frac{1}{\lambda n}K(\rho,\pi)} + 2Z_n \\
        &\leq \min_{k\in[K]} \Big\{ \underbrace{\Rtilde(\delta_{k})}_{=\cR(f_k)}+\underbrace{\mu V(\delta_{k})}_{=0}+\underbrace{\frac{1}{\lambda n}K(\delta_{k},\pi)}_{= \log (1/\pi_k)/(\lambda n)} \Big\} + 2Z_n.
    \end{align*}
    This yields the result of the proposition.
\end{proof}

Finally, we now show that a similar bound to that proved in \citet{lecue2014optimal} on ``their $Z_n$'' also applies to our $Z_n$. Again, note that while they look the same, they differ both in the loss function and the $\rhohat$. Luckily, this does not change the proof too much given the following assumptions. For completeness, we state the whole proof here.
\begin{remark}
    In the work by \citet{lecue2014optimal}, there is a mistake in this step. When deriving their Equations (3.7) and (3.8), a factor $1/2$ is missing in front of $K(\hat\theta)$. This can be fixed by using $-\frac{2}{s}K(\rhohat,\pi)$ instead of $-\frac{1}{s}K(\rhohat,\pi)$ in the definition of $Z_n$, assuming $\beta \geq 4n/s$ in their Proposition 5 and later adjusting the assumption on $\beta$ in the main result accordingly. We already applied this fix in our \cref{prop:Q-aggregation-error-pullout}.
\end{remark}
\begin{proposition}
\label{prop:Q-aggregation-Z-bound}
    Assume that all functions and $Y$ are in $[0,1]$, and that the loss and $\phi$ are Lipschitz in this sense: 
    \begin{equation*}
        \forall \yhat,\yhat'\in[0,1]:\, \abs{\ell(\yhat,y)-\ell(\yhat',y)}\leq L_{\ell} \abs{\yhat-\yhat'} \qquad \text{and} \qquad \forall x\in \cX:\, \abs{\phi_\rho(x)-\phi_{\gamma}(x)} \leq L_{\phi}\abs{f_{\rho}(x)-f_{\gamma}(x)}.
    \end{equation*}
    Assume that $\lambda n \leq s/4 $ and $s< \min\crl{n /\sqrt{3} L_\ell L_\phi, 8\mu n /(L_\ell (L_\ell +8\mu))} $. Then it holds that
    \begin{equation*}
        \PP\prn{Z_n \geq \frac{\log (1/\delta)}{\lambda n}} \leq \delta \quad \text{and} \quad \EE\brk{Z_n}\leq 0.
    \end{equation*}
\end{proposition}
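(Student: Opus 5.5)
The plan is to follow the scheme of \citet[Theorem 3]{lecue2014optimal}. First we reduce the supremum over the random $\rhohat$ to a $\pi$-weighted average over vertices, and then we control each vertex term with an exponential moment bound of Bernstein type. The offset terms $-\mu(\cdots)$ absorb the variance and $-\tfrac{2}{s}K(\rhohat,\pi)$ absorbs the union over vertices. The only genuinely new ingredient is the $\phi_\rho$ part of $\elltilde_\rho$, which is not affine in $\rho$. We handle it through the assumed Lipschitz bound $\abs{\phi_\rho(x)-\phi_\gamma(x)}\le L_\phi\abs{f_\rho(x)-f_\gamma(x)}$.

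\textbf{Step 1 (pointwise domination by vertex quantities).} Split $\elltilde_\rho-\elltilde_{\rhostar}=\tfrac12 h^{\phi}_\rho+\tfrac12\EE_{k\sim\rho}(\ell_{f_k}-\ell_{f_{\rhostar}})+\tfrac12\EE_{k\sim\rhostar}(\ell_{f_{\rhostar}}-\ell_{f_k})$, where $h^\phi_\rho=\ell(\phi_\rho(\cdot),\cdot)-\ell(\phi_{\rhostar}(\cdot),\cdot)$. The two averaged terms are already affine in $\rho$. For $h^\phi_\rho$, the Lipschitz assumptions give
\[
\abs{h^\phi_\rho(x,y)}\le L_\ell L_\phi\abs{f_\rho(x)-f_{\rhostar}(x)}\le L_\ell L_\phi\,\EE_{k\sim\rho}\abs{f_k(x)-f_{\rhostar}(x)}.
\]
It follows that $\nrm{h^\phi_\rho}_2^2\le L_\ell^2L_\phi^2\,\EE_{k\sim\rho}\nrm{f_k-f_{\rhostar}}_2^2=L_\ell^2L_\phi^2\bigl(V(\rho)+\nrm{f_\rho-f_{\rhostar}}_2^2\bigr)$. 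The same bias--variance identity gives a matching bound for the affine parts, with $L_\ell$ in place of $L_\ell L_\phi$ and with $V(\rhostar)$ contributed by the $\rhostar$-average. The quantities $V(\rho)$, $V(\rhostar)$ and $\nrm{f_\rho-f_{\rhostar}}_2^2$ are exactly the ones appearing in the offset of $Z_n$, which is why the coefficients $2,1,2$ are what they are.

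\textbf{Step 2 (from $\rhohat$ to a prior average).} For fixed $\rho$ the centred increments are bounded by $L_\ell L_\phi$, and their variance is controlled as in Step 1. Bernstein's MGF bound for i.i.d.\ bounded variables gives, for a single function $g$ with $\abs{g}\le b$,
\[
\EE\exp\bigl(s(Q-Q_n)g\bigr)\le\exp\Bigl(\tfrac{s^2\Var(g)}{2n(1-sb/(3n))}\Bigr).
\]
The constraint $s<n/(\sqrt3L_\ell L_\phi)$ keeps the denominator bounded below, and the constraint $s<8\mu n/(L_\ell(L_\ell+8\mu))$ makes the variance exponent smaller than $s\mu(2V(\rho)+V(\rhostar)+2\nrm{f_\rho-f_{\rhostar}}_2^2)$. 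Hence for each fixed $\rho$ the offset-corrected process $W_\rho$ satisfies $\EE e^{sW_\rho}\le1$.

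To pass to the random $\rhohat$, I would linearise in $k$ as Lecué--Rigollet do. By Step 1, the offset-corrected quantity at $\rho$ is dominated by $\EE_{k\sim\rho}$ of vertex-indexed variables $W_k$, each with $\EE e^{sW_k}\le 1$. The Donsker--Varadhan inequality $\EE_{k\sim\rho}W_k-\tfrac1sK(\rho,\pi)\le\tfrac1s\log\sum_k\pi_ke^{sW_k}$ then gives, uniformly in $\rho$ and in particular at $\rhohat$, $\EE e^{s Z_n}\le 1$; this is where the factor $2/s$ rather than $1/s$ matters, per the Remark.

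\textbf{Step 3 (conclusion).} Markov's inequality gives $\PP(Z_n\ge \log(1/\delta)/s)\le\delta$. Since $\lambda n\le s/4$, we have $1/s\le 1/(\lambda n)$, which yields the stated tail bound. Jensen's inequality applied to $\EE e^{sZ_n}\le1$ gives $\EE Z_n\le0$.

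The main obstacle is Step 1 together with its use in Step 2. Because $\rho\mapsto\ell(\phi_\rho)$ is nonlinear, one must verify that the domination by $\EE_{k\sim\rho}\abs{f_k-f_{\rhostar}}$ holds in the \emph{exponent} and not merely in variance. I would first try to bound the $\phi$-term by a sub-Gaussian/Bernstein chaining over the simplex with the offset. If that fails, I would fall back on a convexity argument: show that the offset-corrected supremum over $\rho$ is attained, up to the prior penalty, at a convex combination handled by the vertex union bound.
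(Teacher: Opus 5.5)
There is a genuine gap, and it sits exactly at the step you flagged as the main obstacle. In Step~2 you assert that the offset-corrected quantity at $\rho$ is dominated by $\EE_{k\sim\rho}W_k$ for vertex-indexed variables with $\EE e^{sW_k}\le 1$. Step~1 does not deliver this. The pointwise bound $\abs{h^\phi_\rho}\le L_\ell L_\phi\,\EE_{k\sim\rho}\abs{f_k-f_{\rhostar}}$ controls the size, and hence the variance, of $h^\phi_\rho$. It says nothing about the signed, centred quantity $(Q-Q_n)h^\phi_\rho$. There is no inequality of the form $(Q-Q_n)h^\phi_\rho\le \EE_{k\sim\rho}(Q-Q_n)g_k$ with fixed functions $g_k$; the absolute bound only gives $(Q-Q_n)h^\phi_\rho\le (Q+Q_n)\abs{h^\phi_\rho}$, which is not centred. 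Your fixed-$\rho$ Bernstein bounds cannot be evaluated at the data-dependent $\rhohat$. Since $\rho\mapsto (Q-Q_n)\ell(\phi_\rho(\cdot),\cdot)$ is neither affine nor convex, neither the vertex/Donsker--Varadhan reduction nor your convexity fallback applies. Chaining over the simplex would not naturally produce the vertex-wise $\log(1/\pi_k)$ terms that the penalty $\tfrac{2}{s}K(\rhohat,\pi)$ is built to absorb.

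The missing idea is symmetrization followed by the Ledoux--Talagrand contraction principle, which linearises the $\phi$-term inside $\EE\exp(\sup_\rho\,\cdot\,)$. The Lipschitz assumption on $\phi$ forces $\phi_\rho(x)$ to depend on $\rho$ only through $f_\rho(x)$. Hence $\ell(\phi_\rho(x),y)-\ell(\phi_{\rhostar}(x),y)=\varphi_{x,y}\bigl(f_\rho(x)-f_{\rhostar}(x)\bigr)$, where $\varphi_{x,y}$ is $L_\ell L_\phi$-Lipschitz and $\varphi_{x,y}(0)=0$.

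The paper writes $Z_n=A_n+B_n$, each part carrying $-\tfrac1sK(\rhohat,\pi)$, and applies Cauchy--Schwarz. Here $A_n$ contains $\tfrac12(Q-Q_n)\bigl(\ell(\phi_{\rhohat}(\cdot),\cdot)-\ell(\phi_{\rhostar}(\cdot),\cdot)\bigr)$ with offset $\mu\sum_k\rhohat_k\nrm{f_k-f_{\rhostar}}_2^2$, and $B_n$ contains the affine part with offset $\mu\,\rhohat\bH\rhostar$.

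\begin{itemize}
\item For $A_n$, pass to the supremum over $\rho$, then symmetrize and contract; both steps are compatible with deterministic $\rho$-dependent offsets. The process becomes $L_\ell L_\phi\,Q_{\sigma,n}(f_\rho-f_{\rhostar})$, which together with the offset and $K(\rho,\pi)$ is linear in $\rho$. The supremum therefore sits at a vertex, and the Lecu\'e--Rigollet Rademacher argument applies; this is where $s<n/(\sqrt3L_\ell L_\phi)$ is used.
\item For $B_n$, the bilinear offset reduces matters to pairs $(j,k)$ and a Bernstein-type bound; this is where $s<8\mu n/(L_\ell(L_\ell+8\mu))$ enters. Your assignment of the two constraints in Step~2 is therefore also off.
\end{itemize}

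Finally, the Cauchy--Schwarz split costs a factor $2$ in the exponent. What comes out is $\EE e^{\lambda nZ_n}\le 1$ (using $2\lambda n\le s$), not the stronger $\EE e^{sZ_n}\le 1$ claimed in your Step~3. The weaker bound is all that is needed for both conclusions.
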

\begin{proof}
    Thanks to Jensen's inequality and Chernoff's bound, it is sufficient to prove that $\EE\exp(\lambda n Z_n)\leq 1$, because then
    \begin{equation*}
        \PP(\lambda n Z_n> x) \leq \frac{\EE\exp(\lambda n Z_n)}{\exp x} \leq \exp(- x) \quad \text{and} \quad \EE\brk{Z_n} \leq \frac{1}{\lambda n }\log \EE \exp\prn{\lambda n Z_n} \leq0.
    \end{equation*}
    Denote $\ell_\rho(x,y)=\ell(\phi_\rho(x),y), \ \ell^{(k)}(x,y)=\ell(f_k(x),y)$ and write $Z_n$ as
    \begin{align*}
        Z_n &= (Q-Q_n)(\elltilde_{\rhohat}-\elltilde_{\rhostar})-\mu \prn{2V(\rhohat)+V(\rhostar)+2\nrm{f_{\rhohat}-f_{\rhostar}}_2^2}-\frac{2}{s}K(\rhohat,\pi) \\
        &=(Q-Q_n) \prn{\frac{1}{2}(\ell_{\rhohat}-\ell_{\rhostar})+\frac{1}{2} \sum_{k=1}^K (\rhohat_k-\rhostar_k)\ell^{(k)} 
        }-\mu \prn{\sum_{k\in[K]}\rhohat_k\nrm{f_k-f_{\rhostar}}_2^2+\rhohat \bH \rhostar}-\frac{2}{s}K(\rhohat,\pi) \\
        &= \underbrace{\frac{1}{2}(Q-Q_n) (\ell_{\rhohat}-\ell_{\rhostar})-\mu \sum_{k\in[K]}\rhohat_k\nrm{f_k-f_{\rhostar}}_2^2-\frac{1}{s}K(\rhohat,\pi)}_{=A_n}  \\
        &\quad + \underbrace{\frac{1}{2}(Q-Q_n) \sum_{k=1}^K (\rhohat_k-\rhostar_k)\ell^{(k)}-\mu\rhohat \bH \rhostar -\frac{1}{s}K(\rhohat,\pi)}_{=: B_n}
    \end{align*}
    where we used the $K\times K$ matrix $\bH_{j,k}=\nrm{f_j-f_k}_2^2$ and the identities
    \begin{align*}
        V(\rhohat) + \nrm{f_{\rhohat}-f_{\rhostar}}_2^2 &= \sum_{k\in [K]} \rhohat_k \nrm{f_k-f_{\rhostar}}_2^2, \\
        V(\rhohat)+V(\rhostar) +\nrm{f_{\rhostar}-f_{\rhohat}}_2^2 &= \rhohat \bH \rhostar,
    \end{align*}
    see Equations $(3.2)$ and $(3.3)$ from \citet{lecue2014optimal}.
    
    Apply Cauchy-Schwarz $\EE\exp \lambda n Z_n = \EE\exp \lambda n A_n \exp \lambda n B_n \leq \sqrt{\EE\exp 2\lambda n A_n}\sqrt{\EE\exp 2\lambda n B_n}$, and prove $\EE\exp 2\lambda n A_n \leq 1$ and $\EE\exp 2\lambda n B_n \leq 1$, respectively.

    Our assumptions ensure that $\abs{\ell_{\rhohat}-\ell_{\rhostar}}\leq L_{\ell}\abs{\phi_{\rhohat}-\phi_{\rhostar}}\leq L_\ell L_\phi \abs{f_{\rhohat}-f_{\rhostar}}$, and so symmetrization and contraction yields 
    \begin{align*}
        \EE\exp 2\lambda n A_n &\leq \EE \exp\prn{2\lambda n \max_{\rho\in \cP([K])}\crl{\frac{1}{2}(Q-Q_n) (\ell_{\rho}-\ell_{\rhostar})-\mu \sum_{k\in[K]}\rho_k\nrm{f_k-f_{\rhostar}}_2^2-\frac{1}{s}K(\rho,\pi)}} \\
        &\leq \EE \exp\prn{s \max_{\rho\in \cP([K])}\crl{L_\ell L_\phi Q_{\sigma,n} (f_{\rho}-f_{\rhostar})-\mu \sum_{k\in[K]}\rho_k\nrm{f_k-f_{\rhostar}}_2^2-\frac{1}{s}K(\rho,\pi)}}
    \end{align*}
    where we used that $2\lambda n \leq s$ and denote $Q_{\sigma,n}$ as the symmetrized measure with i.i.d. Rademacher variables. Note that the choice of $\rhostar$ is irrelevant for the rest of the proof; hence
    from here on, the proof is identical to \citet{lecue2014optimal} and we get that $\EE\exp 2\lambda n A_n\leq 1$ if $s< n /\sqrt{3} L_\ell L_\phi$.

    Moreover, by similar calculations as in Equation (3.15) in \citet{lecue2014optimal}, we have that
    \begin{align*}
        \EE\exp 2\lambda n B_n &\leq  \EE\exp \prn{  \sum_{k=1}^K \rhohat_k\sum_{j=1}^K \rhostar_j s \prn{\frac{1}{2}(Q-Q_n)(\ell^{(j)}-\ell^{(k)})-\mu\nrm{f_j-f_k}_2^2}-K(\rhohat,\pi)} \\
        &\leq \EE\exp \prn{ \max_{\rho\in\cP([K])}\crl{ \sum_{k=1}^K \rho_k \sum_{j=1}^K \rhostar_j s \prn{\frac{1}{2}(Q-Q_n)(\ell^{(j)}-\ell^{(k)})-\mu\nrm{f_j-f_k}_2^2}-K(\rho,\pi)}} \\
        &\leq  \sum_{k=1}^K \pi_k \EE \exp\prn{\sum_{j=1}^K \rhostar_j s \prn{\frac{1}{2}(Q-Q_n)(\ell^{(j)}-\ell^{(k)})-\mu\nrm{f_j-f_k}_2^2}} \\
        &\leq \sum_{k\in[K]}  \pi_k \sum_{j\in [K]} \rhostar_j \EE\exp \prn{s\prn{\frac{1}{2}(Q-Q_n)(\ell^{(j)}-\ell^{(k)})-\mu\nrm{f_j-f_k}_2^2}}
    \end{align*}
    where the first inequality follows from expanding the quadratic term $\rhohat \bH \rhostar$ and $2\lambda n \leq s$, the third one follows from 
    \begin{equation*}
        \max_{\rho\in\cP([K])} \crl{\sum_{k=1}^K \rho_k a_k -K(\rho,\pi)} = \max_{k\in[K]} \crl{a_k +\log \pi_k}, 
    \end{equation*}
    and the last one is Jensen's inequality.
    Since $f_{\delta_k}=f_k$, the Lipschitz assumption gives
    \begin{equation*}
        Q(\ell^{(j)}-\ell^{(k)})^2 \leq L_\ell^2 Q(f_j-f_k)^2 = L_\ell^2 \nrm{f_k-f_j}_2^2 \implies -\mu \nrm{f_k-f_j}_2^2 \leq -\frac{\mu}{L_\ell^2} Q(\ell^{(j)}-\ell^{(k)})^2.
    \end{equation*}
    Plugging this in and using Proposition 1 in \citet{lecue2014optimal} yields $\EE\exp 2\lambda n B_n \leq 1$ 
    whenever $s<8\mu n /(L_\ell (L_\ell +8\mu))$. That concludes the proof of this proposition.
\end{proof}
Combining \cref{prop:Q-aggregation-error-pullout} and \cref{prop:Q-aggregation-Z-bound} concludes the proof of \cref{thm:fast-rate-MSA} by noting that all conditions from \cref{lem:Rtilde-lower-bound,prop:Q-aggregation-error-pullout,prop:Q-aggregation-Z-bound} are satisfied whenever we have that
\begin{equation*}
     \mu \leq \tfrac{1}{20}(\eta+\kappa), \quad 1\leq  \lambda n \leq \frac{s}{4}<\frac{1}{4}\min\crl{\frac{n}{\sqrt{3} L_\ell L_\phi },\frac{8\mu n}{L_\ell (L_\ell +8\mu)}}.
\end{equation*}
One set of parameters that satisfies these constraints is
\begin{equation*}
    \mu = \frac{\eta+\kappa}{20}, \quad \text{and} \quad  \lambda = \frac{s}{4n}= \frac{1}{11L_\ell}  \min\crl{\frac{1}{L_\phi},\frac{\eta+\kappa }{ L_\ell + \eta+\kappa}}. 
\end{equation*}
Further, this not only yields the high-probability bound, but also a bound in expectation:
\begin{equation}
\label{eqn:fast-rate-expectation}
    \EE\brk{\excessRisk{\phi_{\rhohat}}{\crl{f_k}_{k=1}^K}} \leq \frac{\log(1/\pi_k)}{\lambda n}.
\end{equation}

\subsection{Proof of \texorpdfstring{\cref{thm:lower-bound-convex-hull}}{Theorem \ref{thm:lower-bound-convex-hull}}}
\label{proof:lower-bound-convex-hull}

The proof idea of \cref{thm:lower-bound-convex-hull} is to extend the setting of \cref{ex:fast-rate} by creating $d:=\floor{\log_2(K)}$ copies of it. To this end, choose $\cX=\crl{(i,j):i\in[d], j\in\crl{-1,1}}$ and, for some $\eps\in(0,1/4]$ to be determined later, let us define the family of distributions $\cQ = \crl{Q_{\bsigma}: \bsigma\in \crl{-1,+1}^d}$ via the random variables $(X,Y)\sim Q_{\bsigma}$ as
\begin{equation*}
     \PP(X=(i,-1))=\frac{1}{d}\prn{\frac{1}{2}+\sigma_i\eps}, \quad \PP(X=(i,1))=\frac{1}{d}\prn{\frac{1}{2}-\sigma_i\eps}, \quad (Y|X=(i,j)) = \one\crl{j=+1} .
\end{equation*}
We consider the absolute loss $\ell(f,(x,y))= \abs{y-f(x)}$ and the dictionary $\crl{f_{\btau}: \btau\in\crl{-1,+1}^d}$ with $f_{\btau}(i,j)= \one\crl{\tau_i=+1}$. $\btau\in\crl{\pm1}^d$ is the index of the dictionary and yields a dictionary of size $2^d=2^{\floor{\log_2(K)}}\in[K/2,K]$.
As the hypothesis space, we extend the dictionary to the following set
\begin{equation*}
    \cF= \conv\prn{\crl{f_{\btau}: \btau\in\crl{-1,+1}^d}} \cup \crl{g_{\bu}:\bu\in [0,1]^d},
\end{equation*}
with the functions $g_{\bu}$ defined as $g_{\bu}(i,-1) = u_i^2$ and $ g_{\bu}(i,+1)= 2u_i-u_i^2$. Finally, note that $\cR_{Q_{\bsigma}}=\sum_{\btau\in \crl{\pm}^d} q_{\btau}(\bsigma) \cR_{\btau}$. Here, $\cR_{\btau}(f) = \EE_{(X,Y)\sim P_{\btau}} \abs{f(X)-Y}$, where $P_{\btau}$ is defined by
\begin{equation*}
      \PP(X=(i,-1))=\frac{1}{d}\one\crl{\tau_i=-1}, \quad \PP(X=(i,+1))=\frac{1}{d}\one\crl{\tau_i=+1}, \quad (Y|X=(i,j)) = \one\crl{j=+1}. 
\end{equation*}
Indeed, for the weights defined as
\begin{equation*}
    q_{\btau}(\bsigma) = \prod_{i=1}^d \prn{\frac{1}{2}-\sigma_i\eps}^{\one\crl{\tau_i=+1}}\prn{\frac{1}{2}+\sigma_i\eps}^{\one\crl{\tau_i=-1}}
\end{equation*}
a direct calculation verifies that $\sum_{\btau\in\crl{\pm}^d}q_{\btau}(\bsigma)=1$ and $Q_{\bsigma}=\sum_{\btau\in\crl{\pm}^d}q_{\btau}(\bsigma) P_{\btau}$. The claim then follows from \cref{lem:Q-in-convex-hull}.

\subsubsection{Lower bound}
We begin by noticing that we can write the convex combination of dictionary elements as
\begin{equation*}
    f_\rho(i)\equiv f_\rho (i,j) = \sum_{\btau\in \crl{-1,+1}^d} \rho_{\btau}  \one\crl{\tau_i=+1}.
\end{equation*}
For any expert $f_{\btau}$, the loss reads as $\abs{f_{\btau}(i,j)-\one\crl{j=+1}} = \abs{\one\crl{\tau_i=+1}-\one\crl{j=+1}} = \one\crl{\tau_i\neq j}$ for any $(i,j)$. Therefore, we can write the risk $\cR_{\bsigma}=\cR_{Q_{\bsigma}}$ of $f_{\btau}$ as
\begin{equation*}
    \cR_{\bsigma}(f_{\btau})=\frac{1}{d}\sum_{i=1}^d \prn{\frac{1}{2}-\sigma_i \eps}\one\crl{\tau_i=-1} + \prn{\frac{1}{2}+\sigma_i \eps}\one\crl{\tau_i=+1} = \frac{1}{2}+\frac{\eps}{d}\sum_{i=1}^d \sigma_i \tau_i.
\end{equation*}
Thus, the best dictionary element is the one corresponding to $\btau=-\bsigma$ which achieves risk $1/2-\eps$, and each coordinate that differs in $\btau$ increases the excess risk of $f_{\btau}$ by $2\eps/d$. Moreover, for the aggregate $f_\rho$ we have
\begin{equation*}
    \cR_{\bsigma}(f_\rho) = \frac{1}{d}\sum_{i=1}^d \prn{\frac{1}{2}-\sigma_i \eps}(1-f_\rho(i)) + \prn{\frac{1}{2}+\sigma_i \eps}f_\rho(i) = \frac{1}{2}+\frac{\eps}{d}\sum_{i=1}^d \sigma_i(2f_\rho(i)-1),
\end{equation*}
and the excess risk is given by
\begin{align*}
    \cR_{\bsigma}(f_\rho) -\min_{\btau} \cR_{\bsigma}(f_{\btau}) &= \eps+ \frac{\eps}{d} \sum_{i=1}^d\sigma_i(2f_\rho(i)-1). 
\end{align*}

For any estimator $\rhohat \in \triangle^{2^d-1 }\cong \cP\prn{\crl{\btau:\btau\in\crl{\pm1}^d}}$ that we may compute on $n$ i.i.d. samples from $Q_{\bsigma}$, that is, $\rhohat\equiv \rhohat(Z)$ where $Z\sim Q_{\bsigma}^{\otimes n}$, define the estimator $\bsigmahat$ as $\sigmahat_i = -\sign(2f_{\rhohat}(i)-1) \in \crl{-1,+1}$ (with arbitrary tie breaks when it is zero). Then,
\begin{equation*}
    \cR_{\bsigma}(f_{\rhohat}) -\min_{\btau} \cR_{\bsigma}(f_{\btau}) \geq \frac{\eps}{2d} \nrm{\bsigmahat-\bsigma}_1.
\end{equation*}
We can now apply Assouad's Lemma to lower bound the excess risk.
\begin{lemma}[Assouad's lower bound \citep{yu1997assouad}]
    Let $\cQ_n=\crl{Q_{\bsigma}^n:\bsigma\in\crl{-1,1}^d}$ be a family of probability measures. Write $\bsigma\sim \bsigma'$ if $\bsigma$ and $\bsigma'$ differ in only one coordinate.
    Then
    \begin{equation*}
        \max_{\bsigma} \EE_{Z\sim Q_{\bsigma}^n} \brk{\nrm{\bsigmahat(Z)-\bsigma}_1} \geq  d \min\crl{1-\sqrt{\frac{1}{2}\KL(Q_{\bsigma}^n,Q_{\bsigma'}^n)}:\bsigma\sim \bsigma'}.
    \end{equation*}
\end{lemma}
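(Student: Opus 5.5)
The bound follows from the standard Assouad reduction: decompose the $\ell_1$-loss into $d$ coordinatewise testing problems, lower bound each by a two-point (Le Cam) argument in total variation, and then pass from total variation to KL via Pinsker's inequality. Nothing about the specific family $\cQ_n$ is used, only that it is indexed by the hypercube $\crl{-1,1}^d$.

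First, since both $\bsigmahat$ and $\bsigma$ take values in $\crl{-1,+1}^d$, we have $\nrm{\bsigmahat-\bsigma}_1=2\sum_{i=1}^d\one\crl{\sigmahat_i\neq\sigma_i}$. Bound the maximum over $\bsigma$ from below by the average over $\bsigma$ drawn uniformly from $\crl{-1,1}^d$. This gives
\begin{equation*}
\max_{\bsigma}\EE_{Q_{\bsigma}^n}\nrm{\bsigmahat-\bsigma}_1\geq \frac{2}{2^d}\sum_{i=1}^d\sum_{\bsigma}Q_{\bsigma}^n\prn{\sigmahat_i\neq\sigma_i}.
\end{equation*}
For a fixed coordinate $i$, split the $2^d$ vertices into $2^{d-1}$ pairs $(\bsigma,\bsigma^{(i)})$, where $\bsigma^{(i)}$ is $\bsigma$ with coordinate $i$ flipped; these pairs are exactly the neighbours $\bsigma\sim\bsigma'$ in direction $i$.

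Second, for each such pair, the event $A=\crl{\sigmahat_i\neq\sigma_i}$ has complement $\crl{\sigmahat_i\neq\sigma^{(i)}_i}$, because $\sigmahat_i$ must equal exactly one of the two values $\pm 1$. Hence
\begin{equation*}
Q_{\bsigma}^n(A)+Q_{\bsigma^{(i)}}^n(A^c)\geq 1-\nrm{Q_{\bsigma}^n-Q_{\bsigma^{(i)}}^n}_{\mathrm{TV}}.
\end{equation*}
Summing over the $2^{d-1}$ pairs and then over $i$ yields
\begin{equation*}
\max_{\bsigma}\EE\nrm{\bsigmahat-\bsigma}_1\geq \frac{2\cdot 2^{d-1}}{2^d}\sum_{i=1}^d\min_{\bsigma\sim\bsigma'}\prn{1-\nrm{Q_{\bsigma}^n-Q_{\bsigma'}^n}_{\mathrm{TV}}}\geq d\min_{\bsigma\sim\bsigma'}\prn{1-\nrm{Q_{\bsigma}^n-Q_{\bsigma'}^n}_{\mathrm{TV}}}.
\end{equation*}
The factor $2$ from the $\ell_1$-norm cancels exactly with the averaging factor $\tfrac12$ coming from the pairing.

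Finally, Pinsker's inequality $\nrm{P-P'}_{\mathrm{TV}}\leq\sqrt{\tfrac12\KL(P,P')}$ turns this into the stated bound. No step here is a real obstacle. The only point that needs care is the bookkeeping of constants: the pairing must be a genuine partition of the hypercube in each direction, and one must keep track of the convention that total variation is $\sup_A\abs{P(A)-P'(A)}$, so that Pinsker applies with the constant $\tfrac12$ inside the square root and the final constant is exactly $d$.
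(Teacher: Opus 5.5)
Your proof is correct. The paper gives no proof of its own and only cites \citet{yu1997assouad}, whose Assouad lemma is proved by exactly this coordinatewise Le Cam pairing; for the $\ell_1$ loss on the hypercube that gives $d\min_{\bsigma\sim\bsigma'}\bigl(1-\nrm{Q_{\bsigma}^n-Q_{\bsigma'}^n}_{\mathrm{TV}}\bigr)$, and Pinsker's inequality then yields the stated KL form. Your assumption $\bsigmahat\in\crl{-1,+1}^d$ also matches the paper's application, where $\sigmahat_i=-\sign(2f_{\rhohat}(i)-1)$ with ties broken arbitrarily.
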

In particular, we need to bound the Kullback-Leibler divergence between the two distributions.
A calculation shows that for $\bsigma\sim \bsigma'$. Then, for $\eps\leq 1/4$ and by \citet[Lemma 2.7]{tsybakov2009introduction}
\begin{align*}
    \KL(Q_{\bsigma}^{\otimes n},Q_{\bsigma'}^{\otimes n})&=\frac{n}{d}\KL(\Ber(\tfrac{1}{2}\pm\eps),\Ber(\tfrac{1}{2}\mp\eps)) \\
    &\leq \frac{n}{d}\chi^2(\Ber(\tfrac{1}{2}\pm\eps),\Ber(\tfrac{1}{2}\mp\eps)) \\
    &= \frac{n}{d}\frac{4\eps^2}{(1/2+\eps)(1/2-\eps)} \leq \frac{22n\eps^2}{d}.
\end{align*}
We get that the minimax expected excess risk is lower bounded by
\begin{equation*}
    \max_{\bsigma}\EE_{Q_{\bsigma}^{\otimes n}}\brk{\cR_{\bsigma}(f_{\rhohat}) -\min_{\btau} \cR_{\bsigma}(f_{\btau})} \geq \frac{\eps}{2}\prn{1-\sqrt{\frac{11 n \eps^2}{d}}}  \geq \frac{\eps}{4} = \frac{1}{8}\sqrt{\frac{d}{11n}} =  \frac{1}{8}\sqrt{\frac{\floor{\log_2(K)} }{11n}} \geq  \frac{1}{38} \sqrt{\frac{\log (K)}{n}}.
\end{equation*}
where the second and third (in-)equalities hold for $\eps=\frac{1}{2}\sqrt{\frac{d}{11 n}}$ which we now show to be smaller than $1/4$. Also note that we used that for any $K\geq 4$,
\begin{equation*}
    \floor{\log_2(K)} \geq \log_2(K) -1  \geq \frac{1}{2}\log_2(K).
\end{equation*}
Indeed, since we assumed that $\log(K)/n \leq 1$, we have that
\begin{equation*}
    \eps=\frac{1}{2}\sqrt{\frac{\floor{\log_2(K)}}{11 n}} \leq \frac{1}{2}\sqrt{\frac{\log K}{22 \log (2) n}} < \frac{1}{7} \sqrt{\frac{\log(K)}{n}} \leq \frac{1}{4}.
\end{equation*}
That concludes the proof of the lower bound.

\subsubsection{Upper bound} 
Finally, we check that \cref{thm:fast-rate-MSA} applies. In particular, we claim that we may choose $(\phi,\Phi)$ as $(\psi,\Psi)$ from \cref{eqn:aggregation-map-Pareto-set}.
To begin, we calculate that for any $\btau$ and any $f$,
\begin{equation*}
    \cR_{\btau}(f) = \frac{1}{d}\sum_{i=1}^d \prn{\one\crl{\tau_i=-1}f(i,-1)+\one\crl{\tau_i=+1}(1-f(i,+1))}.
\end{equation*}
Recall the definition of $\psi_\rho$ from \cref{eqn:aggregation-map-Pareto-set}, which in the notation of this setting is
\begin{equation*}
    \psi_\rho \in \argmin_{f\in\cF} \max_{\btau\in\crl{\pm}^d} \crl{\cR_{\btau}(f)-\cR_{\btau}(f_\rho)}.
\end{equation*}
Furthermore, the objective is non-negative on the convex hull; thus, the minimizer will be one of the $g_{\bu}$. For $g_{\bu}$ we can calculate the risk compared to $f_\rho$
\begin{equation*}
    \cR_{\btau}(g_{\bu})-\cR_{\btau}(f_\rho) = \frac{1}{d}\sum_{i=1}^d \prn{\one\crl{\tau_i=-1}(u_i^2-f_\rho(i))+\one\crl{\tau_i=+1}((1-u_i)^2-(1-f_\rho(i)))} .
\end{equation*}
Maximizing over $\btau$, which can be done coordinate-wise, yields
\begin{equation*}
    \max_{\btau\in\crl{\pm}^d} \crl{\cR_{\btau}(g_{\bu})-\cR_{\btau}(f_\rho)} = \frac{1}{d}\sum_{i=1}^d \max\crl{u_i^2-f_\rho(i),(1-u_i)^2-(1-f_\rho(i))}.
\end{equation*}
On the other hand, minimizing over $\bu$ (also coordinate-wise), yields that the minimum is attained for $u_i= f_\rho(i)$. Hence, $\psi_{\rho}=g_{\bu}$ with 
\begin{equation*}
    -\Psi(\rho) = \min_{\bu\in[0,1]^d}\max_{\btau\in\crl{\pm}^d} \crl{\cR_{\btau}(g_{\bu})-\cR_{\btau}(f_\rho)} = \frac{1}{d}\sum_{i=1}^d f_\rho^2(i)-f_\rho(i) = -\frac{1}{d}\sum_{i=1}^d f_\rho(i)(1-f_\rho(i)) \leq 0.
\end{equation*}
We can now verify that $(\psi,\Psi)$ satisfies \cref{asm:fast-rate-aggregation}.
\begin{enumerate}
    \item Clearly, when we plug in any Dirac delta $\delta_{\btau}$, it holds that $\Psi(\delta_{\btau})=\frac{1}{d}\sum_{i=1}^d f_{\btau}(i)(1-f_{\btau}(i))=0$ because $f_{\btau}(i)$ is either $0$ or $1$.
    \item $\Psi$ is $\eta=2$-strongly concave in $\nrm{f_\rho-f_\gamma}_2^2= \frac{1}{d}\sum_{i=1}^d (f_\rho(i)-f_{\gamma}(i))^2$: indeed, for $h(u)=u(1-u)$, it holds $h''(u)=-2$. As a consequence, for $\bu=(f_{\rho}(1),...,f_{\rho}(d))$ we have
    \begin{equation*}
        \nabla_{\bu}^2 \Psi(\rho) = \nabla_{\bu}^2 \frac{1}{d}\sum_{i=1}^d h(u_i) = -\frac{2}{d}\I_d.
    \end{equation*}
    \item Finally, our previous calculations verify that $\cR_{\btau}(f_\rho) - \cR_{\btau}(\psi_\rho) = \Psi(\rho)$ for all $\btau$,
    and thus $\bR(\psi_\rho)=\bR(f_\rho)-\Psi(\rho)\one$.
\end{enumerate}
Moreover, all the other assumptions of \cref{thm:fast-rate-MSA} are also satisfied.
\begin{enumerate}
    \item Boundedness is obvious as all functions and $Y$ take values in $[0,1]$.
    \item The absolute loss is $1$-Lipschitz, that is, $L_\ell=1$.
    \item $\psi_\rho$ is $2$-Lipschitz in $f_\rho$: if $\psi_\rho=g_{\bu}$ and $\psi_{\gamma} = g_{\v}$, then in each coordinate we have that $\abs{\psi_\rho(i,-1)-\psi_{\gamma}(i,-1)}=\abs{u_i^2-v_i^2}\leq 2 \abs{u_i-v_i}=2\abs{f_\rho(i)-f_{\gamma}(i)}$ and $\abs{\psi_\rho(i,+1)-\psi_{\gamma}(i,+1)}=\abs{(u_i-v_i)(2-(u_i+v_i))}\leq 2\abs{u_i-v_i}=2\abs{f_\rho(i)-f_\gamma(i)}$. Hence $L_\phi=2$.
    \item Finally, since $\cR_Q$ is a linear scalarization of the source risks, \cref{lem:bound-modulus-Lipschitz} implies that $\lowermodulus(t)\geq t$ and $\uppermodulus(t)\leq t$.
\end{enumerate}
Hence, we apply \cref{thm:fast-rate-MSA} (through \cref{eqn:fast-rate-expectation}) to get
$$\lambda= \frac{1}{11L_\ell}  \min\crl{\frac{1}{L_\phi},\frac{\eta+\kappa }{ L_\ell + \eta+\kappa }}=\frac{1}{11}  \min\crl{\frac{1}{2},\frac{2+0 }{ 1 + 2+0 }}=\frac{1}{22},$$ 
by plugging in all the constants. That concludes the proof of \cref{thm:lower-bound-convex-hull}.

\end{document}